\icmltitlerunning{Annealed Flow Transport Monte Carlo}
\crefname{appsec}{Appendix}{Appendices}
\newcommand{\manote}[1]{\textcolor{red}{#1}}
\newcommand{\R}{\mathbb R}
\newcommand{\X}{\mathcal X}
\newcommand{\G}{G}
\newcommand\given{\@ifstar{\mathrel{}\middle|\mathrel{}}{\mid}}
\DeclareRobustCommand{\abs}{\@ifstar\@abs\@@abs}
\DeclareRobustCommand{\norm}{\@ifstar\@norm\@@norm}
\DeclareRobustCommand{\inner}{\@ifstar\@inner\@@inner}
\newcommand*\diff{\mathop{}\!\mathrm{d}}
\DeclarePairedDelimiter\floor{\lfloor}{\rfloor}
\newtheorem{lem}{Lemma}
\newtheorem{thm}{Theorem}
\newtheorem{prop}{Proposition}
\newlist{assumplist}{enumerate}{1}
\setlist[assumplist]{label=(\textbf{\Alph*})}
\Crefname{assumplisti}{Assumption}{Assumptions}
\newlist{assumplist2}{enumerate}{2}
\setlist[assumplist2]{label=(\textbf{\alph*})} 
\Crefname{assumplist2i}{Assumption}{Assumptions}
\newlist{assumplistobs}{enumerate}{3}
\setlist[assumplistobs]{label=(\textbf{$\mathcal{F}$-\alph*})} 
\Crefname{assumplistobs}{Assumption}{Assumptions}
\crefname{lem}{Lemma}{Lemmas}
\Crefname{lem}{Lemma}{Lemmas}
\crefname{thm}{Theorem}{Theorems}
\Crefname{thm}{Theorem}{Theorems}
\crefname{prop}{Proposition}{Propositions}
\Crefname{prop}{Proposition}{Propositions}
\begin{document}

\twocolumn[
\icmltitle{Annealed Flow Transport Monte Carlo}

\icmlsetsymbol{equal}{*}
\icmlsetsymbol{a}{}

\begin{icmlauthorlist}
\icmlauthor{Michael Arbel}{equal,Gatsby}
\icmlauthor{Alexander G. D. G. Matthews}{equal,DeepMind}
\icmlauthor{Arnaud Doucet}{a,DeepMind}
\end{icmlauthorlist}

\icmlaffiliation{Gatsby}{Gatsby Computational Neuroscience Unit, University College London}
\icmlaffiliation{DeepMind}{DeepMind}

\icmlcorrespondingauthor{Michael Arbel}{michael.n.arbel@gmail.com}
\icmlcorrespondingauthor{Alexander G. D. G. Matthews}{alexmatthews@google.com}
\icmlcorrespondingauthor{Arnaud Doucet}{arnauddoucet@google.com}

\icmlkeywords{Machine Learning, ICML}

\vskip 0.3in
]

\printAffiliationsAndNotice{\icmlEqualContribution}
\begin{abstract}
Annealed Importance Sampling (AIS) and its Sequential Monte Carlo (SMC) extensions are state-of-the-art methods for estimating normalizing constants of probability distributions. We propose here a novel Monte Carlo algorithm, Annealed Flow Transport (AFT), that builds upon AIS and SMC and combines them with normalizing flows (NFs) for improved performance. This method transports a set of particles using not only importance sampling (IS), Markov chain Monte Carlo (MCMC) and resampling steps - as in SMC, but also relies on NFs which are learned sequentially to push particles towards the successive annealed targets. We provide limit theorems for the resulting Monte Carlo estimates of the normalizing constant and expectations with respect to the target distribution. Additionally, we show that a continuous-time scaling limit of the population version of AFT is given by a Feynman--Kac measure which simplifies to the law of a controlled diffusion for expressive NFs. We demonstrate experimentally the benefits and limitations of our methodology on a variety of applications.
\end{abstract}

\section{Introduction}
Let $\pi$ be a target density on $\X\subseteq \R^d$ w.r.t. the Lebesgue measure known up to a normalizing constant $Z$. We want to estimate $Z$ and approximate expectations with respect to $\pi$. This has applications in Bayesian statistics but also variational inference (VI) \cite{mnih2016variational} and compression \cite{li2019rate,huang2020evaluating} among others. AIS \citep{neal2001annealed} and its SMC extensions \cite{Del-Moral:2006} are state-of-the art Monte Carlo methods addressing this problem which rely on a sequence of annealed targets $\pi_k \propto \pi_0^{1-\beta_k}\pi_K^{\beta_k}$ bridging smoothly an easy-to-sample distribution $\pi_0$  to $\pi_K:=\pi$ for $0=\beta_0<\beta_1< \cdots <\beta_K=1$ and MCMC kernels of invariant distributions $\pi_k$  \cite{ZhouJohansen2016,llorente2020marginal}. In their simplest instance, SMC samplers propagate $N$ particles approximating $\pi_k$ at time $k$. These particles are reweighted according to weights proportional to $\pi_{k+1}/\pi_k$ at time $k+1$ to build an IS approximation of $\pi_{k+1}$, then one resamples $N$ times from this approximation and finally mutate the resampled particles according to MCMC steps of invariant distribution $\pi_{k+1}$. This procedure can provide high-variance estimators if the discrepancy between $\pi_k$ and $\pi_{k+1}$ is significant as the resulting IS weights then have a large variance and/or if the MCMC kernels mix poorly. This can be reduced by increasing $K$ and the number of MCMC steps at each temperature but comes at an increasing computational cost.

An alternative approach is to build a transport map $T:\X \rightarrow \X$ to ensure that if $X \sim \pi_0$ then the distribution of $X'=T(X)$ denoted $T_{\#}\pi_0$ is approximately equal to $\pi$. In \cite{Marzouk2012bayesian}, this map is parameterized using a polynomial chaos expansion and learned by minimizing a regularized  Kullback-Leibler (KL) divergence between $T_{\#}\pi_0$ and $\pi$; see also \cite{marzouk2016sampling}. \citet{taghvaei2020diffusion} and \citet{olmez2020deep} obtain transport maps by solving a Poisson equation. However, they do not correct for the discrepancy between $T_{\#}\pi_0$ and $\pi$ using IS. Doing so would incur a $O(d^3)$ cost when computing the Jacobian. Normalizing Flows (NFs) are an alternative flexible class of diffeomorphisms with easy-to-compute Jacobians \citep{Rezende:2015}. These can be used to parameterize $T$ and are also typically learned by minimizing $\textup{KL}(T_{\#}\pi_0||\pi)$ or a regularized version of it. This approach has been investigated in many recent work; see e.g. \cite{gao2020flow,nicoli2020asymptotically,noe2019boltzmann,wirnsbergertargetetNF2020}. Although it is attractive, it is also well-known that optimizing this `mode-seeking' KL can lead to an approximation of the target $T_{\#}\pi_0$ which has thinner tails than the target $\pi$ and ignore some of its modes; see e.g. \cite{domke2018importance}.

In this paper, our contributions are as follows.

\begin{itemize}[topsep=0pt,itemsep=0pt,partopsep=1pt,parsep=1pt,listparindent=2pt,leftmargin=12pt]
\item We propose Annealed Flow Transport (AFT), a methodology that takes advantages of the strengths of both SMC and NFs. Given particles approximating $\pi_k$ at time $k$, we learn a NF $T_{k+1}$ minimizing the KL between $(T_{k+1})_{\#}\pi_k$ and $\pi_{k+1}$. As $\pi_k$ is closer to $\pi_{k+1}$ than $\pi_0$ is from $\pi_K=\pi$, learning such a NF is easier and less prone to mode collapse. Additionally the use of MCMC steps in SMC samplers allows the particles to diffuse and further prevent such collapse. Having obtained $T_{k+1}$, we then apply this mapping to the particles before building an IS approximation of $\pi_{k+1}$ and then use resampling and MCMC steps. 

\item We establish a weak law of large numbers and a Central Limit Theorem (CLT) for the resulting Monte Carlo estimates of $Z$ and expectations w.r.t. $\pi$. Available CLT results for SMC \citep{chopin2004central,del2004feynman,kunsch2005recursive,Beskos:2016} do not apply here as the transport maps are learned from particles. 

\item When one relies on Unadjusted Langevin algorithm (ULA) kernels to mutate particles, a time-rescaled population version of AFT without resampling is shown to converge as $K \rightarrow \infty$ towards a Feynman--Kac measure. For NFs expressive enough to include exact transport maps between successive distributions, this measure corresponds to the measure induced by a controlled Langevin diffusion.

\item We demonstrate the performance of AFT on a variety of benchmarks, showing that it can improve over SMC for a given number of temperatures.
\end{itemize}

\textbf{Related Work.}
The use of deterministic maps with AIS  \citep{vaikuntanathan2011escorted} and  SMC \citep{akyildiz2020nudging,everitt2020sequential,heng2015gibbs} has already been explored. However, \citet{everitt2020sequential} and \citet{vaikuntanathan2011escorted} do not propose a generic methodology to build such maps while \citet{akyildiz2020nudging} introduce mode-seeking maps and do not correct for the incurred bias. \citet{heng2015gibbs} rely on quadrature and a system of time-discretized nonlinear ordinary differential equations: this can be computationally cheaper than learning NFs but is application specific. NFs benefit from easy-to-compute Jacobians and a large and quickly expanding literature \citep{papamakarios2019normalizing}; e.g., as both MCMC and NFs on manifolds have been developed, our algorithm can be directly extended to such settings. 

Evidence Lower Bounds (ELBOs) based on unbiased estimators of $Z$ have also been mentioned in \citep{salimans2015markov,goyal2017variational,caterini2018hamiltonian,huang2018improving,wunoe2020stochastic,thin2021MCVAE}.
These estimators generalize AIS, and are obtained using sequential IS, transport maps and MCMC.  However, when MCMC kernels such as Metropolis--Hastings (MH) or Hamiltonian Monte Carlo (HMC) are used, accept/reject steps lead to high variance estimates of ELBO gradients \cite{thin2021MCVAE}. Moreover, while SMC (i.e. combining sequential IS and resampling) can also be used to define an ELBO, resampling steps correspond to sampling discrete distributions and lead to high variance gradient estimates; see e.g. \citep{maddison2017filtering,le2017auto,naesseth2017variational} in the context of state-space models. The algorithm proposed here does not rely on the ELBO, so it can use arbitrary MCMC kernels and exploit the benefits of resampling. Moreover, it only requires a single pass through the $K+1$ annealed distributions: there is no need to iteratively run sequential IS or SMC for estimating $Z$ and an ELBO gradient estimate.

Optimal control ideas have also been proposed to improve SMC by introducing an additive drift to a time-inhomogeneous ULA to improve sampling; see \citet{richard2007efficient,kappen2016adaptive,guarniero2017iterated,heng2017controlled}. The proposed iterative algorithms require estimating value functions but, to be implementable, the approximating function class has to be severely restricted. The algorithm proposed here is much more widely applicable and can use sophisticated MCMC kernels.

Finally, alternative particle methods based on gradient flows in the space of probability measures have been proposed to provide an approximation of $\pi$, such as  Stein Variational Gradient Descent (SVGD) \cite{liu2016stein,liu2019understanding,wang2019accelerated,zhu2020variance,reich2021fokker}. However, their consistency results require both $K$, the number of time steps, and $N$, the number of particles, to go to infinity. In contrast, AFT only needs $N \rightarrow \infty$.
 Moreover, they require specifying a suitable Reproducing Kernel Hilbert Space or performing kernel density estimation, which can be challenging in high dimension. Additionally, contrary to AFT, these methods do not provide an estimate of $Z$. One recent exception is the work of \citet{han2017stein} which combines SVGD with IS to estimate $Z$ but this requires computing Jacobians of computational cost $O(d^3)$.

\section{Sequential Monte Carlo samplers}\label{sec:SMCsamplers}
We provide here a brief overview of SMC samplers and their connections to AIS. More details can be found in \citep{Del-Moral:2006,dai2020invitation}.

We will rely on the following notation for the annealed densities $(\pi_k)_{0\leq k\leq K}$ targeted by SMC: 
\begin{align}\label{eq:path}
	\pi_{k}(x) =  \frac{\gamma_k(x)}{Z_k}=\frac{\exp(-V_{k}(x)}{Z_k},
\end{align}
where $Z_0=1$ so $\pi_0(x) = \gamma_0(x)$ and $V_k(x) =(1-\beta_k) V_0 + \beta_k V_K$ for $0=\beta_0<\beta_1<\cdots<\beta_K=1$. However, we could use more generally any sequence of distributions bridging smoothly $\pi_0$ to $\pi_K=\pi$.

\subsection{Sequential importance sampling}\label{sec:SIS}
Let us first ignore the key resampling steps used by SMC. In this case, SMC boils down to a sequential IS technique where one approximates $\pi_k$ at time $k$. We first sample $X_0\sim \pi_0$ at time $k=0$, then at time $k\geq 1$, obtain a a new sample $X_k\sim M_{k}(X_{k-1},\cdot)$ using a Markov kernel $M_{k}$. For the distribution of $X_k$ to be closer to $\pi_k$ than the one of $X_{k-1}$, $M_k$ is typically selected as a  MCMC kernel of invariant density $\pi_k$ such as MH or HMC, or of approximate invariant density $\pi_k$ such as ULA. Hence, by construction, the joint density of $X_{0:k}$ is
\begin{equation}\label{eq:joint_chain}
	\textstyle{\bar{\eta}_{k}(x_{0:k}) = \pi_0(x_0) \prod_{l=1}^{k} M_l(x_{l-1}, x_l).}
\end{equation}
 
The resulting marginal $\eta_k$ of $X_k$ under $\bar{\eta}_k$ usually differs from $\pi_{k}$. If one could evaluate $\eta_k$ pointwise, then IS could be used to correct for the discrepancy between $\eta_k$ and $\pi_k$ using the IS weight $w_k(x_k)= \gamma_{k}(x_k)/\eta_{k}(x_k)$. Unfortunately, $\eta_k$ is intractable in all but toy scenarios. 
Instead, SMC samplers introduce joint target densities $\bar{\pi}_k(x_{0:k})$ to compute tractable IS weights  $w_k(x_{0:k})$ over the whole path $X_{0:k}$ defined by 
\begin{align}\label{eq:jointtarget}
\textstyle{\bar{\pi}_k(x_{0:k}) = \pi_{k}(x_k)	\prod_{l=0}^{k-1} L_{l}(x_{l+1}, x_{l}),}
\end{align}
here $L_{l}$ are ``backward'' Markov kernels moving each sample $X_{l+1}$ into a sample $X_{l}$ starting from a \textit{virtual} sample $X_k$ from $\pi_k$\footnote{As in \citep{crooks1998nonequilibrium,neal2001annealed,Del-Moral:2006,dai2020invitation}, we do not use measure-theoretic notation here but it should be kept in mind that the kernels $M_l$ do not necessarily admit a density w.r.t. Lebesgue measure; e.g. a MH kernel admits an atomic component. For completeness, a formal measure-theoretic presentation of the results of this section is given in Appendix \ref{sec:measuretheoreticpresentation}.}. Hence by construction $\pi_k$ is the marginal of $\bar{\pi}_k$ at time $k$. The backward kernels $L_{k-1}$ are chosen so that the following \textit{incremental IS weights} are well-defined
\begin{equation}\label{eq:incrementalweightSMC}
    G_k(x_{k-1},x_k)=\frac{\gamma_k(x_k)L_{k-1}(x_k,x_{k-1} )}{\gamma_{k-1}(x_{k-1})M_k(x_{k-1},x_k)},
\end{equation}
and, from \eqref{eq:joint_chain} and \eqref{eq:jointtarget}, one obtains
\begin{align}\label{eq:unnormalized_importance_weights}
w_{k}(x_{0:k}) &:= \frac{\bar{\gamma}_k(x_{0:k})}{\bar{\eta}_k(x_{0:k})} = \prod_{l=1}^k G_l(x_{l-1},x_l),
\end{align}
where $\bar{\gamma}_k(x_{0:k})=Z_k \bar{\pi}_k(x_{0:k})$ is the unnormalized joint target. Using IS, it is thus straightforward to check that 
\begin{align}\label{eq:generalizedCrooksJarzynski}
Z_k &={\bar{\eta}_k[w_{k}]},\quad \bar{\pi}_k[f]=\frac{\bar{\eta}_k[w_{k}f]}{\bar{\eta}_k[w_{k}]},
\end{align}
where $f(x_{0:k})$ is a function of the whole trajectory $x_{0:k}$ and $\mu[g]$ is a shorthand notation for the expectation  $\mathbb{E}_{X\sim\mu}[g(X)]$. 
As $\pi_k$ is a marginal of $\bar{\pi}_k$, we can also estimate expectations w.r.t. to $\pi_k$ using $\bar{\pi}_k[f]=\pi_k[f]$ for $f(x_{0:k})=f(x_k)$. From \cref{eq:generalizedCrooksJarzynski}, it is thus possible to derive consistent estimators of $Z_k$ and $\pi_k[f]$ by sampling $N$ `particles' $X^i_{0:k}\sim \bar{\eta}_k$ where $i=1,...,N$ and using 
\begin{equation}\label{eq:SISestimates}
    Z_k^N =\frac{1}{N}\sum_{i=1}^N w_{k}(X^i_{0:k}),\quad \pi^N_k[f]=\sum_{i=1}^N W_k^i f(X_k^i),
\vspace{-.3cm}
\end{equation}
where $W_k^i\propto  w_{k}(X^i_{0:k})$, $\sum_{i=1}^N W_k^i=1$.

When the kernels $M_k$ are $\pi_k$-invariant and we select $L_{k-1}$ as the reversal of $M_k$, i.e. $\pi_k(x) M_k(x,x')=\pi_k(x') L_{k-1}(x',x)$, it is easy to check that $G_l(x_{l-1},x_l)=\gamma_{l}(x_{l-1})/\gamma_{l-1}(x_{l-1})$. In that case, \cref{eq:generalizedCrooksJarzynski} corresponds to AIS \citep{neal2001annealed} and is also known as the Jarzynski--Crooks identity \citep{jarzynski1997nonequilibrium,crooks1998nonequilibrium}. When $\pi_k$ is a sequence of posterior densities, a similar construction was also used in  \citep{maceachern1999sequential,gilks2001following,chopin2002sequential}. The generalized identity \cref{eq:generalizedCrooksJarzynski} allows the use of more general dynamics, including deterministic maps which will be exploited by our algorithm.

In practice, the choice of the backward transition kernels has a large impact on the variance of the estimates \eqref{eq:SISestimates}. \cite{Del-Moral:2006} identified the backward kernels minimizing the variance of the IS weights \eqref{eq:incrementalweightSMC}-\eqref{eq:unnormalized_importance_weights} and proposed various approximations to them.

\subsection{Sequential Monte Carlo}\label{sec:SMC}
To reduce the variance of the IS estimators \eqref{eq:SISestimates}, SMC samplers combine sequential IS steps with resampling steps. Given an IS approximation $\pi^N_{k-1}=\sum_{i=1}^N W_{k-1}^i \delta_{X_{k-1}^i}$ of $\pi_{k-1}$ at time $k-1$, one resamples $N$ times from $\pi^{N}_{k-1}$ to obtain particles approximately distributed according to $\pi_{k-1}$. This has for effect of discarding particles with low weights and replicating particles with high weights, this helps focusing subsequent computation on ``promising'' regions of the space. Empirically, resampling usually provides lower variance unbiased estimates of normalizing constants and is computationally very cheap; see e.g. \cite{chopin2002sequential,hukushima2003population,Del-Moral:2006,Rousset:2006,ZhouJohansen2016,barash2017gpu}. The resampled particles are then evolved according to $M_k$, weighted according to $G_k$ and resampled again.

\section{Annealed Flow Transport Monte Carlo}\label{sec:smc_normalizing_flow}
We now introduce AFT, a new flexible adaptive Monte Carlo method that leverages NFs.
Given the particle approximations $\pi_{k-1}^N := \sum_{i=1}^N W_{k-1}^{i} \delta_{X_{k-1}^{i}}$ and $Z_{k-1}^{N}$ at time $k-1$, AFT computes an approximation $\pi_k^{N}$ and $Z_{k}^{N}$ by performing four main sub-steps: \textit{Transport, Importance Sampling, Resampling and Mutation}, as summarized in \break \cref{alg:SMC}. Whenever the index $i$ is used in the algorithm, we mean `for all $i \in \{1,...,N\}$'. These four sub-steps are now detailed below. 

\begin{algorithm}
\caption{Annealed Flow Transport}\label{alg:SMC}
	\begin{algorithmic}[1]
		\STATE \textbf{Input:} number of particles $N$, unnormalized annealed targets $\{\gamma_k\}_{k=0}^K$ such that $\gamma_0=\pi_0$ and $\gamma_K=\gamma$, resampling threshold $A\in\left[1/N,1\right)$.
		\STATE \textbf{Ouput:} Approximations $\pi^N_K$ and $Z_K^N$ of $\pi$ and $Z$.
		\STATE Sample $X^i_0 \sim \pi_0$ and set $W_0^i = \frac{1}{N}$ and  $Z_0^N=1$.
		\FOR{$k=1,\dots, K$}
			\STATE Compute $\mathcal{L}^N_k(T)$ using \cref{eq:empirical_loss}. 
			\STATE Solve
				$T_k  \leftarrow \text{argmin}_{T \in \mathcal{T}}~\mathcal{L}^N_k(T)$ using e.g. SGD.
			\STATE Transport particles: $\widetilde{X}^{i}_{k}=T_k(X^{i}_{k-1})$.
			\STATE Estimate normalizing constant $Z_k$: \\$Z_k^N \leftarrow Z_{k-1}^N \parens{\sum_{i=1}^N W_{k-1}^{i}G_{k,T_k}(X_{k-1}^{i})}$.
			\STATE Compute IS weights: \\$w^{i}_k \leftarrow W_{k-1}^i \G_{k,T_k}(X_{k-1}^i)$ // unnormalized\\
				$W^{i}_k \leftarrow  \frac{w_{k}^{i}}{\sum_{j=1}^N w_{k}^{j}}$ // normalized
			\STATE Compute effective sample size $\textup{ESS}_k^N$  using \cref{eq:ESS}.
			\IF{$\textup{ESS}^{N}_{k}/N \leq A$} 
				\STATE Resample $N$ particles denoted abusively also $\widetilde{X}^i_k$ according to the weights $W_k^i$, then set $W_k^i = \frac{1}{N}$. 
			\ENDIF
			\STATE  Sample $X_{k}^i \sim K_{k}(\widetilde{X}_{k}^i,\cdot)$. // MCMC
		\ENDFOR
	\end{algorithmic}
\end{algorithm}
\subsection{Transport map estimation}
In this step, we learn a NF $T_{k}$ that moves each sample $X_{k-1}$ from $\pi_{k-1}$ to a sample $\widetilde{X}_k = T_k(X_{k-1})$ as close as possible to $\pi_k$ by minimizing an estimate of $\textup{KL}(T_{\#}\pi_{k-1}||\pi_k)$ over a set  $\mathcal{T}$ of NFs. This $\textrm{KL}$ can be decomposed as a sum of a loss term $\mathcal{L}_k(T)$ and a term $ \log\frac{Z_{k}}{Z_{k-1}}$ that can be ignored as it is independent of the NF $T$.
A simple change of variables allows us to express the loss term $\mathcal{L}_k(T)$ as an expectation under $\pi_{k-1}$ of some tractable function $x\mapsto h_T(x)$:
\begin{align}\label{eq:kl_loss_jacobian}
\begin{aligned}
	\mathcal{L}_k(T):=&  \pi_{k-1}\brakets{ h_T},  \\
	h_T(x) :=&  V_{k}\parens{T(x)} -V_{k-1}\parens{x} - \log \verts{ \nabla T(x) }.
	\end{aligned}
\end{align}
The Jacobian determinant of $T$ in \cref{eq:kl_loss_jacobian} can be evaluated efficiently for NFs while the expectation under $\pi_{k-1}$ can be estimated using  $\pi_{k-1}^N$ thus yielding the empirical loss:
\begin{align}\label{eq:empirical_loss}
	\textstyle{\mathcal{L}^N_k(T) :=  \sum_{i=1}^N W_{k-1}^i h_T(X_{k-1}^i).} 
\end{align}
In practice, \cref{eq:empirical_loss} is optimized over the NF parameters using gradient descent.
The resulting NF $T_k$ is then used to transport each particle $X_{k-1}^{i}$ to $\widetilde{X}_{k}^{i}= T_k(X_{k-1}^{i})$\footnote{We should write $T^N_k$ to indicate the dependence of our estimate of $N$ but do not to simplify notation.}.
However, the loss \cref{eq:empirical_loss} being not necessarily convex, the solution $T_k$ is likely to be sub-optimal. This is not an issue, since 
IS is used to correct for such approximation error as we will see next. We also emphasize that the convergence results for this scheme presented in \cref{sec:Asymptoticanalysis} do not require finding a global minimizer of this non-convex optimization problem.
\subsection{Importance Sampling, Resampling and Mutation}
\paragraph{Importance Sampling.} This step corrects for the NF $T_k$ being only an approximate transport between $\pi_{k-1}$ and $\pi_k$. In this case, we have $M^{\textup{trans}}_{k}(x,x')=\delta_{T_k(x)}(x')$ and by selecting $L^{\textup{trans}}_{k-1}(x,x')=\delta_{T^{-1}_k(x')}(x)$ then the incremental IS weight \cref{eq:incrementalweightSMC} is given by a simple change-of-variables formula
\begin{align}\label{eq:discrete_importance_weight}
 	\G_{k,T_k}(x_{k-1}) =\frac{\gamma_{k}(T_k(x_{k-1}))\verts{\nabla T_k(x_{k-1}) }}{\gamma_{k-1}(x_{k-1})}.
\end{align}
 Using \cref{eq:discrete_importance_weight}, we can update the weights  $w_{k}^{i} =W_{k-1}^{i} \G_{k,T_k}(X_{k-1}^i)$ to account for the errors introduced by $T_k$.
When $T_k$ are exact transport maps from $\pi_{k-1}$ to $\pi_k$, 
the incremental weight in \cref{eq:discrete_importance_weight} becomes constant and equal to the ratio $Z_k/Z_{k-1}$.
Thus, introducing the NF $T_k$ can be seen as a way to reduce the variance of the IS weights in the SMC sampler.
\paragraph{Resampling.} As discussed in \cref{sec:SMC}, resampling can be very beneficial but it should only be performed 
when the variance of the IS weights is too high \cite{liuchen1995} as measured by the Effective Sample Size (ESS)
\begin{equation}\label{eq:ESS}
	\textup{ESS}^{N}_{k} = \left(\sum_{i=1}^N \parens{ W_k^{i}  }^2\right)^{-1},
\end{equation}
which is such that $\textup{ESS}^{N}_{k}\in\left[1,N\right]$. When $\textup{ESS}^{N}_{k}/N$ is smaller than some prescribed threshold $A\in\left[1/N,1\right)$ (we use $A=0.3$), resampling is triggered and each particle $\widetilde{X}_k^i$ is then resampled without replacement from the set of $N$ available particles $\{\widetilde{X}_k^i\}_{i\in[1:N]}$ according to a multinomial distribution with weights $\{W_{k}^i\}_{i\in[1:N]}$.
The weights are then reset to uniform ones; i.e. $W_{k}^i = \frac{1}{N}$. More sophisticated lower variance resampling schemes have also been proposed; see e.g. \cite{kitagawa1996monte,chopin2004central}. 
\paragraph{Mutation.} The final step consists in mutating the particles using a $\pi_{k}-$invariant MCMC kernel $K_{k}$ , i.e. using $X_k^{i}\sim K_{k}(\widetilde{X}_{k}^i,\cdot)$. This allows particles to better explore the space. 

Note that if the transport maps $T_k$ were known, \cref{alg:SMC} could be reinterpreted as a specific instance of a SMC as detailed in Section \ref{sec:SMCsamplers} where at each time $k\geq 1$ we perform two time steps of a standard SMC sampler by applying first a transport step $M^{\textup{trans}}_{k}(x,x')=\delta_{T_k(x)}(x')$ then a mutation step $M^{\textup{mut}}_{k}(x,x')=K_k(x,x')$; see Appendix \ref{subsec:noncollapse} for details.
\setlength{\textfloatsep}{0.3cm}
\subsection{Variants and Extensions}\label{sec:extensions}
Contrary to standard SMC, the estimates $Z_k^N$ returned by \cref{alg:SMC} are biased because of the dependence of the NF $T_k$ on the particles. To obtain unbiased estimates of $Z_k$ and to avoid over-fitting of the NF to the $N$ particles, a variant of \cref{alg:SMC} described in \cref{alg:SMC-detailed} (see  \cref{sec:variant_algo}) is used in the experimental evaluation. This variant employs three sets of particles: the \emph{training set} is used to evaluate the loss \cref{eq:empirical_loss}, the \emph{validation set} is used in a stopping criterion when learning the NF and the \emph{test set} is independent from the rest and is computed sequentially using the learned NFs.
It would also be possible to combine AFT with various extensions to SMC that were already proposed in the literature.
For example, we can select adaptively the annealing parameters $\beta_k$ to ensure the ESS only decreases by a pre-determined percentage \citep{jasra2011inference,chopinschafer2013sequential,Beskos:2016,ZhouJohansen2016} or use the approximation of $\pi_k$ obtained at step 13 of \cref{alg:SMC} to determine the parameters of the MCMC kernel $K_k$ \cite{delmoral2012adaptive,buchholz2020adaptive}.

\section{Asymptotic analysis}\label{sec:Asymptoticanalysis}
We establish here a law of large numbers and a CLT for the particle estimates $\pi_{k}^{N}[f]$ and $Z_k^N$ of $\pi_k[f]$ and $Z_k$. We denote by $\xrightarrow{P}$ convergence in probability and by  $\xrightarrow{\mathcal{D}}$ convergence in distribution. 
\subsection{Weak law of large numbers}
\cref{thm:WLLN} shows that $\pi_{k}^{N}[f]$ and $Z_k^N$  are consistent estimators of $\pi_k[f]$ and $Z_k$, hence of $\pi[f]$ and $Z$ at time $k=K$.
\begin{thm}[weak law of large numbers]\label{thm:WLLN}
Let $f$ be a function s.t. $ \vert f(x)\vert\leq C (1+ \Vert x \Vert^4) $ for all $x\in \X$ and for some $C>0$. Under \cref{assumption:kernel,assumption:transport,assumption:moments,assumption:uniformly_bounded} and for any $k\in{0,...,K}$:
\begin{equation}\label{eq:recursion_consistency}
   (\mathcal{R}_{k}):\qquad \pi_k^{N}[f]\xrightarrow{P} \pi_k[f],\quad Z_k^N\xrightarrow{P} Z_k.
\end{equation}
\end{thm}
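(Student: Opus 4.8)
The plan is to prove $(\mathcal{R}_k)$ by induction on $k$. The base case $k=0$ is immediate: the $X_0^i$ are i.i.d.\ from $\pi_0$ with uniform weights and $Z_0^N=Z_0=1$, so the classical law of large numbers gives $\pi_0^N[f]\xrightarrow{P}\pi_0[f]$ for every $f$ of quartic growth, using the moment bound of \cref{assumption:moments}. For the inductive step I would assume $(\mathcal{R}_{k-1})$ — in fact the slightly stronger statement that $\pi_{k-1}^N[\phi]\xrightarrow{P}\pi_{k-1}[\phi]$ for every $\phi$ in a class $\mathcal{C}$ of polynomially growing functions large enough to be stable under the operations below (composition with flows in $\mathcal{T}$, multiplication by incremental weights, action of the Markov kernel) — and propagate it through the four sub-steps Transport/IS, Resampling, Mutation.

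The heart of the argument is the combined Transport $+$ Importance Sampling step, where the obstacle absent from classical SMC analyses is that $T_k=\argmin_{T\in\mathcal{T}}\mathcal{L}_k^N(T)$ depends on the very particles used to form $\pi_{k-1}^N$. The observation that neutralises this dependence is the exact identity, valid for \emph{every} diffeomorphism $T$,
\[
\pi_{k-1}\!\left[G_{k,T}\cdot(\varphi\circ T)\right]=\frac{Z_k}{Z_{k-1}}\,\pi_k[\varphi],
\]
obtained from \cref{eq:discrete_importance_weight} by the change of variables $y=T(x)$ (with $\varphi\equiv 1$ this gives $\pi_{k-1}[G_{k,T}]=Z_k/Z_{k-1}$): the population limit does not see $T$ at all. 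It therefore suffices to establish a \emph{uniform} law of large numbers over the flow class,
\[
\sup_{T\in\mathcal{T}}\Big|\sum_{i=1}^N W_{k-1}^i\,G_{k,T}(X_{k-1}^i)\,\varphi(T(X_{k-1}^i))-\pi_{k-1}\!\left[G_{k,T}\cdot(\varphi\circ T)\right]\Big|\xrightarrow{P}0,
\]
and likewise with $\varphi\equiv 1$. Combined with $T_k\in\mathcal{T}$, the inductive hypothesis and $Z_{k-1}^N\xrightarrow{P}Z_{k-1}$, this yields at once $Z_k^N\xrightarrow{P}Z_k$ and, after normalising, $\widetilde\pi_k^N[\varphi]:=\sum_i W_k^i\varphi(\widetilde X_k^i)\xrightarrow{P}\pi_k[\varphi]$. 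To prove the uniform LLN I would use \cref{assumption:transport,assumption:uniformly_bounded} to reduce $\mathcal{T}$ to a compact parameter set over which $T\mapsto G_{k,T}$ and $T\mapsto\varphi\circ T$ are equicontinuous with uniformly polynomially bounded envelopes (this is where the $\log\verts{\nabla T}$ and $\gamma_k(T(\cdot))$ terms and the quartic growth of $f$ are controlled), pass to a finite $\varepsilon$-net, apply the inductive hypothesis at each net point, and close the gap by equicontinuity and \cref{assumption:moments}; the same reasoning shows $\mathcal{L}_k^N(T)\to\mathcal{L}_k(T)$ uniformly in $T$, which makes the optimisation step well posed even though we never need a global minimiser.

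The remaining two sub-steps are handled by standard SMC arguments. For \textbf{Resampling}, conditionally on $\widetilde\pi_k^N$ the resampled empirical measure has mean $\widetilde\pi_k^N$ and conditional variance $O(1/N)$ for a test function $\varphi\in\mathcal{C}$ by \cref{assumption:moments}, so Chebyshev shows resampling preserves $\xrightarrow{P}$ convergence to $\pi_k[\varphi]$ — whether or not the adaptive ESS criterion triggers it. For \textbf{Mutation}, write $\pi_k^N[\varphi]-\widehat\pi_k^N[K_k\varphi]=\frac1N\sum_i\big(\varphi(X_k^i)-K_k\varphi(\widetilde X_k^i)\big)$ (with the carried-over weights if no resampling occurred); conditionally on the $\widetilde X_k^i$ this is a sum of independent centred terms with conditional variance $O(1/N)$ by \cref{assumption:kernel,assumption:moments}, hence vanishes in probability, while $\widehat\pi_k^N[K_k\varphi]\xrightarrow{P}\pi_k[K_k\varphi]=\pi_k[\varphi]$ by the previous step together with $\pi_k$-invariance of $K_k$ and the fact that $K_k\varphi$ stays in $\mathcal{C}$ (this is where the growth condition on $K_k$ in \cref{assumption:kernel} enters). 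Chaining the four sub-steps gives $(\mathcal{R}_k)$, and taking $k=K$ with $\gamma_K=\gamma$, $\pi_K=\pi$ proves the theorem. The one genuinely new difficulty is the uniform law of large numbers over $\mathcal{T}$ in the presence of the data-dependent selection $T_k$; everything else is bookkeeping of polynomial moments through Lipschitz flows plus the classical resampling and mutation estimates.
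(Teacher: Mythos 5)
Your overall architecture coincides with the paper's: induction on $k$, the exact identity $\pi_{k-1}\brakets{Q_{k,T}[f]}=\frac{Z_k}{Z_{k-1}}\pi_k[f]$ valid for every $T$ (the paper's \cref{prop:independence_from_T}), and a uniform law of large numbers over the compactly parameterized flow class (the paper's \cref{prop:B,prop:general_Glivenko_cantelli}, proved by bracketing, which is your $\varepsilon$-net/equicontinuity argument in slightly different clothing -- note that your ``close the gap by equicontinuity'' step itself needs an empirical polynomial envelope bound, which the bracketing formulation sidesteps by applying the recursion hypothesis to the bracket endpoints, themselves of growth $4$). Where you genuinely diverge is in how the sampling noise of the resampling and mutation sub-steps is controlled, and that is where there is a gap. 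You bound these fluctuations by conditional Chebyshev, asserting the conditional variances are $O(1/N)$ ``by \cref{assumption:kernel,assumption:moments}''. But for a test function $\varphi$ of growth $4$ the conditional variance involves the particle average of $K_k[\varphi^2]$, a function of growth $8$, over the transported, weighted and resampled particles, i.e.\ a quantity of the form $\frac1N\sum_i K_k[\varphi^2](\widetilde X_k^i)$. \cref{assumption:moments} only says $\pi_k$ has $8$th moments; it says nothing about the empirical measure of a particle system whose map $T_k$ was selected as a function of those very particles, so you cannot simply take expectations. Nor does your induction hypothesis deliver it: you ask for a class $\mathcal{C}$ ``stable under the operations'', but the variance bound forces squaring into those operations, and a growth-$p$ class closed under squaring cannot be propagated with only $8$th moments -- each Chebyshev step doubles the growth order ($4\to 8\to 16$), and $16$th moments are not assumed. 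This doubling is exactly what the paper's proof is built to avoid: in \cref{prop:A} the conditional fluctuation is handled through conditional characteristic functions, where $f$ is only ever multiplied by bounded factors $e^{isuf}$, so the whole induction closes inside $\mathcal{C}_4$ under the stated assumptions.

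The gap is fixable, but it requires an ingredient you have not supplied: for instance an auxiliary induction showing that the weighted empirical $8$th moments $\sum_i W_k^i\parens{1+\Vert X_k^i\Vert^8}$ stay bounded in probability, which can be pushed through using the uniform bound on $G_{k,T}$ (\cref{assumption:uniformly_bounded}), the uniform linear growth of the flows (\cref{assumption:transport}), the growth preservation of $K_k$ (\cref{assumption:kernel}), and the already-established convergence $\pi_{k-1}^N[G_{k,T_k}]\xrightarrow{P} Z_k/Z_{k-1}>0$ (boundedness in probability suffices, since the variance bound only needs this average to be $o_{\mathbb{P}}(N)$). Alternatively, replace your Chebyshev steps by the paper's conditional characteristic-function argument. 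As written, the ``conditional variance $O(1/N)$'' claims in both the resampling and mutation sub-steps are unproved, and they are not innocuous bookkeeping -- they are the one place where the data-dependence of $T_k$ and the limited moment budget interact.
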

The result is proven in \cref{sec:proof:wlln} and relies on four assumptions stated in \cref{sec:assumptions_clt}: \ref{assumption:kernel} on the smoothness of the Markov kernels $K_k$, \ref{assumption:moments} on the moments of $\pi_{k}$, \ref{assumption:transport} on the smoothness of the family of NFs and \ref{assumption:uniformly_bounded} on the boundedness of the incremental IS weight $G_{k,T}(x)$. 
Perhaps surprisingly, \cref{thm:WLLN} does not require the NFs to converge as $N\rightarrow \infty$. This is a consequence of \cref{prop:general_Glivenko_cantelli} in \cref{sec:proof:wlln} which ensures uniform consistency of the particle approximation regardless of the choice of the NFs. However, convergence of the NFs is required to obtain a CLT result as we see next.
\cref{thm:WLLN_2} of \cref{sec:proof:wlln} states a similar result for \cref{alg:SMC-detailed} of \cref{sec:variant_algo}.
\subsection{Central Limit theorem}
Besides assumptions \ref{assumption:kernel} to \ref{assumption:uniformly_bounded}, we make five assumptions stated in \cref{sec:assumptions_clt}: \ref{assumption:kernel_2} on the Markov kernels $K_k$ strengthens \ref{assumption:kernel} and is satisfied by many commonly used Markov kernels as shown in \ref{sec:kernel_assump}.  The smoothness assumptions \ref{assumption:parametrization} and \ref{assumption:potential} on the family $\mathcal{T}$ of NFs and potentials $V_k$ are also standard. Finally, \ref{assumption:approximate_minimizer} and \ref{assumption:bassin} describe the asymptotic behavior of $T_k$. We do not require $T_k$ to be a global minimizer of the loss $\mathcal{L}^N_k$, neither do we assume it to be an exact local minimum of $\mathcal{L}^N_k$. 
Instead, \ref{assumption:approximate_minimizer} only needs $T_k$ to be an approximate local minimum of $\mathcal{L}^N_k$ and \ref{assumption:bassin} implies that $T_k$ converges in probability towards a \emph{strict local} minimizer $T_{k}^{\star}$ of $\mathcal{L}_k$ as $N \rightarrow \infty$.

Before stating the CLT result, we need to introduce the asymptotic incremental variance $\mathbb{V}_k^{\textup{inc}}[f]$ at iteration $k$. To this end, consider the set of limiting re-sampling times $ \mathcal{K}_{\textup{opt}} := \{ k_{0},...k_{P}\}\subset \{0,...,K\}$  defined recursively by $k_{p+1}  := \inf\{ k_p <k: \textup{nESS}_{k}  \leq A \} $ and $k_{P+1}:=K+1$ where $\textup{ESS}_{k}^N/N \overset{N\rightarrow \infty}{\rightarrow} \textup{nESS}_{k}$ with
\begin{align}
    \textup{nESS}_{k} = \frac{\pi_{k_p}\brakets{ \mathbb{E}
		\brakets{  w^{\star}_{k}
			 \middle| X_{k_p}
			}   
		}^2}{\pi_{k_p}\brakets{ \mathbb{E}
		\brakets{  \parens{w^{\star}_{k}}^2 \middle| X_{k_p}
			}   
		}},
\end{align} 
the expectation being w.r.t. to $X_{s}\sim K_s(T^{\star}_s(X_{s-1}),\cdot)$ for $ k_{p}+1 \leq s \leq k $, while $X_{k_p}\sim \pi_{k_p}$  and $w^{\star}_{k} = \prod_{s=k_p+1}^k G_{s,T^{\star}_s}\parens{X_{s-1}}$
is the product of the incremental IS weights using the locally optimal NFs $T^{\star}_{s}$. 
The variance $\mathbb{V}_k^{\textup{inc}}[f]$ at time $k$ is given by:
\begin{align}
    \mathbb{V}_k^{\textup{inc}}[f] =
    \begin{cases}
        Z_k^2 \textup{Var}_{\pi_k}[f],&\hspace{-0.30cm}k\in \mathcal{K},\\
        Z_{k_p}^2\pi_{k_p}\brakets{ \mathbb{E}
		\brakets{\parens{w^{\star}_{k}}^2 \mathcal{G}_{k}\brakets{f} \middle| X_{k_p}}},
		&\hspace{-0.30cm}k_p<k<k_{p+1},
    \end{cases}
\end{align}
with $\mathcal{G}_{k}\brakets{f} := K_k\brakets{f^2}\parens{ T_k^{\star}(X_{k-1}) }  - K_k\brakets{f}^2\parens{ T_k^{\star}(X_{k-1}) }$.
\begin{thm}[Central limit theorem]\label{thm:CLT}
Let $f$ be a real valued function s.t., for some $C>0$,  $f(x)\leq C(1 + \Vert x \Vert^2)$ and 
\begin{align}
	\Verts{f(x)-f(x')}\leq C\parens{1 +  \Verts{x}^{3} + \Verts{x'}^{3}    } \Verts{x-x'}.
\end{align}
Then, under \cref{assumption:kernel,assumption:potential,assumption:kernel_2,assumption:potential,assumption:transport,assumption:moments,assumption:parametrization,assumption:uniformly_bounded,assumption:approximate_minimizer,assumption:bassin} and for $0\leq k \leq K$:
\begin{align}\label{eq:CLT}
	(CLT_k):\quad 
\begin{cases}
	\sqrt{N}\parens{\pi_k^N[f] - \pi_k[f]}&\xrightarrow{\mathcal{D}} \mathcal{N}(0,  \mathbb{V}^{\pi}_k[f]),\\
	\sqrt{N}\parens{Z_k^N - Z_k}&\xrightarrow{\mathcal{D}} \mathcal{N}(0,  \mathbb{V}^{\gamma}_k[1]).
	\end{cases}
\end{align}
$\mathbb{V}^{\gamma}_k[f]$ and $\mathbb{V}^{\pi}_k[f]$ are defined recursively with  $\mathbb{V}^{\gamma}_0[f]=\textup{Var}_{\pi_0}[f]$ and
\begin{align}\label{eq:assymptotic variance}
	 \mathbb{V}^{\gamma}_k\brakets{f} &=  \mathbb{V}_k^{\textup{inc}}\brakets{f} + \mathbb{V}^{\gamma}_{k-1}\brakets{Q_{k,T_k^{\star}}[f]},\\
	 \mathbb{V}^{\pi}_k \brakets{f} &=Z_k^{-2}\mathbb{V}^{\gamma}_k\brakets{f -\pi_{k}[f]},
\end{align}
where $Q_{k,T}(x,\diff y) := G_{k,T}(x)K_k(T(x),\diff y)$.
\end{thm}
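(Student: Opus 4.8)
The plan is to prove the paired statements $(CLT_k)$ by induction on $k$, working throughout with the \emph{unnormalized} particle measures $\gamma_k^N[\cdot]:=Z_k^N\,\pi_k^N[\cdot]$ and recovering the normalized statement at the end by the delta method. The base case $k=0$ is immediate: the $X_0^i$ are i.i.d.\ from $\pi_0$, so the classical multivariate CLT gives $\sqrt N\bigl(\gamma_0^N[f]-\gamma_0[f]\bigr)\xrightarrow{\mathcal D}\mathcal N\bigl(0,\textup{Var}_{\pi_0}[f]\bigr)$ jointly over any finite family of test functions, while $Z_0^N\equiv1$. For the inductive step I would carry a version of $(CLT_{k-1})$ strong enough to be applied simultaneously to all the test functions generated at step $k$: joint convergence of $\sqrt N(\gamma_{k-1}^N-\gamma_{k-1})$ at the relevant finite collection of functions, \emph{together with} a stochastic-equicontinuity statement over the parametrized family $\{\,x\mapsto G_{k,T}(x)\,(g\circ T)(x)\,\}$ as $T$ ranges over a neighbourhood of $T_k^\star$. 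The smooth finite-dimensional parametrization of $\mathcal T$ (\cref{assumption:parametrization}), the smoothness of the flows (\cref{assumption:transport}) and of the potentials (\cref{assumption:potential}) are exactly what make this family well behaved (Donsker-type), and the polynomial growth of $f$ together with the moment and weight-boundedness assumptions (\cref{assumption:moments,assumption:uniformly_bounded}) keep all the functions produced by the recursion ($Q_{k,T_k^\star}[f]$, $K_k[f^2]$, \dots) inside the admissible class.

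\textbf{Step 1 (the learned flow can be frozen at its limit).} Using \cref{thm:WLLN} (so that $\mathcal L_k^N\to\mathcal L_k$ on compacts) together with \cref{assumption:approximate_minimizer,assumption:bassin}, one first obtains $T_k\xrightarrow{P}T_k^\star$. I then want to show that replacing the random $T_k$ by the \emph{deterministic} map $T_k^\star$ everywhere in $Z_k^N$ and in the numerator and denominator of $\pi_k^N[f]$ costs only $o_P(N^{-1/2})$. The algebraic lever is the change-of-variables identity behind \eqref{eq:discrete_importance_weight}: for \emph{every} diffeomorphism $T$ one has $\gamma_{k-1}[G_{k,T}]=Z_k$ and $\gamma_{k-1}[G_{k,T}\,(g\circ T)]=Z_k\,\pi_k[g]$, so these population functionals do not depend on $T$ at all --- the first-order sensitivity of the estimator to the flow vanishes at the population level. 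Writing $\gamma_{k-1}^N[\phi_{T_k}]-\gamma_{k-1}[\phi_{T_k^\star}]=(\gamma_{k-1}^N-\gamma_{k-1})[\phi_{T_k}]$ for $\phi_T\in\{G_{k,T},\,G_{k,T}(g\circ T)\}$ (the population terms cancelling by those identities), stochastic equicontinuity of the empirical process over $\{\phi_T\}$ plus $T_k\xrightarrow{P}T_k^\star$ gives $(\gamma_{k-1}^N-\gamma_{k-1})[\phi_{T_k}]=(\gamma_{k-1}^N-\gamma_{k-1})[\phi_{T_k^\star}]+o_P(N^{-1/2})$; the Lipschitz-type growth condition on $f$ is what controls the increments $|f(T_k(x))-f(T_k^\star(x))|$ appearing here. (An alternative route is to establish a $\sqrt N$-rate for the M-estimator $T_k$ and expand $G_{k,T}$ to first order, the linear term again being annihilated by the two identities; but the equicontinuity argument only uses the convergence in probability that \cref{assumption:bassin} provides.) This is the step that genuinely goes beyond the classical SMC CLTs --- it is exactly where the particle-dependence of the normalizing flow has to be shown harmless --- and I expect it to be the main obstacle.

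\textbf{Step 2 (one SMC step with a deterministic map).} With $T_k$ frozen at $T_k^\star$, the $k$-th iteration is an ordinary SMC step --- reweight by $G_{k,T_k^\star}$ and push forward by $T_k^\star$, possibly resample, then mutate by the $\pi_k$-invariant kernel $K_k$ --- so I would follow the standard Del~Moral / Chopin / K\"unsch inductive recursion under \cref{assumption:kernel,assumption:kernel_2} on the kernels. Conditioning on the particle history up to time $k-1$, decompose $\gamma_k^N[f]-\gamma_k[f]$ into: (a) a reweight-and-transport term $\gamma_{k-1}^N[Q_{k,T_k^\star}[f]]-\gamma_{k-1}[Q_{k,T_k^\star}[f]]$, to which $(CLT_{k-1})$ applies and which contributes $\mathbb V^\gamma_{k-1}[Q_{k,T_k^\star}[f]]$ to the variance; (b) a mutation-noise term $\tfrac1N\sum_i w_k^i\bigl(f(X_k^i)-K_k[f](\widetilde X_k^i)\bigr)$, a martingale-difference array whose conditional variance converges, by \cref{thm:WLLN}, to $\mathbb V_k^{\textup{inc}}[f]$ built from $\mathcal G_k[f]$; and, when resampling fires, a multinomial-resampling term that collapses the accumulated weight variance and injects $Z_k^2\,\textup{Var}_{\pi_k}[f]$. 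To make the \emph{adaptive} resampling rigorous one shows $\textup{ESS}_k^N/N\xrightarrow{P}\textup{nESS}_k$ (again via \cref{thm:WLLN}); assuming $\textup{nESS}_k\neq A$ for every $k$, the random resampling schedule then coincides with the limiting schedule $\mathcal K_{\textup{opt}}$ with probability tending to one, and on that event one is back to a fixed-schedule SMC. A Lindeberg/Cram\'er--Wold argument for the sum of (a), (b) and the resampling contribution gives the Gaussian limit with the recursion \eqref{eq:assymptotic variance}, the case split $k\in\mathcal K_{\textup{opt}}$ versus $k_p<k<k_{p+1}$ being precisely ``resampling resets the incremental variance''.

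\textbf{Step 3 (normalization and conclusion).} Finally, apply the delta method to $\pi_k^N[f]=\gamma_k^N[f]/\gamma_k^N[1]$ using the joint CLT for $(\gamma_k^N[f],\gamma_k^N[1])$ from Steps~1--2: since $\gamma_k[1]=Z_k$ and the gradient of $(a,b)\mapsto a/b$ at $(\gamma_k[f],Z_k)$ is $(Z_k^{-1},-Z_k^{-1}\pi_k[f])$, the limiting variance is $Z_k^{-2}\,\mathbb V^\gamma_k[f-\pi_k[f]]=\mathbb V^\pi_k[f]$; taking $f\equiv1$ in the unnormalized statement gives the CLT for $Z_k^N$, and $k=K$ yields the claims for $\pi$ and $Z$. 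Everything except Step~1 (harmlessness of the learned flow) and the schedule-stabilization remark in Step~2 is the classical SMC bookkeeping, so those two points are where the argument has to do real work.
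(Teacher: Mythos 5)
Your proposal is correct and takes essentially the same route as the paper's proof: induction over the unnormalized measures with the conditional decomposition into a past-fluctuation term and a mutation/resampling term, the learned flow rendered harmless exactly as you describe in Step~1 --- via the $T$-independence of $\pi_{k-1}\brakets{Q_{k,T}[f]}$ (the paper's \cref{prop:independence_from_T}) combined with asymptotic stochastic equicontinuity of the empirical process over the parametrized flow family and $\theta_k^N\xrightarrow{P}\theta_k^{\star}$ --- then the standard SMC bookkeeping (the paper dispatches the adaptive schedule by invoking the Douc--Moulines/Del Moral reduction to the limiting resampling times, which is your schedule-stabilization remark) and Slutsky for the normalized statement. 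The only cosmetic difference is that the paper establishes the conditional mutation CLT through conditional characteristic-function expansions rather than an explicit Lindeberg/Cram\'er--Wold martingale argument, which amounts to the same thing.
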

The asymptotic variances $\mathbb{V}^{\gamma}_k$ and $\mathbb{V}^{\pi}_k$ depend only on the  maps $T_k^{*}$ and not on the local variations of the family $\mathcal{T}$ around $T_k^*$. This is a consequence of the particular form of the IS weights which provide an exact correction regardless of the NF selected as summarized by the following identity:
\begin{align}\label{eq:IS_relation}
	\pi_k\brakets{f}= \frac{\pi_{k-1}\brakets{Q_{k,T}\brakets{f}}}{\pi_{k-1}\brakets{G_{k,T}}},\qquad \forall T\in \mathcal{T}.
\end{align}
In the ideal case when $T^{\star}_k$ are exact transport maps from $\pi_{k-1}$ to $\pi_k$, the ESS resampling criterion $\textup{ESS}_k^N/N$ is always equal to $1$ and thus resampling is never triggered. Moreover, a direct computation shows that the asymptotic variance $\mathbb{V}_{k}^{\pi}[f]$ is exactly equal to $\textup{Var}_{\pi_k}[f]$. This illustrates the benefit of introducing NFs to improve SMC. A proof is provided in \cref{proof:clt} along with a similar result (\cref{thm:CLT_2}) for \cref{alg:SMC-detailed}.
\section{Continuous-time scaling limit}\label{sec:scalinglimit}

We consider the setting where $\pi_k$ arise from the time-discretization of a continuous-time path $(\Pi_t)_{[0,1]}$ of densities connecting $\pi_0$ to $\pi$; i.e. $\pi_k$ is of the form $\pi_k = \Pi_{t_k}$ with  $t_k= k\lambda $ and $\lambda = \frac{1}{K}$. We write $V_t(x)$ and $Z_t$ to denote the potential and unknown normalizing constant of $\Pi_t$
 and $\Gamma_t(x) = \exp(-V_t(x))$. We are here interested in identifying the ``population'' behavior of AFT (i.e. $N\rightarrow \infty$) as $\lambda \rightarrow 0$ when ULA kernels are used and no resampling is performed as in AIS. To simplify the analysis, we further consider in this Section the ideal situation where $T_k$ is an exact minimizer of the population loss $\mathcal{L}_k$.
 Rigorous proofs of the results discussed here can be found in \cref{sec:proofs_continuous_time}.
\subsection{Settings}\label{sec:continuous_time_setting}
Without resampling, the population version of AFT behaves as a sequential IS algorithm as defined in \cref{sec:SIS} where it is possible to \textit{collapse} the transport step and mutation step into one single Markov kernel $M_k(x,x') =K_k(T_k(x),x')$.
Similarly we can collapse the corresponding backward kernels and the resulting extended target distributions $\bar{\pi}_k$ are still given by \eqref{eq:generalizedCrooksJarzynski} with modified IS weights 
\begin{align}\label{eq:extendedtargetcollapse}
w_{k}(x_{0:k})=\underbrace{\prod_{l=1}^k  \frac{\gamma_l(x_l)} {\gamma_l K_l(x_l)}}_\text{$r_k(x_{1:k})$} \prod_{l=1}^k G_{l,T_l}(x_{l-1}),
\end{align}
where $r_k(x_{1:k})=1$ for $\pi_l$-invariant MCMC kernels $K_l$ as used in \cref{alg:SMC}; see \cref{subsec:collapse} for a derivation.
To ensure that the laws $\bar{\eta}_k$ and $\bar{\pi}_k$ of the Markov chain $X_{0:k}$ converge to some continuous-time limits, $K_k$ are chosen to be ULA kernels\footnote{The random walk MH algorithm also admits a Langevin diffusion as scaling limit when $\lambda\rightarrow 0$ \citep{Gelfand:1991,Choi:2019} but the technical analysis is much more involved.}; i.e. $K_{k}(x, x') $ is a Gaussian density in $x'$ with mean $x -\lambda V_{k}(x)$ and covariance $2\lambda I$. 
In this case, $\gamma_{k}K_k(x)=  \int \gamma_{k}(y)K_{k}(y,x) \diff y$ is  intractable and so is $r_k(x_{1:k})$. This is not an issue as we are only interested here in identifying the theoretical scaling limit. 
To ensure $\bar{\eta}_k$ and $\bar{\pi}_k$ admit a limit, we also consider NFs of the form: 
\begin{align}\label{eq:main_nf_continuous_time}
	T(x) = x + \lambda A_{\theta}(x),
\end{align}
where $(\theta,x)\mapsto A_{\theta}(x)$  is from $\Theta\times \mathcal{X}$ to $\mathcal{X}$ and $\Theta$  is a compact parameter space.
The continuous-time analogues of NFs sequences $(T_k)_{k\in\{1,...,K\}}$ are represented by a set $\mathcal{A}$ of time-dependent controls of the form $\alpha_t(x)=A_{\theta_t}(x)$, where $t\mapsto\theta_t$ is a 1-Lipschitz trajectory in $\Theta$. To any control $\alpha$ corresponds an NFs sequence $(T_k)_{k\in\{1,...,K\}}$ defined by $T_k(x) = x + \lambda \alpha_{t_k}(x) $.
\subsection{Continuous-time limits}\label{sec:scaling_limit_feynamn_kac}
\paragraph{Limiting forward process.}
Using a similar approach to \cite{Dalalyan:2014}, 
the Markov chain $X_{0:K}$ under $\bar{\eta}_K$ converges towards a stochastic process $X_{[0,1]}$ defined by the following Stochastic Differential Equation (SDE)
\begin{align}\label{eq:controled_sto}
	\diff X_t = \parens{\alpha_{t}(X_t)- \nabla V_t(X_t) }\diff t + \sqrt{2}\diff B_t,
\end{align}
where $X_0 \sim \pi_0$ and $(B_t)_{t\geq0}$ is a standard Brownian motion. 
We denote by $\bar{\Lambda}^{\alpha}_t$ the joint distribution of this process up to time $t$ and by $\Lambda^{\alpha}_t$ its marginal at time $t$.
\paragraph{Limiting weights.}
The weight $w_K(X_{0:K})$ in \eqref{eq:extendedtargetcollapse} is such that $r_K(X_{1:K})\rightarrow 1$ as the invariant distribution of the ULA kernel $K_k$ converges to $\pi_k$ when $\lambda \rightarrow 0$ while the logarithm of the product of  $G_{l,T_l}(X_{l-1})$ is a Riemann sum whose limiting value is the following integral:
\begin{align}
\sum_{l=1}^K \log(G_{l,T_l}(X_{l-1})) \xrightarrow[\lambda \rightarrow 0 ]{}  \int_0^1 g_{s}^{\alpha}(X_s)\diff s,
\end{align}
with $X_{[0,1]}$ defined in \cref{eq:controled_sto} and $g^{\alpha}_{t}(x)$ being the dominating term in the Taylor expansion of $\log(G_{l,T_l}(x))$ w.r.t. time:
\begin{align}
	g^{\alpha}_{t}(x) =  \nabla \cdot \alpha_t(x) - \nabla_x V_t(x)^{\top}\alpha_t(x) - \partial_t V_t(x).
\end{align}
 The limit of IS weights $w_k(X_{0:k})$ is thus identified as
\begin{align}\label{eq:continuous_time_IS}
	w^{\alpha}_{t}(X_{\left [0,t\right]}) = \exp\parens{ \int_0^t g^{\alpha}_{s}(X_s)\diff s }.
	\end{align}
In the context of \textit{non-equilibrium dynamics}, $g^{\alpha}_{t}(x)$ is known as \textit{instantaneous work} \cite{Rousset:2006} and is constant in the ideal case where $\Pi_t = \Lambda^{\alpha}_t $.
\paragraph{Limiting objective.}
To identify a non-trivial limiting loss, we consider the following aggregation of all $\mathcal{L}_k(T_k)$ 
\begin{align}\label{eq:discrete_loss}
	\mathcal{L}^{tot}_{\lambda}(\alpha) := \lambda^{-1} \sum_{k=1}^{K} \mathcal{L}_{k}(T_k). 
\end{align}
The next result shows that \cref{eq:discrete_loss} converges towards a non-trivial loss $\mathcal{M}(\alpha)$ as $\lambda \rightarrow 0$ under three assumptions stated in \cref{sec:assumptions_continuous}: \ref{assump_cont:lipschitz_potential} and \ref{assump_cont:Lipschiz_control} on the smoothness of $V_t(x)$ and $A_{\theta}(x)$ and \ref{assump_cont:moment} on the moments of $\Pi_t$.
\begin{prop}\label{prop:convergence_loss_main}
Under \cref{assump_cont:lipschitz_potential,assump_cont:Lipschiz_control,assump_cont:moment}, for $\lambda$ small enough, it holds that for all $\alpha\in \mathcal{A}$
	\begin{align}
		\verts{ \mathcal{L}^{tot}_{\lambda}(\alpha) -  \mathcal{M}(\alpha)  }\leq \lambda C,
		\end{align}
		where $C$ is independent of $\lambda$ and
	\begin{align}\label{eq:variance_work}
	\mathcal{M}(\alpha) = \frac{1}{2}\int_0^1 \parens{\Pi_{t}\brakets{\parens{g_{t}^{\alpha}}^2} - \Pi_{t}\brakets{g_{t}^{\alpha}}^2}\diff t.
\end{align}
\end{prop}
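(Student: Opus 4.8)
The plan is to expand each term $\mathcal{L}_k(T_k)$ in a Taylor series in $\lambda$ around $\lambda=0$, using the ansatz $T_k(x) = x + \lambda\,\alpha_{t_k}(x)$ from \cref{eq:main_nf_continuous_time}, and show that the leading contribution vanishes, the first-order contribution assembles into a Riemann sum converging to $\mathcal{M}(\alpha)$, and the remainder is $O(\lambda)$ uniformly. First I would recall from \cref{eq:kl_loss_jacobian} that $\mathcal{L}_k(T) = \pi_{k-1}[h_T]$ with $h_T(x) = V_k(T(x)) - V_{k-1}(x) - \log|\nabla T(x)|$. Substituting $T(x) = x + \lambda \alpha_{t_k}(x)$ and using $V_k = V_{t_k}$, $V_{k-1} = V_{t_{k-1}}$ with $t_k - t_{k-1} = \lambda$, I would Taylor-expand three pieces: (i) $V_{t_k}(x + \lambda\alpha_{t_k}(x)) = V_{t_k}(x) + \lambda \nabla V_{t_k}(x)^\top \alpha_{t_k}(x) + \tfrac{\lambda^2}{2}\alpha_{t_k}(x)^\top \nabla^2 V_{t_k}(x)\alpha_{t_k}(x) + O(\lambda^3)$; (ii) $V_{t_k}(x) - V_{t_{k-1}}(x) = \lambda\,\partial_t V_{t_k}(x) + O(\lambda^2)$ (equivalently expand around $t_k$ or a midpoint — the $O(\lambda)$ error is harmless); (iii) $\log|\nabla T(x)| = \log\det(I + \lambda \nabla\alpha_{t_k}(x)) = \lambda\,\nabla\cdot\alpha_{t_k}(x) - \tfrac{\lambda^2}{2}\Tr\big((\nabla\alpha_{t_k}(x))^2\big) + O(\lambda^3)$. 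Collecting terms, the $O(\lambda)$ part of $h_{T_k}(x)$ is exactly $-\lambda\, g^{\alpha}_{t_k}(x)$ with $g^\alpha_t$ as defined in the text, so $\pi_{k-1}[h_{T_k}]$ has a term $-\lambda\,\pi_{k-1}[g^\alpha_{t_k}]$, and the $O(\lambda^2)$ part of $h_{T_k}$ gives the quadratic contribution.

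The key cancellation to exhibit is that $\pi_{k-1}[-\lambda g^\alpha_{t_k}] = -\lambda\,\pi_{t_{k-1}}[g^\alpha_{t_k}]$ does \emph{not} cancel by itself, but rather combines with part of the $\lambda^2$ term. The cleanest route is to note that $g^\alpha_t$ is (up to lower order) the instantaneous work along the flow, and that $\Pi_t[g^\alpha_t]$ equals $-\tfrac{d}{dt}\log Z_t$-type quantity; more concretely, one uses the identity that integrating $\nabla\cdot\alpha_t - \nabla V_t^\top \alpha_t$ against $\Pi_t$ gives $\int (\nabla\cdot\alpha_t)\,\Pi_t - \int (\nabla V_t^\top\alpha_t)\,\Pi_t = \int (\nabla\cdot\alpha_t)\Pi_t + \int \alpha_t^\top\nabla\Pi_t / \Pi_t \cdot \Pi_t$... wait — more simply, $\int (\nabla\cdot\alpha_t)\Pi_t\,dx = -\int \alpha_t^\top \nabla\Pi_t\,dx = \int (\nabla V_t^\top\alpha_t)\Pi_t\,dx$ by integration by parts (using $\nabla\Pi_t = -\Pi_t\nabla V_t$ and decay at infinity from \ref{assump_cont:moment}), so $\Pi_t[\nabla\cdot\alpha_t - \nabla V_t^\top\alpha_t] = 0$ and hence $\Pi_t[g^\alpha_t] = -\Pi_t[\partial_t V_t] = \tfrac{d}{dt}\log Z_t$. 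Therefore the first-order term $\lambda^{-1}\sum_k \pi_{k-1}[-\lambda g^\alpha_{t_k}] = -\sum_k \pi_{t_{k-1}}[g^\alpha_{t_k}]$ is a Riemann sum for $-\int_0^1 \Pi_t[g^\alpha_t]\,dt = -\int_0^1 \tfrac{d}{dt}\log Z_t\,dt = -\log(Z_1/Z_0)$, a constant independent of $\alpha$ — but since $\mathcal{L}^{tot}_\lambda$ as defined still contains this $\alpha$-independent piece, one checks that in fact the problem intends $\mathcal{M}(\alpha)$ to capture the genuinely $\alpha$-dependent fluctuation; completing the square on the $\lambda^2$-order terms shows that the quadratic part of $\pi_{k-1}[h_{T_k}]$ contributes $\tfrac{\lambda}{2}\Pi_{t_k}[(g^\alpha_{t_k})^2]$ after the same integration-by-parts manipulations identify the Hessian and $\Tr((\nabla\alpha)^2)$ terms with $\Pi_t[(g^\alpha_t)^2] - \Pi_t[g^\alpha_t]^2$ up to the subtracted square coming from the first-order expansion. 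Multiplying by $\lambda^{-1}$ and summing gives the Riemann sum $\tfrac12\sum_k \lambda\,(\Pi_{t_k}[(g^\alpha_{t_k})^2] - \Pi_{t_k}[g^\alpha_{t_k}]^2) \to \mathcal{M}(\alpha)$.

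For the uniform $O(\lambda)$ bound I would control the Taylor remainders: each remainder involves third derivatives of $V_t$ and $\alpha_t$ evaluated along $x + s\lambda\alpha_{t_k}(x)$, together with polynomial-in-$x$ factors like $\Vert\alpha_{t_k}(x)\Vert^3$; bounding these uses the Lipschitz/smoothness assumptions \ref{assump_cont:lipschitz_potential} and \ref{assump_cont:Lipschiz_control} (which give at-most-polynomial growth of $\alpha$ and its derivatives) and the uniform-in-$t$ moment bounds \ref{assump_cont:moment} on $\Pi_t$, so that $\pi_{k-1}$ of each remainder is $O(\lambda^2)$ with constant independent of $k$ and $\theta$; there are $K = 1/\lambda$ such terms, contributing $O(\lambda)$ total after the $\lambda^{-1}$ prefactor. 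The Riemann-sum-to-integral error is likewise $O(\lambda)$ since $t\mapsto \Pi_t[(g^\alpha_t)^2] - \Pi_t[g^\alpha_t]^2$ is Lipschitz in $t$ (again from the smoothness and moment assumptions, plus 1-Lipschitzness of $t\mapsto\theta_t$). Taking the constant $C$ to be the sum of these uniform bounds — uniform over $\alpha\in\mathcal{A}$ because all estimates depend only on the Lipschitz constants, not on the particular trajectory — finishes the proof. I expect the main obstacle to be the bookkeeping in the second-order expansion: correctly matching the Hessian term $\tfrac12\alpha^\top\nabla^2 V_t\,\alpha$, the $-\tfrac12\Tr((\nabla\alpha)^2)$ Jacobian term, and the cross terms, then recognizing — after integration by parts against $\Pi_t$ — that their sum collapses to $\tfrac12(\Pi_t[(g^\alpha_t)^2] - \Pi_t[g^\alpha_t]^2)$; the integration-by-parts identities (valid under \ref{assump_cont:moment}) are the crucial algebraic lever, and getting the variance structure rather than just a second moment is the delicate point.
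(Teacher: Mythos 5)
There is a genuine gap, and it sits exactly where you say the "delicate point" is. First, your first-order bookkeeping is off: the aggregated objective is $\lambda^{-1}\sum_k \mathcal{L}_k(T_k)$, so the first-order piece is $\lambda^{-1}\sum_k\pi_{k-1}[-\lambda g^{\alpha}_{t_k}] = -\sum_k \Pi_{t_{k-1}}[g^{\alpha}_{t_k}]$, which is \emph{not} a Riemann sum for $-\int_0^1\Pi_t[g^{\alpha}_t]\,\diff t$ (that would carry a weight $\lambda$); it is $\lambda^{-1}$ times one, i.e.\ it behaves like $\lambda^{-1}\log(Z_0/Z_1)$ and diverges unless the per-step quantity is taken to be the full KL, $\mathcal{L}_k + \log(Z_{t_k}/Z_{t_{k-1}})$. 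Your attempt to wave this away ("the problem intends $\mathcal{M}(\alpha)$ to capture the genuinely $\alpha$-dependent fluctuation") does not resolve it: the divergent constant must be cancelled exactly by the expansion of the normalizing constants, and the $O(1)$ remainder of that cancellation is precisely where part of the limit lives. Second, and more seriously, the claimed collapse of the $\lambda^2$-order terms of $\pi_{k-1}[h_{T_k}]$ to $\tfrac{\lambda}{2}\bigl(\Pi_t[(g_t^{\alpha})^2]-\Pi_t[g_t^{\alpha}]^2\bigr)$ is asserted rather than derived, and the mechanism you point to is not the right one: the subtracted mean does not come from "the first-order expansion" of $h_{T_k}$; it comes from the second-order expansion of $\log(Z_{t_k}/Z_{t_{k-1}})$ (equivalently, of the $t$-dependence of the target $\Pi_t$), whose second derivative produces variance-type terms such as $\mathrm{Var}_{\Pi_t}(\partial_t V_t)$. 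Each summand is a KL between nearby distributions, so it is genuinely second order, and one must expand both the integrand \emph{and} the reference measure/normalizer coherently before the integration-by-parts identities can assemble the variance of $g_t^{\alpha}$; your sketch stops exactly where that computation begins.

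For comparison, the paper's proof avoids this bookkeeping by introducing the two-parameter function $R_{t,s}=\textup{KL}\bigl((T_{t,s})_{\#}\Pi_t\,\|\,\Pi_s\bigr)$ (with $T_{t,s}(x)=x+(s-t)\alpha_s(x)$), noting $R_{t,t}=0$ and $\partial_s R_{s,s}=0$ (the first variation of a KL at its minimum vanishes — this is the exact cancellation of your zeroth- and first-order terms, normalizers included), and writing the exact identity $R_{t_k,t_{k+1}}=-\int_{t_k}^{t_{k+1}}\int_{t_k}^{s}\partial_t\partial_s R_{t,s}\,\diff t\,\diff s$. The whole proof then reduces to showing $-\partial_t\partial_s R_{t,s}=\Pi_t\bigl[(\bar g_t^{\alpha})^2\bigr]+O(|t-s|)$ uniformly over $\mathcal{A}$ (\cref{prop:flow_error}, via \cref{lem:first_partial_derivatives_loss}), after which the double integral over each triangle gives $\tfrac{\lambda^2}{2}\Pi_{t_k}[(\bar g_{t_k}^{\alpha})^2]+O(\lambda^3)$ and the Riemann-sum step yields the $O(\lambda)$ rate, with uniformity in $\alpha$ because all constants depend only on the Lipschitz and moment bounds in \cref{assump_cont:lipschitz_potential,assump_cont:Lipschiz_control,assump_cont:moment}. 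If you want to keep your direct Taylor-expansion route, you must (i) work with the per-step KL so the $\lambda^{-1}\log(Z_0/Z_1)$ term cancels identically, and (ii) carry the expansion of $\log(Z_{t_k}/Z_{t_{k-1}})$ and of the time-shifted potential to second order and exhibit, via the integration-by-parts identities, the full collapse to the centered quantity $\Pi_t[(g_t^{\alpha})^2]-\Pi_t[g_t^{\alpha}]^2$ with remainders that are $O(\lambda^3)$ per step uniformly in $\alpha$; as written, neither step is done.
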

 The optimal NFs $(T_k)_{1:K}$ are thus expected to converge towards some $\alpha^{\star}$ minimizing $\mathcal{M}(\alpha)$ over $\mathcal{A}$ as made precise in \cref{prop:convergence_controls} of \cref{sec:convergence_optimal_control}. 
Moreover when the class of NFs is expressive, i.e. $\mathcal{A}$ is rich enough, then $\mathcal{M}(\alpha^{\star})=0$ and thus $g^{\alpha}_t$ are constant and $\alpha^{\star}$ satisfies the Partial Differential Equation (PDE)
\begin{align}
	0 = \nabla \cdot \alpha^{\star}_t(x) - \nabla_x V_t(x)^{\top}\alpha^{\star}_t(x) - \partial_t V_t(x)+\Pi_t\brakets{\partial_t V_t}.
	\end{align} 
This PDE has appeared, among others, in \citet[pp. 273--275]{lelievreroussetstoltz2010free} and \citep{vaikuntanathan2008escorted,reich2011,heng2015gibbs}. Its solution defines a deterministic flow $\alpha^{\star}_t$ that transports mass along the path $(\Pi^{\alpha}_t)_{[0,1]}$; i.e. if $X_t$ is a solution to an ODE of the form $\dot{X_t} = \alpha^{\star}_t(X_t)$ with initial values $X_0 \sim \Pi_0$, then $X_t \sim \Pi_t$. 
\paragraph{Feynman--Kac measure.} Given a control $\alpha$, we consider the Feynman--Kac measure $\overline{\Pi}_t$ defined for any bounded continuous functional $f$ of the process $X_{[0,t]}$ in \cref{eq:controled_sto}
\begin{align}\label{eq:feynman-kac}
	\overline{\Pi}^{\alpha}_t[f] = \frac{\overline{\Lambda}^{\alpha}_t\brakets{w^{\alpha}_{t} f}}{\overline{\Lambda}^{\alpha}_t\brakets{w^{\alpha}_{t}}}.
\end{align}
By a similar argument as in \cite{Rousset:2006}, we show in \cref{prop:controlled_non_equilibrium_dynamics_with_birth_death} of \cref{sec:continuoustimeIS} that $\overline{\Pi}^{\alpha}_t$ admits $\Pi_t$ as a marginal at time $t$ regardless of the choice of  $\alpha$.  Using the optimal control $\alpha^{\star}$ in \cref{eq:controled_sto,eq:feynman-kac} gives rise to $\overline{\Lambda}^{\star}$ and $\overline{\Pi}^{\star}_t$ which are equal when $\mathcal{M}(\alpha^{\star})=0$. Next, we show that $\overline{\Pi}^{\star}_t$ is the scaling limit of $\overline{\pi}_k$.

\subsection{Convergence to the continuous-time limit}\label{sec:continuous_time_convergence}
As the measures $\overline{\pi}_k$ and $\overline{\Pi}^{\star}_t$ are defined on different spaces, we construct a sequence of interpolating measures $\bar{\Pi}^{\lambda}_t$ defined over the same space as $\bar{\Pi}^{\star}_t$ and whose marginal at the joint times $\{t_0,...,t_K\}$ is exactly equal to $\bar{\pi}_k$; see \cref{sec:notation} for details. 
\cref{prop:discretization_error_main} provides a convergence rate for the interpolating measures $\bar{\Pi}^{\lambda}_t$ towards $\overline{\Pi}^{\star}_t$ as $\lambda \rightarrow 0$, thus establishing $\overline{\Pi}^{\star}_t$ as the scaling limit of $\overline{\pi}_k$; see  \cref{sec:convergence_optimal_control} for the proof.
\begin{thm}\label{prop:discretization_error_main}
Under \cref{assump_cont:lipschitz_potential,assump_cont:Lipschiz_control,assump_cont:moment,assump_cont:moment_2,assump_cont:separated_optimum,assump_cont:existence_interpolating_controls,assum_cont:bounded_weights}, then for $\lambda$ small enough there exists a finite $C$ such that for any $t\in [0,1]$:\begin{align}
	  \textup{KL}(\bar{\Pi}^{\star}_t||\bar{\Pi}^{\lambda}_t)  \leq C\sqrt{\lambda}.
\end{align}
\end{thm}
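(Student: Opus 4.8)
The plan is to compare the two path-space measures by the chain rule for Radon--Nikodym densities combined with Girsanov's theorem. Both are reweightings of diffusion laws: by \cref{eq:feynman-kac}, $\bar{\Pi}^{\star}_t = (w^{\alpha^{\star}}_{t}/Z^{\star}_t)\,\bar{\Lambda}^{\star}_t$ with $Z^{\star}_t := \bar{\Lambda}^{\star}_t\brakets{w^{\alpha^{\star}}_{t}}$, while by the construction of \cref{sec:notation} the interpolating measure reads $\bar{\Pi}^{\lambda}_t = (w^{\lambda}_{t}/Z^{\lambda}_t)\,\bar{\Lambda}^{\lambda}_t$, where $\bar{\Lambda}^{\lambda}_t$ is the law of a Dalalyan-type piecewise-drift interpolation of the forward chain $X_{0:k}\sim \bar{\eta}_k$ --- explicitly, $\diff X_s = b^{\lambda}_s(X)\,\diff s + \sqrt{2}\,\diff B_s$ with $b^{\lambda}_s(X) = \alpha^{\lambda}_{t_k}(X_{t_{k-1}}) - \nabla V_{k}(T_k(X_{t_{k-1}}))$ on $s\in(t_{k-1},t_k]$ and $\alpha^{\lambda}$ the interpolating control of \cref{assump_cont:existence_interpolating_controls} --- and where $w^{\lambda}_{t}$, $Z^{\lambda}_t$ are the interpolated version of the weight \cref{eq:extendedtargetcollapse} and its normalization. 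All four measures are mutually absolutely continuous: the weights are strictly positive, and for the diffusion change Girsanov applies because the two drifts differ by a path functional whose moments are controlled by \cref{assump_cont:lipschitz_potential,assump_cont:Lipschiz_control,assump_cont:moment,assump_cont:moment_2}. Hence $\bar{\Pi}^{\star}_t \ll \bar{\Pi}^{\lambda}_t$ and, writing $b^{\star}_s(X) = \alpha^{\star}_s(X_s) - \nabla V_s(X_s)$ for the drift of \cref{eq:controled_sto},
\begin{align*}
	\textup{KL}(\bar{\Pi}^{\star}_t||\bar{\Pi}^{\lambda}_t) ={}& \E_{\bar{\Pi}^{\star}_t}\brakets{\log \tfrac{\diff \bar{\Lambda}^{\star}_t}{\diff \bar{\Lambda}^{\lambda}_t}}\\ &{}+ \E_{\bar{\Pi}^{\star}_t}\brakets{\log w^{\alpha^{\star}}_{t} - \log w^{\lambda}_{t}}\\ &{}+ \log(Z^{\lambda}_t / Z^{\star}_t).
\end{align*}
It suffices to bound the Girsanov term, the log-weight term and the normalizing-constant term each by $C\sqrt{\lambda}$.

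The Girsanov term is the heart of the argument and the source of the $\sqrt{\lambda}$ rate. Girsanov's formula gives $\log(\diff\bar{\Lambda}^{\star}_t/\diff\bar{\Lambda}^{\lambda}_t)(X) = \tfrac{1}{\sqrt{2}}\int_0^t \langle b^{\star}_s(X) - b^{\lambda}_s(X),\, \diff W_s\rangle + \tfrac{1}{4}\int_0^t \Vert b^{\star}_s(X) - b^{\lambda}_s(X)\Vert^2\,\diff s$, with $W$ a $\bar{\Lambda}^{\star}_t$-Brownian motion. On each interval $(t_{k-1},t_k]$ I would split $b^{\star}_s - b^{\lambda}_s$ into a time-discretization part ($\alpha^{\star}_s$ versus $\alpha^{\star}_{t_k}$ and $\nabla V_s$ versus $\nabla V_k$, of size $O(\lambda)$ by \cref{assump_cont:lipschitz_potential,assump_cont:Lipschiz_control}), a space-discretization part ($\alpha^{\star}_{t_k}$ and $\nabla V_k$ evaluated at $X_s$ versus $X_{t_{k-1}}$, of size $O(\Vert X_s - X_{t_{k-1}}\Vert)$, which is of order $\sqrt{\lambda}$ in $L^2$ pointwise but contributes only $O(\lambda)$ after integration since $\int_{t_{k-1}}^{t_k}\E\Vert X_s - X_{t_{k-1}}\Vert^2\,\diff s = O(\lambda^2)$ over $1/\lambda$ intervals), and a control-approximation part $\alpha^{\star}_s - \alpha^{\lambda}_{t_k}$, handled via \cref{assump_cont:existence_interpolating_controls,assump_cont:separated_optimum} together with the rate of convergence of $\alpha^{\lambda}$ to $\alpha^{\star}$ deduced (as in \cref{prop:convergence_controls}) from \cref{prop:convergence_loss_main}. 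This gives $\E_{\bar{\Lambda}^{\star}_t}\int_0^t \Vert b^{\star}_s - b^{\lambda}_s\Vert^2\,\diff s = O(\lambda)$, hence $\textup{KL}(\bar{\Lambda}^{\star}_t||\bar{\Lambda}^{\lambda}_t) = O(\lambda)$. The extra cost arises because, under the \emph{tilted} measure, the Girsanov term equals $\E_{\bar{\Lambda}^{\star}_t}[(w^{\alpha^{\star}}_{t}/Z^{\star}_t)\log(\diff\bar{\Lambda}^{\star}_t/\diff\bar{\Lambda}^{\lambda}_t)]$: the drift-squared part stays $O(\lambda)$ after Cauchy--Schwarz against the weight (whose $L^2(\bar{\Lambda}^{\star}_t)$ norm is finite and bounded uniformly in $\lambda$ by \cref{assum_cont:bounded_weights}), but the stochastic-integral part is no longer mean zero, and Cauchy--Schwarz plus the It\^{o} isometry bound it by $\Vert w^{\alpha^{\star}}_{t}/Z^{\star}_t\Vert_{L^2(\bar{\Lambda}^{\star}_t)}\left(\tfrac{1}{2}\E_{\bar{\Lambda}^{\star}_t}\int_0^t\Vert b^{\star}_s - b^{\lambda}_s\Vert^2\,\diff s\right)^{1/2} = O(\sqrt{\lambda})$.

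For the log-weight term, $\log w^{\alpha^{\star}}_{t}(X) = \int_0^t g^{\alpha^{\star}}_{s}(X_s)\,\diff s$ by \cref{eq:continuous_time_IS}, while $\log w^{\lambda}_{t}(X) = \log r_k(X_{1:k}) + \sum_{l\le k}\log G_{l,T_l}(X_{t_{l-1}})$ up to a partial term for $t\in(t_{k-1},t_k)$. Using the Taylor expansion $\log G_{l,T_l}(x) = \lambda\, g^{\alpha^{\lambda}}_{t_l}(x) + O(\lambda^2)$ already recorded in \cref{sec:scaling_limit_feynamn_kac}, the Riemann-sum error ($O(\lambda)$), the convergence $\alpha^{\lambda}\to\alpha^{\star}$, and the fact that $\log r_k\to 0$ (a product of $k = 1/\lambda$ one-step ULA invariance defects, each of weak order $O(\lambda^2)$), this term is $O(\sqrt{\lambda})$, the moment hypotheses again being used to control the expectation under the weighted measure $\bar{\Pi}^{\star}_t$. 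Finally, the normalizing-constant term equals $\log(Z^{\lambda}_t / Z_t)$, since $\bar{\Lambda}^{\star}_t\brakets{w^{\alpha^{\star}}_{t}} = Z_t$ and $\bar{\Lambda}^{\lambda}_{t_k}\brakets{w^{\lambda}_{t_k}} = \bar{\eta}_k\brakets{w_k} = Z_{t_k}$ (the continuous-time analogue of \cref{eq:generalizedCrooksJarzynski}, cf.\ \cref{prop:controlled_non_equilibrium_dynamics_with_birth_death}); since $t\mapsto Z_t = \int\Gamma_t$ is Lipschitz (a consequence of \cref{assump_cont:lipschitz_potential,assump_cont:moment}) and bounded away from $0$, this is $O(\lambda)$.

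The main obstacle is the Girsanov term in its tilted form: controlling a stochastic-integral martingale under $\bar{\Pi}^{\star}_t$ rather than under $\bar{\Lambda}^{\star}_t$. This forces one to (i) establish moment bounds on the interpolated processes $X_{[0,t]}$ under both $\bar{\Lambda}^{\star}_t$ and $\bar{\Lambda}^{\lambda}_t$ that are \emph{uniform in $\lambda$}, so that the space-discretization estimate $\int_{t_{k-1}}^{t_k}\E\Vert X_s - X_{t_{k-1}}\Vert^2\,\diff s = O(\lambda^2)$ and its higher-moment analogues hold with $\lambda$-independent constants, and (ii) combine these with uniform $L^2$-integrability of the importance weights (\cref{assum_cont:bounded_weights}) so that all the $O(\cdot)$ estimates survive multiplication by the unbounded density $w^{\alpha^{\star}}_{t}/Z^{\star}_t$. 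A secondary difficulty is the $\log r_k$ contribution in the log-weight term: one must quantify the one-step weak discretization error of the Euler--Maruyama (ULA) scheme for Langevin dynamics uniformly over the $K$ temperatures, in a form strong enough to sum to $o(1)$ after reweighting. The remaining pieces --- the Lipschitz-in-time estimates, the Riemann-sum errors, the smoothness of $t\mapsto Z_t$, and the absolute-continuity chain --- are routine given the stated assumptions.
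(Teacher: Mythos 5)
Your overall architecture is the same as the paper's: a chained Radon--Nikodym decomposition of the KL into a Girsanov drift term, a log-weight (instantaneous work) term and a ULA-invariance-defect term, with the $\sqrt{\lambda}$ bottleneck coming from the stochastic-integral part of the Girsanov term, and with the convergence $\alpha^{\lambda}\to\alpha^{\star}$ obtained from uniform convergence of the loss (\cref{prop:convergence_loss_main}) plus the well-separation/coercivity assumption, exactly as in \cref{prop:convergence_controls}. The paper organizes this by first bounding $\textup{KL}(\bar{\Pi}^{\star}_t||\bar{\Pi}^{\alpha^{\star},\lambda}_t)$ (same control on both sides, \cref{prop:discretization_error}) and then bounding the change incurred by swapping $\alpha^{\star}\to\alpha^{\lambda}$ inside the interpolating measure (\cref{prop:bound_diff_KL}); you compare directly to the $\alpha^{\lambda}$-interpolation and split the drift difference inside one Girsanov integral.

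That reorganization creates a genuine gap in the control-approximation part. After your Cauchy--Schwarz/It\^{o}-isometry step, the stochastic-integral contribution is controlled by $\bigl(\mathbb{E}_{\bar{\Lambda}^{\star}_t}\int_0^t\Vert b^{\star}_s-b^{\lambda}_s\Vert^2\diff s\bigr)^{1/2}$, i.e.\ by an \emph{unweighted} expectation under the path law of the $\alpha^{\star}$-controlled SDE. The discretization parts of $b^{\star}-b^{\lambda}$ are indeed $O(\lambda)$ there (Lipschitz bounds plus $\mathbb{E}\Vert X_s-X_{t_{k-1}}\Vert^2\lesssim\lambda$, as in \cref{prop:relative_entropy}), but the piece $\alpha^{\star}_{t_k}-\alpha^{\lambda}_{t_k}$ must then be small in $L^2(\Lambda^{\star}_{t_{k-1}})$, whereas the only available rate, $\mathcal{S}^{\lambda}_t(\alpha^{\star},\alpha^{\lambda})\lesssim\lambda$ from \cref{prop:convergence_controls} via \cref{assump_cont:separated_optimum}, is in $L^2$ of the marginals $\Pi_{t_{k-1}}$. \cref{assum_cont:bounded_weights} bounds $g^{\alpha^{\star}}_t$ from above only, which gives $\Pi_s\lesssim\Lambda^{\star}_s$ but not the reverse; in general $\Lambda^{\star}_s\neq\Pi_s$ (they coincide only when $\mathcal{M}(\alpha^{\star})=0$), so the transfer you need is not available and this step fails as stated. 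The paper avoids the issue by never decoupling the weight from the Brownian cross-term: it keeps $\overline{w}^{\alpha^{\star}}_t$ inside, and the conditioning/coupling estimates of \cref{prop:Expectations_bound} (built on \cref{lem:main_bounds_gronwal}) convert each weighted term $\mathbb{E}\brakets{\overline{w}_t\,\Delta\beta_{t_k}(X_{t_{k-1}})^{\top}\!\int_{t_{k-1}}^{t_k}(\cdot\,\diff u+\diff B_u)}$ into $\lambda\,\Pi_{t_{k-1}}[\Vert\Delta\beta_{t_k}\Vert^2]^{1/2}$, which sums to $\mathcal{S}^{\lambda}_t(\alpha^{\star},\alpha^{\lambda})^{1/2}\lesssim\sqrt{\lambda}$; this conditioning-plus-coupling device is the missing ingredient in your argument. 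A secondary, lesser point: your claim that each one-step ULA defect contributes $O(\lambda^2)$ is not what can be proved here; the paper's \cref{lem:kerenl_error_main_lemma} gives $O(\lambda^{3/2})$ per step (which still suffices after summation), and it requires the exponential-moment condition \cref{assump_cont:moment_2}, so your treatment of $\log r_k$ would need that quantitative bound rather than the heuristic weak-order argument.
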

This result relies on \cref{assump_cont:moment_2,assump_cont:separated_optimum,assump_cont:existence_interpolating_controls,assum_cont:bounded_weights} in addition to \cref{assump_cont:lipschitz_potential,assump_cont:Lipschiz_control,assump_cont:moment} which are also stated in \cref{sec:assumptions_continuous}. \ref{assump_cont:moment_2} strengthens assumption \ref{assump_cont:moment} on the moments of $\Pi_t$. \ref{assump_cont:separated_optimum} guarantees the existence of a solution $\alpha^{\star}$ in $\mathcal{A}$ minimizing $\mathcal{M}$ and controls the local behavior of $\mathcal{M}$ near $\alpha^{\star}$. \ref{assump_cont:existence_interpolating_controls} guarantees the existence of solutions  $\alpha^{\lambda}$ in $\mathcal{A}$ minimizing $\mathcal{L}^{tot}_{\lambda}(\alpha)$ for any $\lambda = \frac{1}{K}$. Finally, \ref{assum_cont:bounded_weights} ensures the optimal control $\alpha^{\star}$ induces bounded IS weights.
\section{Applications}\label{sec:appli}
In this section we detail the practical implementation of AFT and empirically investigate performance against relevant baselines.

As discussed in Section \ref{sec:extensions}, we use three sets of particles-`train, test and validation' which improves robustness, avoids overfitting the flow to the particles and gives unbiased estimates of $Z$ when using the test set. We initialize our flows to the identity for the optimization at each time step. Algorithm \ref{alg:SMC-detailed}, in the supplement gives a summary. %

We concentrate our empirical value evaluation on the learnt flow, which is equivalent to using the test set particles. The learnt flow is of interest in deploying an efficient sampler on large scale distributed parallel compute resources. It is also of interest for inclusion as a subroutine in a larger system. Since modern hardware enables us to do large computations in parallel, the computation is dominated by algorithmic steps that are necessarily done serially, particularly repeat applications of the Markov kernel \cite{lee2010utility}.

As our primary, strong, baseline for AFT, we use a standard instance of SMC samplers \cite{Del-Moral:2006,ZhouJohansen2016} which corresponds to AIS with adaptive resampling and is also known as population annealing in physics \cite{hukushima2003population,barash2017gpu}. As observed many times in the literature and in our experiments, SMC estimates are of lower variance than AIS estimates. This SMC baseline is closely related to AFT since it corresponds to using AFT with an identity transformation $T_k(x)=x$ instead of a learnt flow.%

We largely use the number of transitions $K$ as a proxy for compute time. This is valid when the cost of evaluating the flow is modest relative to that of the other algorithmic steps, as it is for the trained flows in all non-trivial cases we consider. We only consider flows of no more than a few layers per transition, but deeper flows could start to form an appreciable part of the serial computation. In some cases, we use variational inference (VI) as a measure of behaviour without MCMC. In this case, evaluation time is not comparable and faster. Since we concentrate on trained flows, we do not evaluate training time in the benchmarks considered, though fast training of AFT could be of interest in further work. Both SMC and AFT use the same Markov kernels $K_k$, using HMC except where otherwise stated. We tune the step size to have a reasonable acceptance probability based on preliminary runs of SMC using a modest $K$.  Then for larger $K$ experiments, we linearly interpolate the step sizes chosen on the preliminary runs. We always use a linearly spaced geometric schedule and the initial distribution is always a multivariate standard normal. We repeat experiments 100 times. Further experimental details may be found in Appendix \ref{section:add_experiment}. We plan to make the code available within \url{https://github.com/deepmind}.

\subsection{Illustrative example}

We start with an easily visualized two dimensional target density as shown in Figure \ref{fig:two-dim-series}. All sensible methods should work in such a low dimensional case but it can still be informative. We investigate two families of flows based on rational quadratic splines \cite{Durkan:2019}. The first (termed AFTmf for mean field) operates on the two dimensions separately. The second family (denoted AFT in Figure \ref{fig:two-dim-series}) adds dependence to the splines using inverse autoregressive flows \cite{Kingma:2016}. Figure \ref{fig:two-dim-series} shows weighted samples from AFT as we anneal from a standard normal distribution. Figure \ref{fig:combined-results} (a) shows that AFT reduces the variance of the normalizing constant estimator relative to SMC. Conversely, we see that AFTmf actually \emph{increases} the variance relative to SMC for small numbers of transitions. Since the factorized approximation cannot model the dependence of variables the optimum of the KL underestimates the variance of the target. Later, in Sections \ref{section:vae} and \ref{section:cox}, we discuss examples where even a simple NF leads to an improvement for a modest number of transitions.
\begin{figure}[ht]
\begin{center}
\centerline{\includegraphics[width=\linewidth]{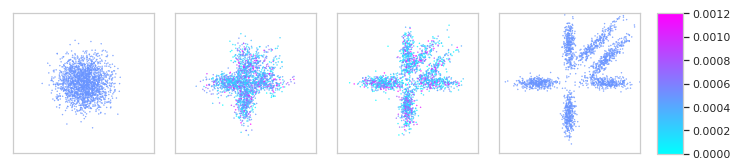}}
\caption{Weighted samples for a 2-D target density with AFT. The colours show the normalized weights which are clipped at the 95th percentile for clarity. The final samples are visually indistinguishable from the target.}
\label{fig:two-dim-series}
\end{center}
\vskip -0.2in
\end{figure}
\iffalse
\begin{figure}[ht]
\vskip 0.2in
\begin{center}
\centerline{\includegraphics[width=\linewidth]{two_dim_iaf}}
\caption{Results on the 2-D density shown in Figure \ref{fig:two-dim-series}.}
\label{fig:two-dim}
\end{center}
\vskip -0.2in
\end{figure}
\begin{figure}[ht]
\vskip 0.2in
\begin{center}
\centerline{\includegraphics[width=\linewidth]{latest_pines}}
\caption{Results on the log Gaussian Cox process.}
\label{fig:cox-results}
\end{center}
\vskip -0.2in
\end{figure}
\begin{figure}[ht]
\vskip 0.2in
\begin{center}
\centerline{\includegraphics[width=\linewidth]{funnel_full}}
\caption{Results on Neal's funnel.}
\label{fig:funnel-results}
\end{center}
\vskip -0.2in
\end{figure}
\fi
\begin{figure}[ht!]
\begin{center}
\centerline{\includegraphics[width=\linewidth]{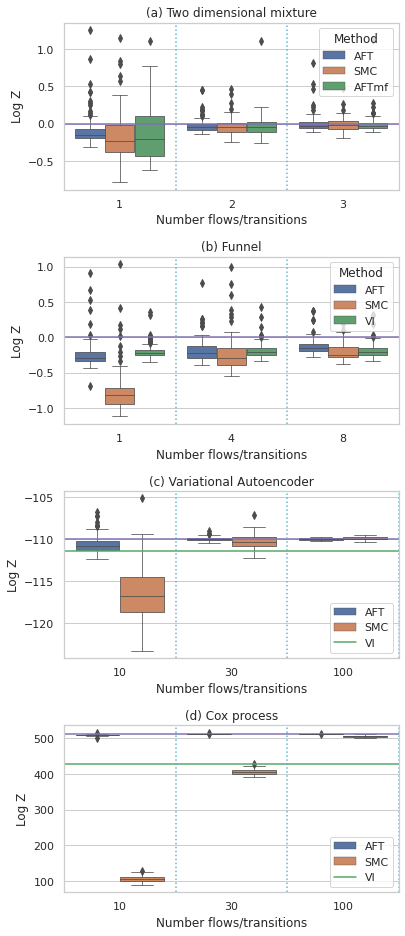}}
\caption{Results from the four different examples. Cyan lines denote gold standard values of the log normalizing constant. In (c) and (d) green horizontal lines denote the median value for an importance sampling estimate based on variational inference. Note that in (d) the small AFT error bars can make it difficult to see - it can be found next to the gold standard value in each case.}
\label{fig:combined-results}
\end{center}
\vskip -0.2in
\end{figure}
\subsection{Funnel distribution}\label{section:funnel}
We next evaluate the performance of the method on Neal's ten-dimensional `funnel' distribution \cite{Neal:2003}:
\begin{align}
x_{0} &\sim  \mathcal{N}(0, \sigma_{f}^2),\quad
x_{1:9} | x_{0} &\sim \mathcal{N}(\mathbf{0}, \exp(x_0) \mathbf{I}).
\end{align}
Here, $\sigma_{f}^2=9$. Many MCMC methods find this example challenging because there is a variety of length scales depending on the value of $x_0$ and because marginally $x_{1:9}$ has heavy tails. We use here slice sampling instead of HMC for the Markov kernels as recommended in \cite{Neal:2003}. For each flow we use an affine inverse autoregressive flow \cite{Kingma:2016}. In this example, we also compare against VI \cite{rezende+al:2014:icml} which uses the same number of flows. We then apply a simple importance correction to the VI samples to give an unbiased estimate of the normalizing constant.
Figure \ref{fig:combined-results} (b) shows the results. We see that for small number of flows/transitions VI performs best, followed by AFT. However, VI shows little further improvement with additional flows and in this regime AFT, SMC and VI perform similarly. 

\subsection{Variational Autoencoder latent space}\label{section:vae}

For our next example, we trained a variational autoencoder \cite{kingma+welling:2014, rezende+al:2014:icml} with convolution on the binarized MNIST dataset \cite{Salakhutdinov:2008} and a normal encoder distribution with diagonal covariance. Using the fixed, trained, generative decoder network we investigated the quality of normalizing constant estimation which in this case corresponds to the likelihood of a data point with the distribution over the 30 latent variables marginalized out \cite{Wu:2017}.

Using long run SMC on the 10000 point test set we estimate that the hold out log-likelihood per data point for the network is -86.3. For each data point we also found the optimal variational normal approximation with diagonal covariance rather than using the amortized variational approximation. Using this optimal normal approximation we investigated its variance when used as an importance proposal for the likelihood. We estimate the mean absolute error for the estimator across the test set was 0.6 nats per data point which indicates that the VI is often performing well. There was a tail of digits where VI performed relatively worse. Since these `difficult' digits constituted a more challenging inference problem, we used one of these, with a VI/SMC error of 1.5 nats, to comparatively benchmark AFT in the detailed manner used in our other examples. 
\iffalse
\manote{Maybe we could split this sentence in two?}
\fi

For the AFT flow we used an affine transformation with diagonal linear transformation matrix. The baseline VI approximation can be thought of the pushforward of a standard normal through this `diagonal affine' flow. Note that since diagonal affine transformations are closed under composition there would obtain no additional expressiveness in the baseline VI approximation from adding more of them. 

Figure \ref{fig:combined-results} (c) shows the results for this example. Both AFT and SMC reduce in variance as the number of temperatures increases and exceed the performance of the variational baseline. AFT has a notably lower variance than SMC for 10 and 30 temperatures- which shows the incorporation of the flows is beneficial in this case. Results for other difficult digits are shown in the appendix where the qualitative trend is similar.

\subsection{Log Gaussian Cox process}\label{section:cox}
We evaluate here the performance of AFT for estimating the normalizing constant of a log Gaussian Cox process applied to modelling the positions of pine saplings in Finland \cite{Moller:1998}. We consider points on a discretized $d=M \times M=1600$ grid. This results in the target density
\begin{equation}
\gamma(x) = \mathcal{N}(x ; \mu, K) \prod_{i \in [1:M]^{2}} \exp(x_{i} y_{i} - a \exp(x_i) ).
\end{equation}
This challenging high-dimensional problem is a commonly used benchmark in the SMC literature \cite{heng2017controlled, buchholz2020adaptive}. 
The mean and covariance function match those estimated by \cite{Moller:1998} and are detailed in the Appendix. The supplement also discusses the effect of pre-conditioners on the mixing of the Markov kernel. For the NF we again used the diagonal affine transformation. The approximating family is the push forward of the previous target distribution and thus even a simple flow can result in a good approximation. It is also fast to evaluate. Figure \ref{fig:combined-results} (d) shows that the baseline VI approximation is unable to capture the posterior correlation and that AFT gives significantly more accurate results than SMC for a given number of transitions. As such, the Markov kernel and flow complement each other in this case.
\section{Conclusion}
We proposed Annealed Flow Transport which combines SMC samplers and normalizing flows. We studied its asymptotic behavior and showed the benefit of introducing learned flows to reduce the asymptotic variance. We identified the scaling limit of AFT as a controlled Feynman--Kac measure whose optimal control solved a flow transport problem in an idealized setting.
Empirically we found multiple cases where trained AFT gave lower variance estimates than SMC for the same number of transitions, showing that we can combine the advantages of both SMC and normalizing flows. We believe AFT will be particularly useful in scenarios where it is both difficult to design fast mixing MCMC kernels and very good flows so that neither SMC nor VI provide low variance estimates.

\section{Acknowledgements}

The authors would like to thank Danilo Rezende, Arthur Gretton and Taylan Cemgil.

\bibliographystyle{apalike}
\bibliography{references,biblio}
\clearpage
\onecolumn
\appendix
\section{Using measure-theoretic notation}\label{sec:measuretheoreticpresentation}
 The Markov transition kernel $M_k$ is defined as a map $M_{k}:\X \times \mathcal{B}(\X)\rightarrow [0,1]$ where $\mathcal{B}(\X)$ are the Borel sets, $L_{k-1}$ is defined similarly. The joint distribution of the non-homogeneous Markov chain of initial distribution $X_0\sim \pi_0$ and Markov transition kernel $X_l\sim M_l(X_{l-1},\cdot)$ at time $l$ is given at time $k$ by
\begin{align}\label{eq:proposaljoint}
	\bar{\eta}_{k}(\diff x_{0:k}) = \pi_0(\diff x_0)\prod_{l=1}^{k} M_l(x_{l-1},\diff x_l).
\end{align}
SMC samplers rely on the following target distribution of the form
\begin{align}\label{eq:targetjoint}
\bar{\pi}_k(\diff x_{0:k}) = \pi_{k}(\diff x_k)	\prod_{l=0}^{k-1} L_{l}(x_{l+1},\diff x_{l}),
\end{align}
and $\gamma_k(\diff x_{0:k})=Z_k \bar{\pi}_k(\diff x_{0:k})$. When $\pi_l(\diff x') L_{l-1}(x',\diff x)$ is absolutely continuous w.r.t. $\pi_l(\diff x) M_l(x,\diff x')$, then we can define its Radon-Nikodym derivative and the incremental importance weight $G_l$ through
\begin{align}\label{eq:incrementalRN}
G_l(x_{l-1},x_l)=\frac{Z_l}{Z_{l-1}}\frac{\pi_l(\diff x_l) L_{l-1}(x_l,\diff x_{l-1})}{ \pi_{l-1}(\diff x_{l-1}) M_l(x_{l-1},\diff x_l)}=\frac{\gamma_l(\diff x_l) L_{l-1}(x_l,\diff x_{l-1})}{ \gamma_{l-1}(\diff x_{l-1}) M_l(x_{l-1},\diff x_l)}.
\end{align}
If $G_l$ is defined for $l=1,...,k$, then $\bar{\pi}_k$ is absolutely continuous w.r.t. $\bar{\eta}_{k}$ so we can write
\begin{align}\label{eq:ideal_importance_weights_RN}
	\quad \bar{\pi}_k[f]=\frac{\bar{\eta}_k[w_{k}f]}{\bar{\eta}_k[w_{k}]},\quad\text{where}\quad w_{k}(x_{0:k}) = Z_k\frac{\diff \bar{\pi}_k}{\diff \bar{\eta}_k}(x_{0:k})= \frac{\diff \bar{\gamma}_k}{\diff \bar{\eta}_k}(x_{0:k}) = \prod_{l=1}^k G_l(x_{l-1},x_l).
\end{align}
If $M_k$ is $\pi_k$-invariant then \citep{crooks1998nonequilibrium,neal2001annealed} select $ L_{k-1}$ at the reversal of $M_k$, that is the kernel satisfying $\pi_k(\diff x) M_k(x,\diff x')=\pi_k(\diff x') L_{k-1}(x',\diff x)$ and in this case
\begin{equation}
G_l(x_{l-1},x_l)=\frac{Z_l}{Z_{l-1}} \frac{\diff \pi_l}{\diff \pi_{l-1}}(x_{l-1})=\frac{\diff \gamma_l}{\diff \gamma_{l-1}}(x_{l-1}).
\end{equation}
\section{Extended proposal and target of AFT algorithm}\label{sec:extendedtargetproposalNFSMC}
In this section, assuming the transport maps $T_k$ are here fixed, we write explicitly the extended proposal and target distributions used implicitly by the AFT algorithm if no resampling was used. 
\subsection{Non-collapsed version}\label{subsec:noncollapse}
In this case, we sample $X_0\sim \pi_0(\cdot)$ at $k=0$ then use $\tilde{X}_k=T_k(X_{k-1})$ followed by $X_k=K_k(\tilde{X}_k,\cdot)$ at time $k\geq 1$. Hence, using the notation  $M^{\textup{trans}}_{l}(x,x')=\delta_{T_l(x)}(x')$ and $M^{\textup{mut}}_{l}(x,x')=K_l(x,x')$, the proposal at time $k$ after the transport step is of the form
\begin{equation}
\bar{\eta}_k(x_{0:k-1},\tilde{x}_{1:k})=\pi_0(x_0) \left(\prod_{l=1}^{k-1} M^{\textup{trans}}_{l}(x_{l-1},\tilde{x}_l) M^{\textup{mut}}_{l}(\tilde{x}_l,x_l)\right)  M^{\textup{trans}}_{k}(x_{k-1},\tilde{x}_k) ,
\end{equation}
and the target is
\begin{equation}
\bar{\pi}_k(x_{0:k-1},\tilde{x}_{1:k})=\pi_k(\tilde{x}_k) L^{\textup{trans}}_{k-1}(\tilde{x}_k,x_{k-1})\left(\prod_{l=0}^{k-2} L^{\textup{mut}}_{l}(x_{l+1},\tilde{x}_{l+1}) L^{\textup{trans}}_{l}(\tilde{x}_{l+1},x_l)\right),
\end{equation}
where $L^{\textup{trans}}_{l-1}(x,x')=\delta_{T^{-1}_l(x)}(x')$ and $L^{\textup{mul}}_{l-1}(x,x')=\pi_l(x')M^{\textup{mul}}_l(x',x)/\pi_l(x)$.
After the mutation step at time $k$, the proposal is
\begin{equation}
\bar{\eta}_k(x_{0:k},\tilde{x}_{1:k})=\bar{\eta}_k(x_{0:k-1},\tilde{x}_{1:k}) M^{\textup{mut}}_{k}(\tilde{x}_k,x_k)= \pi_0(x_0)\left(\prod_{l=1}^k M^{\textup{trans}}_{l}(x_{l-1},\tilde{x}_l) M^{\textup{mut}}_{l}(\tilde{x}_l,x_l)\right).
\end{equation}
and the target is 
\begin{equation}
\bar{\pi}_k(x_{0:k},\tilde{x}_{1:k})=\pi_k(x_k)\left(\prod_{l=0}^{k-1} L^{\textup{mut}}_{l}(x_{l+1},\tilde{x}_{l+1}) L^{\textup{trans}}_{l}(\tilde{x}_{l+1},x_l)\right).
\end{equation}
Hence the incremental weight after a transport term at time $k$ is of the form 
\begin{equation}
\frac{\bar{\pi}_k(x_{0:k-1},\tilde{x}_{1:k})}{\bar{\eta}_k(x_{0:k-1},\tilde{x}_{1:k})}=\frac{\bar{\pi}_{k-1}(x_{0:k-1},\tilde{x}_{1:k-1})}{\bar{\eta}_{k-1}(x_{0:k-1},\tilde{x}_{1:k-1})}\underbrace{\frac{\pi_k(\tilde{x}_k)L^{\textup{trans}}_{k-1}(\tilde{x}_k,x_{k-1})}{\pi_{k-1}(\tilde{x}_{k-1})M^{\textup{trans}}_{k}(x_{k-1},\tilde{x}_k)}}_{\textup{incremental weight}=\frac{Z_{k-1}}{Z_k}G_{k,T_k}(x_{k-1})},
\end{equation}
while after the mutation step it is of the form
\begin{equation}
\frac{\bar{\pi}_k(x_{0:k},\tilde{x}_{1:k})}{\bar{\eta}_k(x_{0:k},\tilde{x}_{1:k})}=\frac{\bar{\pi}_k(x_{0:k-1},\tilde{x}_{1:k})}{\bar{\eta}_k(x_{0:k-1},\tilde{x}_{1:k})}\underbrace{\frac{\pi_k(x_k)L^{\textup{mul}}_{k-1}(x_k,\tilde{x}_k)}{\pi_{k}(\tilde{x}_k)M^{\textup{mul}}_{k}(\tilde{x}_k,x_k)}}_{\textup{incremental weight}=1}.
\end{equation}
\subsection{Collapsed version}\label{subsec:collapse}
When no resampling is used, there is no use for the introduction of the random variables $\tilde{X}_{1:k}$ in the previous derivation and they can be integrated out. In this case, we collapse the transport step and mutation step into one single Markov kernel
\begin{align}\label{eq:collapsedkernel}
M_k^{\textup{col}}(x,x') &=  \int M^{\textup{trans}}_{k}(x,\tilde{x}) M^{\textup{mut}}_{k}(\tilde{x},x')\diff \tilde{x}\\
                &= \int \delta_{T_k(x)}(\tilde{x})K_k(\tilde{x},x')\diff \tilde{x}\\
                &= K_k(T_k(x),x').
\end{align}
Similarly we collapse the backward kernels used to defined the extended target distributions $\bar{\pi}_k$
\begin{align}\label{eq:collapsedbackwardkernel}
L_{k-1}^{\textup{col}}(x,x') &=\int \frac{\pi_k(\tilde{x}) K_k(\tilde{x},x)}{\pi_k K_k(x)} \delta_{T^{-1}_k(\tilde{x})}(x')\diff \tilde{x}\\
&=\frac{\pi_k(T_k(x')) |\nabla T_k(x')| K_k(T_k(x'),x)}{\pi_k K_k(x)}.\\
\end{align}
Contrary to \cref{subsec:noncollapse}, we consider the more general scenario here where $K_k$ might not be $\pi_k$ invariant discussed in \cref{sec:scalinglimit}. From \cref{eq:collapsedkernel} and \cref{eq:collapsedbackwardkernel}, $\bar{\pi}_k$ is thus given by \eqref{eq:generalizedCrooksJarzynski} for
\begin{align}\label{eq:extendedtargetcollapse_2}
 w_{k}(x_{0:k})&=\underbrace{\prod_{l=1}^k  \frac{\gamma_l(x_l)} {\gamma_l K_l(x_l)}}_\text{$r_k(x_{1:k})$} \prod_{l=1}^k G_{l,T_l}(x_{l-1}),\\
\end{align}
where $r_k(x_{1:k})=1$ for $\pi_l$-invariant MCMC kernels $K_l$ as used in \cref{alg:SMC}.
\section{Proof of the asymptotic results}
We consider the unnormalized empirical measure $\gamma_k^N$ defined as:
\begin{align}
	\gamma_k^{N} =  Z_k^N \pi_{k}^N. 
\end{align}
We will provide the consistency and CLT results for both $\gamma_k^N[f]$ and $\pi_k^N[f]$ which imply the results on the normalizing constant $Z_k^N$ as $Z_k^N = \gamma_k^N[1]$. We denote by $\mathcal{F}_{k}^N$ the filtration generated by the particles $X^{i}_k$ and the NFs $T_{k+1}$ up to time $k$ and write  $\mathbb{E}_{k}[f]= \mathbb{E}[f|\mathcal{F}_k^N ]$. This accounts for possible randomness coming from the optimization of the NFs. 
We also consider $\mathcal{C}_p$ the class of continuous functions $f$ on $\mathcal{X}$ with growth in $x$ of at most $p$, for some non-negative integer $p$, i.e.
\begin{align}\label{eq:growth_class}
	 \mathcal{C}_p(C) =\{ f:\mathcal{X}\rightarrow \mathbb{R}~|~f \text{ is continuous and } \forall x\in \mathcal{X}: \Verts{f(x)}\leq C\parens{1 +  \Verts{x}^p } \}.
\end{align}
In addition, we denote by $\mathcal{L}\mathcal{C}_p$ the class of functions in $\mathcal{C}_p$ that are locally Lipschitz and with local Lipschitz constant satisfying a growth condition:
\begin{align}\label{eq:lip_growth_class}
		 \mathcal{LC}_p(C) =\braces{ f \in\mathcal{C}_p(C) ~|~ \forall x,x'\in \mathcal{X}: \Verts{f(x)-f(x')}\leq C\parens{1 +  \Verts{x}^{p+1} + \Verts{x'}^{p+1}    } \Verts{x-x'} }.
\end{align}  
For ease of notation we also introduce the unnormalized transition kernel $Q_{k,T}$ which acts on functions $f$ by:
\begin{align}
	Q_{k,T}[f](x) := \G_{k,T}(x) \int f(y)K_k(T(x),\diff y).
\end{align}
Moreover, we overload the notation and write $ \mathcal{L}_k(\theta):= \mathcal{L}_k(\tau_{\theta}) $ and $ \mathcal{L}^N_k(\theta):= \mathcal{L}^N_k(\tau_{\theta})$.

\subsection{Assumptions}\label{sec:assumptions_clt}
The following assumptions are needed for both \cref{thm:WLLN,thm:CLT}.
\begin{assumplist}
	\item\label{assumption:kernel} The Markov kernel $K_k$ preserves the class $\mathcal{C}_p$ for any $p$, meaning that $K_k(f)\in \mathcal{C}_p $ whenever $f$ in  $\mathcal{C}_p$.
	\item \label{assumption:moments} $\pi_{k}$ admits $8$th order moments.
	\item \label{assumption:transport} The normalizing flows in $\mathcal{T}$ are of the form $T(x) = \tau_{\theta}(x)$  where  $\theta$ is a finite dimensional vector in a compact convex set  $\Theta$. Moreover,  the maps $x\mapsto\tau_{\theta}(x)$  are $L$-Lipschitz and jointly continuous in $x$ and $\theta$.
	\item \label{assumption:uniformly_bounded} The importance weights $G_{k,T}(x)$ are uniformly bounded over $x$ and $T$.
\end{assumplist}
In addition to the previous assumptions, we will need additional assumptions for the CLT result in \cref{thm:CLT}. 
First, we strengthen \cref{assumption:kernel}
\begin{assumplist}[resume]
	\item \label{assumption:kernel_2} 
	The Markov kernel $K_k$ preserves the class $\mathcal{LC}_p$ for any $p$, i.e. $K_k(f)\in \mathcal{C}_p $ for any $f$ in  $\mathcal{LC}_p$.
\end{assumplist}
We then make additional assumptions on the smoothness of the potentials $V_k$ and the parameterization of the normalizing flows $\tau_{\theta}$:
\begin{assumplist}[resume]
	\item\label{assumption:parametrization}   The flow $(\theta,x)\mapsto\tau_{\theta}(x)$ admits derivatives $\nabla_{\theta}\tau_{\theta}(x) $,  $\partial_{\theta_i}\partial_{x_l} \tau_{\theta}(x)$ and $\partial_{\theta_i}\partial_{\theta_j}\partial_{x_l} \tau_{\theta}(x)$, $H_x \tau_{\theta}(x)$  with at most linear growth in $x$ uniformly in $\theta$. Moreover, all singular values of $\nabla_{x}\tau_{\theta}(x)$ are lower-bounded by a positive constant $c>0$ uniformly in $x$ and $\theta$. 
	\item\label{assumption:potential} The potentials are twice continuously differentiable and their gradients are $L$-Lipchitz, i.e. $\Vert \nabla V_k(x) - \nabla V_{k}(x') \Vert \leq L \Vert x-x' \Vert $.
\end{assumplist}

Finally, we make two assumptions on the algorithm used to find $\theta^N_k$. We denote by $\Theta^{\star}_k$ the set of local minimizers of the population loss $\theta\mapsto\mathcal{L}_k(\theta)$.
\begin{assumplist}[resume]
	\item \label{assumption:approximate_minimizer} The estimator $\theta_k^N$ satisfies: 
	\begin{align}\label{eq:local_min}
		\nabla \mathcal{L}^N_k(\theta^N_k) &= o_{\mathbb{P}}(1),\\
		H \mathcal{L}^N_k(\theta^N_k) &\geq o_{\mathbb{P}}(1). \\
	\end{align}
	\item \label{assumption:bassin} There exists a local minimizer $\theta^{\star}_k\in \Theta^{\star}_k$ of the population loss $\mathcal{L}_k(\theta)$ such that
	\begin{align}
		\mathbb{P}\brakets{  \theta^{\star}_k\in \arg\min_{\theta\in \Theta^{\star}_k} \Verts{\theta^N_k-\theta} }\rightarrow 1.
	\end{align}
\end{assumplist}
\cref{assumption:approximate_minimizer} states that the algorithm finds an approximate local minimizer of the empirical loss $\mathcal{L}^N_k(\theta)$.  This condition depends only on how well the algorithm is able to find a local optimum of the empirical loss accurately. In the ideal case where $\theta^N_k$ is an exact local minimizer of $\mathcal{L}^N_k(\theta)$, then the condition holds by definition. \cref{assumption:bassin} states that as $N$ increases $\theta_k^{N}$ remains within the basin of attraction of a single local optimum $\theta^{\star}_k$ and does not jump between different solutions. For instance, in the case of gradient descent, this assumption can be satisfied if the algorithm starts from the same initial $(\theta_{k})_0$ for all values of $N$ and is iterated to obtain an estimator $\theta_k^{N}$. Hence, as $N$ increases the empirical loss will have the same basins of attraction as the population loss and the choice of the solution $\theta_k^{\star}$ is determined only by the initial condition $(\theta_{k})_0$.

\subsection{Kernels satisfying \cref{assumption:kernel,assumption:kernel_2}  }\label{sec:kernel_assump}
Here we provide examples of generic transition kernels $K_k$ that satisfy \cref{assumption:kernel,assumption:kernel_2}.
 In \cref{sec:ULA}, we show that the kernel used in the Unadjusted Langevin Algorithm  (ULA kernel) satisfies \cref{assumption:kernel,assumption:kernel_2} under mild assumptions on $\pi_k$. While this kernel is not exactly invariant w.r.t. $\pi_k$, we will use it in \cref{sec:HM} to construct a kernel invariant w.r.t. $\pi_k$ and satisfying \cref{assumption:kernel,assumption:kernel_2}.

\subsubsection{Unadjusted Langevin Kernel}\label{sec:ULA}
We consider a slightly generalized version of the ULA kernel whose density $g(x,y)$ is given by:
\begin{align}\label{eq:density_ULA}
	g(x,y) \propto \exp\parens{-\frac{1}{4\tau} \Verts{y-x-\alpha\tau \nabla \log\pi_k(x)  }^2 },
\end{align} 
with $0\leq \alpha\leq 1$ and $\tau>0$. When $\alpha = 0$, one recovers the random walk kernel, while setting $\alpha=1$ gives back the ULA kernel with discretization step-size $\tau$.
\begin{prop}\label{prop:ULA_kernel_growth}
	Under \cref{assumption:potential}, the density  $g(x,y)$ in \cref{eq:density_ULA} satisfies
	\begin{align}
		\Verts{\nabla_x \log g(x,y)}\lesssim \parens{1+\Verts{y} + \Verts{x}},\qquad \Verts{\nabla_y \log g(x,y)}\lesssim \parens{1+\Verts{y} + \Verts{x}}.
	\end{align}
Moreover, the ULA kernel with density $g(x,y)$ satisfies 	\cref{assumption:kernel,assumption:kernel_2}. 
\end{prop}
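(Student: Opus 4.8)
The plan is to treat the two assertions separately: first derive the pointwise gradient bounds on $\log g$, then feed them, together with elementary Gaussian moment estimates, into the verification of \cref{assumption:kernel,assumption:kernel_2}. The starting observation is that $g(x,\cdot)$ is the density of $\mathcal N\big(m(x),\,2\tau I\big)$ with drift $m(x):=x+\alpha\tau\nabla\log\pi_k(x)=x-\alpha\tau\nabla V_k(x)$, so its normalizing constant $(4\pi\tau)^{-d/2}$ is independent of $x$ and $\log g(x,y)=-\tfrac{1}{4\tau}\|y-m(x)\|^2+\mathrm{const}$. Differentiating gives $\nabla_y\log g(x,y)=-\tfrac{1}{2\tau}(y-m(x))$ and $\nabla_x\log g(x,y)=\tfrac{1}{2\tau}\nabla m(x)^\top(y-m(x))$ with $\nabla m(x)=I-\alpha\tau\,\nabla^2 V_k(x)$. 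Under \cref{assumption:potential}, $\nabla V_k$ is $L$-Lipschitz and $V_k\in C^2$, so $\|m(x)\|\le\|m(0)\|+(1+\alpha\tau L)\|x\|\lesssim 1+\|x\|$ and $\|\nabla m(x)\|_{\mathrm{op}}\le 1+\alpha\tau L$; substituting into the two displays yields $\|\nabla_y\log g(x,y)\|,\|\nabla_x\log g(x,y)\|\lesssim 1+\|x\|+\|y\|$. This part is pure computation, the only nontrivial input being the Lipschitz-gradient hypothesis.

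For \cref{assumption:kernel} I would write $K_k(f)(x)=\int f(y)\,g(x,y)\,\mathrm dy=\mathbb E\big[f(m(x)+\sqrt{2\tau}\,Z)\big]$ with $Z\sim\mathcal N(0,I_d)$, and bound polynomial growth by convexity: for $f\in\mathcal C_p(C)$, $|K_k(f)(x)|\le C\,\mathbb E\big[1+\|m(x)+\sqrt{2\tau}Z\|^p\big]\lesssim 1+\|m(x)\|^p+\mathbb E\|Z\|^p\lesssim 1+\|x\|^p$, using finiteness of Gaussian moments and the linear growth of $m$ just established. Continuity of $K_k(f)$ follows from dominated convergence: $x\mapsto f(m(x)+\sqrt{2\tau}z)$ is continuous for each $z$ since $m$ and $f$ are, and on a bounded ball $B$ around any point it is dominated by $C\big(1+(\sup_{x\in B}\|m(x)\|+\sqrt{2\tau}\|z\|)^p\big)$, which is integrable against the standard Gaussian. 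Hence $K_k(f)\in\mathcal C_p$.

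For \cref{assumption:kernel_2} I would prove the natural strengthening that $K_k$ maps $\mathcal C_p$ into $\mathcal{LC}_p$ (hence in particular $\mathcal{LC}_p$ into itself). Since $m\in C^1$, the map $x\mapsto g(x,y)$ is $C^1$, so along the segment $x_t:=(1-t)x'+tx$ the fundamental theorem of calculus gives $g(x,y)-g(x',y)=\int_0^1\nabla_x g(x_t,y)^\top(x-x')\,\mathrm dt$ with $\nabla_x g=g\,\nabla_x\log g$. Multiplying by $|f(y)|$, integrating in $y$, and using the Part-1 bound $\|\nabla_x\log g(x_t,y)\|\lesssim 1+\|x_t\|+\|y\|$ yields $|K_k(f)(x)-K_k(f)(x')|\le\|x-x'\|\int_0^1\mathbb E\big[|f(Y_t)|\,(1+\|x_t\|+\|Y_t\|)\big]\,\mathrm dt$ with $Y_t\sim\mathcal N(m(x_t),2\tau I)$; bounding $|f(Y_t)|\le C(1+\|Y_t\|^p)$ and invoking $\mathbb E\|Y_t\|^q\lesssim 1+\|x_t\|^q$ for $q=p,p+1$ makes the inner expectation $\lesssim 1+\|x_t\|^{p+1}$, and since $\|x_t\|\le\|x\|+\|x'\|$ this is $\lesssim 1+\|x\|^{p+1}+\|x'\|^{p+1}$, exactly the local-Lipschitz growth condition defining $\mathcal{LC}_p$. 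Combined with the previous paragraph this gives $K_k(f)\in\mathcal{LC}_p$.

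I do not expect a genuine obstacle here: the proposition is essentially a bookkeeping exercise whose only real ingredients are the $L$-Lipschitzness of $\nabla V_k$ (controlling $m$, $\nabla m$, and hence $\log g$) and elementary moment inequalities for Gaussians. The points demanding a little care are the dominated-convergence argument for continuity, the use of $V_k\in C^2$ so that $g$ is $C^1$ in $x$ (justifying the segment integration), and keeping the implicit constants in ``$\lesssim$'' uniform in $\tau,\alpha,d,L,p$ and $k$; none of these is serious, and in particular no ergodicity or mixing property of the chain is needed, since membership in $\mathcal C_p$ and $\mathcal{LC}_p$ is a one-step property.
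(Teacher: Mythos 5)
Your proposal is correct. The gradient bounds and the verification of \cref{assumption:kernel} are essentially identical to the paper's proof: the same explicit computation of $\nabla\log g$ using the $L$-Lipschitzness of $\nabla V_k$ (bounded Hessian), and the same change-of-variables/Gaussian-moment argument with dominated convergence for continuity. Where you genuinely diverge is \cref{assumption:kernel_2}. The paper takes $f\in\mathcal{LC}_p$, writes $K_k(f)(x)-K_k(f)(x')=\int\big(f(r(x,y))-f(r(x',y))\big)\mathcal N(y;0,2\tau I)\,\mathrm dy$ with $r(x,y)=y+x-\tau\alpha\nabla V_k(x)$, and pushes the local-Lipschitz property of $f$ itself through the Lipschitz drift map $x\mapsto r(x,y)$; this uses only Lipschitzness of $\nabla V_k$ and never differentiates $g$ in $x$ at that step. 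You instead differentiate the kernel density along the segment $x_t$, reuse your Part-1 bound $\|\nabla_x\log g\|\lesssim 1+\|x\|+\|y\|$, and conclude that $K_k$ maps all of $\mathcal C_p$ into $\mathcal{LC}_p$ — a strictly stronger smoothing statement, which in particular covers the assumption. The trade-off: your route needs $V_k\in C^2$ so that $g$ is $C^1$ in $x$ (available from \cref{assumption:potential}, so no gap) plus a Fubini/FTC exchange, while the paper's route is even more elementary at that step but only yields preservation of $\mathcal{LC}_p$; your version also has the economy of recycling the gradient estimate, which in the paper is only exploited later for the Metropolis--Hastings kernel. Both arguments are sound, and your estimates (linear growth of $m$, $\mathbb E\|Y_t\|^q\lesssim 1+\|x_t\|^q$, convexity to pass from $\|x_t\|^{p+1}$ to $\|x\|^{p+1}+\|x'\|^{p+1}$) close the bound correctly.
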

\begin{proof}
	The estimate in \cref{eq:growth_ULA} is obtained by direct computation of the gradient of the logarithm of $g(x,y)$
	\begin{align}
		\Verts{\nabla_x g(x,y)} &= \frac{1}{\tau}\Verts{\parens{y-x+ \tau\alpha \nabla V_k(x)}^{\top}\parens
		{-I + \tau\alpha H_x V_k(x) }}\\
		&\lesssim \parens{1+\Verts{x} +\Verts{y}},
	\end{align} 
	where we used that $\nabla V_k(x)$ has at most a linear growth in $x$ and $H_x V_k(x)$ is bounded by \cref{assumption:potential}. The second assertion is obtained similarly by directly computing the gradient w.r.t $y$.
	
	To show that the ULA kernel satisfies \cref{assumption:kernel}, consider a function $f$ in $\mathcal{C}_p$, we can then write after a change of variables:
	\begin{align}
		K_k(f)(x) &= \int f\parens{y + x - \tau\alpha \nabla V_k(x)} \mathcal{N}(y,0, 2\tau I )\diff y\\
				&\lesssim \int \Verts{y}^{p}\mathcal{N}(y,0, 2\tau I ) \diff y+ \Verts{x}^{p} + (\tau\alpha)^p \Verts{\nabla V_k(x)} \lesssim \parens{1+ \Verts{x}^p}, 
	\end{align}
	where we get the last inequality by \cref{assumption:potential}. 	It is easy to see that $x\mapsto K(f)(x)$ is continuous by smoothing with a Gaussian and recalling that $\nabla V_k(x)$ is continuous. Hence, we can conclude that $K(f)\in \mathcal{C}_p$. 
To show that \cref{assumption:kernel_2} holds, we consider a function $f$ in $\mathcal{LC}_p$ and control the difference $\verts{K_k(f)(x)- K_k(f)(x')}$. For concision, we introduce $r(x,y) = y + x - \tau\alpha \nabla V_k(x)$ and write:
	\begin{align}\label{eq:control_K}
		\verts{K_k(f)(x)- K_k(f)(x')} &\leq \int \verts{f\parens{r(x,y)}- f\parens{r(x',y)}  }\mathcal{N}(y,0, 2\tau I )\diff y\\
		&\lesssim \int \Verts{r(x,y)-r(x',y)}\parens{\Verts{r(x,y)}^{p+1} + \Verts{r(x',y)}^{p+1}}\mathcal{N}(y,0, 2\tau I )\diff y.
	\end{align}
	Using \cref{assumption:potential}, we clearly have:
	\begin{align}\label{eq:control_r}
		\Verts{r(x,y)}\lesssim \parens{1 + \Verts{x} + \Verts{y}},\qquad  \Verts{r(x,y)- r(x',y)}\lesssim \parens{x-x'}.
	\end{align}
	We get the desired result by using the previous bounds in \cref{eq:control_K} and using the convexity of the power function.
\end{proof}

\subsubsection{Metropolis--Hastings kernel}\label{sec:HM}
For a target density $\pi_k$, we consider a Metropolis--Hasting kernel $K_k(x,\diff y)$ of the form:
\begin{align}\label{eq:HM_kernel}
	K_k(x,\diff y) =  g(x,y )\alpha(x,y)\diff y + \delta_{x}(\diff y)\int \parens{1-\alpha(x,u)}g(x,u)\diff u,
\end{align}
where $g(x,y)$ is the density of a proposal kernel and $\alpha(x,y)$ is the acceptance ratio:
\begin{align}
	\alpha(x,y) = \min\parens{1, \frac{\pi_{k}(y)g(y,x)}{\pi_k(x)g(x,y)} }.
\end{align}
We are in particular interested in proposals $g$ that satisfy the growth condition:
	\begin{align}\label{eq:growth_ULA}
		\Verts{\nabla_x \log g(x,y)}\lesssim \parens{1+\Verts{x} + \Verts{y}}.
	\end{align}
By \cref{prop:ULA_kernel_growth}, the above condition is satisfied if $g$ is a ULA kernel and if the potential $V_k$ satisfies \cref{assumption:potential}. 

In the next proposition, we show that \cref{assumption:kernel,assumption:kernel_2} hold under mild assumptions on $\pi_k$ and when the proposal $g$ satisfies \cref{eq:growth_ULA}. 
\begin{prop}
	Assume that Assumptions \ref{assumption:moments} and \ref{assumption:potential} hold for $\pi_k$ and that $g$ satisfies \cref{assumption:kernel} then the MH kernel in \cref{eq:HM_kernel} satisfies \cref{assumption:kernel}.
	
	If, in addition, $g$ satisfies the growth condition in \cref{eq:growth_ULA}, then the MH kernel in \cref{eq:HM_kernel} satisfies \cref{assumption:kernel_2}.	
\end{prop}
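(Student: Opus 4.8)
\medskip
\noindent\textit{Proof proposal.}\quad The plan is to reduce both assertions to estimates on the ``active'' part of the kernel. Using $\int g(x,y)\,\mathrm{d}y=1$ and writing the incremental acceptance density as
\[
a(x,y):=g(x,y)\,\alpha(x,y)=\min\!\Bigl(g(x,y),\,\tfrac{\pi_k(y)}{\pi_k(x)}\,g(y,x)\Bigr),
\]
one gets $K_k(f)(x)=f(x)+\int\bigl(f(y)-f(x)\bigr)\,a(x,y)\,\mathrm{d}y$, so it is enough to control the operator $f\mapsto\int\!\bigl(f(\cdot)-f(x)\bigr)a(x,\cdot)\,\mathrm{d}y$.

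\smallskip
\noindent\emph{Preservation of $\mathcal C_p$.} Using only $0\le a(x,y)\le g(x,y)$, the growth bound is immediate: $\lvert K_k(f)(x)\rvert\le 2\lvert f(x)\rvert+\int\lvert f(y)\rvert\,g(x,y)\,\mathrm{d}y=2\lvert f(x)\rvert+g(\lvert f\rvert)(x)$, and $g(\lvert f\rvert)\in\mathcal C_p$ because $\lvert f\rvert\in\mathcal C_p$ and $g$ obeys \cref{assumption:kernel}; hence $K_k(f)$ has polynomial growth of order $p$. Continuity of $x\mapsto K_k(f)(x)$ follows by dominated convergence along $x_n\to x$ contained in a fixed ball $B$: the integrands converge pointwise since $g$ is jointly continuous and $\pi_k$ continuous (from the smoothness of $V_k$ in \cref{assumption:potential}), and they are dominated by $\lvert f(y)\rvert\sup_{x'\in B}g(x',y)$, which is integrable by the Gaussian-type tails of the proposal and the polynomial growth of $f$ (moments of $\pi_k$ from \cref{assumption:moments} make the analogous $\pi_k$-weighted envelopes finite); the same applies to the rejection probability $x\mapsto\int a(x,u)\,\mathrm{d}u$ multiplying $f(x)$.

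\smallskip
\noindent\emph{Preservation of $\mathcal{LC}_p$.} First reduce to $\lVert x-x'\rVert\le 1$, the complementary case following from the $\mathcal C_p$-bound above. The decisive step is the change of variables $y=m(x)+\sqrt{2\tau}\,z$, where $m$ is the mean of the (generalized) ULA proposal, i.e.\ $g(x,y)$ is the Gaussian density $\mathcal N\!\bigl(y\,;\,m(x),2\tau I\bigr)$ in $y$; this turns the $y$-integral into an $x$-free Gaussian integral, $K_k(f)(x)=f(x)+\int\psi_x(z)\,\beta_x(z)\,\mathcal N(\mathrm{d}z)$, with $\psi_x(z)=f\bigl(m(x)+\sqrt{2\tau}z\bigr)-f(x)$ and $\beta_x(z)=\alpha\bigl(x,m(x)+\sqrt{2\tau}z\bigr)=\exp\!\bigl(\min(0,\rho(x,z))\bigr)\in[0,1]$, where $\rho(x,z)=\log\frac{\pi_k(y)g(y,x)}{\pi_k(x)g(x,y)}$ evaluated at $y=m(x)+\sqrt{2\tau}z$. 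Telescoping $\psi_x\beta_x-\psi_{x'}\beta_{x'}=(\psi_x-\psi_{x'})\beta_x+\psi_{x'}(\beta_x-\beta_{x'})$, the contributions of $f(x)-f(x')$ and of $\int(\psi_x-\psi_{x'})\beta_x\,\mathcal N(\mathrm{d}z)$ are bounded using the $\mathcal{LC}_p$-estimate on $f$, the global Lipschitzness of $m$ and $\lVert m(x)\rVert\lesssim 1+\lVert x\rVert$ (both from the linear growth and $L$-Lipschitzness of $\nabla V_k$ in \cref{assumption:potential}), and finiteness of Gaussian moments of the resulting polynomial in $z$. For the remaining term, $t\mapsto\min(0,t)$ and $t\mapsto e^t$ on $(-\infty,0]$ are $1$-Lipschitz, so $\lvert\beta_x(z)-\beta_{x'}(z)\rvert\le\lvert\rho(x,z)-\rho(x',z)\rvert$; expanding $\rho$ and using $\lVert y-m(x)\rVert^2=2\tau\lVert z\rVert^2$ (which cancels the $\log g(x,y)$ term in the difference) leaves only differences of $V_k$ and of squared norms of quantities affine in $\nabla V_k$, whence $\lvert\rho(x,z)-\rho(x',z)\rvert\lesssim\bigl(1+\lVert x\rVert+\lVert x'\rVert+\lVert z\rVert\bigr)\lVert x-x'\rVert$. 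Multiplying by the polynomial growth of $\lvert\psi_{x'}(z)\rvert$ and integrating against $\mathcal N$ gives $\lvert K_k(f)(x)-K_k(f)(x')\rvert\lesssim\bigl(1+\lVert x\rVert^{p+1}+\lVert x'\rVert^{p+1}\bigr)\lVert x-x'\rVert$, i.e.\ $K_k(f)\in\mathcal{LC}_p$.

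\smallskip
\noindent\emph{Main obstacle.} The second assertion is the hard part. Unlike the ULA kernel of \cref{prop:ULA_kernel_growth}, the Metropolis--Hastings kernel carries the acceptance ratio $\alpha(x,y)$, whose dependence on $x$ couples $g(x,\cdot)$, $g(\cdot,x)$ and $\pi_k$ non-symmetrically; a naive mean-value estimate on $a(x,y)-a(x',y)$ produces a factor $\int_0^1 g(x_t,y)\,\mathrm{d}t$ which, integrated against $\lvert f(y)-f(x)\rvert$, costs several extra powers of $\lVert x\rVert$ and overshoots the target exponent $p+1$. The change of variables (using the location-family Gaussian structure of the proposal, available for the generalized ULA kernel satisfying \cref{eq:growth_ULA}) together with the $\exp(\min(0,\cdot))$ Lipschitz bound (which prevents the exponential of the log acceptance ratio from appearing) are exactly what keep the growth exponent at its target value.
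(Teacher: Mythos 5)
Your argument for the second claim does not prove the proposition as stated. The hypothesis is only that the proposal density $g$ satisfies \cref{assumption:kernel} together with the gradient-growth bound \cref{eq:growth_ULA}; it is not assumed that $g$ is a Gaussian location family. Your decisive steps — the change of variables $y=m(x)+\sqrt{2\tau}\,z$, the cancellation of the $\log g(x,y)$ contribution in $\rho(x,z)-\rho(x',z)$ via $\Vert y-m(x)\Vert^2=2\tau\Vert z\Vert^2$, and the Lipschitz/linear-growth properties of the proposal mean $m$ — all exploit the specific structure $g(x,y)=\mathcal N\parens{y;m(x),2\tau I}$ of the generalized ULA proposal \cref{eq:density_ULA}. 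So what you establish is the special case of ULA/random-walk-type proposals, not the statement for a general $g$ satisfying the stated hypotheses (a non-Gaussian proposal whose log-density has gradients of linear growth meets the hypotheses but is outside your change of variables). The same remark applies, more mildly, to the continuity step in your first part, which invokes Gaussian tails of the proposal although only \cref{assumption:kernel} is assumed there.

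Moreover, the ``main obstacle'' you describe is not an obstacle, and removing it is exactly how the paper proves the general statement. Decomposing $K_k[f](x)-K_k[f](x')$ into $\parens{f(x)-f(x')}\int g(x,y)\alpha(x,y)\diff y$, plus $\int \parens{f(y)-f(x')}\parens{g(x,y)-g(x',y)}\alpha(x,y)\diff y$, plus $\int \parens{f(y)-f(x')}g(x',y)\parens{\alpha(x,y)-\alpha(x',y)}\diff y$, the middle term is handled precisely by the mean-value bound you dismiss: $\verts{g(x,y)-g(x',y)}\le \Vert x-x'\Vert\int_0^1 g(x_t,y)\parens{1+\Vert y\Vert+\Vert x_t\Vert}\diff t$ from \cref{eq:growth_ULA}, and since $\verts{f(y)-f(x')}\lesssim 1+\Vert y\Vert^p+\Vert x'\Vert^p$ while $g$ preserves $\mathcal C_{p+1}$ by \cref{assumption:kernel}, the $y$-integral is $\lesssim 1+\Vert x\Vert^{p+1}+\Vert x'\Vert^{p+1}$ — there is no overshoot of the exponent. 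The last term is controlled, with $y$ held fixed, by the same $1$-Lipschitz bound $\verts{\min(1,e^a)-\min(1,e^{a'})}\le\verts{a-a'}$ that you rediscovered (the paper's \cref{lem:lip_alpha}), which gives $\verts{\alpha(x,y)-\alpha(x',y)}\lesssim\parens{1+\Vert x\Vert+\Vert x'\Vert+\Vert y\Vert}\Vert x-x'\Vert$ and is then integrated against $g(x',\cdot)$. To repair your proposal, either rerun your estimates along these general lines, or explicitly restrict the claim to Gaussian location-family proposals; as written it neither covers the stated generality nor correctly identifies why the extra structure would be needed.
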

\begin{proof}
	For the first part of the proof, we consider a function $f$ in $\mathcal{C}_p$ and write:
	\begin{align}
		\verts{K_k\brakets{f}(x)} &= \verts{f(x)\int \parens{1-\alpha(x,u)}g(x,u)\diff u + \int f(y) \alpha(x,y) g(x,y)\diff y}\\
		&\leq 
		\verts{f(x)} + \verts{\int f(y)g(x,y)\diff y},
	\end{align}
	where we used that $0\leq \alpha(x,y)\leq 1$ to get the inequality. Since $g$ satisfies \cref{assumption:kernel} and $f\in \mathcal{C}_p$ we directly conclude that:
	\begin{align}
		\verts{K_k\brakets{f}}\lesssim 1 + \Verts{x}^p.
	\end{align}
	
	To prove the second part, consider a function $f$ in $\mathcal{LC}_p$. We need to control the difference $\verts{K_k\brakets{f}(x) -K_k\brakets{f}(x')}$:
	\begin{align}
		K_k\brakets{f}(x) -K_k\brakets{f}(x') =& f(x)-f(x') + \int \parens{f(y)-f(x)}g(x,y)\alpha(x,y)\diff y  - \int \parens{f(y)-f(x')}g(x',y)\alpha(x',y)\diff y\\
		=& \underbrace{\parens{f(x)-f(x')}\int g(x,y)\alpha(x,y)\diff y}_{A}\\
		&+ \underbrace{\int \parens{f(y)-f(x')}\parens{g(x,y)-g(x',y)}\alpha(x,y)\diff y}_{B}\\
		&+ \underbrace{\int \parens{f(y)-f(x')}g(x',y)\parens{\alpha(x,y)-\alpha(x',y)}\diff y}_{C}.
	\end{align}
We will control each term $A$, $B$ and $C$ independently. Since $0\leq \alpha \leq 1$, and $f\in \mathcal{LC}_p$, we directly have $\verts{A}\lesssim  \parens{1 +  \Verts{x}^{p+1} + \Verts{x'}^{p+1} } \Verts{x-x'}$. To control the second term $B$, we use the fundamental theorem of calculus which yields
\begin{align}
	\verts{g(x,y)- g(x',y)} = \verts{\int_0^1 g(x_t,y) \nabla_x \log g(x_t,y)^{\top}\parens{x-x'}   },
\end{align} 
where $x_t := (1-t)x +  tx'$. Since $g$ satisfies \cref{eq:growth_ULA} by assumption, we can directly write:
\begin{align}
	\verts{g(x,y)- g(x',y)}\leq \Verts{x-x'}\int_0^1 g(x_t,y)\parens{1 + \Verts{y}  +  \Verts{x_t} }\diff t.
\end{align}
Plugging the above inequality in $B$ and using that $f\in \mathcal{C}_p$ yields:
\begin{align}
	\verts{B} \lesssim \Verts{x-x'}\int_0^1 \int \parens{ 1 + \Verts{y}^p + \Verts{x'}^p }(1+\Verts{y} + \Verts{x_t})g(x_t,y)\diff t.  
\end{align}
Since $g$ satisfies \cref{assumption:kernel}, we can directly conclude that $\verts{B}\lesssim  \parens{1 +  \Verts{x}^{p+1} + \Verts{x'}^{p+1} } \Verts{x-x'}$.
Finally, to control $C$, we first define the function $b(x,y) = V_k(x)-V_k(y) + \log \frac{g(y,x)}{g(x,y)} $ so that the acceptance ratio can be written as $\alpha(x,y) = \min\parens{1, e^{b(x,y)}} $. Using \cref{lem:lip_alpha}, we directly have:
\begin{align}
	\verts{\alpha(x,y)- \alpha(x',y)}&\leq \verts{b(x,y)-b(x',y)}\\
	&\leq \verts{V_k(x) - V_k(x')  +    \log g(y,x) - \log g(y,x') + \log g(x',y) - \log g(x,y) }\\
	&\leq \verts{\int_0^1  \parens{\nabla V_k(x_t) + \nabla_y \log g(y,x_t) -  \nabla_x \log g(x_t,y) }^{\top}\parens{x-x'}\diff t }\\
	&\lesssim \Verts{x-x'}\int_0^1 \parens{1 + \Verts{x_t} + \Verts{y}}\diff t
	\lesssim \Verts{x-x'}\parens{1 + \Verts{x} + \Verts{x'} + \Verts{y}}.
\end{align}
We can therefore use the above inequality to upper-bound $\verts{C}$ as follows
\begin{align}
	\verts{C}&\lesssim  \Verts{x-x'}\int \parens{\verts{f(x')} + \verts{f(y) }}\parens{1 + \Verts{x} + \Verts{x'} + \Verts{y}}g(x',y)\diff y\\
	  &\lesssim \Verts{x-x'}\parens{1 + \Verts{x} + \Verts{x'}},
\end{align}
where we used that $f$ belongs to $\mathcal{LC}_p$ and thus to $\mathcal{C}_p$.
\end{proof}

\begin{lem}\label{lem:lip_alpha}
The following holds for any $a$, $a'$ in $\mathbb{R}$:
	\begin{align}
		\verts{\min\parens{1,e^a} - \min\parens{1,e^{a'}}  }\leq \verts{a-a'}
	\end{align}
\end{lem}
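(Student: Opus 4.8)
The plan is to observe that the function $\phi(x) := \min(1, e^x)$ is non-decreasing and $1$-Lipschitz on $\mathbb{R}$; the claimed inequality is then just the Lipschitz estimate applied at $x = a$ and $x = a'$. First I would record monotonicity: both $x \mapsto 1$ and $x \mapsto e^x$ are non-decreasing, so their pointwise minimum $\phi$ is as well. Hence, assuming without loss of generality that $a \le a'$, we have $0 \le \phi(a') - \phi(a)$, and it remains only to show $\phi(a') - \phi(a) \le a' - a$.

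For this I would split into three regimes according to the signs of $a$ and $a'$. If $a' \le 0$, then $\phi$ agrees with $x \mapsto e^x$ on $[a, a']$, so $\phi(a') - \phi(a) = \int_a^{a'} e^t \, \mathrm{d}t \le a' - a$ because $e^t \le 1$ for $t \le 0$. If $a \ge 0$, then $\phi(a) = \phi(a') = 1$, so the difference is $0 \le a' - a$. Finally, if $a < 0 < a'$, then $\phi(a') - \phi(a) = 1 - e^a = \int_a^0 e^t \, \mathrm{d}t \le -a \le a' - a$. Combining the three cases with the monotonicity observation yields $\lvert \phi(a) - \phi(a') \rvert \le \lvert a - a' \rvert$, which is the statement.

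There is essentially no genuine obstacle here; the only mild subtlety is the non-differentiable kink of $\phi$ at $x = 0$, which the case split above sidesteps. Equivalently, one could argue in one line that $\phi$ is continuous and differentiable off $\{0\}$ with $\lvert \phi'(x) \rvert = e^x \le 1$ for $x < 0$ and $\phi'(x) = 0$ for $x > 0$, so $\lvert \phi' \rvert \le 1$ wherever the derivative exists and the fundamental theorem of calculus gives the Lipschitz bound directly.
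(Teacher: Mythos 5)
Your proof is correct and follows essentially the same route as the paper's: a case split on the signs of $a$ and $a'$, bounding the difference of exponentials by an integral of $e^t \le 1$ on the negative half-line (your WLOG-by-monotonicity merely merges the paper's two mixed-sign cases into one). No gaps.
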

\begin{proof}
	Fix $a$ and $a'$ in $\mathbb{R}$. We distinguish $4$ cases:

	{\bf Case $1$:  $a\leq 0$ and $a'\leq 0$.}
	\begin{align}
		\verts{\min\parens{1,e^a} - \min\parens{1,e^{a'}}}  
		&= \verts{e^{a}- e^{a'}} = \verts{a-a'}\int_0^1 e^{(1-t)a + ta'}\diff t\\
		&\leq \max\parens{e^{a},e^{a'}}\verts{a-a'}\leq \verts{a-a'}
	\end{align}
	where we used that $e^{a}\leq 1$ and $e^{a'}\leq 1$.

	\begin{itemize}
		\item Case $2$:  $a\geq 0$ and $a'\geq 0$
	\end{itemize}
	We directly have $\verts{\min\parens{1,e^a} - \min\parens{1,e^{a'}}} = \verts{1-1} =  0\leq \verts{a-a'}$.

	{\bf Case $3$:  $a\leq 0$ and $a'\geq 0$.}
	\begin{align}
		\verts{\min\parens{1,e^a} - \min\parens{1,e^{a'}}}   = \verts{e^a - 1} = \verts{a}\int_{0}^1 e^{ta}\diff t 
	\end{align}
	Recalling that $a\leq 0$ we have $e^{ta}\leq 1$ and $\verts{a} = -a$. Moreover, since $a'\geq 0$ we can write
	\begin{align}
		\verts{\min\parens{1,e^a} - \min\parens{1,e^{a'}}}\leq -a \leq a'-a =  \verts{a'-a}
	\end{align}
	
{\bf Case $4$:  $a'\leq 0$ and $a\geq 0$.} This case is the same as case $3$ by switching the roles of $a$ and $a'$.

\end{proof}

\subsection{Weak law of large numbers}\label{sec:proof:wlln}
For simplicity, we provide a proof of \cref{proof:wlln} when  resampling is performed at each step. This can easily be extended to adaptive resampling using techniques from \cite{Douc:2007,Del-Moral:2012}. We denote  by $\xrightarrow{P}$ convergence in probability. 
\begin{proof}{\textbf{of \cref{thm:WLLN}}.}\label{proof:wlln}
	We proceed by induction. The result clearly holds for $k=0$ by the regular law of large numbers. By induction, we assume $\mathcal{R}_{l}$ holds for $0\leq l\leq k-1$  and we will prove that $\mathcal{R}_k$ holds as well. Let $f$ be a function in $\mathcal{C}_4$. We use the decomposition $ \pi_k^{N}[f]- \pi_k[f] =   A_N +B_N$ with:
	\begin{align}
		A_N &= \pi_k^{N}[f]- \mathbb{E}_{k-1}[\pi_k^{N}[f]],\\
		B_N &= \mathbb{E}_{k-1}[\pi_k^{N}[f]]- \pi_k[f].
	\end{align}
	\cref{prop:B,prop:A} show that both $A_N$ and $B_N$ converge in probability to $0$ and imply that $\pi_{k}^N[f]-\pi_k[f]\xrightarrow{P} 0$. It remains to show that $\gamma_{k}^N[f]-\gamma_k[f] \xrightarrow{P} 0$. We recall that $\gamma_k^N[f] =  \gamma_k^N[1]\pi_{k}^N[f] $ and $\gamma_k[f] = \gamma_k[1]\pi_k[f]$. Thus we only need to show that $\gamma_k^N[1]-\gamma_k[1]\xrightarrow{P} 0$. Recall that $\gamma_k^N[1]=  \prod_{l=1}^{k}\pi_{l-1}^{N}[\G_{l,T_{l}}]$
	and by \cref{prop:B} we know that $\pi_{l-1}^{N}[\G_{l,T_{l}}] \xrightarrow{P}  \frac{Z_{l}}{Z_{l-1}} $, thus we directly have  $\gamma_k^N[1]\xrightarrow{P}  \prod_{l=1}^{k} \frac{Z_{l}}{Z_{l-1}} = \frac{Z_k}{Z_0}$. This directly implies  $\gamma_k^N[1]\xrightarrow{P}\frac{Z_k}{Z_0} = \gamma_k[1]$ since $Z_0=1$ by construction. Finally, we conclude that $\gamma_k^N[f] -\gamma_k[f] \xrightarrow{P} 0 $ using Slutsky's lemma.
\end{proof}

\begin{thm}[Weak law of large numbers for \cref{alg:SMC-detailed}]\label{thm:WLLN_2}
Let $f$ be a function s.t. $ \vert f(x)\vert\leq C (1+ \Vert x \Vert^4) $ for all $x\in \X$ and for some $C>0$. Under \cref{assumption:kernel,assumption:transport,assumption:moments,assumption:uniformly_bounded} and for any $k\in{0,...,K}$:
\begin{equation}\label{eq:recursion_consistency}
   (\mathcal{R}_{k}):\qquad \pi_k^{N_{\textup{test}}}[f]\xrightarrow{P} \pi_k[f],\quad Z_k^{N_{\textup{test}},\textup{test}}\xrightarrow{P} Z_k,
\end{equation}
where $\pi_k^{N_{\textup{test}}}$ and $Z_k^{N_{\textup{test}},\textup{test}}$ are given by \cref{alg:SMC-detailed}.
\end{thm}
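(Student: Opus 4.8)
The plan is to reduce \cref{thm:WLLN_2} to \cref{thm:WLLN} by conditioning on the randomness that produces the normalizing flows. In \cref{alg:SMC-detailed} the flows $T_1,\dots,T_K$ are learned only from the training and validation particle systems, which are drawn independently of the test particle system; let $\mathcal{G}$ denote the $\sigma$-algebra generated by the training and validation particles together with the internal randomness of the flow optimizer, so that $T_1,\dots,T_K$ are $\mathcal{G}$-measurable and take values in $\mathcal{T}$. Conditionally on $\mathcal{G}$, the test particles $\{X_k^{i,\textup{test}}\}_{i=1}^{N_{\textup{test}}}$ evolve exactly as a standard SMC sampler of the type analysed in \cref{sec:SMCsamplers}: one applies the fixed transport step $M^{\textup{trans}}_k(x,\cdot)=\delta_{T_k(x)}$, reweights by the fixed incremental weight $G_{k,T_k}$, triggers (adaptive) multinomial resampling from the test weights, and mutates with the $\pi_k$-invariant kernel $K_k$. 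Hence, conditionally on $\mathcal{G}$, the estimators $\pi_k^{N_{\textup{test}}}[f]$ and $Z_k^{N_{\textup{test}},\textup{test}}$ returned by \cref{alg:SMC-detailed} coincide with those of \cref{alg:SMC} run with $N_{\textup{test}}$ particles and the \emph{deterministic} flow sequence $T_1,\dots,T_K$.

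Next I would observe that the proof of \cref{thm:WLLN} --- the induction on $k$ together with \cref{prop:general_Glivenko_cantelli}, \cref{prop:A}, \cref{prop:B} and the extension to adaptive resampling via \cite{Douc:2007,Del-Moral:2012} --- goes through for \emph{any} fixed sequence of flows in $\mathcal{T}$: \cref{assumption:transport,assumption:uniformly_bounded} hold uniformly over $\mathcal{T}$, while \cref{assumption:kernel,assumption:moments} concern only the kernels and the targets. This is precisely the point emphasised after the statement of \cref{thm:WLLN}. Consequently, for each realization of $\mathcal{G}$ we obtain the conditional convergences $\pi_k^{N_{\textup{test}}}[f]\xrightarrow{P}\pi_k[f]$ and $Z_k^{N_{\textup{test}},\textup{test}}\xrightarrow{P} Z_k$ as $N_{\textup{test}}\to\infty$.

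Finally I would remove the conditioning using the uniformity built into \cref{prop:general_Glivenko_cantelli}. Since \cref{assumption:transport,assumption:uniformly_bounded} are uniform over $\mathcal{T}$, the estimates in the proof of \cref{thm:WLLN} are uniform over the choice of flows; concretely, write $\phi_m(t_{1:K}) := \mathbb{E}[\min(1, |\pi_k^m[f]-\pi_k[f]|)]$ for the estimator of \cref{alg:SMC} run with $m$ particles and deterministic flows $t_1,\dots,t_K\in\mathcal{T}$, so that $\sup_{t_{1:K}\in\mathcal{T}^{K}}\phi_m(t_{1:K})\to 0$ as $m\to\infty$. Because conditionally on $\mathcal{G}$ the test estimator equals that of \cref{alg:SMC} with the $\mathcal{G}$-measurable flows $T_{1:K}$, we get $\mathbb{E}[\min(1, |\pi_k^{N_{\textup{test}}}[f]-\pi_k[f]|)]\le \sup_{t_{1:K}}\phi_{N_{\textup{test}}}(t_{1:K})\to 0$, i.e. $\pi_k^{N_{\textup{test}}}[f]\xrightarrow{P}\pi_k[f]$; metrising convergence in probability in this way accommodates the unbounded, polynomially growing $f$, and the bound for $Z_k^{N_{\textup{test}},\textup{test}}$ follows in the same way, using $Z_k^{N_{\textup{test}},\textup{test}}=\gamma_k^{N_{\textup{test}},\textup{test}}[1]$ and Slutsky's lemma as in the proof of \cref{thm:WLLN}.

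The main obstacle is the first step: verifying that the test recursion of \cref{alg:SMC-detailed}, conditionally on $\mathcal{G}$, is genuinely a standard SMC sampler with deterministic transport steps --- in particular that its ESS criterion, resampling weights and MCMC mutations never reference the training or validation systems --- so that \cref{prop:A,prop:B,prop:general_Glivenko_cantelli} can be invoked verbatim. Everything else, namely the induction structure, the Slutsky step relating $\gamma_k^{N_{\textup{test}},\textup{test}}$, $Z_k^{N_{\textup{test}},\textup{test}}$ and $\pi_k^{N_{\textup{test}}}$, and the adaptive-resampling technicalities, is identical to the proof of \cref{thm:WLLN}.
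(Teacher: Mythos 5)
Your proposal is correct and follows essentially the same route as the paper: condition on the training and validation particles (independent of the test system), note that the test recursion is then a standard SMC sampler with fixed flows whose incremental weights $G_{k,T_k}$ exactly correct for the transport (the content of \cref{prop:independence_from_T}), and invoke SMC consistency. The only difference is cosmetic: the paper simply cites classical SMC consistency results conditionally and leaves the de-conditioning implicit, whereas you re-derive the conditional statement from the uniform-in-flow estimates behind \cref{thm:WLLN} and make the removal of the conditioning explicit, which is a sound elaboration rather than a different argument.
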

\begin{proof}
	The proof is a direct consequence of consistency of SMC samplers \cite{chopin2004central,del2004feynman}. 
	Indeed, the test particles $\braces{X_{k}^{i,\textup{test}}}_{1:N}$  are independent from the train  and validation particles $\braces{X_{k}^{i,\textup{train}}}_{1:N}$ and $\braces{X_{k}^{i,\textup{val}}}_{1:N}$ used to learn the flows $T_k$. 
	Moreover, by \cref{prop:independence_from_T}, the importance weights $w_{k}^{i,\textup{test}}$ correct exactly for the discrepancy between $\pi_k$ and $\parens{T_k}_{\#}\pi_{k-1}$. Hence, knowing the train and validation particles, \cref{alg:SMC-detailed} is a standard SMC sampler with Markov transition kernel given by $M_k(x,. ) = \parens{T_k}_{\#}K_k(x,.)$. Therefore consistency holds and $Z_{k}^{N_{\textup{test}},\textup{test}}$ is an unbiased estimator of $Z_k$.
\end{proof}

\begin{prop}\label{prop:B}
	Under \cref{assumption:kernel,assumption:transport,assumption:moments,assumption:uniformly_bounded} and whenever the recursion assumption $\mathcal{R}_{k-1}$ holds for a given $k>0$, it also holds that
	\begin{align}
		B_N :=\mathbb{E}_{k-1}\brakets{\pi_k^N[f]}-\pi_k[f] \xrightarrow{P} 0,\quad \quad
		\pi_{k-1}^N\brakets{\G_{k,T_k}}  \xrightarrow{P} \frac{Z_k}{Z_{k-1}},
	\end{align}
	for all  functions $f$  in $\mathcal{C}_4$.
\end{prop}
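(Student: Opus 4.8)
The plan is to compute the conditional expectation $\mathbb{E}_{k-1}[\pi_k^N[f]]$ in closed form, to identify its limit using the exactness of the importance-sampling correction, and then to deal with the particle-dependence of $T_k$ by upgrading the recursion hypothesis $\mathcal{R}_{k-1}$ to a statement that is uniform over the flow family.

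First I would note that $\mathcal{F}_{k-1}^N$ contains both the particles $\{X_{k-1}^i\}$ and the flow $T_k$; conditionally on $\mathcal{F}_{k-1}^N$ the only remaining randomness in $\pi_k^N[f]$ is that of the (possibly resampled) selection and of the mutation $X_k\sim K_k(T_k(\cdot),\cdot)$. Averaging the function $K_k(f)\circ T_k$ against the post-reweighting empirical distribution $\sum_i W_k^i\delta_{X_{k-1}^i}$, where $W_k^i\propto W_{k-1}^iG_{k,T_k}(X_{k-1}^i)$, and using $G_{k,T_k}(x)K_k(f)(T_k(x))=Q_{k,T_k}[f](x)$, gives
\begin{align}
\mathbb{E}_{k-1}[\pi_k^N[f]]=\frac{\pi_{k-1}^N[Q_{k,T_k}[f]]}{\pi_{k-1}^N[G_{k,T_k}]}.
\end{align}
On the population side, the change of variables $y=T(x)$ in $\pi_{k-1}[G_{k,T}]=Z_{k-1}^{-1}\int\gamma_k(T(x))\lvert\nabla T(x)\rvert\diff x$ shows that $\pi_{k-1}[G_{k,T}]=Z_k/Z_{k-1}$ for \emph{every} $T\in\mathcal{T}$, and combining this with the importance-sampling identity \eqref{eq:IS_relation} gives $\pi_{k-1}[Q_{k,T}[f]]=\pi_k[f]\,Z_k/Z_{k-1}$ for every $T\in\mathcal{T}$. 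In particular both population quantities are \emph{deterministic} constants, insensitive to the random choice $T=T_k$.

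It therefore suffices to establish $\pi_{k-1}^N[G_{k,T_k}]\xrightarrow{P}Z_k/Z_{k-1}$ and $\pi_{k-1}^N[Q_{k,T_k}[f]]\xrightarrow{P}\pi_k[f]\,Z_k/Z_{k-1}$: the former is the second assertion of the proposition, and $B_N\xrightarrow{P}0$ then follows from the display above and the continuous mapping theorem for $(u,v)\mapsto u/v$, which is continuous at $(\pi_k[f]\,Z_k/Z_{k-1},\,Z_k/Z_{k-1})$ since $Z_k/Z_{k-1}>0$. Because $T_k$ is particle-dependent, one cannot simply apply $\mathcal{R}_{k-1}$ to the fixed functions $G_{k,T}$ and $Q_{k,T}[f]$; instead I would invoke the uniform Glivenko--Cantelli-type bound of \cref{prop:general_Glivenko_cantelli}, namely $\sup_{T\in\mathcal{T}}\bigl\lvert\pi_{k-1}^N[\phi_T]-\pi_{k-1}[\phi_T]\bigr\rvert\xrightarrow{P}0$ for $\phi_T\in\{G_{k,T},\,Q_{k,T}[f]\}$, and evaluate it at the measurable selection $T=T_k\in\mathcal{T}$.

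For that uniform bound one parameterizes $T=\tau_\theta$ with $\theta$ in the compact convex set $\Theta$ (\cref{assumption:transport}). Since $\tau_\theta$ is $L$-Lipschitz with $\sup_\theta\lVert\tau_\theta(0)\rVert<\infty$, $G_{k,T}$ is uniformly bounded (\cref{assumption:uniformly_bounded}), $f\in\mathcal{C}_4$, and $K_k$ preserves $\mathcal{C}_4$ (\cref{assumption:kernel}), the families $\{G_{k,\theta}\}$ and $\{Q_{k,\theta}[f]\}$ lie in $\mathcal{C}_4(C)$ with $C$ independent of $\theta$, while joint continuity of $(\theta,x)\mapsto\tau_\theta(x)$ gives equicontinuity of $\theta\mapsto\phi_\theta(x)$ on compacts. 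One then covers $\Theta$ by finitely many balls of radius $\delta$, applies $\mathcal{R}_{k-1}$ at their centers and to the monomials $\lVert\cdot\rVert^p$, $p\le 4$ (whose $\pi_{k-1}$-moments exist by \cref{assumption:moments}), and bounds the within-ball oscillation using the equicontinuity together with $\pi_{k-1}^N[\lVert\cdot\rVert^p]\xrightarrow{P}\pi_{k-1}[\lVert\cdot\rVert^p]$; letting $N\to\infty$ and then $\delta\to0$ finishes. The hard part is precisely this passage from pointwise-in-$\theta$ to uniform consistency with unbounded (polynomial) test functions: one must ensure the within-ball oscillation is integrable against the empirical measures with a modulus that vanishes with $\delta$ uniformly in $N$ --- which is where the uniform degree-$4$ growth together with the availability of $8$th-order moments provides the slack --- and one must extract the $\theta$-regularity of the Jacobian factor $\lvert\nabla\tau_\theta\rvert$ inside $G_{k,T}$, which under Assumptions (A)--(D) is obtained through the uniform boundedness of the weights rather than through derivative bounds. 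The remaining manipulations are routine.
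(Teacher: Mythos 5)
Your proposal is correct and follows essentially the same route as the paper's proof: you write $\mathbb{E}_{k-1}[\pi_k^N[f]]$ as the ratio $\pi_{k-1}^N[Q_{k,T_k}[f]]/\pi_{k-1}^N[G_{k,T_k}]$, use the exactness of the IS correction (the paper's \cref{prop:independence_from_T}) to make the population limits independent of the learned flow, and handle the particle-dependence of $T_k$ via the uniform consistency result \cref{prop:general_Glivenko_cantelli} over the compactly parameterized flow family, concluding by Slutsky/continuous mapping. The only cosmetic difference is that the paper organizes this as a four-term decomposition of $B_N$ (two terms vanishing identically by \cref{prop:independence_from_T}) and proves the uniform law via bracketing rather than your covering-and-oscillation sketch, which amounts to the same argument.
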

\begin{proof}
	We use the following decomposition for $B_N = B^{(1)}_N + B^{(2)}_N+ B^{(3)}_N +B^{(4)}_N$:
	\begin{align}
		B^{(1)}_N &:=  \frac{1}{\pi^N_{k-1}\brakets{\G_{k,T_k}}}\parens{ \pi_{k-1}\brakets{Q_{k,T_k}\brakets{f}} - \pi_{k-1}\brakets{Q_{k,T^{\star}_k}\brakets{f}}},   \\
		B^{(2)}_N &:=   \frac{\pi_{k-1}\brakets{Q_{k,T^{\star}_{k}}\brakets{f}}}{\pi^{N}_{k-1}\brakets{\G_{k,T_k}}\pi_{k-1}\brakets{\G_{k,T^{\star}_k}}}\parens{\pi_{k-1}\brakets{G_{k,T^{\star}_k}} - \pi_{k-1}\brakets{\G_{k,T_k}} },\\
		B^{(3)}_N &:= \frac{1}{\pi^N_{k-1}\brakets{\G_{k,T_k}}}\parens{ \pi^N_{k-1}\brakets{Q_{k,T_k}\brakets{f}} - \pi_{k-1}\brakets{Q_{k,T_k}\brakets{f}}  },\\
		B^{(4)}_N &:=   \frac{\pi_{k-1}\brakets{Q_{k,T^{\star}_{k}}\brakets{f}}}{\pi^{N}_{k-1}\brakets{\G_{k,T_k}}\pi_{k-1}\brakets{\G_{k,T^{\star}_k}}}\parens{\pi_{k-1}\brakets{\G_{k,T_k}} - \pi^{N}_{k-1}\brakets{\G_{k,T_k}}}.
	\end{align} 
\cref{prop:independence_from_T} states that $\pi_{k-1}[Q_{k,T}[f]] $ is independent of the choice of $T$. Thus, the first two terms $B^{(1)}_N$ and $B^{(2)}_N$ are exactly $0$.  

We know by \cref{prop:growth} that $Q_{k,T}[f](x)$ belongs to  $\mathcal{C}_4$, uniformly over $\mathcal{T}$. Moreover, the family $\mathcal{F} = (Q_{k,\tau_{\theta}}[f](x))_{\theta\in \Theta}$ is continuously indexed by the compact set $\Theta$. We can therefore apply \cref{prop:general_Glivenko_cantelli} under the recursion assumption $\mathcal{R}_{k-1}$ to the family $\mathcal{F}$. This ensures that
\begin{align}
	\sup_{T\in \mathcal{T}} \verts{\pi^{N}_{k-1}[Q_{k,T}[f]]-  \pi_{k-1}[Q_{k,T}[f]]} \xrightarrow{P} 0.
\end{align}
In particular, we have 
\begin{align}
	\pi^{N}_{k-1}\brakets{Q_{k,T_k}\brakets{f}}-  \pi_{k-1}\brakets{Q_{k,T_k}\brakets{f}} &\xrightarrow{P} 0,\\
	\pi_{k-1}^N\brakets{\G_{k,T_k}} - \pi_{k-1}\brakets{\G_{k,T_k}}&\xrightarrow{P} 0,
\end{align}
where the last equation is obtained simply by choosing $f=1$. This directly implies that $B^{(3)}_N$ and $B^{(4)}_N$ converge to $0$ in probability using Slutsky's lemma.
\end{proof}

\begin{prop}\label{prop:A}
	Under \cref{assumption:kernel,assumption:transport,assumption:moments,assumption:uniformly_bounded} and whenever the recursion assumption $\mathcal{R}_{k-1}$ holds for a given $k>0$, it holds that:
	\begin{align}
		A_N:=  \pi_k^{N}[f]- \mathbb{E}_{k-1}[\pi_k^{N}[f]]\xrightarrow{P} 0,
	\end{align}
	for all measurable functions $f$ in $\mathcal{C}_p$.
\end{prop}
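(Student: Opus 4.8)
The plan is to isolate, inside $A_N$, the two sources of randomness entering after time $k-1$: the (multinomial) resampling of the transported particles $\widetilde X^i_k = T_k(X^i_{k-1})$, and the mutation $X^i_k\sim K_k(\widehat X^i_k,\cdot)$ of the resampled particles $\widehat X^i_k$. Writing $\mathcal{G}^N_k$ for the $\sigma$-algebra generated by $\mathcal{F}^N_{k-1}$ together with the $\widehat X^i_k$ (so that, conditionally on $\mathcal{F}^N_{k-1}$, the $\widehat X^i_k$ are i.i.d.\ draws from $\sum_j W^j_k\,\delta_{T_k(X^j_{k-1})}$), I would decompose
\begin{align*}
	A_N = \underbrace{\parens{\pi_k^N[f] - \mathbb{E}[\pi_k^N[f]\mid \mathcal{G}^N_k]}}_{A^{(1)}_N} + \underbrace{\parens{\mathbb{E}[\pi_k^N[f]\mid \mathcal{G}^N_k] - \mathbb{E}_{k-1}[\pi_k^N[f]]}}_{A^{(2)}_N},
\end{align*}
so that $A^{(1)}_N$ has vanishing conditional mean given $\mathcal{G}^N_k$ and, by the tower property, $A^{(2)}_N$ has vanishing conditional mean given $\mathcal{F}^N_{k-1}$. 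I will then show that $\mathbb{E}[(A^{(i)}_N)^2\mid \mathcal{F}^N_{k-1}]\xrightarrow{P}0$ for $i=1,2$ and conclude via the standard fact that a random variable whose conditional second moment given a (possibly $N$-dependent) $\sigma$-algebra tends to $0$ in probability must itself tend to $0$ in probability.

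For $A^{(1)}_N$: conditionally on $\mathcal{G}^N_k$ the mutated particles are independent, so $\mathbb{E}[(A^{(1)}_N)^2\mid\mathcal{G}^N_k]\le N^{-2}\sum_i K_k[f^2](\widehat X^i_k)$; taking a further conditional expectation given $\mathcal{F}^N_{k-1}$ and using exchangeability of the resampled particles yields
\begin{align*}
	\mathbb{E}[(A^{(1)}_N)^2\mid\mathcal{F}^N_{k-1}]\le \frac1N\sum_{j=1}^N W^j_k\,K_k[f^2]\parens{T_k(X^j_{k-1})} = \frac1N\cdot\frac{\pi^N_{k-1}[Q_{k,T_k}[f^2]]}{\pi^N_{k-1}[G_{k,T_k}]}.
\end{align*}
For $A^{(2)}_N$: $\mathbb{E}[\pi_k^N[f]\mid\mathcal{G}^N_k]=N^{-1}\sum_i K_k[f](\widehat X^i_k)$ is, given $\mathcal{F}^N_{k-1}$, a sample mean of i.i.d.\ variables, and Jensen's inequality $K_k[f]^2\le K_k[f^2]$ (valid for the probability kernel $K_k$) bounds its conditional variance by the very same quantity $N^{-1}\pi^N_{k-1}[Q_{k,T_k}[f^2]]/\pi^N_{k-1}[G_{k,T_k}]$.

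It thus remains to verify that $\pi^N_{k-1}[Q_{k,T_k}[f^2]]/\pi^N_{k-1}[G_{k,T_k}]=O_{\mathbb{P}}(1)$. For $f\in\mathcal{C}_p$ with $p\le 4$ one has $f^2\in\mathcal{C}_{2p}\subseteq\mathcal{C}_8$, which is $\pi_{k-1}$-integrable by \cref{assumption:moments}; \cref{assumption:kernel} gives $K_k[f^2]\in\mathcal{C}_8$; \cref{assumption:transport} (the $L$-Lipschitz, hence at most affine, growth of $\tau_\theta$ uniformly in $\theta$) gives $x\mapsto K_k[f^2](T_k(x))\in\mathcal{C}_8$ with a bound uniform over $\mathcal{T}$; and \cref{assumption:uniformly_bounded} bounds $G_{k,T}$ uniformly. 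Hence $Q_{k,T}[f^2]\in\mathcal{C}_8$ uniformly over $T\in\mathcal{T}$, and the uniform law of large numbers argument used in the proof of \cref{prop:B} — applied to the continuously $\theta$-indexed family $(Q_{k,\tau_\theta}[f^2])_\theta$ and valid under the recursion hypothesis $\mathcal{R}_{k-1}$ — shows that $\pi^N_{k-1}[Q_{k,T_k}[f^2]]$ converges in probability to a finite limit while $\pi^N_{k-1}[G_{k,T_k}]\xrightarrow{P}Z_k/Z_{k-1}>0$. The ratio is therefore $O_{\mathbb{P}}(1)$, so both conditional second moments are at most $c_N/N$ with $c_N=O_{\mathbb{P}}(1)$; this gives $A^{(1)}_N,A^{(2)}_N\xrightarrow{P}0$, hence $A_N\xrightarrow{P}0$.

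The step needing the most care — rather than a real obstacle — is that $T_k$ is itself a random, data-dependent map, so each growth and moment estimate above has to hold \emph{uniformly} over the normalizing-flow family $\mathcal{T}$; this uniformity is precisely what \cref{assumption:transport,assumption:uniformly_bounded} supply, and it is what makes the uniform law of large numbers (rather than a plain pointwise LLN) the right tool for handling $\pi^N_{k-1}[Q_{k,T_k}[f^2]]$ and $\pi^N_{k-1}[G_{k,T_k}]$ despite the dependence of $T_k$ on the particles. A minor bookkeeping point is the jump from fourth- to eighth-degree polynomial growth when passing from $f$ to $f^2$, which is absorbed by \cref{assumption:moments}.
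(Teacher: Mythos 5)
Your decomposition into resampling and mutation noise, the conditional variance bounds, and the conditional Chebyshev step are all sound, and this is a genuinely different route from the paper: the paper does not bound second moments at all, but instead shows that the conditional characteristic function $\hat{\phi}_{U_{N,1}}(t/N)^N$ converges to $1$, writing its derivative in terms of the auxiliary functions $f_u(x)=f(x)\int_0^1 e^{isuf(x)}\diff s$ and $g_u$, which involve only a \emph{bounded} factor multiplying $f$ and therefore stay inside $\mathcal{C}_4$, so that the recursion hypothesis and \cref{prop:general_Glivenko_cantelli} apply directly.

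The gap in your argument is exactly the point you dismiss as ``minor bookkeeping.'' Your proof hinges on $\pi^N_{k-1}\brakets{Q_{k,T_k}\brakets{f^2}}=O_{\mathbb{P}}(1)$, and you justify it by ``the uniform law of large numbers argument used in the proof of \cref{prop:B}.'' But for $f\in\mathcal{C}_4$ the function $Q_{k,T}\brakets{f^2}$ has growth of order $8$, while the recursion hypothesis $\mathcal{R}_{k-1}$ asserts $\pi^N_{k-1}[h]\xrightarrow{P}\pi_{k-1}[h]$ only for $h$ with growth at most $4$, \cref{prop:growth} is stated only for $0\leq p\leq 4$, and \cref{prop:general_Glivenko_cantelli} only for subclasses of $\mathcal{C}_4(C)$. \cref{assumption:moments} gives $\pi_{k-1}$-integrability of a growth-$8$ envelope, but the quantity you need to control is an expectation under the \emph{random} measure $\pi^N_{k-1}$, and nothing in the stated induction controls $\pi^N_{k-1}$ on growth-$8$ functions; a naive fix by strengthening the induction to $\mathcal{C}_8$ test functions just pushes the problem to $\mathcal{C}_{16}$ at the next application of your Chebyshev step. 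The gap is repairable without new assumptions — e.g.\ run a parallel induction showing $\pi^N_k\brakets{1+\Verts{x}^8}=O_{\mathbb{P}}(1)$, using that $\G_{k,T}$ is uniformly bounded (\cref{assumption:uniformly_bounded}), that $K_k\brakets{\Verts{\cdot}^8}\circ T_k$ has growth $8$ uniformly over $\mathcal{T}$, that $\pi^N_{k-1}\brakets{\G_{k,T_k}}\xrightarrow{P}Z_k/Z_{k-1}>0$, and a conditional Markov inequality — but as written the $O_{\mathbb{P}}(1)$ claim is unsupported, and avoiding precisely this issue is why the paper argues through characteristic functions rather than second moments.
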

\begin{proof}
We will show that the characteristic function of $A_N$ denoted $\phi_{A_N}(t)$ converges towards $1$ for all $t\in \mathbb{R}$. It is easy to see that $A_N$ can be expressed as:
\begin{align}
	A_N = \frac{1}{N}\parens{\sum_{i=1}^N U_{N,i} },\qquad U_{N,i} =   f(X_k^i) -\mathbb{E}_{k-1}[f(X_{k}^{i})],
\end{align}
where, conditionally on $\mathcal{F}_{k-1}^N$, the variables $X_k^i$ are independent and identically distributed according to:
\begin{align}
	\mathbb{P}\parens{X_k\in \diff x \middle|\mathcal{F}^N_{k-1}} = \sum_{i=1}^N \frac{\G_{k,T_k}\parens{X_{k-1}^i}}{\sum_{j=1}^N \G_{k,T_k}\parens{X_{k-1}^j} } K_{k}\parens{T_{k}\parens{X_{k-1}^i},\diff x}.
\end{align}
Let us introduce the conditional characteristic function $\hat{\phi}_{U_{N,1}}(t)$ knowing $\mathcal{F}^N_{k-1}$ defined by:
\begin{align}
	\hat{\phi}_{U_{N,1}}(t) = \mathbb{E}_{k-1}\brakets{e^{it U_{N,1} }  }.
\end{align}
This allows to express $\phi_{A_N}(t)$ in terms of $\hat{\phi}_{U_{N,1}}(t)$ as $ \phi_{A_N}(t) = \mathbb{E}[\hat{\phi}_{U_{N,1}}(\frac{t}{N})^{N}]$. Thus, we only need to prove that $\hat{\phi}_{U_{N,1}}(\frac{t}{N})^{N} \xrightarrow{P} 1$.

We will rely on the following expression $\hat{\phi}_{U_{N,1}}(\frac{t}{N}) = 1 + \frac{t}{N} E_N(\frac{t}{N})$ with $E_N(u) :=\int_0^1 \hat{\phi}_{U_{N,1}}'(su)\diff s$. We only need to prove that $ E_N(\frac{t}{N})\xrightarrow{P} 0$  as $N\rightarrow \infty$. Let us introduce the families of functions indexed by $u\in [-\vert t\vert ,\vert t\vert]$:   
\begin{align}
		g_u(x) =  \int_0^1 e^{isu f(x)}\diff s,\qquad\widehat{g}_u = \int_0^1 e^{-isu \mathbb{E}_{k-1}[f(X_k)]}\diff s,  \qquad f_u(x) = f(x) g_u(x).
\end{align}
Using the dominated convergence theorem, we can further express $E_N(u)$ as follows:
\begin{align}
	E_N(u) &= i\mathbb{E}_{k-1}\brakets{ U_{N,1}\int_0^1 e^{i suU_{N,1}}\diff s  }\\
		 &=   i\mathbb{E}_{k-1}\brakets{f_u(X_k)}\widehat{g}_u-i\mathbb{E}_{k-1}\brakets{f(X_k)}\mathbb{E}_{k-1}\brakets{g_u(X_k)}\widehat{g}_u.
\end{align}
Each expectation is of the form $\mathbb{E}_{k-1}[h(X_k)] = \frac{\pi_{k-1}^N[Q_{k,T_k}[h]  ]}{\pi_{k-1}^{N}[\G_{k,T_k}]}$. Using the induction hypothesis $\mathcal{R}_{k-1}$, and recalling that each function $f_u$, $f$ and $g_u$ belongs to  $\mathcal{C}_4$, it follows that each conditional expectation converges in probability towards $\pi_{k}[f_u]$, $\pi_{k}[f]$ and $\pi_k[g_u]$ while $\widehat{g}_u \xrightarrow{P} \int_0^1 e^{-isu \pi_k[f] }\diff s $. 
Moreover, using the fact that the functions $g_u$ and $f_u$ are continuously indexed by $u$ over the compact set $[-\vert t\vert ,\vert t\vert]$, we can apply \cref{prop:general_Glivenko_cantelli} to ensure that convergence is uniform over this set. This allows us to prove in particular that $E_N(\frac{t}{N})\xrightarrow{P} 0$. We have shown so far that $\hat{\phi}_{U_{N,1}}(\frac{t}{N}) = 1 + o_{\mathbb{P}}(\frac{t}{N}) $ which allows to conclude that $\hat{\phi}_{U_{N,1}}(\frac{t}{N})^{N}\xrightarrow{P} 1 $ as $N \rightarrow \infty$ and thus $A_N\xrightarrow{P} 0$.
\end{proof}
\begin{prop}\label{prop:independence_from_T}
		The following holds for any admissible $T$ in $\mathcal{T}$ and function $f$ such that  $K_k[f](x)<\infty$ and $\pi_{k}[f]<\infty$:
		\begin{align}
			\pi_{k-1}\brakets{Q_{k,T}\brakets{f}} = \frac{Z_{k}}{Z_{k-1}}\pi_{k}[f].
		\end{align}
		In particular, we have $\pi_{k-1}[\G_{k,T}] = \frac{Z_{k}}{Z_{k-1}}$.
\end{prop}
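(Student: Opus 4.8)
The plan is to unfold the definitions and reduce the identity to two facts: the change-of-variables formula for the diffeomorphism $T$, and the $\pi_k$-invariance of the mutation kernel $K_k$ used in \cref{alg:SMC}.

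First I would write $\pi_{k-1}\brakets{Q_{k,T}\brakets{f}}$ explicitly. By definition $Q_{k,T}[f](x) = \G_{k,T}(x)\,(K_k[f])(T(x))$, where $(K_k[f])(z) := \int f(y)\,K_k(z,\diff y)$, and by \cref{eq:discrete_importance_weight} the incremental weight is $\G_{k,T}(x) = \gamma_k(T(x))\,|\nabla T(x)|/\gamma_{k-1}(x)$. Since $\pi_{k-1}(x) = \gamma_{k-1}(x)/Z_{k-1}$, the factor $\gamma_{k-1}(x)$ cancels and
\[
\pi_{k-1}\brakets{Q_{k,T}\brakets{f}} = \frac{1}{Z_{k-1}} \int \gamma_k(T(x))\, |\nabla T(x)|\, (K_k[f])(T(x))\,\diff x .
\]
Then I would apply the substitution $y = T(x)$. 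Since every element of $\mathcal{T}$ is, by construction, a diffeomorphism of $\mathcal{X}$ (\cref{assumption:transport}), the change-of-variables formula gives $\diff y = |\nabla T(x)|\,\diff x$, so the right-hand side equals $Z_{k-1}^{-1}\int \gamma_k(y)\,(K_k[f])(y)\,\diff y = (Z_k/Z_{k-1})\,\pi_k\brakets{K_k[f]}$. Finally, using that $K_k$ leaves $\pi_k$ invariant, $\pi_k[K_k[f]] = \pi_k[f]$, which yields the claim. Taking $f\equiv 1$ gives $K_k[1]\equiv 1$ and $Q_{k,T}[1] = \G_{k,T}$, hence $\pi_{k-1}[\G_{k,T}] = Z_k/Z_{k-1}$.

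The routine steps (cancelling densities, the substitution, invoking invariance) are not the difficulty; the care needed — and what I regard as the main obstacle — is the measure-theoretic bookkeeping. One must check that $T$ is a bijection of $\mathcal{X}$ onto itself so that the substitution is legitimate on all of $\mathcal{X}$ rather than merely on $T(\mathcal{X})$ (for NFs this is part of the construction, but when $\mathcal{X}\subsetneq\mathbb{R}^d$ one should confirm each layer preserves $\mathcal{X}$), and that all the integrals appearing are finite so the exchanges of integration and the manipulation of $K_k[f]$ as a genuine function are valid; this is precisely why the hypotheses $(K_k[f])(x)<\infty$ and $\pi_k[f]<\infty$ are imposed. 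The same argument phrased without densities is exactly the statement in \cref{sec:measuretheoreticpresentation} that the pair (transport step, $K_k$-step) sends $\gamma_{k-1}$ to $\gamma_k$, which is the natural route if one wants to avoid assuming $K_k$ admits a Lebesgue density.
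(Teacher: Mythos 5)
Your proposal is correct and follows essentially the same route as the paper's proof: unfold the definition of $Q_{k,T}$ and $\G_{k,T}$, cancel $\gamma_{k-1}$, apply the change of variables $y=T(x)$ (valid since NFs in $\mathcal{T}$ are diffeomorphisms), and invoke $\pi_k$-invariance of $K_k$, with $f\equiv 1$ giving the second claim. The additional remarks on the bijectivity of $T$ and integrability are sensible but do not change the argument.
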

\begin{proof}
	For any admissible map $T$ we have that:
\begin{align}
	\pi_{k-1}\brakets{Q_{k-1,T}\brakets{f}} &= \frac{Z_{k}}{Z_{k-1}}\int \frac{\pi_k(T(x))}{\pi_{k-1}(x)}\vert \nabla T(x) \vert \left( \int f(y)K_k(T(x),\diff y)\right) \pi_{k-1}(x)\diff x \\
	  &=  \frac{Z_{k}}{Z_{k-1}} \int \int \pi_k(z)f(y)K_k(z,\diff y)\diff z\\
	  &= \frac{Z_{k}}{Z_{k-1}} \pi_k[f].
\end{align}
The second line is obtained by a change of variables $z= T(x)$ and using that $K_k$ is invariant w.r.t $\pi_k$. The last inequality is obtained by choosing $f=1$.
\end{proof}

\begin{prop}\label{prop:growth}
Let $f$ be a measurable function in  $\mathcal{C}_p$ for  $0 \leq p\leq 4$. Then, under \cref{assumption:kernel,assumption:transport,assumption:uniformly_bounded}, the function $x\mapsto Q_{k,T}[f](x)$ belongs to $\mathcal{C}_p$ uniformly over $\mathcal{T}$. In other words, there exists a positive constant $C>0$ such that:
\begin{align}
	\vert  Q_{k,T}[f](x) \vert \leq C(1  + \Vert x\Vert^p ),\qquad \forall x\in \X, \forall T\in \mathcal{T}.
\end{align}
\end{prop}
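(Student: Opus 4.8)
The plan is to write $Q_{k,T}[f](x) = G_{k,T}(x)\,K_k[f](T(x))$ and to bound the two factors separately, being careful that every constant that appears can be chosen independently of $T\in\mathcal{T}$.

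First I would observe that, by \cref{assumption:uniformly_bounded}, $\verts{G_{k,T}(x)}\leq C_G$ for a finite $C_G$ that is uniform in $x$ and in $T$, so the whole task reduces to controlling the composition $K_k[f]\circ T$. Since $f\in\mathcal{C}_p$ and $K_k$ preserves $\mathcal{C}_p$ by \cref{assumption:kernel}, there is a fixed constant $C_1>0$ with $\verts{K_k[f](z)}\leq C_1(1+\Verts{z}^p)$ for all $z\in\X$; note that $K_k[f]$ does not depend on $T$ at all, so $C_1$ is fixed once and for all. It then remains to bound $\Verts{T(x)}$ uniformly in $T$: writing $T=\tau_{\theta}$ with $\theta$ in the compact set $\Theta$, \cref{assumption:transport} gives $\Verts{T(x)}\leq L\Verts{x}+\Verts{\tau_{\theta}(0)}\leq L\Verts{x}+C_0$, where $C_0:=\sup_{\theta\in\Theta}\Verts{\tau_{\theta}(0)}$ is finite by the joint continuity of $(\theta,x)\mapsto\tau_{\theta}(x)$ together with the compactness of $\Theta$.

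Putting these together and using $(a+b)^p\leq 2^{p-1}(a^p+b^p)$ for $p\geq 1$ (the case $p=0$ being trivial), I obtain
\begin{align}
	\verts{Q_{k,T}[f](x)}\leq C_G C_1\parens{1+(L\Verts{x}+C_0)^p}\leq C\parens{1+\Verts{x}^p},
\end{align}
with $C$ depending only on $C_G,C_1,L,C_0$ and $p$, hence independent of $x$ and of $T$. Continuity of $x\mapsto Q_{k,T}[f](x)$ is immediate: $K_k[f]$ is continuous since it lies in $\mathcal{C}_p$, $T$ is continuous by \cref{assumption:transport}, and $G_{k,T}$ is continuous since it is built from the flow and the potential densities; so $Q_{k,T}[f]\in\mathcal{C}_p$ with the claimed uniform constant. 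There is no substantive obstacle here: the argument is a routine composition of the boundedness of $G_{k,T}$, the $\mathcal{C}_p$-stability of $K_k$, and the at most linear growth of $T$. The only point that requires a short argument is the uniformity over $\mathcal{T}$, which is exactly where the compactness of $\Theta$ (and the uniform-in-$T$ bound of \cref{assumption:uniformly_bounded}) enter.
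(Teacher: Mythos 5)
Your proof is correct and follows essentially the same route as the paper's: bound $G_{k,T}$ uniformly via \cref{assumption:uniformly_bounded}, use \cref{assumption:kernel} to get the $\mathcal{C}_p$ bound on $K_k[f]$, and use the uniform linear growth of $T$ from \cref{assumption:transport} (you merely spell out the compactness argument for the uniform constant, which the paper states without detail).
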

\begin{proof}
By \cref{assumption:kernel}, we have that $\vert K_k[f](x)\vert\leq C(1 + \Vert x \Vert^p)$. Moreover, using \cref{assumption:transport} we know that $T(x)$ has a linear growth in $x$, $\Vert T(x) \Vert \leq C'(1 +  \Vert x\Vert )$ with the same constant $C'$ for all $T\in \mathcal{T}$.
Therefore, there exists a positive constant $M>0$, such that $\vert K_k[f](T(x))\vert\leq M(1 + \Vert x \Vert^p)$ for any $T\in \mathcal{T}$ and $x\in \X$. Finally, we know by \cref{assumption:uniformly_bounded} that  $\G_{k,T}(x)$ is bounded uniformly over $x$ and $T$. This allows us to conclude that $Q_{k,T}[f]$ has the desired growth in $x$ which is uniform over $T \in \mathcal{T}$.
\end{proof}

\begin{prop}\label{prop:Glivenko_cantelli_braketing}
	Let $1\leq p\leq 4$, $C>0$ and $\mathcal{F}$ be a class of measurable functions in  $\mathcal{C}_p(C)$ such that the bracketing number $\mathcal{N}_{[]}(\epsilon, \mathcal{F}, L_2(\pi_k)) $ is finite for any $\epsilon>0$.
	Then under \cref{assumption:moments} and the recursion assumption $\mathcal{R}_{k}$ the following uniform convergence holds in probability
	\begin{align}
		\sup_{f\in F} \vert \pi_{k}^N[f] -\pi_{k}[f] \vert   \xrightarrow{P} 0.
	\end{align}
	\end{prop}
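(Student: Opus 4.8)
The plan is to carry out a standard bracketing (uniform law of large numbers) argument, exploiting two facts that are already available: first, that the recursion assumption $\mathcal{R}_k$ gives $\pi_k^N[g]\xrightarrow{P}\pi_k[g]$ for each \emph{fixed} measurable $g$ of polynomial growth of order at most $4$; and second, that $\pi_k^N=\sum_{i=1}^N W_k^i\delta_{X_k^i}$ is a (random) probability measure, hence monotone, so $g\le h$ pointwise implies $\pi_k^N[g]\le\pi_k^N[h]$.

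First I would fix $\epsilon>0$, set $M:=\mathcal{N}_{[]}(\epsilon,\mathcal{F},L_2(\pi_k))<\infty$, and choose brackets $\{[\ell_j,u_j]\}_{j=1}^M$ with $\lVert u_j-\ell_j\rVert_{L_2(\pi_k)}\le\epsilon$ such that every $f\in\mathcal{F}$ lies in some $[\ell_j,u_j]$. The one point that requires care is that $\mathcal{R}_k$ and $L_2(\pi_k)$-integrability must apply to the bracket endpoints, so I would first truncate them against the common envelope $g_0(x):=C(1+\lVert x\rVert^p)$ of $\mathcal{F}\subset\mathcal{C}_p(C)$: replace $u_j$ by $u_j\wedge g_0$ and $\ell_j$ by $\ell_j\vee(-g_0)$. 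Since $-g_0\le f\le g_0$ for every $f\in\mathcal{F}$, this is still a valid bracketing of $\mathcal{F}$, and because truncation is an $L_2(\pi_k)$-contraction the bound $\lVert u_j-\ell_j\rVert_{L_2(\pi_k)}\le\epsilon$ is preserved. After this step $\lvert\ell_j\rvert,\lvert u_j\rvert\le g_0$, so both endpoints have polynomial growth of order $p\le4$; hence $\mathcal{R}_k$ applies to each of them and, by \cref{assumption:moments} (eighth-order moments, and $2p\le8$), they belong to $L_2(\pi_k)$.

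Next I would use the sandwich: for $f\in\mathcal{F}$ with $\ell_j\le f\le u_j$, monotonicity of $\pi_k^N$ gives $\pi_k^N[\ell_j]\le\pi_k^N[f]\le\pi_k^N[u_j]$, whence
\begin{align}
\pi_k^N[f]-\pi_k[f]&\le\bigl(\pi_k^N[u_j]-\pi_k[u_j]\bigr)+\pi_k[u_j-\ell_j],\\
\pi_k^N[f]-\pi_k[f]&\ge\bigl(\pi_k^N[\ell_j]-\pi_k[\ell_j]\bigr)-\pi_k[u_j-\ell_j].
\end{align}
By Cauchy--Schwarz $\pi_k[u_j-\ell_j]\le\lVert u_j-\ell_j\rVert_{L_2(\pi_k)}\le\epsilon$, so
\begin{align}
\sup_{f\in\mathcal{F}}\bigl\lvert\pi_k^N[f]-\pi_k[f]\bigr\rvert\le\max_{1\le j\le M}\Bigl(\bigl\lvert\pi_k^N[u_j]-\pi_k[u_j]\bigr\rvert+\bigl\lvert\pi_k^N[\ell_j]-\pi_k[\ell_j]\bigr\rvert\Bigr)+\epsilon.
\end{align}
The right-hand side is the maximum over the finitely many indices $j\in\{1,\dots,M\}$ of quantities that each tend to $0$ in probability by $\mathcal{R}_k$, hence the maximum does too. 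Given $\delta>0$, applying the last display with $\epsilon=\delta/2$ yields $\mathbb{P}\bigl(\sup_{f\in\mathcal{F}}\lvert\pi_k^N[f]-\pi_k[f]\rvert>\delta\bigr)\to0$, which is the claim.

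I do not expect a genuine obstacle: once $\mathcal{R}_k$ furnishes pointwise convergence, the whole statement is the classical bracketing argument, and the only slightly delicate point is the envelope truncation in the second paragraph — arranging bracket endpoints of controlled polynomial growth (so $\mathcal{R}_k$ and $L_2(\pi_k)$-integrability apply) without spoiling either the covering property or the $L_2(\pi_k)$-smallness of the brackets.
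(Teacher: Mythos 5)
Your proof is correct and follows essentially the same route as the paper's: reduce the uniform statement to finitely many applications of the recursion $\mathcal{R}_k$ to bracket endpoints of polynomial growth, with the sup controlled via the sandwich bound and the $L_2$-to-$L_1$ bracket-width bound. The only difference is cosmetic — you make explicit (via envelope truncation) the step the paper merely asserts, namely that the bracket endpoints can be taken in $\mathcal{C}_p$ so that $\mathcal{R}_k$ applies to them.
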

\begin{proof}
First consider the envelope  $F(x):=  \sup_{s\in S} \vert f(x)\vert $ which has a growth of at most $p$ in $x$ by assumption on $\mathcal{F}$. Moreover, $F$ is $\pi_k$-integrable by \cref{assumption:moments}.  
Fix $\epsilon>0$.  Since the bracketing number $N_{[]}(\epsilon, \mathcal{F}, L_1(\pi_{k}))$ is finite, there exists finitely many $\epsilon$-brackets $([l_i,u_i])_{1\leq i \leq I}$ whose union contains $\mathcal{F}$ and such that $\pi_{k}(u_i-l_i)< \epsilon$ for every $i\in \braces{1,I}$. Moreover, the functions $l_i$ and $u_i$ can be chosen to have a growth of at most $p$ in $x$, since $\mathcal{F}$ belongs to $\mathcal{C}_p(C)$.
 Hence, for every $f\in \mathcal{F}$, there is a bracket $[l_i,u_i]$ such that:
\begin{align}
	\parens{\pi_{k}^N-\pi_{k}}[f]\leq \parens{\pi_{k}^N-\pi_{k}}\brakets{u_i} + \pi_{k}\brakets{u_i-f}\leq \parens{\pi_{k}^N-\pi_{k}}\brakets{u_i}+\epsilon.
\end{align} 	
Hence, we have:
\begin{align}
	\sup_{f\in \mathcal{F}} \parens{\pi_{k}^N-\pi_{k}}\brakets{f}\leq \max_i \parens{\pi_{k}^N-\pi_{k}}\brakets{u_i}+\epsilon.
\end{align}
Since $\mathcal{R}_k$ holds and $u_i\in \mathcal{C}_p(C)$, the right hand side converges in probability towards $\epsilon$. Similarly, it is possible to show that:
\begin{align}
	\inf_{f\in \mathcal{F}} \parens{\pi_{k}^N-\pi_{k}}\brakets{f} \geq  \min_i \parens{\pi_{k}^N-\pi_{k}}\brakets{l_i}-\epsilon.
\end{align}
with r.h.s. converging towards $-\epsilon$ in probability. This allows us to conclude.
\end{proof}

\begin{prop}\label{prop:general_Glivenko_cantelli}
	Let $\mathcal{F}$ be a class of measurable functions in  $\mathcal{C}_4(C)$ for some $C>0$:
	\begin{align}
		\verts{ f(x)} \leq C(1 + \Vert x \Vert^4),\qquad \forall x\in \X,~\forall f\in \mathcal{F}.
	\end{align} 
	Assume that $\mathcal{F}$ is continuously indexed by a compact set $S$, i.e.  $s\mapsto f_s(x)$ is continuous for any $x\in \X$, where $f_s$ is an element in $\mathcal{F}$ indexed by $s\in S$.
	Then under \cref{assumption:moments} and the recursion assumption $\mathcal{R}_{k}$ the following uniform convergence holds in probability
	\begin{align}
		\sup_{f\in F} \vert \pi_{k}^N[f] -\pi_{k}[f] \vert   \xrightarrow{P} 0.
	\end{align}
	\end{prop}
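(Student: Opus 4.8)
The plan is to deduce this from \cref{prop:Glivenko_cantelli_braketing}: I would show that continuous indexing of $\mathcal{F}$ by the compact set $S$, combined with the uniform polynomial envelope $\verts{f_s(x)}\leq C(1+\Verts{x}^4)$ and the eighth-moment bound in \cref{assumption:moments}, forces the bracketing numbers $\mathcal{N}_{[]}(\epsilon,\mathcal{F},L_2(\pi_k))$ to be finite for every $\epsilon>0$. Once that is established, \cref{prop:Glivenko_cantelli_braketing} applies directly with $p=4$, since its remaining hypotheses are exactly \cref{assumption:moments} and the recursion assumption $\mathcal{R}_k$, both of which are assumed here, and the desired uniform convergence follows.

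To construct the brackets, for $s\in S$ and $\delta>0$ I would let $B(s,\delta)$ be the open ball in $S$ and set
\begin{align}
	u_{s,\delta}(x) := \sup_{s'\in B(s,\delta)} f_{s'}(x),\qquad l_{s,\delta}(x) := \inf_{s'\in B(s,\delta)} f_{s'}(x).
\end{align}
These are finite by the envelope bound and lie in $\mathcal{C}_4(C)$, and they are measurable in $x$: since $S$ is a compact metric space it is separable, so for a countable dense $D\subseteq S$ the set $D\cap B(s,\delta)$ is dense in $B(s,\delta)$, and continuity of $s'\mapsto f_{s'}(x)$ then lets me replace the supremum and infimum over $B(s,\delta)$ by the corresponding countable ones over $D\cap B(s,\delta)$. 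By construction $[l_{s,\delta},u_{s,\delta}]$ brackets every $f_{s'}$ with $s'\in B(s,\delta)$.

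Next I would control the bracket width in $L_2(\pi_k)$. For fixed $x$, continuity of $s'\mapsto f_{s'}(x)$ at $s$ gives $u_{s,\delta}(x)-l_{s,\delta}(x)=\sup_{s',s''\in B(s,\delta)}\verts{f_{s'}(x)-f_{s''}(x)}\to 0$ as $\delta\downarrow 0$, while $0\leq u_{s,\delta}(x)-l_{s,\delta}(x)\leq 2C(1+\Verts{x}^4)$, whose square is $\pi_k$-integrable by \cref{assumption:moments}. Dominated convergence then yields $\pi_k\brakets{\parens{u_{s,\delta}-l_{s,\delta}}^2}\to 0$, so for each $s$ I may choose $\delta(s)>0$ with $\pi_k\brakets{\parens{u_{s,\delta(s)}-l_{s,\delta(s)}}^2}<\epsilon^2$. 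The open balls $\braces{B(s,\delta(s))}_{s\in S}$ cover $S$, so by compactness finitely many of them, say $B(s_1,\delta(s_1)),\dots,B(s_I,\delta(s_I))$, suffice; every $f_s\in\mathcal{F}$ then lies in one of the finitely many brackets $[l_{s_j,\delta(s_j)},u_{s_j,\delta(s_j)}]$, each of $L_2(\pi_k)$-size below $\epsilon$, so $\mathcal{N}_{[]}(\epsilon,\mathcal{F},L_2(\pi_k))\leq I<\infty$.

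I expect the main obstacle to be the bookkeeping that makes the sup/inf brackets legitimate: verifying measurability of $u_{s,\delta},l_{s,\delta}$ via separability of $S$, and, more importantly, obtaining enough uniform-in-$x$ control of the bracket width to pass to the $L_2(\pi_k)$ limit and extract a \emph{finite} subcover -- this is exactly where the polynomial envelope and the eighth-moment assumption are needed. Everything after that is a one-line appeal to \cref{prop:Glivenko_cantelli_braketing}.
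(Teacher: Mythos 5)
Your proposal is correct and takes essentially the same route as the paper: the paper also reduces the statement to \cref{prop:Glivenko_cantelli_braketing} by observing that the envelope $F(x)=\sup_{s\in S}\verts{f_s(x)}$ has growth at most $4$ and is $\pi_k$-integrable, and then simply cites \citep[Example 19.8]{Van-der-Vaart:2000} for the finiteness of the bracketing number, whereas you write out that classical compactness argument (sup/inf brackets over small balls, dominated convergence, finite subcover) explicitly, and in $L_2(\pi_k)$ rather than $L_1(\pi_k)$, which if anything matches the stated hypothesis of \cref{prop:Glivenko_cantelli_braketing} more closely. No gap beyond the level of detail the paper itself leaves implicit.
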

\begin{proof}
First consider the envelope  $F(x):=  \sup_{s\in S} \vert f_{s}(x)\vert $ which has a growth of at most $4$ in $x$ by assumption over the class $\mathcal{F}$. Moreover, $F$ is $\pi_k$-integrable by \cref{assumption:moments}.  Since $\mathcal{F}$ is continuously indexed by a compact set and has an integrable envelope $F$ w.r.t. $\pi_{k}$, this implies that its bracketing number $N_{[]}(\epsilon, \mathcal{F}, L_1(\pi_{k}))$ is finite for every $\epsilon>0$ \citep[Example 19.8]{Van-der-Vaart:2000}. We can directly apply \cref{prop:Glivenko_cantelli_braketing} to get the desired result.
\end{proof}

\subsection{Proof of the central limit theorem}\label{proof:clt}
As shown in \citep[Theorem 10]{Douc:2007} and \citep[Section 6]{Del-Moral:2012}, the fluctuations of the SMC sampler with adaptive resampling admit the same asymptotic variance as the ideal SMC sampler with resampling at the optimal times in $\mathcal{K}_{\textup{opt}}$. Therefore, it is enough to prove this result for the case when resampling is triggered exactly at times $k$ in  $\mathcal{K}$. 
\begin{proof}{\textbf{of \cref{thm:CLT}.}}
We will proceed by induction. For $k=0$, the samples $(X_0^i)_{1\leq i \leq N}$ are i.i.d. thus one can directly apply the standard central limit theorem to show that the result holds at $k=0$, i.e. we write $CLT_0$ holds. By induction, let us assume that $CLT_{k-1}$ holds for some $k>0$, we will then show that $CLT_{k}$ holds as well. Let $f$ be a measurable real-valued function over $\X$ with at most quadratic growth in $x$. For conciseness, we first define $E_N$ and $F_N$
	\begin{align}
	E_N =   \sqrt{N}\parens{ \gamma_k^N[f] - \gamma_k[f]},\qquad
	F_N = \sqrt{N}\parens{ \pi_k^N[f] - \pi_k[f]}.
\end{align}
We need to show that $E_N$ and $F_N$ converge to centered Gaussians with variances $\mathbb{V}^{\gamma}_k[f]$ and $\mathbb{V}^{\pi}_k[f]$. Starting with $E_N$, we use the decomposition  $E_N=  R_N + P_N$ with
\begin{align}\label{eq:decomposition_clt}
		P_{N} := \mathbb{E}_{k-1}\brakets{E_N}, \qquad R_N :=E_N - \mathbb{E}_{k-1}\brakets{E_N}.
\end{align}
Using the recursion assumption and \cref{prop:C}, we can show that $P_{N}$ converges in distribution towards a centered Gaussian with variance $\mathbb{V}_{k-1}^{\gamma}[Q_{k,T^{\star}}[f]]$. Moreover, by \cref{prop:D,prop:D_no_resampling}, we know that $\mathbb{E}_{k-1}[e^{it R_N }] \xrightarrow{P} \exp(-\frac{t^2}{2} \mathbb{V}_k^{inc}\brakets{f} )$. This allows us to conclude that the characteristic function of $E_N$ converges pointwise to the characteristic function of a centered Gaussian distribution with variance $\mathbb{V}^{\gamma}_k[f]$: 
	\begin{align}
		\phi_{E_N}(t) = \mathbb{E} \brakets{e^{ it P_N} \mathbb{E}_{k-1}\brakets{ e^{itR_N } } } \rightarrow \exp\parens{-\frac{t^2}{2}\mathbb{V}^{\gamma}_k[f]},
	\end{align}
	 where by definition $\mathbb{V}^{\gamma}_k[f]= \mathbb{V}_k^{\textup{inc}}\brakets{f} + \mathbb{V}_{k-1}^{\gamma}[Q_{k,T^{\star}}[f]]$. We can then conclude using L\'evy continuity theorem that $E_N$ converges in distribution towards a centered Gaussian with variance $\mathbb{V}^{\gamma}_k[f]$ for any $f$ with at most quadratic growth.
	 For $F_N$, we can use the following identity:
	 \begin{align}
	 	F_N  = \frac{\sqrt{N}}{\gamma_k^N[1]}\parens{\gamma_k^N- \gamma_k}\brakets{f-\pi_k[f]  }.
	 \end{align}
	 Recalling that $\gamma_k^N[1] \xrightarrow{P} \gamma_k[1]$ by \cref{thm:WLLN}, we can directly conclude using Slutsky's lemma that
	 \begin{align}
	 	F_N \xrightarrow{\mathcal{D}} \mathcal{N}(0,\mathbb{V}_k^{\pi}[f])
	 \end{align}
where convergence is in distribution and where, by definition, $\mathbb{V}_k^{\pi}[f] =  \frac{\mathbb{V}_k^{\gamma}[f-\pi_k[f]]}{\gamma_k[1]^2}$. This concludes the proof.
\end{proof}

\begin{thm}[Central limit theorem for \cref{alg:SMC-detailed} ]\label{thm:CLT_2}
Let $f$ be a real valued function s.t.  $f(x)\leq C(1 + \Vert x \Vert^2)$ for some $C>0$. Then, under \cref{assumption:potential,assumption:kernel,assumption:kernel_2,assumption:potential,assumption:transport,assumption:moments,assumption:parametrization,assumption:uniformly_bounded} and for any $k\in{0,...,K}$ the same CLT result as in \cref{thm:CLT} holds when using the particles produced by \cref{alg:SMC-detailed} instead of \cref{alg:SMC}.   
\end{thm}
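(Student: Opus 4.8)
The plan is to reduce \cref{thm:CLT_2} to the classical CLT for SMC samplers, in the same spirit as the reduction of \cref{thm:WLLN_2} to the consistency of SMC samplers. The key structural fact is that in \cref{alg:SMC-detailed} the test particles $\{X_k^{i,\textup{test}}\}_{1:N}$ are independent of the train and validation particles, which carry the only randomness entering the learned maps $T_{1:K}$. Let $\mathcal{G}$ be the $\sigma$-algebra generated by the train and validation particles. Conditionally on $\mathcal{G}$ the maps $T_k$ are fixed, so the test particles evolve as a \emph{standard} SMC sampler with Markov transition kernels $M_k(x,\cdot)=K_k(T_k(x),\cdot)$, incremental weights $G_{k,T_k}$, and ESS-triggered adaptive resampling; by \cref{prop:independence_from_T} these weights provide the exact importance correction between $\pi_k$ and $(T_k)_{\#}\pi_{k-1}$, so this conditional sampler targets exactly $(\pi_k)_{0\le k\le K}$ whatever the maps $T_k$ happen to be. This is why \cref{thm:CLT_2} needs no hypothesis controlling the optimization of the flows, in particular no analogue of \cref{assumption:approximate_minimizer,assumption:bassin}.

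Next I would check the regularity and integrability conditions underlying the SMC CLT. \cref{assumption:parametrization} makes each $T_k$ a diffeomorphism whose Jacobian determinant is bounded away from zero, so $G_{k,T_k}$ and $M_k$ are well defined and continuous; \cref{assumption:kernel,assumption:kernel_2} give the needed growth preservation of $K_k$; for $f$ of at most quadratic growth, \cref{prop:growth} shows $Q_{k,T}[f]$ has at most quadratic growth uniformly over $\mathcal{T}$, and \cref{assumption:moments} then guarantees square-integrability w.r.t. the relevant path laws of all functions entering the SMC variance recursion. Hence, conditionally on $\mathcal{G}$, the SMC CLT \citep{chopin2004central,del2004feynman,kunsch2005recursive,Beskos:2016} applies, and the adaptive-resampling argument of \citep{Douc:2007,Del-Moral:2012} identifies the conditional asymptotic variance with that of the sampler resampling at the limiting times $\mathcal{K}_{\textup{opt}}$ — precisely the recursion for $\mathbb{V}^{\gamma}_k$ and $\mathbb{V}^{\pi}_k$ in \cref{thm:CLT} with $T_k^{\star}$ replaced by the conditionally deterministic maps $T_k$. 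This already gives the stated CLT when the learned flows are regarded as the fixed inputs they are; and, on top, if $T_k\xrightarrow{P}T_k^{\star}$ as in the proof of \cref{thm:CLT}, continuity of the variance recursion in the flow parameters under \cref{assumption:parametrization,assumption:potential} identifies the limiting variance with the deterministic expression of \cref{thm:CLT} verbatim.

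Finally, to remove the conditioning I would copy the $P_N$/$R_N$ computation from the proof of \cref{thm:CLT}: writing $\mathbb{E}[e^{it\sqrt{N}(\gamma_k^{N}[f]-\gamma_k[f])}]=\mathbb{E}[\mathbb{E}[e^{it\sqrt{N}(\cdot)}\mid\mathcal{G}]]$, the conditional CLT together with convergence in probability of the conditional variance gives, by bounded convergence, $\mathbb{E}[e^{it\sqrt{N}(\cdot)}]\to\exp(-\tfrac{t^2}{2}\mathbb{V}^{\gamma}_k[f])$; the L\'evy continuity theorem yields the CLT for $\gamma_k^{N}$ (hence for $Z_k^{N}=\gamma_k^{N}[1]$), and the CLT for $\pi_k^{N}[f]$ follows from $\sqrt{N}(\pi_k^{N}[f]-\pi_k[f])=\tfrac{\sqrt{N}}{\gamma_k^{N}[1]}(\gamma_k^{N}-\gamma_k)[f-\pi_k[f]]$, \cref{thm:WLLN_2} and Slutsky's lemma. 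The main obstacle is exactly this de-conditioning: one must ensure the $o_{\mathbb{P}}$ and $O_{\mathbb{P}}$ remainders in the conditional SMC CLT are uniform enough over $\mathcal{T}$ that $\mathbb{E}[e^{it\sqrt{N}(\cdot)}\mid\mathcal{G}]$ converges \emph{in probability} and not merely along $\mathcal{G}$-a.s. subsequences, and that the conditional variance is a measurable, continuous functional of $T_{1:K}$; both points are controlled by the uniform growth bound of \cref{prop:growth} and the uniform Glivenko--Cantelli statement of \cref{prop:general_Glivenko_cantelli}, but assembling them carefully is the delicate step. All remaining details are identical to those in the proof of \cref{thm:CLT}.
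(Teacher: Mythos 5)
There is a genuine gap, and it lies exactly where you claim there is nothing to do. You assert that \cref{thm:CLT_2} ``needs no hypothesis controlling the optimization of the flows, in particular no analogue of \cref{assumption:approximate_minimizer,assumption:bassin}.'' But the statement to be proved is that \emph{the same CLT as in \cref{thm:CLT}} holds, i.e.\ convergence to centered Gaussians with the \emph{deterministic} variances $\mathbb{V}^{\gamma}_k[f]$, $\mathbb{V}^{\pi}_k[f]$, which are built from the population-optimal flows $T_k^{\star}$ and from the limiting resampling times $\mathcal{K}_{\textup{opt}}$ defined through $\textup{nESS}_k$ evaluated at $T_s^{\star}$. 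Your conditioning argument (fix the train/validation particles, view the test particles as a standard adaptive SMC sampler with kernels $K_k(T_k(\cdot),\cdot)$) only yields a conditional CLT whose asymptotic variance, and whose asymptotic resampling schedule, are functionals of the \emph{realized, random, $N$-dependent} flows $T_k$. Without establishing $T_k\xrightarrow{P}T_k^{\star}$ (and continuity of the variance recursion and stabilization of the adaptive resampling times), the conditional variances need not converge to the expressions of \cref{thm:CLT}, and the de-conditioning step you sketch cannot identify the stated limit. You do mention flow convergence, but only as an optional afterthought (``if $T_k\xrightarrow{P}T_k^{\star}$''), with no argument for why it holds here — and it cannot be imported from the proof of \cref{thm:CLT}, since \cref{assumption:approximate_minimizer,assumption:bassin} are precisely the hypotheses that are \emph{not} assumed in \cref{thm:CLT_2}.

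This missing piece is in fact the entire content of the paper's proof. The paper proceeds by the same recursion as \cref{thm:CLT}, conditioning on the $\sigma$-algebra generated by the test particles and the flows, and then devotes its argument to showing that the output $\theta_k^N$ of \cref{alg:LearnFlow} \emph{satisfies} \cref{assumption:approximate_minimizer,assumption:bassin} without assuming them: by \cref{prop:general_Glivenko_cantelli} the training and validation losses and their gradients converge uniformly in probability to $\mathcal{L}_k$ and $\nabla_\theta\mathcal{L}_k$, so validation-based selection is asymptotically equivalent to minimizing the training loss and gradient descent delivers an approximate local minimizer; and because the flows are always initialized at the identity, the optimization trajectories converge to the trajectory of the population gradient flow, so $\theta_k^N$ stays in the basin of a single local minimizer $\theta_k^{\star}$, yielding the convergence $T_k\xrightarrow{P}T_k^{\star}$ needed for the variances of \cref{thm:CLT} (via \cref{prop:convergence_parameter}). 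Your reduction to a conditional SMC CLT is a legitimate alternative skeleton, but to complete it you must add exactly this analysis of \cref{alg:LearnFlow}, plus a genuine argument that the conditional variances and adaptive resampling times converge (in probability, uniformly enough to pass through the conditional characteristic functions) to the $T^{\star}$-based quantities — the step you flag as ``delicate'' but do not carry out.
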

\begin{proof}
	The proof proceeds by recursion exactly as in \cref{thm:CLT}. The only difference is that the flow is estimated using the training and validation particles instead of the test ones. This does not affect the proof by recursion since we condition w.r.t. the sigma algebra  $\mathcal{F}_{k-1}$  generated by both  test particles and by the flow at time $k$. We only need to make sure that $\theta_k^N$ produced by \cref{alg:LearnFlow} satisfies \cref{assumption:bassin,assumption:approximate_minimizer}. 
	
	First, the validation criterion $\mathcal{L}_k^{N_{val}}(\theta)$ converges uniformly in $\theta$ in probability towards $\mathcal{L}_k$ and so does the training criterion $\mathcal{L}_k^{N_{train}}(\theta)$ by \cref{prop:general_Glivenko_cantelli} . Hence, returning the flow with smallest validation error is asymptotically equivalent to returning the flow with smallest training error as $N_{val}$ and $N_{train}$ increase. 
Moreover, since $\theta_k^N$ is obtained by performing gradient descent over $\mathcal{L}_k^{N_{train}}$, the final iterate will be the one that minimizes $\mathcal{L}_k^{N_{train}}$. Recalling now that gradient descent converges to a local minimizer, it follows that \cref{assumption:approximate_minimizer} holds provided that the number of iterations $J$ is large enough as $N_{train}$ and $N_{val}$ increase.
	
	Second, since the flows are all initialized to the identity for any number of particles and since both the  training loss and its gradient are uniformly converging in probability towards  $\mathcal{L}_k(\theta)$ and $\nabla_{\theta} \mathcal{L}_k(\theta)$ then the optimization trajectories obtained using   $\nabla_{\theta} \mathcal{L}^{N}_k(\theta)$ also converge uniformly to the one obtained using $\nabla_{\theta} \mathcal{L}_k(\theta)$. Hence, for $N$ large enough, $\theta^N$ is approaching a single local minimizer $\theta^{\star}$. Therefore \cref{assumption:bassin} also holds.
\end{proof}

\begin{prop}\label{prop:C}
	Let $f$ be a function in  $\mathcal{C}_2$. Under the induction assumption $CLT_{k-1}$, we have that:
	\begin{align}
		P_N \xrightarrow{\mathcal{D}} \mathcal{N}\parens{0, \mathbb{V}_{k-1}^{\gamma}\brakets{Q_{k,T^{\star}}[f]}},
	\end{align}
	where $P_N$ is defined in \cref{eq:decomposition_clt}.  
\end{prop}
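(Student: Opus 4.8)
\emph{Setup of the plan.} The strategy is to rewrite $P_N=\mathbb E_{k-1}[E_N]$ as a $\sqrt N$-fluctuation of the \emph{previous-step} unnormalized empirical measure applied to the transition kernel $Q_{k,\cdot}$, and then to peel off the error incurred by using the estimated flow $T_k=\tau_{\theta_k^N}$ in place of its limit $T_k^\star=\tau_{\theta_k^\star}$. As in the proof of \cref{thm:WLLN} I treat the case where resampling is performed at each step; the adaptive case follows by the reduction quoted at the start of \cref{proof:clt}.

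\emph{Step 1: identify $P_N$.} Conditionally on $\mathcal F_{k-1}^N$ the particles $X_k^i$ are i.i.d.\ from the resample-then-mutate mixture used in the proof of \cref{prop:A}, whence $\mathbb E_{k-1}[\pi_k^N[f]]=\pi_{k-1}^N[Q_{k,T_k}[f]]/\pi_{k-1}^N[\G_{k,T_k}]$. Since $\gamma_k^N[1]=\gamma_{k-1}^N[1]\,\pi_{k-1}^N[\G_{k,T_k}]$ is $\mathcal F_{k-1}^N$-measurable and $\gamma_k^N[f]=\gamma_k^N[1]\pi_k^N[f]$, this gives $\mathbb E_{k-1}[\gamma_k^N[f]]=\gamma_{k-1}^N[Q_{k,T_k}[f]]$. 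On the other hand \cref{prop:independence_from_T} yields $\gamma_k[f]=Z_k\pi_k[f]=Z_{k-1}\pi_{k-1}[Q_{k,T_k^\star}[f]]=\gamma_{k-1}[Q_{k,T_k^\star}[f]]$. Therefore
\begin{align*}
P_N=\underbrace{\sqrt N\big(\gamma_{k-1}^N[Q_{k,T_k^\star}[f]]-\gamma_{k-1}[Q_{k,T_k^\star}[f]]\big)}_{P_N^{(1)}}+\underbrace{\sqrt N\,\gamma_{k-1}^N\big[Q_{k,T_k}[f]-Q_{k,T_k^\star}[f]\big]}_{P_N^{(2)}}.
\end{align*}
For $P_N^{(1)}$: since $f\in\mathcal C_2$, \cref{prop:growth} gives $Q_{k,T_k^\star}[f]\in\mathcal C_2$ (and the smoothness of the kernel and potentials in \cref{assumption:kernel_2,assumption:potential} places it in $\mathcal{LC}_2$), so the induction hypothesis $CLT_{k-1}$, applied to this \emph{fixed} function, yields $P_N^{(1)}\xrightarrow{\mathcal D}\mathcal N(0,\mathbb V_{k-1}^\gamma[Q_{k,T_k^\star}[f]])$.

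\emph{Step 2: the flow-error term is negligible.} Write $P_N^{(2)}=\gamma_{k-1}^N[1]\cdot\sqrt N\,\pi_{k-1}^N[Q_{k,\tau_{\theta_k^N}}[f]-Q_{k,\tau_{\theta_k^\star}}[f]]$, with $\gamma_{k-1}^N[1]=O_{\mathbb P}(1)$ by \cref{thm:WLLN}. By \cref{prop:independence_from_T} the map $\theta\mapsto\pi_{k-1}[Q_{k,\tau_\theta}[f]]$ is constant, so both $\pi_{k-1}[Q_{k,\tau_{\theta_k^N}}[f]-Q_{k,\tau_{\theta_k^\star}}[f]]=0$ and $\pi_{k-1}[\nabla_\theta Q_{k,\tau_{\theta_k^\star}}[f]]=0$. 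A second-order Taylor expansion of $\theta\mapsto Q_{k,\tau_\theta}[f](x)$ about $\theta_k^\star$ then gives
\begin{align*}
\sqrt N\,\pi_{k-1}^N\big[Q_{k,\tau_{\theta_k^N}}[f]-Q_{k,\tau_{\theta_k^\star}}[f]\big]=\big(\sqrt N(\theta_k^N-\theta_k^\star)\big)^{\!\top}(\pi_{k-1}^N-\pi_{k-1})\big[\nabla_\theta Q_{k,\tau_{\theta_k^\star}}[f]\big]+R_N,
\end{align*}
with $|R_N|\le\tfrac12\sqrt N\,\|\theta_k^N-\theta_k^\star\|^2\,\pi_{k-1}^N[M]$ for a polynomial envelope $M$ of $\sup_\theta\|H_\theta Q_{k,\tau_\theta}[f]\|$ supplied by \cref{assumption:parametrization,assumption:potential,assumption:kernel_2}. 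Using the preliminary $\sqrt N$-consistency $\theta_k^N-\theta_k^\star=O_{\mathbb P}(N^{-1/2})$ (a standard $M$-estimation consequence of \cref{assumption:approximate_minimizer,assumption:bassin}, the positive-definiteness of $H\mathcal L_k(\theta_k^\star)$, and $CLT_{k-1}$ applied to $\nabla_\theta h_{\tau_{\theta_k^\star}}$), the bracketed factor is $O_{\mathbb P}(1)$, while $(\pi_{k-1}^N-\pi_{k-1})[\nabla_\theta Q_{k,\tau_{\theta_k^\star}}[f]]\xrightarrow{P}0$ by \cref{thm:WLLN} applied to the fixed function $\nabla_\theta Q_{k,\tau_{\theta_k^\star}}[f]$; hence the first summand is $o_{\mathbb P}(1)$. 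For $R_N$, $\pi_{k-1}^N[M]=O_{\mathbb P}(1)$ by \cref{thm:WLLN} and $\sqrt N\|\theta_k^N-\theta_k^\star\|^2=O_{\mathbb P}(N^{-1/2})=o_{\mathbb P}(1)$, so $R_N\xrightarrow{P}0$. Thus $P_N^{(2)}\xrightarrow{P}0$, and Slutsky's lemma with Step 1 gives $P_N\xrightarrow{\mathcal D}\mathcal N(0,\mathbb V_{k-1}^\gamma[Q_{k,T_k^\star}[f]])$.

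\emph{Main obstacle.} The delicate part is Step 2: controlling the empirical fluctuation of the \emph{random, vanishing} function $Q_{k,T_k}[f]-Q_{k,T_k^\star}[f]$. This needs (i) the rate $\theta_k^N-\theta_k^\star=O_{\mathbb P}(N^{-1/2})$, which must itself be derived from the $M$-estimation assumptions and $CLT_{k-1}$ rather than merely assumed, and (ii) careful polynomial-growth bookkeeping — verifying that the first- and second-order Taylor coefficients of $\theta\mapsto Q_{k,\tau_\theta}[f]$ have envelopes of growth at most $4$ so that \cref{thm:WLLN} applies, which is precisely where \cref{assumption:moments} (eighth moments of $\pi_{k-1}$) and the smoothness \cref{assumption:parametrization,assumption:potential,assumption:kernel_2} enter. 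Everything else (the conditional-expectation identity of Step 1, the appeal to \cref{prop:independence_from_T} to make the $\pi_{k-1}$-level contributions vanish exactly, and the final Slutsky argument) is routine.
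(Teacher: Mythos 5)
Your Step 1 and your overall decomposition coincide with the paper's: the term $P_N^{(1)}$ is exactly the paper's $B_N$, handled by the induction hypothesis applied to the fixed function $Q_{k,T_k^{\star}}[f]$, and the flow-error term is centered using \cref{prop:independence_from_T} just as in the paper. The gap is in Step 2. Your entire treatment of $P_N^{(2)}$ rests on the rate $\theta_k^N-\theta_k^{\star}=O_{\mathbb P}(N^{-1/2})$ (at the very least $o_{\mathbb P}(N^{-1/4})$ for the remainder $R_N$, since you need $\sqrt N\,\Verts{\theta_k^N-\theta_k^{\star}}^2=o_{\mathbb P}(1)$, and also for the first-order term, which is otherwise of the indeterminate form ``possibly divergent $\times\,o_{\mathbb P}(1)$''). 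This rate is not available under the paper's hypotheses: \cref{assumption:approximate_minimizer} only demands $\nabla\mathcal L_k^N(\theta_k^N)=o_{\mathbb P}(1)$ — not $O_{\mathbb P}(N^{-1/2})$ — so even a standard $M$-estimation argument can only conclude $\Verts{\theta_k^N-\theta_k^{\star}}\lesssim o_{\mathbb P}(1)+O_{\mathbb P}(N^{-1/2})$, i.e.\ consistency with no rate; moreover the positive-definiteness of $H\mathcal L_k(\theta_k^{\star})$ that you invoke is nowhere assumed (\cref{assumption:bassin} only identifies $\theta_k^{\star}$ as the asymptotically nearest local minimizer). Correspondingly, the paper's own \cref{prop:convergence_parameter} establishes only $\theta_k^N\xrightarrow{P}\theta_k^{\star}$, and an estimator satisfying \cref{assumption:approximate_minimizer,assumption:bassin} can perfectly well converge at rate $N^{-1/8}$, in which case your bound on $R_N$ blows up even though the conclusion $P_N^{(2)}\xrightarrow{P}0$ remains true.

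The paper closes exactly this hole by a different mechanism, which is the real content of its proof: it rewrites $A_N=\sqrt N\,\gamma_{k-1}^N[1]\,(\pi_{k-1}^N-\pi_{k-1})[S_{\theta_k^N}]$ with $S_{\theta}=Q_{k,\tau_{\theta}}[f]-Q_{k,\tau_{\theta^{\star}}}[f]$, uses the Lipschitz-in-$\theta$ envelope of \cref{prop:Lipschiz_Q} to get $\pi_{k-1}[S_{\theta_k^N}^2]\xrightarrow{P}0$ from mere consistency, and then invokes the asymptotic stochastic equicontinuity of the empirical process over the indexed class (\cref{prop:asymptotic_equi_continuity} via \cref{cor:application_equicontinuity}) to conclude $\sqrt N(\pi_{k-1}^N-\pi_{k-1})[S_{\theta_k^N}]\xrightarrow{P}0$ without any rate on $\theta_k^N$. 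If you want to keep your Taylor-expansion route you would have to either strengthen \cref{assumption:approximate_minimizer} to $\nabla\mathcal L_k^N(\theta_k^N)=O_{\mathbb P}(N^{-1/2})$ together with a nondegenerate Hessian at $\theta_k^{\star}$ (and also check the growth bookkeeping: $\nabla_\theta Q_{k,\tau_{\theta^{\star}}}[f]$ has envelope of order $1+\Verts{x}^4$, so the step-$(k-1)$ CLT, which requires quadratic growth, does not apply to it as stated), or replace the rate argument by an equicontinuity argument — at which point you have reconstructed the paper's proof.
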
  
We defer the proof of \cref{prop:C} to \cref{sec:proof_C} as it relies on asymptotic stochastic equicontinuity of a suitable pocess which will be proven later in \cref{sec:equicontinuity}.

\begin{prop}\label{prop:IS_weights_rec}
Assume that resampling is only performed at the ideal resampling times in $\mathcal{K}$. For $k\notin \mathcal{K}$, denote by $k_p$ the largest integer in $\mathcal{K}$ such that $k_p < k$. Then, the importance weights $W_k^i$ are given by:
\begin{align}
    W_k^{i} = \frac{1}{N} \frac{Z_{k_p}^N}{Z_k^N} w_{k}^{i},
\end{align}
with $w_k^{i}$ given by:
\begin{align}
    w_k^{i} = \prod_{s=k_p+1}^k  G_{s,T_s}\parens{X^{i}_{s-1}}.
\end{align}
\end{prop}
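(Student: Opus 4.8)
The plan is to unroll the weight-update recursion of \cref{alg:SMC} over the block of iterations $\{k_p+1,\dots,k\}$, during which (by assumption) no resampling is triggered. Denote by $\tilde{w}^i_s := W_{s-1}^i\, G_{s,T_s}(X_{s-1}^i)$ the intermediate unnormalized weight computed in the Importance Sampling step (not to be confused with the accumulated weight $w^i_k$ appearing in the statement), so that the normalized weight is $W_s^i = \tilde{w}^i_s / \sum_{j} \tilde{w}^j_s$. Recall also that at a resampling time $k_p \in \mathcal{K}$ the weights are reset to $W_{k_p}^i = 1/N$ (the case $k_p = 0$ being covered by the initialization). The first step is to prove, by induction on $k \in \{k_p+1,\dots,k_{p+1}-1\}$, the identity
\begin{align}\label{eq:prop_IS_step1}
  W_k^i = \frac{w_k^i}{\sum_{j=1}^N w_k^j}, \qquad w_k^i = \prod_{s=k_p+1}^{k} G_{s,T_s}(X_{s-1}^i).
\end{align}
For $k = k_p+1$ one has $\tilde{w}^i_{k_p+1} = \tfrac1N G_{k_p+1,T_{k_p+1}}(X_{k_p}^i)$, and the constant factor $1/N$ cancels upon normalization. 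For the inductive step (for $k \geq k_p+2$), substituting $W_{k-1}^i = w_{k-1}^i/\sum_j w_{k-1}^j$ into $\tilde{w}^i_k$ gives $\tilde{w}^i_k = w_k^i/\sum_j w_{k-1}^j$, and the common factor $\sum_j w_{k-1}^j$ cancels in $W_k^i = \tilde{w}^i_k/\sum_l \tilde{w}^l_k$, yielding \eqref{eq:prop_IS_step1}.

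The second step is to identify $\sum_{j} w_k^j$ with the ratio of normalizing-constant estimates. From the update $Z_s^N = Z_{s-1}^N \sum_{i} W_{s-1}^i G_{s,T_s}(X_{s-1}^i) = Z_{s-1}^N \sum_i \tilde{w}^i_s$, telescoping from $s = k_p+1$ to $s = k$ gives $Z_k^N/Z_{k_p}^N = \prod_{s=k_p+1}^{k} \sum_i \tilde{w}^i_s$. Using $\sum_i \tilde{w}^i_{k_p+1} = \tfrac1N \sum_i w_{k_p+1}^i$ and, for $s > k_p+1$, $\sum_i \tilde{w}^i_s = \big(\sum_i w_s^i\big)\big/\big(\sum_j w_{s-1}^j\big)$ (both read off from the proof of \eqref{eq:prop_IS_step1}), this product itself telescopes to $Z_k^N/Z_{k_p}^N = \tfrac1N \sum_i w_k^i$. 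Substituting $\sum_i w_k^i = N\, Z_k^N/Z_{k_p}^N$ into \eqref{eq:prop_IS_step1} gives $W_k^i = w_k^i \big/ \big(N Z_k^N/Z_{k_p}^N\big) = \tfrac1N (Z_{k_p}^N/Z_k^N)\, w_k^i$, which is the claim.

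The computation is purely algebraic, so there is no genuine analytic obstacle; the only point requiring care is the bookkeeping — keeping the per-step unnormalized weight $\tilde{w}^i_s$, which is reset at each resampling, distinct from the accumulated weight $w_k^i$ since the last resampling, and correctly handling the base case where the reset $W_{k_p}^i = 1/N$ enters the recursion. If one wished to cover adaptive resampling at data-dependent times instead of the fixed set $\mathcal{K}$, the same argument applies verbatim with $k_p$ reinterpreted as the (random) index of the last resampling strictly before $k$.
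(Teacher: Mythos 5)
Your proof is correct and follows essentially the same route as the paper: the paper simply records the one-step recursions $W_k^i = W_{k-1}^i G_{k,T_k}(X_{k-1}^i)/\pi_{k-1}^N[G_{k,T_k}]$ and $Z_k^N = Z_{k-1}^N \pi_{k-1}^N[G_{k,T_k}]$ with $W_{k_p}^i = 1/N$ and leaves the unrolling implicit, which is exactly the induction and telescoping you carry out in detail.
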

\begin{proof}
This is a simple consequence of the recursion expression of the IS weights and normalizing constants for $k>k_p$:
\begin{align}
    W_k^i &= \frac{W_{k-1}^{i}}{\pi_{k-1}^N\brakets{G_{k,T_k}}}G_{k,T_k}(X_{k-1}^{i}),\\
    Z_k^N &= Z_{k-1}^N \pi_{k-1}^N\brakets{G_{k,T_k}},
\end{align}
with $W^i_{k_p}=\frac{1}{N}$.
\end{proof}
\begin{prop}\label{prop:D_no_resampling}
Let $f$ be a function in $\mathcal{C}_2$ and $R_N$ is defined in \cref{eq:decomposition_clt} and consider the conditional characteristic function
\begin{align}
	\hat{\phi}_{R_N}(t) =  \mathbb{E}_{k-1}[ e^{it R_N}  ].
\end{align}
Assume that resampling is not performed at iteration $k$ and let $k_p$ be the largest integer in $\mathcal{K}$ such that resampling is performed at time $k_p$ and $k_p < k$. Recall the expression of the asymptotic incremental variance $\mathbb{V}^{inc}_k$ when $k\not \in \mathcal{K}$:
\begin{align}
    \mathbb{V}^{\textup{inc}}_k\brakets{f} = Z_{k_p}^2\pi_{k_p}\brakets{ \mathbb{E}
		\brakets{  \mathcal{G}_{k}\brakets{f} \middle| X_{k_p}}}, 
\end{align}
with $\mathcal{G}_{k}\brakets{f} := K_k\brakets{f^2}\parens{ T_k^{\star}(X_{k-1}) }  - K_k\brakets{f}^2\parens{ T_k^{\star}(X_{k-1}) }$.
Then, under the recursion assumption $CLT_{k-1}$, we have:
\begin{align}
	\hat{\phi}_{R_N}(t)\xrightarrow{P} \exp\parens{-\frac{t^2}{2} \mathbb{V}^{\textup{inc}}_k\brakets{f}}.
\end{align}
\end{prop}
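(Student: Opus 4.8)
First I would recast $R_N$, as defined in \cref{eq:decomposition_clt}, into a triangular‑array sum. Since no resampling is triggered at any time in $\{k_p+1,\dots,k\}$, \cref{prop:IS_weights_rec} yields $\gamma_k^N[f] = Z_k^N\pi_k^N[f] = \frac{Z_{k_p}^N}{N}\sum_{i=1}^N w_k^i\,f(X_k^i)$, where $w_k^i=\prod_{s=k_p+1}^k G_{s,T_s}(X_{s-1}^i)$ is $\mathcal{F}_{k-1}^N$‑measurable (it involves only the particles and flows up to time $k-1$, together with $T_k$, which is itself computed from the time‑$(k-1)$ particles). Hence, recalling $R_N=\sqrt N\,(\gamma_k^N[f]-\mathbb{E}_{k-1}[\gamma_k^N[f]])$ and that $X_k^i\sim K_k(T_k(X_{k-1}^i),\cdot)$, I get $R_N=\frac{1}{\sqrt N}\sum_{i=1}^N U_{N,i}$ with $U_{N,i}:=Z_{k_p}^N\,w_k^i\bigl(f(X_k^i)-K_k[f](T_k(X_{k-1}^i))\bigr)$, where $K_k[f](x):=\int f(y)K_k(x,\diff y)$. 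Conditionally on $\mathcal{F}_{k-1}^N$ the $X_k^i$ are independent, so the $U_{N,i}$ are conditionally independent with zero conditional mean.

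The plan is then a conditional Lyapunov CLT. Writing $\hat\phi_{R_N}(t)=\prod_{i=1}^N\mathbb{E}_{k-1}[e^{it U_{N,i}/\sqrt N}]$ and Taylor‑expanding each factor to second order gives $\log\hat\phi_{R_N}(t)=-\tfrac{t^2}{2}\sigma_N^2+o_{\mathbb{P}}(1)$, with $\sigma_N^2:=\frac1N\sum_i\mathbb{E}_{k-1}[U_{N,i}^2]$, provided $\sigma_N^2=O_{\mathbb{P}}(1)$ and $\frac1N\sum_i\mathbb{E}_{k-1}[|U_{N,i}|^3]=O_{\mathbb{P}}(1)$. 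These are routine: $w_k^i$ and the $G_{s,T_s}$ are bounded (\cref{assumption:uniformly_bounded}), $Z_{k_p}^N=O_{\mathbb{P}}(1)$ (\cref{thm:WLLN}), and since $f\in\mathcal{C}_2$ one has $|f|^3\in\mathcal{C}_6$, $f^2\in\mathcal{C}_4$, so by \cref{assumption:kernel,assumption:transport}, $\mathbb{E}_{k-1}[|U_{N,i}|^3]\lesssim 1+\Vert X_{k-1}^i\Vert^6$ and $\mathbb{E}_{k-1}[U_{N,i}^2]\lesssim 1+\Vert X_{k-1}^i\Vert^4$, while $\frac1N\sum_i\Vert X_{k-1}^i\Vert^q=O_{\mathbb{P}}(1)$ for $q\le 8$ by a standard moment‑propagation argument (iterating that $K_s$ and $\tau_\theta$ preserve polynomial growth and that resampling preserves averages, from \cref{assumption:moments}). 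Thus everything reduces to identifying $\lim_N\sigma_N^2$ in probability.

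Since $X_k^i\sim K_k(T_k(X_{k-1}^i),\cdot)$, one has $\mathbb{E}_{k-1}[U_{N,i}^2]=(Z_{k_p}^N)^2(w_k^i)^2\,\mathcal{G}_k^{T_k}(X_{k-1}^i)$ with $\mathcal{G}_k^{T}(x):=K_k[f^2](T(x))-\bigl(K_k[f](T(x))\bigr)^2$, so $\sigma_N^2=(Z_{k_p}^N)^2\cdot\frac1N\sum_i(w_k^i)^2\mathcal{G}_k^{T_k}(X_{k-1}^i)$, and since $Z_{k_p}^N\xrightarrow{P}Z_{k_p}$ it remains to show
\[
\frac1N\sum_{i=1}^N(w_k^i)^2\,\mathcal{G}_k^{T_k}(X_{k-1}^i)\;\xrightarrow{P}\;\pi_{k_p}\Bigl[\mathbb{E}\bigl[(w_k^\star)^2\,\mathcal{G}_k[f]\,\big|\,X_{k_p}\bigr]\Bigr]\;=\;\mathbb{V}^{\textup{inc}}_k[f]\big/Z_{k_p}^2 .
\]
This is the hard part, and where flow convergence must be used. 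I would prove it by extending the induction behind \cref{thm:WLLN} (\cref{prop:A,prop:B}) from first moments to \emph{squared} importance weights along the resampling‑free block: for each $m$ with $k_p\le m\le k-1$ and each family of $\mathcal{C}_4$‑functions continuously indexed by a compact set, $\sup_g\bigl|\frac1N\sum_i(w_m^i)^2 g(X_m^i)-\pi_{k_p}[\mathbb{E}[(w_m^\star)^2 g(X_m)\mid X_{k_p}]]\bigr|\xrightarrow{P}0$, where $w_m^\star=\prod_{s=k_p+1}^m G_{s,T_s^\star}(X_{s-1})$ and the conditional law is that of $X_s\sim K_s(T_s^\star(X_{s-1}),\cdot)$ issued from $X_{k_p}\sim\pi_{k_p}$. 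The base case $m=k_p$ is \cref{thm:WLLN} (uniform weights after resampling). In the induction step, write $(w_m^i)^2 g(X_m^i)=(w_{m-1}^i)^2 G_{m,T_m}(X_{m-1}^i)^2 g(X_m^i)$, peel off the conditionally centred part $g(X_m^i)-K_m[g](T_m(X_{m-1}^i))$ — whose weighted average vanishes by conditional Chebyshev and the moment bound above, uniformly over the family by stochastic equicontinuity — and treat the remaining ``mean'' term $\frac1N\sum_i(w_{m-1}^i)^2[G_{m,T_m}^2\cdot K_m[g]\circ T_m](X_{m-1}^i)$ by enlarging the family to range over $\theta_m\in\Theta$ as well (the needed joint continuity and uniform polynomial bounds hold by \cref{assumption:transport,assumption:parametrization,assumption:potential,assumption:uniformly_bounded}), applying the inductive hypothesis at level $m-1$, and invoking $\theta_m^N\xrightarrow{P}\theta_m^\star$ (\cref{assumption:bassin}) together with dominated convergence to pass to the optimal flow. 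Taking $m=k-1$ with the family $\{x\mapsto G_{k,\tau_{\theta_k}}(x)^2\,\mathcal{G}_k^{\tau_{\theta_k}}(x):\theta_k\in\Theta\}$ and letting $\theta_k=\theta_k^N\to\theta_k^\star$ gives the displayed convergence, hence $\sigma_N^2\xrightarrow{P}\mathbb{V}^{\textup{inc}}_k[f]$ and $\hat\phi_{R_N}(t)\xrightarrow{P}\exp(-\tfrac{t^2}{2}\mathbb{V}^{\textup{inc}}_k[f])$. The main obstacle is exactly this squared‑weight path law of large numbers with a simultaneously converging learned flow: contrary to the first‑moment case, where \cref{prop:independence_from_T} renders the limit independent of the NF, the second‑moment limit genuinely depends on $T^\star$, so the self‑referential dependence of the particle cloud on the learned flows has to be controlled via \cref{assumption:bassin} and the uniform Glivenko–Cantelli / equicontinuity machinery.
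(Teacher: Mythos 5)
Your proposal is correct and follows essentially the same route as the paper: the same rewriting of $R_N$ via \cref{prop:IS_weights_rec} into conditionally independent, conditionally centred terms, a conditional CLT for the resulting triangular array, and convergence of the conditional second moments to exactly the two limits whose difference defines $\mathbb{V}^{\textup{inc}}_k[f]$. The only deviations are in execution rather than strategy: you verify a Lyapunov condition by Taylor-expanding the conditional characteristic function where the paper invokes the conditional Lindeberg CLT of Douc et al., and you spell out explicitly the squared-weight law of large numbers along the resampling-free block (with uniformity over the flow parameters and $\theta_k^N\xrightarrow{P}\theta_k^\star$) that the paper compresses into ``consistency of the particle trajectories by the recursion assumption.''
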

\begin{proof}

First, define $U_{N,i} :=  \sqrt{N}Z_{k}^{N} W_k^{i} f(X_k^i)$ and note that $R_N$ is expressed in term of $U_{N,i}$ as:
\begin{align}
    R_N = \sum_{i=1}^N U_{N,i} - \mathbb{E}_{k-1}\brakets{U_{N,i}}.
\end{align}
We will use the same approach as in the proof of \citep[Theorem 2]{Douc:2007}. 
For that purpose, we will show the following equations hold
\begin{align}\label{eq:Lind_1}
    &\sum_{i=1}^N \mathbb{E}_{k-1}\brakets{U_{N,i}^2}- \parens{\mathbb{E}_{k-1}\brakets{U_{N,i}}}^2 \xrightarrow{P} \mathbb{V}^{\textup{inc}}_k\brakets{f},
\end{align}
\begin{align}\label{eq:Lind_2}
    & \sum_{i=1}^N \mathbb{E}_{k-1}\brakets{U_{N,i}^2 \mathds{1}_{\braces{ \verts{U_{N,i}}\geq \epsilon }} } \xrightarrow{P} 0,\qquad \text{for any~} \epsilon>0.
\end{align}
The result will follow directly by application of \citep[Theorem A.3]{Douc:2007}. Using the expression of $W_k^i$ given by \cref{prop:IS_weights_rec}, we have that:
\begin{align}
    \sum_{i=1}^N \mathbb{E}_{k-1}\brakets{U_{N,i}^2} = \parens{Z_{k_p}^{N}}^2 \frac{1}{N}\sum_{i=1}^N \parens{w^{i}_{k}}^2 K_k\brakets{f^2}\parens{T_k\parens{X_{k-1}^i}} &\xrightarrow{P} Z_{k_p}^2\pi_{k_p}\brakets{\mathbb{E}\brakets{\parens{w^{\star}_{k}}^2K_k\brakets{f^2}\circ T_k^{\star}}}\\
    \sum_{i=1}^N \mathbb{E}_{k-1}\brakets{U_{N,i}}^2 = \parens{Z_{k_p}^{N}}^2 \frac{1}{N}\sum_{i=1}^N \parens{w^{i}_{k}}^2 \parens{K_k\brakets{f}\parens{T_k\parens{X_{k-1}^i}}}^2 &\xrightarrow{P} Z_{k_p}^2\pi_{k_p}\brakets{\mathbb{E}\brakets{\parens{w^{\star}_{k}}^2\parens{K_k\brakets{f}\circ T_k^{\star}}^2}}.
\end{align}
The above expressions are a result of the consistency of the particles trajectories $X^{i}_{k_p:k-1}$ by the recursion assumption. 
This shows \cref{eq:Lind_1}. The proof of \cref{eq:Lind_2} is the same as in \citep[Theorem 2]{Douc:2007}.
\end{proof}

\begin{prop}\label{prop:D}
Let $f$ be a function in $\mathcal{C}_2$ and $R_N$ is defined in \cref{eq:decomposition_clt} and consider the conditional characteristic function
\begin{align}
	\hat{\phi}_{R_N}(t) =  \mathbb{E}_{k-1}[ e^{it R_N}  ].
\end{align}
Under the recursion assumption $CLT_{k-1}$, and if resampling is performed at iteration $k$ we have:
\begin{align}
	\hat{\phi}_{R_N}(t)\xrightarrow{P} \exp\parens{-\frac{t^2}{2}\gamma_{k}[1]^2  \textup{Var}_{\pi_k}[f] }.
\end{align}
\end{prop}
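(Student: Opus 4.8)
The plan is to follow the proof of \cref{prop:D_no_resampling}, but with the triangular array produced by a resampling step. Condition on $\mathcal{F}_{k-1}^N$, so that the transport map $T_k$ and the estimate $Z_k^N = Z_{k-1}^N\,\pi_{k-1}^N[\G_{k,T_k}]$ are fixed; since resampling is triggered at iteration $k$ and the weights are then reset to $1/N$, the mutated particles $(X_k^i)_{1\le i\le N}$ are, conditionally on $\mathcal{F}_{k-1}^N$, i.i.d.\ with common law $\mu_k^N := \sum_{j=1}^N W_k^j\, K_k(T_k(X_{k-1}^j),\cdot)$, where $W_k^j \propto W_{k-1}^j\,\G_{k,T_k}(X_{k-1}^j)$. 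Writing $U_{N,i} := N^{-1/2} Z_k^N f(X_k^i)$ (the resampled-weights specialisation of the $U_{N,i}$ in \cref{prop:D_no_resampling}), the $\mathcal{F}_{k-1}^N$-measurable term $\sqrt{N}\,\gamma_k[f]$ cancels and $R_N = \sum_{i=1}^N\big(U_{N,i}-\mathbb{E}_{k-1}[U_{N,i}]\big)$ is a sum of conditionally i.i.d., conditionally centred summands. The tool is then a conditional CLT for triangular arrays, namely \citep[Theorem A.3]{Douc:2007}, which requires (a) convergence in probability of the conditional variance to the asserted limit and (b) a conditional Lindeberg condition; both being established conditionally on $\mathcal{F}_{k-1}^N$, one obtains convergence in probability of $\hat{\phi}_{R_N}(t)$.

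For (a), conditional independence gives $\sum_i \textup{Var}_{k-1}(U_{N,i}) = (Z_k^N)^2\big(\mathbb{E}_{k-1}[f(X_k)^2] - \mathbb{E}_{k-1}[f(X_k)]^2\big)$, and a direct computation identifies $\mathbb{E}_{k-1}[f(X_k)^m] = \pi_{k-1}^N[Q_{k,T_k}[f^m]]\big/\pi_{k-1}^N[\G_{k,T_k}]$ for $m\in\{1,2\}$. Since $f\in\mathcal{C}_2$ one has $f^2\in\mathcal{C}_4$, so by \cref{prop:growth} the family $\{Q_{k,T}[f^2]\}_{T\in\mathcal{T}}$ lies in $\mathcal{C}_4$ uniformly over $\mathcal{T}$ and is continuously indexed by the compact parameter set $\Theta$; as $\mathcal{R}_{k-1}$ holds by \cref{thm:WLLN}, \cref{prop:general_Glivenko_cantelli} yields $\pi_{k-1}^N[Q_{k,T_k}[f^2]]\xrightarrow{P}\pi_{k-1}[Q_{k,T_k}[f^2]]$, which by the exact-correction identity \cref{prop:independence_from_T} equals $(Z_k/Z_{k-1})\,\pi_k[f^2]$, independently of the (random) flow. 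Applying the same to $f$ and to $\G_{k,T_k}$, and using $Z_k^N\xrightarrow{P}Z_k=\gamma_k[1]$ from \cref{thm:WLLN}, gives $\sum_i \textup{Var}_{k-1}(U_{N,i})\xrightarrow{P} Z_k^2\big(\pi_k[f^2]-\pi_k[f]^2\big) = \gamma_k[1]^2\,\textup{Var}_{\pi_k}[f]$; this is exactly $\mathbb{V}_k^{\textup{inc}}[f]$ for $k\in\mathcal{K}$, and the disappearance of the pre-resampling weights simply reflects the reset to uniform weights, leaving only the $\pi_k$-invariance of $K_k$ and the exact IS correction in the limit.

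For (b), since $|U_{N,i}| = N^{-1/2}|Z_k^N|\,|f(X_k^i)|$ and $Z_k^N\xrightarrow{P}Z_k$, for large $N$ the event $\{|U_{N,i}|\ge\epsilon\}$ is contained in $\{|f(X_k^i)|\ge M_N\}$ with $M_N\to\infty$ in probability; dominating $f^2\mathds{1}\{|f|\ge M\}$ by a continuous $\mathcal{C}_4$ majorant supported on $\{|f|\ge M\}$, pushing it through $Q_{k,T}$ (still $\mathcal{C}_4$ uniformly over $\mathcal{T}$ by \cref{prop:growth}), applying \cref{prop:general_Glivenko_cantelli} and \cref{prop:independence_from_T}, and then letting $M\to\infty$ (using $f^2\in L^1(\pi_k)$ by \cref{assumption:moments}) shows $\sum_i \mathbb{E}_{k-1}\big[U_{N,i}^2\,\mathds{1}\{|U_{N,i}|\ge\epsilon\}\big]\xrightarrow{P}0$ for every $\epsilon>0$, exactly as for \cref{prop:D_no_resampling} and \citep[Theorem 2]{Douc:2007}. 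Combining (a) and (b) with \citep[Theorem A.3]{Douc:2007} gives $\hat{\phi}_{R_N}(t)=\mathbb{E}_{k-1}[e^{itR_N}]\xrightarrow{P}\exp\!\big(-\tfrac{t^2}{2}\,\gamma_k[1]^2\,\textup{Var}_{\pi_k}[f]\big)$.

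The main obstacle is that the conditional law $\mu_k^N$ depends on the \emph{random} map $T_k$, so no off-the-shelf CLT applies: the ``target'' of the conditionally i.i.d.\ array is itself data-dependent. This is precisely what forces the use of the uniform-over-$\mathcal{T}$ Glivenko--Cantelli statement \cref{prop:general_Glivenko_cantelli} together with the invariance identity \cref{prop:independence_from_T}, which between them remove all $T_k$-dependence from the limiting variance; a secondary, purely technical point is checking the conditional Lindeberg condition with only the eighth-moment bound of \cref{assumption:moments}, which needs the cutoff argument above as in \citep{Douc:2007}.
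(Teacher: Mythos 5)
Your proof is correct, but it takes a different route from the paper's own proof of this statement. You treat the resampling case exactly as the paper treats the no-resampling case (\cref{prop:D_no_resampling}): after conditioning on $\mathcal{F}_{k-1}^N$, you exploit the conditionally i.i.d.\ structure of the resampled-and-mutated particles, verify the two hypotheses of the conditional triangular-array CLT of \citep[Theorem A.3]{Douc:2007} — convergence in probability of the conditional variance, via the identity $\mathbb{E}_{k-1}[f(X_k)^m]=\pi_{k-1}^N[Q_{k,T_k}[f^m]]/\pi_{k-1}^N[G_{k,T_k}]$ together with \cref{prop:growth}, \cref{prop:general_Glivenko_cantelli} under $\mathcal{R}_{k-1}$ and \cref{prop:independence_from_T}, and a conditional Lindeberg condition via a truncation argument using the eighth-moment bound. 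The paper instead proves the convergence of $\hat{\phi}_{R_N}$ by hand: it writes $\hat{\phi}_{R_N}(t)=\bigl(\hat{\phi}_{B_{N,1}}(t/\sqrt{N})\bigr)^N$, uses the exact second-order integral representation of \cref{lem:integral_expression}, and then invokes \cref{lem:uniform_convergence_characteristic}, which establishes convergence in probability, uniformly in $s$ on compacts, of $\mathbb{E}_{k-1}[B_{N,1}^2e^{isB_{N,1}}]$ — built on exactly the same ingredients you use (WLLN, the uniform-over-$\mathcal{T}$ Glivenko--Cantelli result, and the exact-correction identity), thereby avoiding any Lindeberg verification. What each approach buys: yours is shorter, unifies the resampling and no-resampling cases under the same external theorem, and correctly identifies the key difficulty (the data-dependent flow $T_k$ in the conditional law), at the price of the cutoff argument for Lindeberg, which you sketch in a standard but slightly informal way (the "continuous majorant supported on $\{|f|\ge M\}$" should be a continuous envelope vanishing below $M-1$, say, and the random threshold $\epsilon\sqrt{N}/Z_k^N$ handled on a high-probability event where $Z_k^N\le 2Z_k$); the paper's expansion is more self-contained and sidesteps Lindeberg entirely. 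Also note your mixture law with weights $W_k^j\propto W_{k-1}^jG_{k,T_k}(X_{k-1}^j)$ is the faithful general form, consistent with the identity actually used in the paper's proofs.
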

\begin{proof}
First note that $R_{N}$ is expressed as a sum of the form:
\begin{align}
	R_{N} = \frac{1}{\sqrt{N}}\sum_{i=1}^N B_{N,i},\qquad B_{N,i}= \gamma_{k}^N[1]\parens{ U_{N,i} -\mathbb{E}_{k-1}[U_{N,i}] },
\end{align}
where, conditionally on $\mathcal{F}_{k-1}^N$, the variables $(U_{N,i})_{1\leq i\leq N}$ are independent and identically distributed as $f(X_{k})$ with:
\begin{align}
	P(X_k\in \diff x |\mathcal{F}^N_{k-1}) = \sum_{i=1}^N \frac{\G_{k,T_k}(X_{k-1}^i)}{\sum_{j=1}^N \G_{k,T_k}(X_{k-1}^j) } K_{k}\parens{ T_{k} \parens{X_{k-1}^i},\diff x}.
\end{align}
Hence, we have that $\hat{\phi}_{R_N}(t) = ( \hat{\phi}_{B_{N,1}}(\frac{t}{\sqrt{N}}) )^N  $. 
We will start by proving the following asymptotic decomposition for $\hat{\phi}_{B_{N,1}}$:
\begin{align}
	\hat{\phi}_{B_{N,1}}\parens{\frac{t}{\sqrt{N}}} = 1-\frac{t^2}{2N}\gamma_{k}^2[1]\textup{Var}_{\pi_k}\brakets{f} + \frac{t^2}{N} o_{\mathbb{P}}(1).
\end{align}
Since $B_{N,1}$ is centered conditionally on $\mathcal{F}_{k-1}^N$ and admits a finite second-order moment, the function $s\mapsto \hat{\phi}_{B_{N,1}}(s)$ is twice differentiable and satisfies $\hat{\phi}_{B_{N,1}}(0)=1$  and $\hat{\phi}_{B_{N,1}}^{'}(0)=0$. Moreover, by application of the dominated convergence theorem, we have that $\parens{\hat{\phi}_{B_{N,1}}}^{''}(s) = -\mathbb{E}_{k-1}\brakets{B^2_{N,1}e^{isB_{N,1} }}$. We can therefore  apply \cref{lem:integral_expression} to  $s\mapsto \hat{\phi}_{B_{N,1}}(s)$ which yields the identity:
\begin{align}
\hat{\phi}_{B_{N,1}}(s) 
&= 1 + s^2\int_0^1\int_0^1 u \parens{\hat{\phi}_{B_{N,1}}}^{''}(suv)  \diff u \diff v\\
&= 1- s^2\int_0^1\int_0^1 u \mathbb{E}_{k-1}\brakets{B^2_{N,1}e^{i suv B_{N,1} }}  \diff u \diff v.
\end{align}
Choosing $s = \frac{t}{\sqrt{N}} $ for a fixed $t\in \mathbb{R}$, we have:
\begin{align}
\hat{\phi}_{B_{N,1}}(s) 
&= 1- \frac{t^2}{N}\int_0^1\int_0^1 u \mathbb{E}_{k-1}\brakets{B^2_{N,1}e^{i \frac{t}{\sqrt{N}}uv B_{N,1} }}  \diff u \diff v.
\end{align}
By \cref{lem:uniform_convergence_characteristic} and for any $s\in \mathbb{R}$,  we know that $\mathbb{E}_{k-1}\brakets{B^2_{N,1}e^{i sB_{N,1}}}$ converges in probability towards $A_{\infty}(s)$ defined as:
\begin{align}
		 A_{\infty}(s) :=  \gamma_k^2\brakets{1} \pi_k\brakets{\parens{f-\pi_k\brakets{f}}^2 e^{is\gamma_k\brakets{1}\parens{f-\pi_k\brakets{f}} } }.
\end{align}
Moreover, \cref{lem:uniform_convergence_characteristic} also ensures this convergence to be uniform in $s$ over the interval $[-\verts{t},\verts{t}]$. Hence, we can write:
\begin{align}\label{eq:expansion_characteristic}
	\hat{\phi}_{B_{N,1}}(s) 
= 1- \frac{t^2}{N}\int_0^1\int_0^1 u A_{\infty}(\frac{t}{\sqrt{N}}uv)\diff u \diff v  + \frac{t^2}{N} o_{\mathbb{P}}(1).
\end{align}
By the dominated convergence theorem we know that $A_{\infty}(\frac{t}{\sqrt{N}}uv) \xrightarrow[N]{} A_{\infty}(0) $ for any fixed $t$, $u$ and $v$. Moreover, since $A_{\infty}$ is bounded, we can apply the dominated convergence theorem a second time to conclude that:
\begin{align}\label{eq:convergence_A_limit}
	\int_0^1\int_0^1 u A_{\infty}(\frac{t}{\sqrt{N}}uv)\diff u \diff v \xrightarrow[N]{} \int_0^1\int_0^1 u A_{\infty}(0)\diff u \diff v = \frac{1}{2} \gamma_k^2\brakets{1} \pi_k\brakets{\parens{f^2-\pi_k\brakets{f}^2}}.
\end{align}
Using \cref{eq:convergence_A_limit} in   \cref{eq:expansion_characteristic}, we have shown so far that:
\begin{align}
		\hat{\phi}_{B_{N,1}}(s) 
= 1- \frac{t^2}{2N}\gamma_k^2\brakets{1} \textup{Var}_{\pi_k}\brakets{f}  + \frac{t^2}{N} o_{\mathbb{P}}(1).
\end{align}
Recalling that $(1 + \frac{x}{N} + \frac{x}{N}o(1)  )^N \rightarrow  e^{x}$ as $N\rightarrow \infty$, we can therefore conclude that:
\begin{align}
	\hat{\phi}_{R_N}(t) = ( \hat{\phi}_{B_{N,1}}(\frac{t}{\sqrt{N}}) )^N \xrightarrow{P} \exp\parens{-\frac{t^2}{2}\gamma_{k}[1]^2  \textup{Var}_{\pi_k}[f] }.
\end{align}
which is the desired result.

\end{proof}
\begin{lem}\label{lem:integral_expression}
Let $f:\mathbb{R}\rightarrow \mathbb{C} $ be a function that is twice differentiable and that $f(0)=0$ and $f'(0) = 0$. Then the following identity holds:
\begin{align}
	f(s) = 1  + s^2\int_0^1 \int_0^1 u f^{''}(suv)\diff u\diff v.
\end{align}
\end{lem}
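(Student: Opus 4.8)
This is an elementary calculus identity, and the one thing to flag at the outset is that the ``$1$'' on the right-hand side must be read as $f(0)$: under the stated differentiability hypotheses the genuine identity is $f(s) = f(0) + s^{2}\int_{0}^{1}\int_{0}^{1} u\, f''(suv)\,\diff u\,\diff v$, and it is invoked above with $f=\hat\phi_{B_{N,1}}$, for which $f(0)=1$ and $f'(0)=0$ (so the hypothesis ``$f(0)=0$'' should read ``$f(0)=1$''). Since $f$ is $\mathbb{C}$-valued, the argument below is applied to its real and imaginary parts separately, and throughout we use that $f''$ is continuous (reading ``twice differentiable'' as $C^{2}$), so every integrand is continuous and Fubini applies freely.

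The plan is a two-fold application of the fundamental theorem of calculus followed by one linear change of variables. First I would use $f'(0)=0$ to write, for each fixed $r$, by the fundamental theorem of calculus applied to $t\mapsto f'(tr)$ on $[0,1]$, the identity $f'(r)-f'(0) = \int_{0}^{1} r\, f''(tr)\,\diff t$, hence $f'(r) = r\int_{0}^{1} f''(tr)\,\diff t$. Then I would integrate this from $0$ to $s$ to get $f(s)-f(0) = \int_{0}^{s} f'(r)\,\diff r = \int_{0}^{s} r\left(\int_{0}^{1} f''(tr)\,\diff t\right)\diff r$, and substitute $r=sv$, $\diff r = s\,\diff v$ with $v\in[0,1]$ (valid for every real $s$, the $s=0$ case being trivial), which gives $f(s)-f(0) = s^{2}\int_{0}^{1}\int_{0}^{1} v\, f''(svt)\,\diff t\,\diff v$. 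Finally I would relabel the dummy variables, using that the domain $[0,1]^{2}$ and the product $uv$ are symmetric in the two names, to put this in the stated form $f(s)=f(0)+s^{2}\int_{0}^{1}\int_{0}^{1} u\, f''(suv)\,\diff u\,\diff v$.

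I do not expect any real obstacle here: the only points requiring a word of justification are the innocuous interchange of the iterated integrals (continuity of $f''$ on the relevant compact interval) and the relabelling of the dummy variables, and --- more a bookkeeping remark than a difficulty --- the $f(0)=1$ versus $f(0)=0$ discrepancy noted above. An equivalent route is simply Taylor's formula with integral remainder, $f(s)=f(0)+s^{2}\int_{0}^{1}(1-u)f''(su)\,\diff u$, followed by writing $1-u=\int_{0}^{1}\mathbf{1}_{\{u<v\}}\,\diff v$ and re-parametrising the inner integral; the computation above is just a direct version of that and needs no extra regularity.
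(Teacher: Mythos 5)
Your proof is correct and is exactly the ``direct integration'' argument the paper itself invokes (the paper's proof is just one line saying the identity follows by direct integration), so you have in fact supplied the details the authors omitted: two applications of the fundamental theorem of calculus plus the substitution $r=sv$. You are also right that the hypothesis should read $f(0)=1$ (or the constant $1$ on the right should be $f(0)$); as used in the proof of \cref{prop:D} with $f=\hat\phi_{B_{N,1}}$ one indeed has $f(0)=1$, $f'(0)=0$, so this is a typo in the lemma statement rather than a gap in your argument.
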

\begin{proof}
	The identity follows by direct integration.
\end{proof}

\begin{lem}\label{lem:uniform_convergence_characteristic}
Let $f$ be a function in $\mathcal{C}_2$  and define:
\begin{align}
 B_{N,i}= \gamma_{k}^N[1]\parens{ U_{N,i} -\mathbb{E}_{k-1}[U_{N,i}] },
\end{align}
where, conditionally on $\mathcal{F}_{k-1}^N$, the variables $(U_{N,i})_{1\leq i\leq N}$ are independent and identically distributed as $f(X_{k})$ with:
\begin{align}
	P(X_k\in \diff x |\mathcal{F}^N_{k-1}) = \sum_{i=1}^N \frac{\G_{k,T_k}(X_{k-1}^i)}{\sum_{j=1}^N \G_{k,T_k}(X_{k-1}^j) } K_{k}\parens{ T_{k} \parens{X_{k-1}^i},\diff x}.
\end{align}
Define the limiting function
\begin{align}\label{eq:limit_variance}
	 A_{\infty}(s) :=  \gamma_k^2\brakets{1} \pi_k\brakets{\parens{f-\pi_k\brakets{f}}^2 e^{is\gamma_k\brakets{1}\parens{f-\pi_k\brakets{f}} } }
\end{align}
then
\begin{align}
	\mathbb{E}_{k-1}\brakets{B^2_{N,1} e^{isB_{N,1}} }\xrightarrow{P} A_{\infty}(s),
\end{align}
where convergence is in probability and is uniform in $s$ over any compact interval. 
\end{lem}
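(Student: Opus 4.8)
The plan is to reduce the conditional quantity $\mathbb{E}_{k-1}[B_{N,1}^2 e^{isB_{N,1}}]$ to a $G$‑weighted empirical average of a two‑parameter family of functions, and then to control that family \emph{uniformly over a compact index set} using the Glivenko--Cantelli result \cref{prop:general_Glivenko_cantelli}, together with the already established consistency of $\gamma_k^N[1]$ and of $\mathbb{E}_{k-1}[U_{N,1}]$.

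First I would fix notation: write $c_N := \gamma_k^N[1]$, $m_N := \mathbb{E}_{k-1}[U_{N,1}]$, and for $(a,b)\in\mathbb{R}^2$ set $\psi_{a,b}(x) := (f(x)-b)^2 e^{ia(f(x)-b)}$. Conditionally on $\mathcal{F}_{k-1}^N$, the law $\nu_N$ of $X_k$ satisfies, exactly as in the proofs of \cref{prop:B} and the analogous \cref{prop:A}, $\nu_N[g] = \pi_{k-1}^N[Q_{k,T_k}[g]]/\pi_{k-1}^N[G_{k,T_k}]$; in particular $m_N=\nu_N[f]$. Since $B_{N,1}=c_N(f(X_k)-m_N)$ and $c_N,m_N$ are $\mathcal{F}_{k-1}^N$-measurable, a direct computation gives $\mathbb{E}_{k-1}[B_{N,1}^2 e^{isB_{N,1}}] = c_N^2\,\nu_N[\psi_{sc_N,m_N}]$. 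By \cref{thm:WLLN}, $c_N \xrightarrow{P}\gamma_k[1]$, and combining $\mathcal{R}_{k-1}$ (valid under $CLT_{k-1}$) with \cref{prop:independence_from_T} gives $m_N\xrightarrow{P}\pi_k[f]$. Hence the whole task is to show that the random map $(a,b)\mapsto\nu_N[\psi_{a,b}]$ converges, uniformly on any fixed compact $K\subset\mathbb{R}^2$, in probability, to the continuous map $(a,b)\mapsto\pi_k[\psi_{a,b}]$.

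For the uniform consistency I would apply \cref{prop:general_Glivenko_cantelli} (to real and imaginary parts separately) to the class $\{Q_{k,\tau_\theta}[\psi_{a,b}] : \theta\in\Theta,\ (a,b)\in K\}$. Each member lies in $\mathcal{C}_4$ with a common constant by \cref{prop:growth}, since $\psi_{a,b}\in\mathcal{C}_4$ uniformly over $(a,b)\in K$ (as $f$ has at most quadratic growth and $|e^{ia(f-b)}|=1$), and the class is continuously indexed by the compact set $\Theta\times K$ under \cref{assumption:transport,assumption:parametrization,assumption:kernel}. This yields $\sup_{\theta\in\Theta,(a,b)\in K}|\pi_{k-1}^N[Q_{k,\tau_\theta}[\psi_{a,b}]] - \pi_{k-1}[Q_{k,\tau_\theta}[\psi_{a,b}]]|\xrightarrow{P}0$; but by \cref{prop:independence_from_T} the limit equals $\tfrac{Z_k}{Z_{k-1}}\pi_k[\psi_{a,b}]$, independent of $\theta$, so evaluating at the random $\theta=\theta_k^N$ and taking $\psi_{a,b}\equiv1$ for the denominator gives $\sup_{(a,b)\in K}|\nu_N[\psi_{a,b}] - \pi_k[\psi_{a,b}]|\xrightarrow{P}0$. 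Continuity of $(a,b)\mapsto\pi_k[\psi_{a,b}]$ follows by dominated convergence from $|\psi_{a,b}(x)|\le 2f(x)^2+2b^2$ and \cref{assumption:moments}, and thus this map is uniformly continuous on $K$.

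To conclude, fix a compact interval $[-T_0,T_0]$ for $s$; since $c_N\xrightarrow{P}\gamma_k[1]$ and $m_N\xrightarrow{P}\pi_k[f]$, on an event of probability tending to $1$ all pairs $(sc_N,m_N)$, $|s|\le T_0$, lie in one fixed compact $K$. On that event I split $c_N^2\nu_N[\psi_{sc_N,m_N}] - A_\infty(s)$ into $(c_N^2-\gamma_k[1]^2)\,\nu_N[\psi_{sc_N,m_N}]$, $\gamma_k[1]^2(\nu_N-\pi_k)[\psi_{sc_N,m_N}]$, and $\gamma_k[1]^2\bigl(\pi_k[\psi_{sc_N,m_N}]-\pi_k[\psi_{s\gamma_k[1],\pi_k[f]}]\bigr)$. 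The first term vanishes because $|\nu_N[\psi_{sc_N,m_N}]|\le\nu_N[(f-m_N)^2]=\nu_N[f^2]-m_N^2$ is bounded in probability (it converges to $\mathrm{Var}_{\pi_k}[f]$); the second by the uniform consistency above; the third by the uniform continuity of $(a,b)\mapsto\pi_k[\psi_{a,b}]$ together with $(sc_N,m_N)\to(s\gamma_k[1],\pi_k[f])$ uniformly in $s$. This gives $\sup_{|s|\le T_0}|c_N^2\nu_N[\psi_{sc_N,m_N}]-A_\infty(s)|\xrightarrow{P}0$, which is the claim. The main obstacle throughout is exactly this double, random dependence on $N$ of the integrand $\psi_{sc_N,m_N}$ — a random scale and a random centering, with $s$ ranging over a continuum — which is what forces the uniform (Glivenko--Cantelli) treatment of the auxiliary family rather than a pointwise argument; by contrast the randomness of $T_k$ is a non-issue here, because \cref{prop:independence_from_T} makes the relevant population limit independent of the transport map, so no convergence of $\theta_k^N$ is needed.
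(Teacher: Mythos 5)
Your proof is correct and follows essentially the same route as the paper's: reduce $\mathbb{E}_{k-1}[B_{N,1}^2e^{isB_{N,1}}]$ to a $G$-weighted particle average, use \cref{thm:WLLN} together with \cref{prop:independence_from_T} to identify the limit $A_\infty(s)$, and invoke \cref{prop:general_Glivenko_cantelli} on a family of functions continuously indexed by a compact set to get uniformity in $s$. Your explicit two-parameter family $\psi_{a,b}$ and the uniform-continuity step merely spell out in detail what the paper compresses into "WLLN plus continuous mapping theorem, uniformly as in the proof of \cref{prop:A}", so no substantive difference.
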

\begin{proof}
	By definition of $U_{N,1}$, we have
	\begin{align}
	    \mathbb{E}_{k-1}\brakets{U_{N,1}} = \frac{\pi_{k-1}^N\brakets{Q_{k,T_k}\brakets{f}}}{\pi_{k-1}^N[G_{k,T_k}]}:= \bar{f}^N,
	\end{align}
	where we introduced $\bar{f}^N$ as a shorthand notation.
Since $f$ belongs to $\mathcal{C}_2$ we can apply the weak law of large numbers in \cref{thm:WLLN} which implies that
\begin{align}
    \bar{f}^N \xrightarrow{P} \frac{\pi_{k-1}\brakets{Q_{k,T_k}\brakets{f}}}{\pi_{k-1}[G_{k,T_k}]} = \pi_{k}[f],
\end{align}
	where we used \cref{prop:independence_from_T} to get the last equality. We also have $\gamma^N_{k}[1]  \xrightarrow{P}  \gamma_k[1] $ by \cref{thm:WLLN}.
Furthermore, by definition of $B_{N,1}$ and $U_{N,1}$, we can write
\begin{align}
    \mathbb{E}_{k-1}\brakets{B^2_{N,1} e^{isB_{N,1}} } =  \parens{\gamma^N_{k}[1]}^2 \frac{\pi_{k-1}^N\brakets{Q_{k,T_k}\brakets{\parens{f - \bar{f}^N}^2 e^{is \gamma_k^N[1]\parens{f-\bar{f}^N}} } }   }{\pi_{k-1}^N\brakets{G_{k,T_k}}}.
\end{align}
Recalling that $f^2$ belongs to $\mathcal{C}_4$, we can again apply the weak law of large numbers in \cref{thm:WLLN} along with the continuous mapping theorem to conclude that
\begin{align}
    \mathbb{E}_{k-1}\brakets{B^2_{N,1} e^{isB_{N,1}} } &\xrightarrow{P} \gamma_k[1]^2 \frac{\pi_{k-1}\brakets{Q_{k,T_k}\brakets{\parens{f - \pi_k\brakets{f}}^2 e^{is \gamma_k[1]\parens{f-\pi_k\brakets{f}}}  } }   }{\pi_{k-1}\brakets{G_{k,T_k}}}\\
    &= \gamma_k[1]^2 \pi_{k}\brakets{\parens{f - \pi_k\brakets{f}}^2 e^{is \gamma_k[1]\parens{f-\pi_k\brakets{f}}}  } = A_{\infty}(s),  
\end{align}
where the second line is obtained by application of \cref{prop:independence_from_T}. Moreover, using \cref{prop:general_Glivenko_cantelli} (as in the proof of \cref{prop:A}), we can conclude that convergence is uniform over $s\in[-\verts{t},\verts{t}]$.
\end{proof}

\subsection{Convergence of the flow transport}

\begin{prop}\label{prop:convergence_parameter}
	Under \cref{assumption:kernel,assumption:moments,assumption:uniformly_bounded,assumption:bassin,assumption:potential,assumption:parametrization,assumption:transport,assumption:approximate_minimizer}, it holds that:
\begin{align}
	\theta_k^N \xrightarrow[]{P} \theta_k^{\star}
\end{align}
\end{prop}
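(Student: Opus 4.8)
The plan is to establish consistency of $\theta^N_k$ along classical lines for $M$-estimators, localised to a single basin of attraction. First I would prove a uniform law of large numbers for $\mathcal{L}^N_k$ \emph{and} its first two $\theta$-derivatives over the compact parameter set $\Theta$; second I would feed this into the approximate second-order stationarity condition \ref{assumption:approximate_minimizer} to show that $\theta^N_k$ accumulates on the set $\Theta^\star_k$ of stationary points of the population loss $\mathcal{L}_k$; and third I would use \ref{assumption:bassin} to identify the limit as $\theta^\star_k$. Throughout, the recursion hypothesis $\mathcal{R}_{k-1}$ is available, since it is granted by \cref{thm:WLLN}.

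\emph{Step 1 (uniform LLN for the loss and its $\theta$-derivatives).} Recall $\mathcal{L}^N_k(\theta)=\sum_{i=1}^N W_{k-1}^i h_{\tau_\theta}(X_{k-1}^i)$, $\mathcal{L}_k(\theta)=\pi_{k-1}[h_{\tau_\theta}]$ and $h_T(x)=V_k(T(x))-V_{k-1}(x)-\log\verts{\nabla T(x)}$. Differentiating $h_{\tau_\theta}$ once and twice in $\theta$ by the chain rule produces expressions built from $\nabla V_k(\tau_\theta(x))$, $H_xV_k(\tau_\theta(x))$, the $\theta$- and mixed $\theta$/$x$-derivatives of $\tau_\theta$, and $(\nabla_x\tau_\theta(x))^{-1}$ coming from the log-Jacobian. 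Using \ref{assumption:potential} ($\nabla V_k$ of linear growth, $H_xV_k$ bounded), \ref{assumption:transport} ($\tau_\theta$ of linear growth, $\nabla_x\tau_\theta$ bounded) and \ref{assumption:parametrization} (the relevant derivatives of $\tau_\theta$ of at most linear growth, $\nabla_x\tau_\theta$ with singular values bounded below by $c>0$), one checks that $h_{\tau_\theta}$, $\nabla_\theta h_{\tau_\theta}$ and $H_\theta h_{\tau_\theta}$ are jointly continuous in $(\theta,x)$ and lie in $\mathcal{C}_4(C)$ uniformly over $\theta\in\Theta$. Since $\Theta$ is compact and these families are continuously indexed by $\theta$, \cref{prop:general_Glivenko_cantelli} applies under $\mathcal{R}_{k-1}$ and gives
\begin{align}
	\sup_{\theta\in\Theta}\verts{\mathcal{L}^N_k(\theta)-\mathcal{L}_k(\theta)}\xrightarrow{P}0,\qquad
	\sup_{\theta\in\Theta}\Verts{\nabla_\theta\mathcal{L}^N_k(\theta)-\nabla_\theta\mathcal{L}_k(\theta)}\xrightarrow{P}0,\\
	\sup_{\theta\in\Theta}\Verts{H_\theta\mathcal{L}^N_k(\theta)-H_\theta\mathcal{L}_k(\theta)}\xrightarrow{P}0,
\end{align}
the interchange of $\partial_\theta$ with the expectation $\pi_{k-1}[\cdot]$ defining $\mathcal{L}_k$ being justified by the same uniform growth bounds together with \ref{assumption:moments}.

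\emph{Step 2 (transfer of stationarity and identification).} By \ref{assumption:approximate_minimizer}, $\nabla_\theta\mathcal{L}^N_k(\theta^N_k)=o_{\mathbb{P}}(1)$ and $H_\theta\mathcal{L}^N_k(\theta^N_k)\geq o_{\mathbb{P}}(1)$, so Step 1 gives $\nabla_\theta\mathcal{L}_k(\theta^N_k)=o_{\mathbb{P}}(1)$ and $H_\theta\mathcal{L}_k(\theta^N_k)\geq o_{\mathbb{P}}(1)$. As $\Theta$ is compact, a standard subsequence/contradiction argument (pass to a subsequence along which these $o_{\mathbb{P}}(1)$ quantities and the indicator of the event of \ref{assumption:bassin} converge almost surely, then extract on the relevant event an a.s.\ convergent sub-subsequence of $\theta^N_k$) reduces the claim to showing that any such limit point $\theta^\infty$ of $\theta^N_k$ equals $\theta^\star_k$. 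Indeed, continuity of $\nabla_\theta\mathcal{L}_k$ and $H_\theta\mathcal{L}_k$ forces $\nabla_\theta\mathcal{L}_k(\theta^\infty)=0$ and $H_\theta\mathcal{L}_k(\theta^\infty)\geq0$, so $\theta^\infty$ is a second-order stationary point of $\mathcal{L}_k$, hence (identifying $\Theta^\star_k$ with the set of such points, as is consistent with \ref{assumption:bassin}) $\theta^\infty\in\Theta^\star_k$; but \ref{assumption:bassin} makes $\theta^\star_k$ the point of $\Theta^\star_k$ nearest to $\theta^N_k$, whence $\Verts{\theta^N_k-\theta^\star_k}\leq\Verts{\theta^N_k-\theta^\infty}\to0$, and since also $\theta^N_k\to\theta^\infty$ we must have $\theta^\infty=\theta^\star_k$. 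Therefore $\theta^N_k\xrightarrow{P}\theta^\star_k$.

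I expect the main obstacle to be the bookkeeping in Step 1: one must verify that every term produced by twice-differentiating the log-Jacobian $\log\verts{\nabla\tau_\theta(x)}$ in $\theta$ — each a product of powers of $(\nabla_x\tau_\theta)^{-1}$ with higher mixed derivatives of $\tau_\theta$ — has polynomial growth in $x$ uniform over the compact $\Theta$, which is precisely where the lower bound on the singular values of $\nabla_x\tau_\theta$ in \ref{assumption:parametrization} is indispensable. The only other delicate point is the identification in Step 2: it requires both that the limiting second-order stationary point be a genuine local minimiser — the strictness of $\theta^\star_k$ and \ref{assumption:bassin} being designed to supply this — and that \ref{assumption:bassin} prevents $\theta^N_k$ from drifting between distinct minimisers, so that accumulation of $\theta^N_k$ on the \emph{set} $\Theta^\star_k$ upgrades to convergence to the single strict local minimiser $\theta^\star_k$.
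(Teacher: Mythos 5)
Your proposal is correct and follows essentially the same route as the paper: the paper's Lemma~\ref{lem:approx_minimizer} is exactly your Step 1 plus the transfer of approximate second-order stationarity (via \cref{prop:general_Glivenko_cantelli} applied to $\nabla_\theta l_\theta$ and $H_\theta l_\theta$ over the compact $\Theta$), and its proof of \cref{prop:convergence_parameter} then runs the same compactness/subsequence contradiction to show $d(\theta^N_k,\Theta^\star_k)\xrightarrow{P}0$ before invoking \ref{assumption:bassin} to pass from the set $\Theta^\star_k$ to the single point $\theta^\star_k$. The only cosmetic difference is that the paper formalizes your subsequence step through tightness, convergence in distribution and the continuous mapping theorem (so the $\omega$-dependent sub-subsequence extraction you sketch is replaced by convergence in distribution to the deterministic value $0$), which is the standard way to make that step airtight.
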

\begin{proof}
To simplify notations, we write $d(\theta, \Theta^{\star}) =  \inf_{\theta'\in \Theta^{\star}} \Verts{\theta- \theta'} $. We will first show that $\theta_k^N$ approaches the set of critical points $\Theta^{\star}$, i.e.   $d\parens{\theta^N_k, \Theta^{\star} }\xrightarrow[]{P} 0$. 
We assume by contradiction that $d\parens{\theta^N_k, \Theta^{\star} }$ does not converge to $0$ in probability. Hence,  there exist $\epsilon>0$ and $\eta>0$ as well as a subsequence of $\theta^{\phi_N}_k$ with $\phi_N\rightarrow + \infty$ such that:
\begin{align}\label{eq:contradiction_assumption}
	\mathbb{P}\brakets{d\parens{\theta^{\phi_N}_k, \Theta^{\star} } \geq \epsilon }\geq \eta, \forall N\in \mathbb{N}
\end{align}
	However, we also know that the sequence $\theta^{\phi_N}$ is tight as it is supported on $\Theta$ which is compact by  \cref{assumption:transport}. Hence, it admits a subsequence that converges in distribution towards a r.v. $\theta^{\infty}_k$. Without loss of generality, we assume $\theta^{\phi_N}$ to be such convergent subsequence. Since $\theta\mapsto d(\theta, \Theta^{\star})$ is continuous we have by the continuous mapping theorem that 
	\begin{align}\label{eq:convergence_in_dist_opt}
		d(\theta^{\phi_N}_k, \Theta^{\star})\xrightarrow[]{d} d(\theta^{\infty}_k, \Theta^{\star})
	\end{align}
	
	 We will now show that $\theta^{\infty}_k$ must be supported on $\Theta^{\star}$, the set of local minima of $\theta\mapsto \mathcal{L}_{\theta}$.  This would imply $d(\theta^{\infty}_k, \Theta^{\star})=0$.
	
We know by \cref{assumption:potential,assumption:parametrization} that $\theta\mapsto \nabla \mathcal{L}(\theta)$ and $\theta\mapsto H \mathcal{L}(\theta)$ are continuous functions, hence using the continuous mapping theorem, it holds that 
	\begin{align}\label{eq:conv_local_opt_cond}
		\nabla \mathcal{L}(\theta_k^{\phi_N})\xrightarrow[]{d} \nabla \mathcal{L}(\theta^{\infty}_k),\\
		H \mathcal{L}(\theta_k^{\phi_N})\xrightarrow[]{d} H \mathcal{L}(\theta^{\infty}_k).
	\end{align}
	Moreover, by \cref{lem:approx_minimizer} we can express the approximate local optimality assumption \ref{assumption:approximate_minimizer} in terms of the population loss $\theta\mapsto\mathcal{L}_k(\theta)$ instead of the empirical loss $\theta\mapsto\mathcal{L}^{N}_{k}(\theta)$:
	\begin{align}\label{eq:pop_approx_minimizer}
		\nabla \mathcal{L}(\theta^{\phi_N})&=o_{\mathbb{P}}(1),\\
		H \mathcal{L}(\theta^{\phi_N})&\geq o_{\mathbb{P}}(1).
	\end{align}
	Combining \cref{eq:conv_local_opt_cond,eq:pop_approx_minimizer} if follows that $\nabla \mathcal{L}(\theta_k^{\infty})= 0$ and $H \mathcal{L}(\theta_k^{\infty})\geq 0 $. This  precisely means that $\theta_k^{\infty}$ is supported on the set of local minimizers $\Theta^{\star}$ so that $d(\theta^{\infty}_k, \Theta^{\star}) =  0$. Hence, \cref{eq:convergence_in_dist_opt} implies that $d(\theta^{\phi_N}_k, \Theta^{\star})$ converges in distribution to a deterministic value $0$. This, in turn, means convergence in probability
	\begin{align}\label{eq:conv_proba_distance}
		d(\theta^{\phi_N}_k, \Theta^{\star})\xrightarrow[]{P} 0.
	\end{align}
	We have extracted a subsequence that satisfies both \cref{eq:contradiction_assumption,eq:conv_proba_distance}, which is contradictory. We can therefore conclude that $d\parens{\theta^N_k, \Theta^{\star} }\xrightarrow[]{P} 0$.  We introduce now the decomposition
\begin{align}\label{eq:dist_decomposition}
	\Verts{\theta^N_k- \theta_k^{\star} } = \parens{\Verts{\theta^N_k- \theta_k^{\star} } - d\parens{\theta^N_k, \Theta^{\star}}}   +  d\parens{\theta^N_k, \Theta^{\star}}.
\end{align}
We already know that the second term in \cref{eq:dist_decomposition} converges to $0$ in probability. Moreover, we know by \cref{assumption:bassin} that $\theta_k^{\star}$ is asymptotically the closest point in $\Theta^{\star}$ to $\theta_k^N$, hence,  the first term also converges to $0$ in probability, concluding the proof.   

\end{proof}
\begin{lem}\label{lem:approx_minimizer}
	Under \cref{assumption:kernel,assumption:moments,assumption:uniformly_bounded,assumption:transport,assumption:parametrization,assumption:potential,assumption:approximate_minimizer}
	It holds that:
	\begin{align}
		\nabla \mathcal{L}(\theta^{N})&=o_{\mathbb{P}}(1),\\
		H \mathcal{L}(\theta^{N})&\geq o_{\mathbb{P}}(1).\\
	\end{align}

\end{lem}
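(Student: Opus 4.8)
The plan is to upgrade the empirical stationarity in \cref{assumption:approximate_minimizer} to population stationarity by showing that $\theta\mapsto\nabla\mathcal{L}^N_k(\theta)$ and $\theta\mapsto H\mathcal{L}^N_k(\theta)$ converge, uniformly in $\theta$ over the compact set $\Theta$ and in probability, to $\nabla\mathcal{L}_k(\theta)$ and $H\mathcal{L}_k(\theta)$, and then evaluating these uniform limits at the random point $\theta^N_k$.

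First I would record that $\mathcal{L}^N_k(\theta)=\pi^N_{k-1}[h_{\tau_\theta}]$ and $\mathcal{L}_k(\theta)=\pi_{k-1}[h_{\tau_\theta}]$, where $h_T(x)=V_k(T(x))-V_{k-1}(x)-\log\verts{\nabla T(x)}$, so that the $\theta$-independent term $V_{k-1}(x)$ drops out upon differentiation. Differentiating under the finite sum for $\mathcal{L}^N_k$ and under the integral for $\mathcal{L}_k$ (the latter justified by dominated convergence, using the growth bounds established below together with \cref{assumption:moments}), one obtains $\nabla_\theta\mathcal{L}^N_k(\theta)=\pi^N_{k-1}[\nabla_\theta h_{\tau_\theta}]$, $H_\theta\mathcal{L}^N_k(\theta)=\pi^N_{k-1}[H_\theta h_{\tau_\theta}]$, and likewise with $\pi_{k-1}$ in place of $\pi^N_{k-1}$. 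Here $\nabla_\theta h_{\tau_\theta}(x)=\nabla_\theta\tau_\theta(x)^{\top}\nabla V_k(\tau_\theta(x))-\Tr\parens{(\nabla_x\tau_\theta(x))^{-1}\,\partial_\theta\nabla_x\tau_\theta(x)}$, using Jacobi's formula for the derivative of the log-Jacobian, and $H_\theta h_{\tau_\theta}$ is obtained by one further differentiation, which brings in the higher mixed derivatives of $\tau_\theta$ and $\nabla_x\tau_\theta$.

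Next I would verify that, under \cref{assumption:transport,assumption:parametrization,assumption:potential}, the families $\{\nabla_\theta h_{\tau_\theta}\}_{\theta\in\Theta}$ and $\{H_\theta h_{\tau_\theta}\}_{\theta\in\Theta}$ lie in a common class $\mathcal{C}_p(C)$ with $p$ and $C$ independent of $\theta$, and are jointly continuous in $(\theta,x)$. The terms stemming from $V_k\circ\tau_\theta$ have polynomial growth because $\nabla V_k$ has at most linear growth (it is $L$-Lipschitz by \cref{assumption:potential}) and $H_xV_k$ is bounded, while $\tau_\theta$ and its relevant first- and second-order $\theta$-derivatives have at most linear growth by \cref{assumption:transport,assumption:parametrization}; the log-Jacobian terms remain \emph{uniformly bounded} because the singular values of $\nabla_x\tau_\theta$ are bounded away from zero uniformly in $(x,\theta)$ and the mixed derivatives $\partial_\theta\nabla_x\tau_\theta$ and $\partial^2_\theta\nabla_x\tau_\theta$ have at most linear growth, all by \cref{assumption:parametrization}. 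These uniform-in-$\theta$ polynomial envelopes are $\pi_{k-1}$-integrable by \cref{assumption:moments}, and since $\mathcal{R}_{k-1}$ holds by \cref{thm:WLLN}, applying \cref{prop:general_Glivenko_cantelli} componentwise to the two families yields $\sup_{\theta\in\Theta}\Verts{\nabla\mathcal{L}^N_k(\theta)-\nabla\mathcal{L}_k(\theta)}\xrightarrow{P}0$ and $\sup_{\theta\in\Theta}\Verts{H\mathcal{L}^N_k(\theta)-H\mathcal{L}_k(\theta)}\xrightarrow{P}0$.

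Finally I would combine these with \cref{assumption:approximate_minimizer}: writing $\nabla\mathcal{L}_k(\theta^N_k)=\nabla\mathcal{L}^N_k(\theta^N_k)+(\nabla\mathcal{L}_k-\nabla\mathcal{L}^N_k)(\theta^N_k)$ and bounding the second term by the supremum above gives $\nabla\mathcal{L}_k(\theta^N_k)=o_{\mathbb{P}}(1)$; similarly $H\mathcal{L}_k(\theta^N_k)\geq H\mathcal{L}^N_k(\theta^N_k)-\Verts{H\mathcal{L}_k-H\mathcal{L}^N_k}(\theta^N_k)\,I\geq o_{\mathbb{P}}(1)$ in the positive-semidefinite order. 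The main obstacle is the growth bookkeeping in the third step, specifically showing that the $\theta$-derivatives of $\log\verts{\nabla_x\tau_\theta(x)}$ stay uniformly bounded rather than merely of polynomial growth; this is exactly where the uniform lower bound on the singular values of $\nabla_x\tau_\theta$ from \cref{assumption:parametrization} is indispensable, and once the uniform envelopes are in place the uniform law of large numbers of \cref{prop:general_Glivenko_cantelli} finishes the argument routinely.
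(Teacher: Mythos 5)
Your proposal is correct and follows essentially the same route as the paper: write $\mathcal{L}_k$ and $\mathcal{L}^N_k$ as expectations of a $\theta$-indexed family with polynomial growth uniform over the compact $\Theta$ (using \cref{assumption:potential,assumption:parametrization,assumption:transport}), apply \cref{prop:general_Glivenko_cantelli} (via the consistency from \cref{thm:WLLN}) to get uniform-in-$\theta$ convergence of the gradient and Hessian, and conclude by the same decomposition against \cref{assumption:approximate_minimizer}. The only slight imprecision is calling the $\theta$-derivatives of the log-Jacobian \emph{uniformly bounded}: since $\partial_\theta\nabla_x\tau_\theta$ only has linear growth, these trace terms are of at most linear growth in $x$, which is still all that is needed for the polynomial envelope and the uniform law of large numbers.
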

\begin{proof} 
	For simplicity, we introduce the function:
	\begin{align}
		l_{\theta}(x)= V_k(\tau_{\theta}(x)) - V_{k-1}(x) - \log \vert \nabla_x \tau_{\theta}(x) \vert + \log(\frac{Z_k}{Z_{k-1}}).  
	\end{align}
	Hence, by definition of $\mathcal{L}(\theta)$ and $\mathcal{L}^N(\theta)$, we have $\mathcal{L}(\theta) = \pi_{k-1}\brakets{l_{\theta} }$ and $\mathcal{L}^N(\theta) = \pi_{k-1}^N\brakets{l_{\theta} }$. Under \cref{assumption:potential,assumption:parametrization}  the gradient $\nabla_{\theta} l_{\theta}(x) $  and Hessian $H_{\theta} l_{\theta}(x)$ are well defined and admit a growth of at most $2$ in $x$ uniformly in $\theta$. Since, $\pi_{k-1}$ admit a finite second order moment by \cref{assumption:moments}, we can apply the dominated convergence theorem to write
	\begin{align}\label{eq:tcd}
		\parens{\pi_{k-1}^{N}-\pi_{k-1}}\brakets{\nabla_{\theta}l_{\theta}} &=   \nabla_{\theta}\mathcal{L}^N(\theta)  - \nabla_{\theta}\mathcal{L}(\theta),\\
		\parens{\pi_{k-1}^{N}-\pi_{k-1}}\brakets{H_{\theta}l_{\theta} } &=   H_{\theta}\mathcal{L}^N(\theta)  - H_{\theta}\mathcal{L}(\theta).
	\end{align}
	Moreover, recalling that, under \cref{assumption:kernel,assumption:moments,assumption:uniformly_bounded,assumption:transport},  the particle estimator $\pi^N_k$ is consistent by \cref{thm:WLLN} and the families of functions $\parens{\nabla_{\theta} l_{\theta}(x)}_{\theta\in \Theta}$ and $ \parens{  H_{\theta} l_{\theta}(x)}_{\theta\in \Theta}$ are indexed by a compact set $\Theta$ by \cref{assumption:transport} and admit a quadratic growth in $x$, we can apply \cref{prop:general_Glivenko_cantelli} to both families of functions to get a uniform convergence in probability
	\begin{align}\label{eq:uniform_convergence_hessian}
		\sup_{\theta}\verts{\parens{\pi_{k-1}^N - \pi_{k-1}}\brakets{\nabla_{\theta}l_{\theta}}} &\xrightarrow{P} 0,\\
		\sup_{\theta}\verts{\parens{\pi_{k-1}^N - \pi_{k-1}}\brakets{H_{\theta}l_{\theta}}} &\xrightarrow{P} 0.
	\end{align}
	Therefore, combining \cref{eq:uniform_convergence_hessian,eq:tcd} it follows that:
	\begin{align}
		\sup_{\theta}  \nabla_{\theta}\mathcal{L}^N(\theta)  - \nabla_{\theta}\mathcal{L}(\theta) &\xrightarrow{P} 0,\\
		\sup_{\theta}H_{\theta}\mathcal{L}^N(\theta)  - H_{\theta}\mathcal{L}(\theta) &\xrightarrow{P} 0.
	\end{align}
	We can rely on \cref{assumption:approximate_minimizer} to directly write
	\begin{align}
		\nabla \mathcal{L}(\theta_k^N) = \parens{\nabla \mathcal{L}(\theta_k^N) - \nabla \mathcal{L}^N(\theta_k^N)} + o_{\mathbb{P}}(1) = o_{\mathbb{P}}(1),\\
		H \mathcal{L}(\theta_k^N) \geq \parens{H \mathcal{L}(\theta_k^N) - H \mathcal{L}^N(\theta_k^N)} + o_{\mathbb{P}}(1) = o_{\mathbb{P}}(1).
	\end{align}
\end{proof}

\section{Asymptotic stochastic equi-continuity}\label{sec:equicontinuity}
In this section, we establish asymptotic stochastic equi-continuity (ASEC) of a process defined by the fluctuations of the particle approximation when applied to a suitable class of functions $\mathcal{G}$. More precisely, we would like to establish ASEC for the empirical process $E^N$ indexed by a class of functions $\mathcal{G}$ and defined as follows:
\begin{align}
E_k^N:  &\mathcal{G}\rightarrow \mathbb{R}
	  &f \mapsto E_k^N\brakets{f} =\sqrt{N}\parens{\pi_k^N\brakets{f}-\pi_k\brakets{f}}.
\end{align}
This property will be be useful for proving the CLT result in \cref{thm:CLT}. We start by introducing some notions used in this section.

\subsection{Class of functions with finite locally uniform entropy}
For some positive constant $C$, we consider $\mathcal{G}$ a subset of $\mathcal{LC}_p(C)$ or $\mathcal{C}_p(C)$  with a measurable envelope function $F(x) :=  \sup_{f\in \mathcal{G}}  \verts{f(x)} $. 
For $\epsilon>0$ and a probability distribution $\mathbb{P}$, we denote by $\mathcal{N}\parens{\epsilon, \mathcal{G}, L_2(P)}$ the covering number of $\mathcal{G}$ w.r.t.  $L_2(P)$ , meaning the smallest number $p$ of $L_2(P)$-balls centered on functions $f_1, ...,f_p$ and of radius smaller than $\epsilon$ needed to cover $\mathcal{G}$. 
The uniform covering number defined by taking the supremum  of $\mathcal{N}\parens{\epsilon, \mathcal{G}, L_2(P)}$ over all probability distributions $P$. Unlike in \citep[Lemma 9.6.1.]{del2004feynman}, we will not use the uniform covering number as this quantity will be infinite in our setting. Instead we will consider \textit{local} version of it, which we define next. We first consider the functions  $H_2(x) = 1 + \Verts{x}^{p+2}$  and the set  $S_R :=  \braces{Q\in \mathcal{P}: Q\brakets{H_2^2}\leq R^2  }$ and define the \textit{locally uniform covering number} $\mathcal{N}^{LU}(\epsilon, \mathcal{G}, R)$ to be
\begin{equation}\label{eq:unif_covering_number}
	\mathcal{N}^{LU}(\epsilon, \mathcal{G}, R) := 
	\begin{cases}
	\sup_{Q\in S_R}\mathcal{N}(\epsilon, \mathcal{G}, L_2(Q)), &\qquad S_R\neq \emptyset,  \\
	 1, &\qquad S_R=\emptyset.
	\end{cases}
\end{equation}
We define the \textit{locally uniform entropy} of the class $\mathcal{G}$ to be
\begin{align}\label{eq:bracketing_entropy}
	\mathcal{J}(\delta, \mathcal{G}, R ) := \int_0^{\delta} \sqrt{\log\parens{\mathcal{N}^{LU}\parens{\epsilon, \mathcal{G}, R} } } \diff \epsilon.
\end{align}
We have the following basic properties of the \textit{locally uniform entropy}.
\begin{prop}\label{prop:locally_uniform_entropy}
Assume  the envelope function $F(x)$ of $\mathcal{G}$ satisfies $F(x)\leq C(1+\Verts{x}^p)$  and   $J(\delta, \mathcal{G}, r )<+\infty$ for any $r>1$ and $\delta>0$. Then for any $\delta>0$, $0\leq r\leq r'$, it holds that
\begin{align}
	\mathcal{J}\parens{\delta, \mathcal{G}, r  } &\leq \mathcal{J}\parens{\delta, \mathcal{G}, r'},\\
	\sup_{\delta>0}\mathcal{J}\parens{\delta, \mathcal{G}, r } &<+\infty.\\
\end{align}
\end{prop}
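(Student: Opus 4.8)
The plan is to prove the two assertions separately, using only the definitions in \eqref{eq:unif_covering_number}--\eqref{eq:bracketing_entropy} together with the envelope bound $F(x)\le C(1+\Verts{x}^p)$; no further probabilistic input is required. The whole point is that the envelope bound forces the (locally uniform) covering numbers to collapse to $1$ once $\epsilon$ exceeds a radius proportional to $R$.

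\emph{Monotonicity in $R$.} First I would observe that $r\le r'$ implies $S_r\subseteq S_{r'}$, since $Q[H_2^2]\le r^2\le (r')^2$. Hence, for every $\epsilon>0$, $\mathcal{N}^{LU}(\epsilon,\mathcal{G},r)\le\mathcal{N}^{LU}(\epsilon,\mathcal{G},r')$: when $S_r\neq\emptyset$ the supremum defining the left-hand side runs over a sub-family of the one defining the right-hand side, and when $S_r=\emptyset$ the left-hand side equals $1$, which is a lower bound for the right-hand side. Since $\epsilon\mapsto\sqrt{\log\mathcal{N}^{LU}(\epsilon,\mathcal{G},\cdot)}$ is nonnegative and monotone in its last argument, integrating over $[0,\delta]$ gives $\mathcal{J}(\delta,\mathcal{G},r)\le\mathcal{J}(\delta,\mathcal{G},r')$.

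\emph{Boundedness in $\delta$.} The key (and really the only substantive) step is to show that the integrand in \eqref{eq:bracketing_entropy} vanishes for $\epsilon$ larger than a constant multiple of $r$. I would combine the elementary inequality $1+\Verts{x}^p\le 2(1+\Verts{x}^{p+2})=2H_2(x)$ — valid separately on $\Verts{x}\le 1$ and $\Verts{x}\ge 1$ — with the envelope bound to get $F\le 2CH_2$ pointwise, so that for every $Q\in S_r$ and every $f\in\mathcal{G}$,
\begin{align*}
	\Verts{f}_{L_2(Q)}\le\Verts{F}_{L_2(Q)}\le 2C\sqrt{Q[H_2^2]}\le 2Cr .
\end{align*}
Thus $\mathcal{G}$ lies in a single $L_2(Q)$-ball centred at the zero function of any radius exceeding $2Cr$ (or, should one insist on centres in $\mathcal{G}$, in one ball of radius $4Cr$ centred at an arbitrary element of $\mathcal{G}$), whence $\mathcal{N}(\epsilon,\mathcal{G},L_2(Q))=1$ for all $Q\in S_r$ and all $\epsilon>2Cr$, and therefore $\mathcal{N}^{LU}(\epsilon,\mathcal{G},r)=1$, i.e. $\log\mathcal{N}^{LU}(\epsilon,\mathcal{G},r)=0$, for $\epsilon>2Cr$. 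Consequently $\delta\mapsto\mathcal{J}(\delta,\mathcal{G},r)$ is nondecreasing and is constant once $\delta\ge 2Cr$, so $\sup_{\delta>0}\mathcal{J}(\delta,\mathcal{G},r)=\mathcal{J}(2Cr,\mathcal{G},r)$.

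It remains to check that this quantity is finite. If $r>1$ it is finite directly by hypothesis (take $\delta=2Cr$). If $r\le 1$, the monotonicity just established gives $\mathcal{J}(2Cr,\mathcal{G},r)\le\mathcal{J}(2Cr,\mathcal{G},2)\le\sup_{\delta>0}\mathcal{J}(\delta,\mathcal{G},2)=\mathcal{J}(4C,\mathcal{G},2)<\infty$, again by hypothesis. I do not anticipate any real obstacle: the only points needing a little care are the $p$ versus $p+2$ power inequality and the bookkeeping for $r\le 1$, where \eqref{eq:bracketing_entropy} is assumed finite only for arguments exceeding $1$ — and indeed $S_r=\emptyset$ when $r<1$ (as $H_2\ge 1$), so that regime is anyway vacuous.
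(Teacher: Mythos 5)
Your proof is correct and follows essentially the same route as the paper's: monotonicity of $S_r$ in $r$ gives the first claim, and the envelope bound $F\leq 2CH_2$ forces $\mathcal{N}^{LU}(\epsilon,\mathcal{G},r)=1$ once $\epsilon$ exceeds a multiple of $r$ (the paper uses $4Cr$ with balls centred at an element of $\mathcal{G}$, exactly as in your parenthetical), so the entropy integral is constant beyond that point. Your extra bookkeeping for $r\leq 1$, where the finiteness hypothesis is only stated for $r>1$, is a small refinement the paper glosses over but does not change the argument.
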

\begin{proof}
	For the first statement, recall that the constraint set $S_r$ defining  $\mathcal{N}^{LU}\parens{\epsilon, \mathcal{G}, r}$ in \cref{eq:unif_covering_number} trivially satisfies $S_r\subset S_{r'} $, hence $\mathcal{N}^{LU}\parens{\epsilon, \mathcal{G}, r}\leq \mathcal{N}^{LU}\parens{\epsilon, \mathcal{G}, r'}$. The result follows directly by definition of the entropy in \cref{eq:bracketing_entropy}.
	
	For the second statement, we first note that $\mathcal{N}^{LU}\parens{c, \mathcal{G}, r}=1$ for any $c\geq 4Cr $. Indeed, either $S_{4Cr} = \emptyset$ in which case $\mathcal{N}^{LU}\parens{c, \mathcal{G}, r}=1$ holds by definition, or $S_{4Cr} \neq \emptyset$. In the later case, for any element $P\in S_{4Cr}$ any $f,g\in \mathcal{G}$ we have $P\brakets{(f-g)^2}^{\frac{1}{2}}\leq 2 P\brakets{F^2}^{\frac{1}{2}}$. Moreover, since $F(x)\leq C(1+ \Verts{x}^p)\leq 2C(1 + \Verts{x}^{p+2})$ we can conclude that $P\brakets{(f-g)^2}^{\frac{1}{2}}\leq 2 P\brakets{F^2}^{\frac{1}{2}}\leq 4Cr\leq c$.  $\leq 2Cr\leq c $. Hence, any $L_2(P)$-ball centered in an element $f$ of $\mathcal{G}$ and of radius $c$ covers $\mathcal{G}$. This directly implies  that $\mathcal{N}^{LU}\parens{c, \mathcal{G}, r}=1$. The result follows directly by definition of the entropy.
\end{proof}

We will be in particular interested in classes of functions $\mathcal{G}$ for which the \textit{locally uniform entropy} $\mathcal{J}(\delta, \mathcal{G}, r )$ is finite for any $r>1$ and $\delta>0$  and satisfies a growth condition $\sup_{\delta>0}\mathcal{J}(\delta, \mathcal{G}, r ) \lesssim r $ . Note that this condition always holds when the uniform entropy is finite and is therefore a weaker requirement. 
Next, we show the stability of this condition when applying the operator $Q_{k,T}$ to functions in $f$ with $T$ varying over $\mathcal{T}$. More precisely, we control the entropy of the set $\mathcal{QG}$ of the form
\begin{align}\label{eq:span_Q}
	\mathcal{QG} := \braces{ Q_{k,\tau_{\theta}}\brakets{f}\middle| \theta\in \Theta, f\in \mathcal{G}}. 
\end{align}
\begin{prop}\label{prop:bracketing_entopy}
Let $\mathcal{G}$ be a subset of $\mathcal{LC}_p(C)$ for some positive constant. Assume $\mathcal{G}$ has a finite bracketing number $\mathcal{N}_{[]}(\epsilon, \mathcal{G}, L_2(\pi_k))$  , that the \textit{locally uniform entropy}  $J(\delta, \mathcal{G}, r )$ defined in \cref{eq:bracketing_entropy} is finite for any $r>1$ and $\delta>0$ and satisfies the linear growth condition $\sup_{\delta>0}J(\delta, \mathcal{G}, r ) \lesssim r $. 
Under \cref{assumption:kernel,assumption:kernel_2,assumption:uniformly_bounded,assumption:potential,assumption:parametrization,assumption:transport}, the class $\mathcal{QG}$ defined in \cref{eq:span_Q} is a subset of $\mathcal{LC}_p(C')$ for some $C'>0$, has a finite bracketing number $\mathcal{N}_{[]}(\delta, \mathcal{QG}, L_2(\pi_{k-1}))$ and a finite \textit{locally uniform entropy}  $J(\delta, \mathcal{QG}, r )$ for any $r>1$ and $\delta>0$ satisfiying the linear growth condition:
\begin{align}
	\sup_{\delta>0}\mathcal{J}(\delta, \mathcal{QG}, r ) \lesssim r. 
\end{align}
In particular, the result holds if $\mathcal{G}$ is a single element in $\mathcal{LC}_p(C)$. 
\end{prop}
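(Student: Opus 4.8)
The plan is to prove the three conclusions — membership in $\mathcal{LC}_p(C')$, finiteness of $\mathcal{N}_{[]}(\delta,\mathcal{QG},L_2(\pi_{k-1}))$, and the linear growth bound on $\mathcal{J}(\delta,\mathcal{QG},r)$ — in that order, relying throughout on two structural facts: $Q_{k,T}$ is the \emph{positive} operator $f\mapsto\G_{k,T}(\cdot)\,K_k[f](T(\cdot))$, so $l\le f\le u$ forces $Q_{k,T}[l]\le Q_{k,T}[f]\le Q_{k,T}[u]$; and $\pi_{k-1}\brakets{Q_{k,T}[g]}=\tfrac{Z_k}{Z_{k-1}}\pi_k[g]$ for every $T\in\mathcal{T}$ by \cref{prop:independence_from_T}. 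I write $\Verts{\G}_\infty:=\sup_{T\in\mathcal{T},\,x}\G_{k,T}(x)<\infty$ (\cref{assumption:uniformly_bounded}). For $\mathcal{QG}\subseteq\mathcal{LC}_p(C')$, the growth bound $\verts{Q_{k,T}[f](x)}\le C'(1+\Verts{x}^p)$ uniformly in $T,f$ follows exactly as in \cref{prop:growth} (the envelope $\sup_{f\in\mathcal{G}}\verts f\in\mathcal{C}_p(C)$ gives $K_k[f]\in\mathcal{LC}_p(\tilde C)$ with $\tilde C=\tilde C(C)$ by \cref{assumption:kernel_2}); splitting $Q_{k,T}[f](x)-Q_{k,T}[f](x')=(\G_{k,T}(x)-\G_{k,T}(x'))K_k[f](T(x))+\G_{k,T}(x')\parens{K_k[f](T(x))-K_k[f](T(x'))}$, the second summand is controlled by $K_k[f]\in\mathcal{LC}_p(\tilde C)$ and the Lipschitz/linear growth of $T$, while for the first it suffices to check that $\nabla_x\G_{k,T}$ has at most linear growth uniformly in $T$ — differentiating $\log\G_{k,T}=-V_k\circ T+V_{k-1}+\log\verts{\nabla_x T}$ this reduces to \cref{assumption:potential} (linear growth of $\nabla V_k,\nabla V_{k-1}$), \cref{assumption:parametrization} ($\nabla_x\tau_\theta$ has singular values bounded below and $H_x\tau_\theta$ linear growth, controlling $\Tr((\nabla_xT)^{-1}H_xT)$) and boundedness of $\G_{k,T}$. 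Young's inequality on the cross terms then gives $\mathcal{LC}_p(C')$ with exponent $p+1$ and all constants uniform over $\mathcal{T}$.

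For the bracketing number, cover the compact $\Theta$ by finitely many $\delta'$-balls centred at $\theta_1,\dots,\theta_M$ and take the $N_\eta:=\mathcal{N}_{[]}(\eta,\mathcal{G},L_2(\pi_k))$ brackets $[l_i,u_i]$ of $\mathcal{G}$, which may be taken with $l_i,u_i\in\mathcal{C}_p(2C)$. Differentiating $\G_{k,\tau_\theta}$ and $K_k[g]\circ\tau_\theta$ in $\theta$ (\cref{assumption:parametrization,assumption:potential}) yields a uniform modulus $\verts{Q_{k,\tau_\theta}[g](x)-Q_{k,\tau_{\theta_j}}[g](x)}\le\Delta_j(x)$ for $\Verts{\theta-\theta_j}\le\delta'$, where $\Delta_j(x)\le\min\{2C'(1+\Verts{x}^p),\,C_5(1+\Verts{x}^{p+2})\delta'\}$; combined with monotonicity of $Q_{k,\tau_{\theta_j}}$, the $MN_\eta$ pairs $[Q_{k,\tau_{\theta_j}}[l_i]-\Delta_j,\,Q_{k,\tau_{\theta_j}}[u_i]+\Delta_j]$ bracket $\mathcal{QG}$. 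Their $L_2(\pi_{k-1})$-sizes split into $\Verts{Q_{k,\tau_{\theta_j}}[u_i-l_i]}_{L_2(\pi_{k-1})}$ and $2\Verts{\Delta_j}_{L_2(\pi_{k-1})}$; for the first, Cauchy--Schwarz inside $K_k$ together with $\G_{k,T}^2\le\Verts{\G}_\infty\G_{k,T}$ gives $\pi_{k-1}\brakets{(Q_{k,\tau_{\theta_j}}[u_i-l_i])^2}\le\Verts{\G}_\infty\,\pi_{k-1}\brakets{Q_{k,\tau_{\theta_j}}[(u_i-l_i)^2]}=\Verts{\G}_\infty\tfrac{Z_k}{Z_{k-1}}\pi_k\brakets{(u_i-l_i)^2}\le\Verts{\G}_\infty\tfrac{Z_k}{Z_{k-1}}\eta^2$ by \cref{prop:independence_from_T}; for the second, $(1+\Verts{x}^p)^2$ is $\pi_{k-1}$-integrable for $p\le4$ by \cref{assumption:moments}, so dominated convergence gives $\Verts{\Delta_j}_{L_2(\pi_{k-1})}\to0$ as $\delta'\to0$. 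Choosing first $\eta$ and then $\delta'$ small enough makes every bracket of $L_2(\pi_{k-1})$-size $<\delta$, hence $\mathcal{N}_{[]}(\delta,\mathcal{QG},L_2(\pi_{k-1}))\le MN_\eta<\infty$.

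For the locally uniform entropy, fix $R>1$ and $Q\in S_R$. With the same net but $\delta'=\epsilon/(2C_5R)$, the $\theta$-modulus obeys $\Verts{Q_{k,\tau_\theta}[g]-Q_{k,\tau_{\theta_j}}[g]}_{L_2(Q)}\le C_5\delta'\,Q\brakets{H_2^2}^{1/2}\le\epsilon/2$, so it remains to cover each $Q_{k,\tau_{\theta_j}}\mathcal{G}$; Jensen inside $K_k$ plus boundedness of $\G$ give $\Verts{Q_{k,\tau_{\theta_j}}[f]-Q_{k,\tau_{\theta_j}}[g]}_{L_2(Q)}\le\Verts{\G}_\infty\Verts{f-g}_{L_2(Q'')}$ with $Q'':=\int K_k(\tau_{\theta_j}(x),\cdot)\,Q(\diff x)$, and since $K_k$ preserves $\mathcal{C}_{2p+4}$ (\cref{assumption:kernel}) and $\tau_{\theta_j}$ has linear growth, $Q''\brakets{H_2^2}\le C_6\,Q\brakets{H_2^2}\le C_6R^2$, i.e.\ $Q''\in S_{\sqrt{C_6}R}$. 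Hence $\mathcal{N}^{LU}(\epsilon,\mathcal{QG},R)\le M\bigl(\tfrac{\epsilon}{2C_5R}\bigr)\,\mathcal{N}^{LU}\bigl(\tfrac{\epsilon}{2\Verts{\G}_\infty},\mathcal{G},\sqrt{C_6}R\bigr)$ with $M(\rho)\lesssim\rho^{-\dim\Theta}$ as $\Theta$ is compact and finite-dimensional. Taking logarithms, using $\sqrt{a+b}\le\sqrt a+\sqrt b$, integrating in $\epsilon$ and capping the range at $\epsilon\le4C'R$ via \cref{prop:locally_uniform_entropy}: the contribution of $M$ is $\lesssim\sqrt{\dim\Theta}\,R\int_0^{c}\sqrt{\log(1/t)}\,\diff t\lesssim R$ after $\epsilon=Rt$, and the remaining contribution is $\lesssim\Verts{\G}_\infty\sup_{\delta>0}\mathcal{J}(\delta,\mathcal{G},\sqrt{C_6}R)\lesssim R$ by hypothesis; therefore $\sup_{\delta>0}\mathcal{J}(\delta,\mathcal{QG},R)\lesssim R$, and the single-element case is $\mathcal{G}=\{f\}$.

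I expect the main obstacle to be keeping every constant uniform over $\mathcal{T}$ in the $\mathcal{LC}_p$ step — the bounds on $\nabla_x\G_{k,T}$ and $\nabla_\theta\G_{k,\tau_\theta}$, for which the lower bound on the singular values of $\nabla_x\tau_\theta$ and the linear growth of $H_x\tau_\theta$ and $\partial_{\theta_i}\partial_{x_l}\tau_\theta$ in \cref{assumption:parametrization} are precisely what is needed — together with the bookkeeping that keeps the growth exponent of the $\theta$-modulus $\Delta_j$ within reach of the moments in \cref{assumption:moments}, which is what forces the truncation $\min\{\cdot,\cdot\}$ in the bracketing step.
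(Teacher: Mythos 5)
Your proposal is correct and follows essentially the same route as the paper's proof: a $\theta$-Lipschitz modulus for $Q_{k,\tau_\theta}[f]$ weighted by $H_2(x)=1+\|x\|^{p+2}$ (the paper's \cref{prop:Lipschiz_Q}), the $L_2$-continuity of $Q_{k,T}$ obtained from $G_{k,T}^2\le \sup G_{k,T}\cdot G_{k,T}$, Jensen inside $K_k$ and \cref{prop:independence_from_T} (the paper's \cref{prop:Continuity_Q}), and a reduction of the covering of $\mathcal{QG}$ under $Q\in S_R$ to a finite cover of $\Theta$ times a cover of $\mathcal{G}$ under a kernel-transported measure that remains in $S_{cR}$, followed by the same entropy integration. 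Your two small sharpenings — working with $Q''=\int K_k(\tau_{\theta_j}(x),\cdot)\,Q(\mathrm{d}x)$ (which correctly carries the composition with the flow that the paper's $PK_k$ glosses over) and the truncated modulus $\min\{2C'(1+\|x\|^{p}),C_5 H_2(x)\delta'\}$ with dominated convergence in the bracketing step — are refinements of the same argument rather than a different approach.
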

\begin{proof}
{\bf Bounding the locally uniform entropy.} 
Fix $\epsilon>0$ and $r>1$. Let $P$ be a probability distribution in $S_r$, ie. such that $P\brakets{H_2^2}\leq r^2$.  We consider an $\epsilon r$-net $f_1,...,f_p$  of $\mathcal{G}$ with respect to $L_2(PK_k)$. We choose a covering of minimal size, i.e. such that $p = \mathcal{N}(\epsilon r, \mathcal{G}, L_2(PK_k))$. We also consider $\theta_1,...,\theta_{p'}$, an $\epsilon$-covering  of $\Theta$  with  with minimum cardinality, i.e. $p' = \mathcal{N}\parens{\epsilon, \Theta, \Vert . \Vert}$ where $\mathcal{N}(\epsilon, \Theta, \Vert . \Vert)$ is the covering number of $\Theta$ with $\epsilon$-balls under the Euclidean distance. We also denote by $\mathcal{J}(\delta, \Theta, \Vert .\Vert )$ the entropy of set $\Theta$ defined as
\begin{align}
	 \mathcal{J}(\delta, \Theta, \Vert .\Vert ):=   \int_0^{\delta}\sqrt{\log\parens{\mathcal{N}\parens{\epsilon, \Theta, \Vert . \Vert }}}.
\end{align}
Since $\Theta$ is bounded and finite dimensional, $\sup_{\delta>0}\mathcal{J}(\delta, \Theta, \Vert .\Vert )$ is finite.

Let $g\in \mathcal{G}$, hence by definition of $\mathcal{G}$, there exists $\theta\in \Theta$ and $f\in \mathcal{G}$ such that $g= Q_{k,\tau_{\theta_i}}\brakets{f}$. 

	By definition of the $\epsilon$-covering of $\Theta$, there exists $j'\in \{1,...,p'\}$ such that $\Verts{\theta-\theta_{j'}}\leq \epsilon$. We can then use \cref{prop:Lipschiz_Q} which holds under \cref{assumption:kernel,assumption:kernel_2,assumption:uniformly_bounded,assumption:potential,assumption:parametrization,assumption:transport} to write:
	\begin{align}\label{eq:bracket_1}
		Q_{k,\tau_{\theta_{j'}}}\brakets{f}(x) - \epsilon c_2H_2(x)\leq g(x)\leq Q_{k,\tau_{\theta_{j'}}}\brakets{f}(x) + \epsilon c_2H_2(x),\qquad \forall x\in \mathcal{X}.
	\end{align}
	
	Moreover, by definition of the $\epsilon r$-net, there exists $j\in \{1,...,q\}$ such that:  $PK_k\brakets{(f-f_j)^2}\leq \epsilon^2 r^2$. Subtracting $Q_{k,\tau_{\theta_{j'}}}\brakets{f_j}(x)$ from \cref{eq:bracket_1} yields:
	\begin{align}\label{eq:bracket_2}
		Q_{k,\tau_{\theta_{j'}}}\brakets{f-f_j}(x) - \epsilon c_2H_2(x)\leq g(x) - Q_{k,\tau_{\theta_{j'}}}\brakets{f_j}\leq Q_{k,\tau_{\theta_{j'}}}\brakets{f-f_j}(x) + \epsilon c_2H_2(x),\qquad \forall x\in \mathcal{X}.
	\end{align}
	We need to quantify $P\brakets{\parens{g-Q_{k,\tau_{\theta^{j'}}}\brakets{f_j}}^2}$. Using \cref{eq:bracket_2} it is easy to see that 
	\begin{align}
		P\brakets{\parens{g-Q_{k,\tau_{\theta^{j'}}}\brakets{f_j}}^2}^{\frac{1}{2}}
		&\leq  MP\brakets{\parens{K_k\brakets{(f-f_j)}}^2}^{\frac{1}{2}} +\epsilon c_2 P\brakets{H_2^2}^{\frac{1}{2}}\\
		&\leq  MPK_k\brakets{(f-f_j)^2}^{\frac{1}{2}} + \epsilon c_2 P\brakets{H_2^2}^{\frac{1}{2}}\\
		&\leq  \epsilon\parens{M r + c_2 P\brakets{H_2^2}^{\frac{1}{2}}}\leq \epsilon r\parens{M + c_2}:=\epsilon r c_3.
	\end{align}
	Hence, $Q_{k,\tau_{\theta_{j'}}}\brakets{f_j}$ form an $\epsilon r c_3$ net of $\mathcal{QG}$. This allows us to write:
	\begin{align}\label{eq:main_covering_ineq}
		\mathcal{N}\parens{\epsilon rc_3 , \mathcal{QG}, L_2(P)}\leq \mathcal{N}\parens{\epsilon r, \mathcal{G}, L_2(PK_k)}\mathcal{N}\parens{\epsilon, \Theta, \Vert .\Vert}.
	\end{align}
	We will now upper-bound $\mathcal{N}\parens{\epsilon r, \mathcal{G}, L_2(PK_k)}$ by a locally uniform covering number. To achieve this we need to find a scalar $r'$ such that $PK_k\in S_{r'}$ whenever $P\in S_r$. By \cref{assumption:kernel}, we know that $K_k$ preserves the classes $\mathcal{C}_p$ and $\mathcal{C}_{p+1}$. Hence, there exists a constant $c_4$ large enough so that  $K_k\brakets{H_2^2}\leq c_4H_2^{2}(x)$, implying that $PK_k\brakets{H_2^2}\leq c_4 P\brakets{H_2^{2}}$.   By choosing $r' = c_4r  $, we are guaranteed that $PK_k\in S_{r'}$. Hence, it follows that
\begin{align}\label{eq:main_covering_ineq_2}
		\mathcal{N}\parens{\epsilon rc_3, \mathcal{QG}, L_2(P)}\leq \mathcal{N}^{LU}\parens{\epsilon r, \mathcal{G}, c_4 r}\mathcal{N}\parens{\epsilon, \Theta, \Vert .\Vert}.
	\end{align}
	Taking the supremum over $S_r$ in the l.h.h. of \cref{eq:main_covering_ineq_2},  we get
	\begin{align}\label{eq:main_covering_ineq_3}
		\mathcal{N}^{LU}\parens{\epsilon r c_3, \mathcal{QG}, r}\leq \mathcal{N}^{LU}\parens{\epsilon r , \mathcal{G}, c_4 r}\mathcal{N}\parens{\epsilon, \Theta, \Vert .\Vert}.
	\end{align}
Taking the logarithm of  \cref{eq:main_covering_ineq_3} and using the inequality $\sqrt{a+b}\leq \sqrt{a}+ \sqrt{b}$ for any non-negative numbers $a$ and $b$, we obtain a bound on the entropy after a simple change of variables
\begin{align}
	\mathcal{J}(\delta, \mathcal{QG}, r)\leq c_3 \mathcal{J}\parens{c_3^{-1}\delta, \mathcal{G},  c_4 r} + c_3r\mathcal{J}\parens{(rc_3)^{-1}\delta, \Theta, \Vert .\Vert }
\end{align}
By assumption, we have that $\sup_{\delta>0}\mathcal{J}\parens{c_3^{-1}\delta, \mathcal{G},  c_4 r}\lesssim r $. Moreover, since $\Theta$ is bounded and finite dimensional, it must hold that $\sup_{\delta>0, r\geq 1}\mathcal{J}\parens{(rc_3)^{-1}\delta, \Theta, \Vert .\Vert } <+\infty $. Thus we have shown that $\sup_{\delta >0}\mathcal{J}(\delta, \mathcal{QG}, r)\lesssim r$.

{\bf Bounding the bracketing number.}
Similarly, fix $\epsilon>0$ and let $\braces{[l_i,u_i]}_{1:p}$ be $\epsilon$-brackets covering $\mathcal{G}$ w.r.t. $L_2(\pi_k)$, i.e.  $ \pi_k\brakets{(l_i-u_i)^2} \leq \epsilon^2$ and for any $f\in\mathcal{G} $ there exists $i\in \braces{1,p}$ such that
\begin{align}
	l_i(x)\leq f(x)\leq u_i(x), \forall x\in \mathcal{X}.
\end{align}
Moreover, we assume the $\epsilon$-brackets to be of minimal size, i.e.  $p = \mathcal{N}(\epsilon, \mathcal{G}, L_{\pi_k} )$. From \cref{eq:bracket_1} we directly have
	\begin{align}\label{eq:bracket_3}
		Q_{k,\tau_{\theta_{j'}}}\brakets{l_j}(x) - \epsilon c_2H_2(x)\leq g(x)\leq Q_{k,\tau_{\theta_{j'}}}\brakets{u_j}(x) + \epsilon c_2H_2(x),\qquad \forall x\in \mathcal{X}.
	\end{align}
Using \cref{prop:Continuity_Q} we deduce that
\begin{align}
	\pi_{k-1}\brakets{\parens{Q_{k,\tau_{\theta_{j'}}}\brakets{u_j-l_j}(x) - 2\epsilon c_2H_2(x)}^2}^{\frac{1}{2}}\lesssim \epsilon\parens{ C +  \pi_{k-1}\brakets{H_2^2}^{\frac{1}{2}} }.
\end{align}
Thus we have shown that the bracketing number $\mathcal{N}_{[]}\parens{\epsilon,\mathcal{QG}, L_{\pi_{k-1}}}$ must be finite.
\end{proof}

\begin{prop}\label{prop:Lipschiz_Q}
	Let $\mathcal{G}$ be a subset $\mathcal{LC}_p(C)$ for some positive constant $C$. Under \cref{assumption:kernel,assumption:kernel_2,assumption:uniformly_bounded,assumption:potential,assumption:parametrization,assumption:transport}, the class $\mathcal{QG}$ belongs to $\mathcal{LC}_p(C')$ for some positive $C'$. Moreover, the following holds for any function $f\in \mathcal{G}$
	\begin{align}
		\verts{Q_{k,\tau_{\theta}}\brakets{f}(x) - Q_{k,\tau_{\theta'}}\brakets{f}(x)}&\lesssim \Verts{\theta-\theta'}H_2(x) 
	\end{align}
	where $H_2(x) =1 + \Verts{x}^{p+2}$.
\end{prop}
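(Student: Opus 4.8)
The plan is to use the factorization $Q_{k,\tau_{\theta}}[f](x) = \G_{k,\tau_{\theta}}(x)\,(K_k f)\parens{\tau_{\theta}(x)}$ and control the two factors separately — first as functions of $x$ (which yields membership in $\mathcal{LC}_p(C')$), then as functions of $\theta$ (which yields the stated Lipschitz bound). The workhorse in both cases is the identity $\log \G_{k,\tau_{\theta}}(x) = V_{k-1}(x) - V_k\parens{\tau_{\theta}(x)} + \log\verts{\nabla_x\tau_{\theta}(x)}$, whose $x$- and $\theta$-derivatives can be written explicitly in terms of $\nabla V$, $\nabla_x\tau_{\theta}$, $H_x\tau_{\theta}$, $\nabla_{\theta}\tau_{\theta}$, $\partial_{\theta_i}\partial_{x_l}\tau_{\theta}$ and the inverse Jacobian $(\nabla_x\tau_{\theta}(x))^{-1}$, all controlled by \cref{assumption:potential,assumption:parametrization,assumption:transport}; and since $\G_{k,\tau_{\theta}}$ is uniformly bounded by \cref{assumption:uniformly_bounded}, bounds on log-derivatives transfer to bounds on the derivatives of $\G_{k,\tau_{\theta}}$ themselves.

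\emph{Membership in $\mathcal{LC}_p(C')$.} The growth bound $\verts{Q_{k,\tau_{\theta}}[f](x)}\lesssim 1 + \Verts{x}^p$, uniform in $\theta$, is \cref{prop:growth}. For the Lipschitz part, differentiating $\log\G_{k,\tau_{\theta}}$ in $x$ gives $\nabla_x\log\G_{k,\tau_{\theta}}(x) = \nabla V_{k-1}(x) - \parens{\nabla_x\tau_{\theta}(x)}^{\top}\nabla V_k\parens{\tau_{\theta}(x)} + \nabla_x\log\verts{\nabla_x\tau_{\theta}(x)}$; the first two terms grow at most linearly in $x$ since $\nabla V$ has linear growth (\cref{assumption:potential}), $\tau_{\theta}$ has linear growth and $\nabla_x\tau_{\theta}$ is bounded (\cref{assumption:transport}), while the last term is a trace of a product of $(\nabla_x\tau_{\theta}(x))^{-1}$ (bounded by the singular-value lower bound in \cref{assumption:parametrization}) with mixed derivatives of $\tau_{\theta}$ (linear growth), hence also grows at most linearly. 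Since $\G_{k,\tau_{\theta}}$ is bounded, $\Verts{\nabla_x\G_{k,\tau_{\theta}}(x)}\lesssim 1 + \Verts{x}$, so by the fundamental theorem of calculus $\verts{\G_{k,\tau_{\theta}}(x) - \G_{k,\tau_{\theta}}(x')}\lesssim \parens{1 + \Verts{x} + \Verts{x'}}\Verts{x-x'}$ uniformly in $\theta$. Since $f\in\mathcal{LC}_p$ we have $K_k f\in\mathcal{LC}_p$ by \cref{assumption:kernel_2}, and composing with the $L$-Lipschitz, linearly growing map $\tau_{\theta}$ yields $\verts{(K_k f)(\tau_{\theta}(x)) - (K_k f)(\tau_{\theta}(x'))}\lesssim \parens{1 + \Verts{x}^{p+1} + \Verts{x'}^{p+1}}\Verts{x-x'}$ while $\verts{(K_k f)(\tau_{\theta}(x))}\lesssim 1+\Verts{x}^p$. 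The product rule for the two factors, together with convexity of power functions, then gives the $\mathcal{LC}_p(C')$ bound with constants independent of $\theta$.

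\emph{Lipschitz in $\theta$.} Since $\Theta$ is convex (\cref{assumption:transport}), for $\theta,\theta'\in\Theta$ the segment $\theta_t := (1-t)\theta + t\theta'$ stays in $\Theta$. Decompose $Q_{k,\tau_{\theta}}[f](x) - Q_{k,\tau_{\theta'}}[f](x) = \parens{\G_{k,\tau_{\theta}}(x) - \G_{k,\tau_{\theta'}}(x)}(K_k f)(\tau_{\theta}(x)) + \G_{k,\tau_{\theta'}}(x)\parens{(K_k f)(\tau_{\theta}(x)) - (K_k f)(\tau_{\theta'}(x))}$. For the first summand, $\partial_{\theta_i}\log\G_{k,\tau_{\theta}}(x) = -\nabla V_k\parens{\tau_{\theta}(x)}^{\top}\partial_{\theta_i}\tau_{\theta}(x) + \Tr\parens{(\nabla_x\tau_{\theta}(x))^{-1}\,\partial_{\theta_i}\nabla_x\tau_{\theta}(x)}$: the first term is $\lesssim (1+\Verts{x})(1+\Verts{x}) = 1+\Verts{x}^2$ by the linear growth of $\nabla V_k$ and of $\nabla_{\theta}\tau_{\theta}$, and the second is $\lesssim 1+\Verts{x}$ by boundedness of $(\nabla_x\tau_{\theta})^{-1}$ and linear growth of $\partial_{\theta_i}\partial_{x_l}\tau_{\theta}$, so $\Verts{\nabla_{\theta}\G_{k,\tau_{\theta}}(x)}\lesssim 1+\Verts{x}^2$ (again using boundedness of $\G$), and FTC along $\theta_t$ gives $\verts{\G_{k,\tau_{\theta}}(x) - \G_{k,\tau_{\theta'}}(x)}\lesssim (1+\Verts{x}^2)\Verts{\theta-\theta'}$; multiplying by $\verts{(K_k f)(\tau_{\theta}(x))}\lesssim 1+\Verts{x}^p$ bounds the first summand by $\lesssim (1+\Verts{x}^{p+2})\Verts{\theta-\theta'} = H_2(x)\Verts{\theta-\theta'}$. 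For the second summand, $\G_{k,\tau_{\theta'}}$ is bounded, $K_k f\in\mathcal{LC}_p$ gives $\verts{(K_k f)(\tau_{\theta}(x)) - (K_k f)(\tau_{\theta'}(x))}\lesssim (1+\Verts{x}^{p+1})\Verts{\tau_{\theta}(x) - \tau_{\theta'}(x)}$, and $\Verts{\tau_{\theta}(x) - \tau_{\theta'}(x)}\le \int_0^1\Verts{\nabla_{\theta}\tau_{\theta_t}(x)}\diff t\,\Verts{\theta-\theta'}\lesssim (1+\Verts{x})\Verts{\theta-\theta'}$ by the linear growth of $\nabla_{\theta}\tau_{\theta}$ (\cref{assumption:parametrization}); hence this summand is also $\lesssim (1+\Verts{x}^{p+2})\Verts{\theta-\theta'} = H_2(x)\Verts{\theta-\theta'}$. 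Summing the two bounds finishes the proof.

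\emph{Main obstacle.} The computations are routine except for tracking the growth orders of the derivatives of the Jacobian-determinant term $\log\verts{\nabla_x\tau_{\theta}(x)}$, whose $x$- and $\theta$-derivatives are traces of products involving the inverse Jacobian $(\nabla_x\tau_{\theta}(x))^{-1}$ and the mixed derivatives $H_x\tau_{\theta}$, $\partial_{\theta_i}\partial_{x_l}\tau_{\theta}$; the uniform lower bound on the singular values of $\nabla_x\tau_{\theta}$ in \cref{assumption:parametrization} is precisely what keeps $(\nabla_x\tau_{\theta})^{-1}$ bounded, and the uniform boundedness of $\G_{k,\tau_{\theta}}$ in \cref{assumption:uniformly_bounded} is what allows passing from log-derivative to derivative bounds. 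One must also keep all constants uniform in $\theta$ so that a single $C'$ works for $\mathcal{QG}\subset\mathcal{LC}_p(C')$, which holds because every bound invoked above is uniform in $\theta$.
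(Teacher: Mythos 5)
Your proof is correct and follows essentially the same route as the paper: the same factorization $Q_{k,\tau_\theta}[f]=\G_{k,\tau_\theta}\cdot(K_k f)\circ\tau_\theta$, the same two-term decomposition, the same log-derivative bounds in $\theta$ (with the singular-value lower bound controlling the inverse-Jacobian trace and \cref{assumption:uniformly_bounded} converting log-derivative bounds into derivative bounds), and the same FTC argument along the segment in the convex set $\Theta$, yielding the $H_2(x)\Verts{\theta-\theta'}$ bound with identical growth bookkeeping. You additionally spell out the $x$-Lipschitz part of the $\mathcal{LC}_p(C')$ membership, which the paper asserts without detailing; your verification of it is consistent with the paper's method.
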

\begin{proof}
	Fix $1\leq  k\leq K$. We first start by controlling $G_{k,\tau_{\theta}}$. For conciseness, we write $G_{k,\tau_{\theta}} = e^{h_{\theta}(x)}$ with  $h^{\theta}(x)$ given by
	\begin{align}
		h_{\theta}(x) = \nabla V_{k}(\tau_{\theta}(x)) -  \log\verts{\nabla_x \tau_{\theta}(x)}. 
	\end{align}
	The function $\theta\mapsto h_{\theta}(x)$ is differentiable by \cref{assumption:potential,assumption:parametrization} and its gradient is given by:
	\begin{align}
		\nabla_{\theta}h_{\theta}(x) = \nabla_x V_k\parens{\tau_{\theta}(x)}\nabla_{\theta} \tau_{\theta}(x) - Tr\parens{ \parens{\nabla_x \tau_{\theta}(x)}^{-1}\nabla_{\theta}\nabla_x \tau_{\theta}(x)}.
	\end{align}
	Moreover, by \cref{assumption:parametrization}, we also know that the singular values of $\nabla_x \tau_{\theta}(x)$ are lower-bounded by a positive constant $c$ uniformly in $x$ and $\theta$. Hence $\verts{Tr\parens{ \parens{\nabla_x \tau_{\theta}(x)}^{-1}\nabla_{\theta}\nabla_x \tau_{\theta}(x)}}\leq c^{-1}\Verts{\nabla_{\theta}\nabla_x \tau_{\theta}(x)}$. Moreover, again by  \cref{assumption:potential,assumption:transport,assumption:parametrization}  $\nabla_x V_k(x)$, $\tau_{\theta}(x)$,  $\nabla_{\theta} \tau_{\theta}(x)$, $\nabla_{\theta}\nabla_x \tau_{\theta}(x)$ all have a linear growth in $x$ uniformly in $\theta$. Hence, we deduce that $\nabla_{\theta}h_{\theta}(x)$ has at most a quadratic growth in $x$. In addition,  $G_{k,\tau_{\theta}}$ is uniformly bounded in $\theta$ and $x$ by \cref{assumption:uniformly_bounded}, therefore, we deduce that  
	\begin{align}
		\Verts{\nabla_{\theta}G_{k,\tau{\theta}}(x)} \leq G_{k,\tau{\theta}}(x)\Verts{\nabla_{\theta}h_{\theta}(x)}\leq   C\parens{1 + \Verts{x}^2}.
	\end{align}
	For $\theta, \theta'\in \Theta$ and setting $\theta_t = t\theta + (1-t)\theta'  $ , we use the fundamental theorem of calculus to write:
	\begin{align}\label{eq:lip_weights}
		\verts{G_{k,\tau_{\theta}}(x) - G_{k,\tau_{\theta'}}(x)   }\leq \Verts{\theta - \theta'}\int_0^1 \Verts{\nabla_{\theta}G_{k,\tau_{\theta_t}}(x)}\diff t \lesssim \Verts{\theta - \theta'}\parens{1+\Verts{x}^2}.
	\end{align}
	
	For the second part of the proof, we simply use the following decomposition:
	\begin{align}\label{eq:main_decomposition_Q}
		Q_{k,\tau_{\theta}}\brakets{f}(x) - Q_{k,\tau_{\theta'}}\brakets{f}(x) =  \parens{G_{k,\tau_{\theta}}(x) - G_{k,\tau_{\theta'}}(x)} K_k\brakets{f}(x) +  G_{k,\tau_{\theta}}\parens{K_k\brakets{f}(\tau_{\theta}) -K_k\brakets{f}(\tau_{\theta'})   }.    
	\end{align}
	For the first term in the r.h.s. of \cref{eq:main_decomposition_Q}, we use \cref{eq:lip_weights} and that, by \cref{assumption:kernel},   $K_k\brakets{f}(x)$ belongs to $\mathcal{C}_{p}(C')$ for some constant $C'$ independent of $f$ in $\mathcal{LC}_{p}(C)$. Hence, we deduce that:
	\begin{align}\label{eq:bound_Q_1}
		\verts{\parens{G_{k,\tau_{\theta}}(x) - G_{k,\tau_{\theta'}}(x)} K_k\brakets{f}(x)}\lesssim \Verts{\theta-\theta'}\parens{1 +  \Verts{x}^{p+2} }.
	\end{align}
	For the second term in the r.h.s. of \cref{eq:main_decomposition_Q}, we know by \cref{assumption:kernel_2} that $K_k\brakets{f}(x)$ belongs to $\mathcal{LC}_{p}(C')$ for some constant $C'$ independent of $f$ in $\mathcal{LC}_{p}(C)$. Since $G_{k,\tau_{\theta}}(x)$ is bounded uniformly in $x$ and $\theta$, we get
	\begin{align}
		\verts{G_{k,\tau_{\theta}}\parens{K_k\brakets{f}(\tau_{\theta}) -K_k\brakets{f}(\tau_{\theta'})   }}\lesssim \parens{ 1 +  \Verts{\tau_{\theta}(x)}^{p+1} + \Verts{\tau_{\theta'}(x)}^{p+1} }\Verts{\tau_{\theta}(x) -  \tau_{\theta'}(x) }.
	\end{align}
	Moreover, we know that $\tau_{\theta}(x)$ has at most a linear growth in $x$ uniformly in $\theta$
	and that 
	\begin{align}
		\Verts{\tau_{\theta}- \tau_{\theta'} }\leq  \Verts{\theta'-\theta}\int_0^1 \Verts{\nabla_{\theta}\tau_{\theta_t}(x)}\diff \lesssim \Verts{\theta'-\theta} \Verts{1 + \Verts{x} }.
	\end{align}
	We can therefore deduce that: 
	\begin{align}\label{eq:bound_Q_2}
		\verts{G_{k,\tau_{\theta}}\parens{K_k\brakets{f}(\tau_{\theta}) -K_k\brakets{f}(\tau_{\theta'})   }}\lesssim \Verts{\theta-\theta'}\parens{1 + \Verts{x}^{p+2}}.
	\end{align}
	Combining \cref{eq:bound_Q_1,eq:bound_Q_2} yields the desired result.
\end{proof}

\begin{prop}\label{prop:Continuity_Q}	
	Under \cref{assumption:uniformly_bounded} for any $f\in L_2(\pi_k)$ we have 
	\begin{align}
		\Verts{Q_{k,T}\brakets{f}}_{L_2\parens{\pi_{k-1}}}\leq C \Verts{f}_{L_2\parens{\pi_k}}
	\end{align}
	with $C$ being independent of $f$.
\end{prop}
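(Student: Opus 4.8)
The plan is to expand the squared $L_2(\pi_{k-1})$-norm, peel off one copy of the uniformly bounded weight $G_{k,T}$, apply Jensen's inequality to the Markov kernel $K_k$, and then invoke \cref{prop:independence_from_T} to evaluate the resulting integral exactly in closed form.

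Concretely, I would start from
\begin{align}
\Verts{Q_{k,T}\brakets{f}}_{L_2(\pi_{k-1})}^2 = \pi_{k-1}\brakets{\parens{G_{k,T}(x) K_k\brakets{f}(T(x))}^2}.
\end{align}
By \cref{assumption:uniformly_bounded} there is a constant $M$ with $G_{k,T}(x)\leq M$ uniformly in $x$ and $T$, so $\parens{G_{k,T}(x)}^2\leq M\, G_{k,T}(x)$. Since $K_k$ is a Markov kernel, Jensen's inequality applied pointwise gives $\parens{K_k\brakets{f}(z)}^2\leq K_k\brakets{f^2}(z)$ for every $z$. Combining the two bounds yields
\begin{align}
\Verts{Q_{k,T}\brakets{f}}_{L_2(\pi_{k-1})}^2 \leq M\, \pi_{k-1}\brakets{G_{k,T}(x) K_k\brakets{f^2}(T(x))} = M\, \pi_{k-1}\brakets{Q_{k,T}\brakets{f^2}}.
\end{align}

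Now $f\in L_2(\pi_k)$ means $\pi_k[f^2]=\Verts{f}_{L_2(\pi_k)}^2<\infty$, and $\pi_k$-invariance of $K_k$ gives $\pi_k\brakets{K_k[f^2]}=\pi_k[f^2]<\infty$, so $K_k[f^2]$ is finite $\pi_k$-a.e.; hence \cref{prop:independence_from_T} applies with the function $f^2$ and gives $\pi_{k-1}\brakets{Q_{k,T}\brakets{f^2}}=\frac{Z_k}{Z_{k-1}}\pi_k\brakets{f^2}$. Therefore
\begin{align}
\Verts{Q_{k,T}\brakets{f}}_{L_2(\pi_{k-1})}^2 \leq M\,\frac{Z_k}{Z_{k-1}}\Verts{f}_{L_2(\pi_k)}^2,
\end{align}
which is the claim with $C=\sqrt{M Z_k/Z_{k-1}}$, a constant independent of $f$ and of $T\in\mathcal{T}$.

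There is no serious obstacle here; the only point needing a moment's care is checking the integrability hypotheses of \cref{prop:independence_from_T} for the function $f^2$, which is handled by the $\pi_k$-invariance of $K_k$ as above (if desired, one can first carry out the argument for bounded $f$ and pass to the limit by monotone convergence). Note that, since the final constant does not depend on $T$, the same computation shows that the operators in $\mathcal{QG}$ map $L_2(\pi_k)$ into $L_2(\pi_{k-1})$ with a norm bounded uniformly over $T$, which is exactly what is used in the bracketing-number estimate of \cref{prop:bracketing_entopy}.
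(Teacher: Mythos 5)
Your proof is correct and follows essentially the same route as the paper's: bound the squared weight using \cref{assumption:uniformly_bounded}, apply Jensen/Cauchy--Schwarz to get $(K_k[f])^2 \leq K_k[f^2]$, and use the change of variables together with $\pi_k$-invariance to evaluate the remaining integral — you merely package that last step as an appeal to \cref{prop:independence_from_T} applied to $f^2$, whereas the paper carries out the change of variables $y=T(x)$ inline before bounding the leftover weight. The resulting constant $C=\sqrt{M Z_k/Z_{k-1}}$ matches the paper's bound (the paper's stated $C$ is really the constant for the squared norms), so nothing further is needed.
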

\begin{proof}
	Let $f$ be a function in $L_2(\pi_{k})$, we can directly write:
	\begin{align}
		\pi_{k-1}\brakets{\parens{Q_{k,T}\brakets{f}}^2} 
		&= \int \pi_{k-1}(x)G^{2}_{k,T}(x)\parens{K_k\brakets{f}\parens{T(x)}}^2 \diff x \\
		&= \frac{Z_k}{Z_{k-1}}\int \pi_k(y)G_{k,T}\parens{T^{-1}(y)}\parens{K_k\brakets{f}(y)}^2\diff x\\
		&\leq M\frac{Z_k}{Z_{k-1}}\int \pi_k(y)\parens{K_k\brakets{f}(y)}^2\diff x\\
		&\leq M\frac{Z_k}{Z_{k-1}}\int \pi_k(y)K_k\brakets{f^2}(y)\\
		&=  M\frac{Z_k}{Z_{k-1}} \pi_{k}\brakets{f^2}.
	\end{align}
	The second line is using the change of variables $y = T(x)$ and the expression of the importance weight $G_{k,T}$. The third line follows by \cref{assumption:uniformly_bounded} stating the weights $G_{k,T}(x)$ are bounded by a positive constant $M>0$ independently from $x\in \mathcal{X}$ and $T\in\mathcal{T}$. The fourth line follows by application of Cauchy--Schwarz while the last line is a consequence of the Markov kernel $K_k$ being invariant under $\pi_k$. Choosing $C =  M\frac{Z_k}{Z_{k-1}} $ gives the desired result.
\end{proof}

\subsection{Asymptotic stochastic equi-continuity}

\paragraph{Definition of asymptotic stochastic equicontinuity} 
For a positive scalar $\delta>0$ we denote by $\mathcal{G}_k(\delta)$ the intersection of  $\mathcal{G}$ with the $L_2(\pi_k)$-ball of radius $\delta$
	\begin{align}
		\mathcal{G}_k(\delta) := \braces{ f\in \mathcal{G}\middle|  \Verts{f}_{L_2(\pi_k)}\leq \delta }.
	\end{align}
Consider a stochastic processes $X$ indexed by $\mathcal{G}$, we define the uniform semi-norm over the set $\mathcal{G}_k(\delta)$ to be
\begin{align}
	\Verts{X}_{\mathcal{G}_k(\delta)} := \sup_{f\in \mathcal{G}_k(\delta)} \verts{X(f)}.
\end{align}
We will always be in the setting where $\Verts{X}_{\mathcal{G}_k(\delta)}$ is a random variable (i.e. measurable). 
Let now $X^N$ be a sequence of stochastic processes that are linear in their index ( i.e. $ f\mapsto X^N\brakets{f} $ is a linear map). In this case the sequence $X^N$ is said to be asymptotically stochastically equicontinuous if for any sequence $\delta^N$ of positive real numbers converging to $0$, it holds that $\Verts{X^N}_{\mathcal{G}_k(\delta^N)}$ converges to $0$ in probability. 

We are now ready to state the main result of this section establishing asymptotic stochastic equicontinuity of the sequence of processes $E_k^N$.

\begin{prop}[Asymptotic stochastic equicontinuity]\label{prop:asymptotic_equi_continuity}
Set $p=2$. Under \cref{assumption:kernel,assumption:kernel_2,assumption:uniformly_bounded,assumption:moments,assumption:potential,assumption:parametrization,assumption:transport}, and for any positive $C>0$ and any subclass $\mathcal{G}$ of  $\mathcal{LC}_p(C)$ with finite bracketing number $\mathcal{N}_{[]}(\delta,\mathcal{G}, L_{2}(\pi_k) )$ and finite locally uniform entropy \cref{eq:bracketing_entropy} $\mathcal{J}(\delta, \mathcal{G}, r)$ satisfying  $\sup_{\delta>0}\mathcal{J}\parens{\delta, \mathcal{G}, r }\lesssim r$, the process $f\mapsto \parens{E_k^N\brakets{f}}$ is asymptotically stochastically equicontinuous. In other words, for any sequence $\delta^N$ of positive real numbers converging to $0$,  $\Verts{E_k^N}_{\mathcal{G}_k(\delta^N)}$ converges to $0$ in probability.
\end{prop}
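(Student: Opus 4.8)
The plan is to prove the claim by induction on $k$, in parallel with the proof of \cref{thm:CLT}, at each step splitting the rescaled fluctuation $E_k^N(f)$ into a conditionally i.i.d.\ \emph{sampling} part and a \emph{propagation} part and showing each is asymptotically stochastically equicontinuous over $\mathcal{G}$. Fix $p=2$ and a sequence $\delta^N\downarrow 0$. For the base case $k=0$ the particles $X_0^i$ are i.i.d.\ from $\pi_0$, so $E_0^N$ is a classical empirical process: I would invoke the usual chaining maximal inequality of Pollard / van der Vaart--Wellner type, phrased --- as announced just before the statement --- with the \emph{locally} uniform covering number $\mathcal{N}^{LU}$ of \cref{eq:unif_covering_number} in place of the (here infinite) uniform one, together with the fact that $\tfrac1N\sum_i H_2^2(X_0^i)=O_{\mathbb P}(1)$ under \cref{assumption:moments}, so that the entropy bound $\mathcal{J}(\,\cdot\,,\mathcal{G},r)$ is applied at a radius $r$ that is bounded in probability. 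The induction hypothesis is that $E_{k-1}^N$ is asymptotically stochastically equicontinuous over \emph{every} class of functions meeting the hypotheses of the proposition.

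\textbf{Propagation term.} Write $E_k^N(f)=A_N(f)+B_N(f)$ with $B_N(f):=\sqrt N\,(\mathbb E_{k-1}[\pi_k^N(f)]-\pi_k(f))$. Using $\mathbb E_{k-1}[\pi_k^N(f)]=\pi_{k-1}^N(Q_{k,T_k}[f])/\pi_{k-1}^N(\G_{k,T_k})$, the identities $\pi_{k-1}(Q_{k,T}[f])=(Z_k/Z_{k-1})\pi_k(f)$ and $\pi_{k-1}(\G_{k,T})=Z_k/Z_{k-1}$ of \cref{prop:independence_from_T}, and $\pi_{k-1}^N(\G_{k,T_k})\xrightarrow{P}Z_k/Z_{k-1}$ from \cref{thm:WLLN}, a short computation rewrites $B_N(f)$ as a factor bounded in probability times $E_{k-1}^N$ evaluated at $Q_{k,T_k}[f]$ and at $\G_{k,T_k}$. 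By \cref{prop:bracketing_entopy} the class $\mathcal{QG}=\{\,Q_{k,\tau_\theta}[f]:\theta\in\Theta,\ f\in\mathcal{G}\,\}$ again has a finite bracketing number and finite locally uniform entropy with linear growth, hence is covered by the induction hypothesis; and by \cref{prop:Continuity_Q} the map $f\mapsto Q_{k,T}[f]$ sends the $L_2(\pi_k)$-ball of radius $\delta^N$ into the $L_2(\pi_{k-1})$-ball of radius $C\delta^N$, uniformly over $T\in\mathcal{T}$. Therefore $\sup_{f\in\mathcal{G}_k(\delta^N)}\verts{E_{k-1}^N(Q_{k,T_k}[f])}$ is dominated by the semi-norm of $E_{k-1}^N$ over $\mathcal{QG}$ localized at $L_2(\pi_{k-1})$-radius $C\delta^N$, which tends to $0$ in probability, and likewise for the $\G_{k,T_k}$ contribution; hence $\Verts{B_N}_{\mathcal{G}_k(\delta^N)}\xrightarrow{P}0$.

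\textbf{Sampling term.} Conditionally on $\mathcal F_{k-1}^N$ the particles $X_k^i$ are i.i.d.\ from $\nu^N(\diff x)=\sum_i\frac{\G_{k,T_k}(X_{k-1}^i)}{\sum_j\G_{k,T_k}(X_{k-1}^j)}\,K_k(T_k(X_{k-1}^i),\diff x)$, and $A_N(f)=N^{-1/2}\sum_i(f(X_k^i)-\nu^N(f))$ is the associated conditional empirical process. I would run the chaining maximal inequality conditionally on $\mathcal F_{k-1}^N$: up to a truncation remainder, $\mathbb E_{k-1}$ of the semi-norm of $A_N$ over the $L_2(\nu^N)$-ball of radius $\rho$ is controlled by $\mathcal{J}(c\rho,\mathcal{G},r_N)$ with $r_N:=(\tfrac1N\sum_i H_2^2(X_k^i))^{1/2}$, the remainder being absorbed by \cref{assumption:moments} (so that $\nu^N(F^2\mathds{1}\{F>\tau\})\to0$ uniformly as $\tau\to\infty$, $F$ the envelope of $\mathcal{G}$). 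Two facts then close the argument: (i) $r_N=O_{\mathbb P}(1)$, a moment bound on the particle system obtained along the lines of \cref{thm:WLLN} via \cref{prop:growth}; and (ii) $\sup_{f\in\mathcal{G}}\verts{\nu^N(f^2)-\pi_k(f^2)}\xrightarrow{P}0$, obtained by a uniform law of large numbers (\cref{prop:general_Glivenko_cantelli}, \cref{prop:Glivenko_cantelli_braketing}) applied to $\{\,Q_{k,\tau_\theta}[f^2]\,\}$, whose finite bracketing number follows from \cref{prop:bracketing_entopy}. By (ii), for a fixed $\eta>0$ one has $\mathcal{G}_k(\delta^N)\subseteq\{\,g:\Verts{g}_{L_2(\nu^N)}\le2\eta\,\}$ on events of probability tending to $1$ once $\delta^N<\eta$; bounding $\Verts{A_N}_{\mathcal{G}_k(\delta^N)}$ there by the maximal inequality, then letting $N\to\infty$ and afterwards $\eta\downarrow0$, and using $\mathcal{J}(2\eta,\mathcal{G},r)\to0$ as $\eta\downarrow0$ for fixed $r$ (the integrand is integrable because $\sup_\delta\mathcal{J}(\delta,\mathcal{G},r)\lesssim r<\infty$ by hypothesis and \cref{prop:locally_uniform_entropy}), I get $\Verts{A_N}_{\mathcal{G}_k(\delta^N)}\xrightarrow{P}0$.

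Combining the two parts gives $\Verts{E_k^N}_{\mathcal{G}_k(\delta^N)}\le\Verts{A_N}_{\mathcal{G}_k(\delta^N)}+\Verts{B_N}_{\mathcal{G}_k(\delta^N)}\xrightarrow{P}0$, which completes the induction. The main obstacle I anticipate is the sampling term: the chaining argument has to be carried out \emph{conditionally} on $\mathcal F_{k-1}^N$ with an \emph{unbounded} envelope and a \emph{random} reference measure $\nu^N$ --- which is exactly why one must use the local covering numbers $\mathcal{N}^{LU}$ and control the truncation remainder through moments --- and then the $L_2(\pi_k)$-localization defining $\mathcal{G}_k(\delta^N)$ has to be translated into an $L_2(\nu^N)$-localization uniformly over $\mathcal{G}$ before the entropy bound applies; the customary measurability issues for suprema of the empirical process are dealt with by passing to outer expectations.
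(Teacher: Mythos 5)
Your proposal follows essentially the same route as the paper's proof: induction on $k$, the same decomposition of $E_k^N$ into the conditional sampling fluctuation and the propagation term $\sqrt{N}\parens{\mathbb{E}_{k-1}\brakets{\pi_k^N[f]}-\pi_k[f]}$, with the propagation term handled by the induction hypothesis applied to the class $\mathcal{QG}$ via \cref{prop:bracketing_entopy} and \cref{prop:Continuity_Q} together with the WLLN for the normalizing factor, and the sampling term handled by conditional symmetrization/chaining with the locally uniform entropy at a random radius controlled through \cref{assumption:moments} and a Glivenko--Cantelli localization (\cref{prop:Glivenko_cantelli_braketing}), exactly as in \cref{prop:inedependec_maximam_ineq}. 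The only divergence is cosmetic: the paper applies $Q_{k,T_k}$ to the centered function $f-\pi_k[f]$ so the whole propagation term stays inside the localized class $\mathcal{QG}_{k-1}(C\delta^N)$, whereas your split into the $Q_{k,T_k}[f]$ and $\G_{k,T_k}$ pieces additionally needs $\pi_k[f]\,E_{k-1}^N\brakets{\G_{k,T_k}}\xrightarrow{P}0$, which should be obtained from the same empirical-process machinery (e.g.\ by absorbing $\pi_k[f]\,\G_{k,T_k}=Q_{k,T_k}\brakets{\pi_k[f]}$ back into the localized $\mathcal{QG}$) rather than from the CLT, since the CLT is proved downstream of this proposition.
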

\begin{proof}
	We proceed by induction over $k$. The result holds by \cref{prop:inedependec_maximam_ineq} for $k=0$. By induction, we assume the property holds for the process $E^N_{k-1}$.  We then consider a function class $\mathcal{G}$ satisfying the conditions of the result. We will show that $\parens{E_{k}^N\brakets{f}}_{\mathcal{G}}$ is asymptotically stochastically equicontinuous. 
	
	To achieve this, we fix a sequence $\delta^N$ converging to $0$ and consider a function $f\in \mathcal{G}_k(\delta^N)$. We then use the following decomposition of $E_k^N\brakets{f}$
	\begin{align}
		E_k^N\brakets{f} = \underbrace{E_k^N\brakets{f}-\mathbb{E}_{k-1}\brakets{E_k^N\brakets{f}}}_{R_k^{N}\brakets{f}} + \underbrace{\mathbb{E}_{k-1}\brakets{E_k^N\brakets{f}}}_{P_k^N\brakets{f}}.
	\end{align}
	Taking the supremum over $\mathcal{G}_k(\delta^N)$ on both sides of the above inequality yields
	\begin{align}
		\Verts{E_k^N}_{\mathcal{G}(\delta^N)} \leq \Verts{R_k^N}_{\mathcal{G}_k(\delta^N)} + \Verts{P_k^N}_{\mathcal{G}_k(\delta^N)}.
	\end{align}
	By \cref{prop:inedependec_maximam_ineq} we have that $\Verts{R_k^N}_{\mathcal{G}_k(\delta^N)}\xrightarrow[N\rightarrow +\infty]{P} 0$. Thus, we only need to prove that $\Verts{P_k^N}_{\mathcal{G}_k(\delta^N)}\xrightarrow[N\rightarrow +\infty]{P} 0$. By direct computation, we have:
	\begin{align}
		P_k^N\brakets{f} = \frac{\gamma_{k-1}^N\brakets{1}}{\gamma_{k}^N\brakets{1}} E_{k-1}^N\brakets{Q_{k,T_k}\brakets{f-\pi_{k}\brakets{f}}}.
	\end{align}
	Now let us call $\mathcal{QG}$ be the set of functions of the form $Q_{k,T}\brakets{f}$ for some $f\in \mathcal{G}$ and $T\in \mathcal{T}$. \cref{prop:Continuity_Q} ensures also that $\pi_{k-1}\brakets{Q_{k,T_k}\brakets{f}}\leq C\pi_{k}\brakets{f}\leq C\delta^N$, hence $Q_{k,T_k}\brakets{f} \in \mathcal{QG}_{k-1}(C\delta^N)$. Therefore, we have shown that
	\begin{align}
		\Verts{P_k^N}_{\mathcal{QG}_k(\delta^N)}\leq  \frac{\gamma_{k-1}^N\brakets{1}}{\gamma_{k}^N\brakets{1}} \Verts{E_{k-1}^N}_{\mathcal{QG}_{k-1}(C\delta^N)}.
	\end{align}
	Finally, by \cref{prop:bracketing_entopy}, we know that $\mathcal{QG}$ belongs to $\mathcal{LC}_p(C')$ for some positive $C'$,  has a finite bracketing number $\mathcal{N}(\epsilon, \mathcal{QG}, L_2(\pi_{k-1}))$ and a finite locally covering entropy $\mathcal{J}(\delta,\mathcal{QG},r)$  for any $\delta>0$ and $r>1$ satisfying $\mathcal{J}(\delta,\mathcal{QG},r)\lesssim r $. Therefore, by the recursion assumption, we know that   $\Verts{E_{k-1}^N}_{\mathcal{QG}_{k-1}(C\delta^N)}\xrightarrow[N\rightarrow +\infty]{P} 0$. Moreover, recalling that $\gamma_{k}^N\brakets{1}\xrightarrow[N\rightarrow +\infty]{P} Z_{k-1}  $ for any $0\leq k\leq K$ by \cref{thm:WLLN}, we can directly conclude that $\Verts{P_k^N}_{\mathcal{G}_k(\delta^N)}\xrightarrow[N\rightarrow +\infty]{P} 0$.
\end{proof}

\begin{prop}\label{prop:inedependec_maximam_ineq}
Let $\mathcal{G}$ be the class defined in \cref{prop:asymptotic_equi_continuity} and denote by $R^N_k$ to be the process index by $\mathcal{G}$ and defined by:
	\begin{align}
		R_k\brakets{f} =  \sqrt{N}\parens{\pi_{k}^N\brakets{f} - \mathbb{E}_{k-1}\brakets{ \pi_{k}^{N}\brakets{f}  } }
	\end{align}
	with the convention that for $k=0$, $\mathbb{E}_{-1}$ is the expectation over the samples $\braces{X_0^{(i)}}_{1:N}$ from $\pi_0$. 
	Under \cref{assumption:kernel,assumption:moments,assumption:uniformly_bounded,assumption:transport} and for any sequence $\delta^N$ converging to $0$ and any $ 0 \leq k\leq K$, it holds that
	\begin{align}
		\Verts{R_k^N}_{\mathcal{G}_k(\delta^N) }\xrightarrow[N\rightarrow +\infty]{P} 0.
	\end{align}
\end{prop}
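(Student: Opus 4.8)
The plan is to prove this by a conditional empirical-process argument. For $k=0$ there is no conditioning and $R_0^N[f]=\sqrt N(\pi_0^N[f]-\pi_0[f])$ is the ordinary empirical process of i.i.d.\ draws from $\pi_0$ indexed by $\mathcal{G}$; asymptotic stochastic equicontinuity over the shrinking balls $\mathcal{G}_0(\delta^N)$ is then the classical localized maximal inequality (symmetrization and chaining, as in \citep{Van-der-Vaart:2000}), valid here because $\mathcal{G}$ has a square-$\pi_0$-integrable polynomial envelope $F$ (by \cref{assumption:moments}) and finite entropy $\mathcal{J}(\delta,\mathcal{G},r)$ continuous at $0$. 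For $k\geq1$ I would condition on $\mathcal{F}^N_{k-1}$, which contains both the particles $X^i_{k-1}$ and the flow $T_k$. Whether or not resampling is triggered at time $k$, the $X^i_k$ are conditionally independent and $\pi_k^N[f]-\mathbb{E}_{k-1}[\pi_k^N[f]]=\sum_{i=1}^N W_k^i\bigl(f(X_k^i)-K_k[f](T_k(X^i_{k-1}))\bigr)$ is a sum of conditionally centred, conditionally independent terms whose $\mathcal{F}^N_{k-1}$-measurable normalised weights are uniformly $O(1/N)$ by \cref{assumption:uniformly_bounded}; in the resampling case $W_k^i=1/N$ and the $X_k^i$ are moreover i.i.d.\ from a random mixture $\mu^N$, so $R_k^N$ is literally the empirical process of $N$ draws from $\mu^N$.

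The core step is a conditional localized maximal inequality expressed through the \emph{locally uniform} entropy: on the event $\{\mu^N\in S_R\}$ one should get a bound of the form
\[
\mathbb{E}_{k-1}\brakets{\Verts{R_k^N}_{\mathcal{G}_k(\delta)}}\ \lesssim\ \mathcal{J}\parens{\rho^N,\mathcal{G},c_4R}\ +\ \frac{\mathcal{J}(c_4R,\mathcal{G},c_4R)^2}{\sqrt N\,(\rho^N)^2},\qquad \rho^N:=\sup_{f\in\mathcal{G}_k(\delta)}\Verts{f}_{L_2(\mu^N)},
\]
which is the adaptation of the uniform-entropy bound of \citep[Lemma~9.6.1]{del2004feynman} to our unbounded, polynomially growing envelope: restricting the relevant measures to $S_R$ is exactly what lets the chaining depend only on $\mathcal{N}^{LU}(\cdot,\mathcal{G},R)$ rather than on a (here infinite) uniform covering number. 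Two facts make this usable, both routine consequences of \cref{thm:WLLN} under \cref{assumption:kernel,assumption:transport,assumption:moments,assumption:uniformly_bounded}. First, $\mathbb{P}(\mu^N\in S_R)\to1$ for $R$ large: since $\mu^N[H_2^2]\leq M\,\pi_{k-1}^N[K_k[H_2^2]\circ T_k]/\pi_{k-1}^N[\G_{k,T_k}]$, a short moment-propagation induction in the style of \cref{thm:WLLN} gives $\tfrac1N\sum_i\Verts{X^i_{k-1}}^8=O_{\mathbb{P}}(1)$, which together with the linear growth of $T_k$ (\cref{assumption:transport}), growth-preservation of $K_k$ (\cref{assumption:kernel}) and $\pi_{k-1}^N[\G_{k,T_k}]\xrightarrow{P}Z_k/Z_{k-1}>0$ yields $\mu^N[H_2^2]=O_{\mathbb{P}}(1)$. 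Second, $\rho^N\xrightarrow{P}0$: writing $\mu^N[f^2]=\pi_{k-1}^N[Q_{k,T_k}[f^2]]/\pi_{k-1}^N[\G_{k,T_k}]$, using $\pi_{k-1}[Q_{k,T}[f^2]]=(Z_k/Z_{k-1})\pi_k[f^2]$ (\cref{prop:independence_from_T}) and the uniform consistency of \cref{prop:general_Glivenko_cantelli} applied to $\{Q_{k,\tau_\theta}[g]:g\in\mathcal{G}^2,\theta\in\Theta\}$ — a class with finite entropy by \cref{prop:bracketing_entopy}, since $\mathcal{G}^2:=\{f^2:f\in\mathcal{G}\}\subset\mathcal{LC}_4$ inherits a finite-entropy bound via $f^2-g^2=(f+g)(f-g)$ — one gets $\sup_{f\in\mathcal{G}}\verts{\mu^N[f^2]-\pi_k[f^2]}\xrightarrow{P}0$, hence $\rho^N\leq(\delta^2+o_{\mathbb{P}}(1))^{1/2}$ on $\mathcal{G}_k(\delta)$.

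To conclude I would use the usual device of fixing $\delta>0$ first and exploiting $\mathcal{G}_k(\delta^N)\subset\mathcal{G}_k(\delta)$ once $\delta^N\leq\delta$, so that it suffices to bound $\limsup_N\mathbb{P}(\Verts{R_k^N}_{\mathcal{G}_k(\delta)}>\eta)$ and then let $\delta\downarrow0$. For fixed $\delta$ the discretization term in the display vanishes as $N\to\infty$ (on $\{\mu^N\in S_R\}$, $\rho^N$ is bounded away from $0$ up to an $o_{\mathbb{P}}(1)$), while $\mathcal{J}(\rho^N,\mathcal{G},c_4R)\leq\mathcal{J}(\sqrt{\delta^2+o_{\mathbb{P}}(1)},\mathcal{G},c_4R)\leq\psi(\delta)+o_{\mathbb{P}}(1)$ with $\psi(\delta)\to0$ by continuity of $\mathcal{J}(\cdot,\mathcal{G},c_4R)$ at $0$ and \cref{prop:locally_uniform_entropy}; a conditional Markov inequality, the tower property, and $\mathbb{P}(\mu^N\notin S_R)\to0$ then give $\limsup_N\mathbb{P}(\Verts{R_k^N}_{\mathcal{G}_k(\delta)}>\eta)\leq\psi(\delta)/\eta$, and $\delta\downarrow0$ finishes.

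The hard part will be the precise statement and proof of the conditional localized maximal inequality with the \emph{locally uniform} entropy and a \emph{random} reference measure $\mu^N$: one has to carry out the chaining uniformly over the shrinking class $\mathcal{G}_k(\delta)$ while only knowing $\mu^N\in S_R$, and reconcile the $L_2(\pi_k)$-geometry that defines $\mathcal{G}_k(\delta)$ with the $L_2(\mu^N)$-geometry that the empirical-process chaining actually controls. The auxiliary moment-propagation bound $\tfrac1N\sum_i\Verts{X^i_{k-1}}^8=O_{\mathbb{P}}(1)$ is the only other ingredient not already packaged in the earlier propositions, and it is a short induction of exactly the same flavour as \cref{thm:WLLN}.
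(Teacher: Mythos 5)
You have the right skeleton (condition on $\mathcal{F}_{k-1}^N$, exploit conditional i.i.d.-ness, chain with the locally uniform entropy on a good event, drive the radius to zero via the law of large numbers, finish by Markov), but the crux of the proof is exactly the step you leave open, and the form in which you leave it would be harder to close than necessary. Your proposed ``conditional localized maximal inequality'' is stated with the $L_2(\mu^N)$-geometry of the conditional sampling law $\mu^N$ and an unjustified correction term $\mathcal{J}(c_4R,\mathcal{G},c_4R)^2/(\sqrt{N}(\rho^N)^2)$; you then acknowledge that reconciling this with the $L_2(\pi_k)$-balls defining $\mathcal{G}_k(\delta^N)$ is ``the hard part.'' The paper's proof dissolves this difficulty by choosing the reference measure differently: after symmetrizing conditionally on $\mathcal{F}_{k-1}^N$ (legitimate because the particles are conditionally i.i.d.), it conditions further on the drawn particles $\{X_k^i\}$, so Hoeffding's inequality makes the Rademacher process sub-Gaussian with respect to $L_2(\pi_k^N)$ — the \emph{empirical} measure of the drawn particles, not $\mu^N$ — and the sub-Gaussian maximal inequality immediately yields the entropy integral $\mathcal{J}(\psi^N,\mathcal{G},r^N)$ with empirical radius $\psi^N=\sup_{f\in\mathcal{G}_k(\delta^N)}\pi_k^N[f^2]$ and $(r^N)^2=\pi_k^N[H_2^2]$, so that the locally uniform covering number applies by the very definition of $S_{r^N}$, with no separate good-event argument needed at the chaining stage. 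The random quantities $\psi^N$ and $r^N$ are then controlled exactly by tools you already have: $\psi^N\leq(\delta^N)^2+\sup_{f\in\mathcal{G}_k(\delta^N)}|(\pi_k^N-\pi_k)[f^2]|\xrightarrow{P}0$ by \cref{prop:Glivenko_cantelli_braketing} applied to the squares, and the event $\{|\pi_k^N[H_2^2]-\pi_k[H_2^2]|\geq\epsilon\}$ is handled by the monotonicity of $\mathcal{J}$ in its last argument together with the linear-growth hypothesis $\sup_{\delta>0}\mathcal{J}(\delta,\mathcal{G},r)\lesssim r$ and Cauchy--Schwarz (\cref{prop:locally_uniform_entropy}), followed by dominated convergence and Markov. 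So the missing lemma you flag is not merely a technicality to be filled in later: as formulated (chaining against $\mu^N$) it is the wrong pivot, and without replacing it by the symmetrization-plus-empirical-measure route (or proving your conjectured inequality, which you do not attempt), the argument does not go through.

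Two smaller points. First, your claim that the normalized weights are ``uniformly $O(1/N)$ by \cref{assumption:uniformly_bounded}'' is not justified: that assumption bounds $G_{k,T}$ from above only, and the normalizer $\pi_{k-1}^N[G_{k,T_k}]$ is controlled only in probability; the paper avoids the issue entirely by proving the statement in the resampling-at-every-step regime (where $W_k^i=1/N$ and the particles are conditionally i.i.d. draws from the mixture), deferring adaptive resampling to the Douc--Moulines/Del Moral techniques cited elsewhere. Second, your auxiliary eighth-moment propagation and the Glivenko--Cantelli argument for $\{Q_{k,\tau_\theta}[f^2]\}$ are not needed in the paper's route, since only $\pi_k^N$-quantities (not $\mu^N$-quantities) have to be controlled.
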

\begin{proof}
Let  $ 0\leq k\leq K $. We use the convention that $\mathcal{F}_{-1}$ is the empty sigma algebra and that $\mathbb{E}_{-1}$ is the expectation over the samples ${X_0^{(i)}}_{1:N}$ from $\pi_0$. This will allow the same treatment for both $k=0$ and $k>0$.
Note that, $\mathbb{E}_{k-1}\brakets{R_k^N\brakets{f}} = 0$ for any $f\in \mathcal{G}$. Hence, the process is centered conditionally on $\mathcal{F}_{k-1}$. Moreover, the particles $\braces{X_{k}^{(i)}}_{1:N}$ defining $R^N_k$  are i.i.d. conditionally on $\mathcal{F}_{k-1}$. Therefore, we can use a symmetrization approach as in  \citep[Lemma 9.6.1.]{del2004feynman}.    
	Let $\braces{\epsilon^{i}}_{1:N}$ be i.i.d. Bernoulli variables with $\mathbb{P}(\epsilon^{i}=1) = \mathbb{P}(\epsilon^{i}=-1)=\frac{1}{2}$ independent from $\mathcal{F}_{k}$. Conditionally on $\mathcal{F}_{k-1}$ the samples $\braces{X_k^{i}}_{1:N}$ are i.i.d., hence, the symmetrization inequality in \citep[Theorem 3.14]{Sen:2018a} holds conditionally  on $\mathcal{F}_{k-1}$:
	\begin{align}\label{eq:symmetrization_ineq}
		\mathbb{E}_{k-1}\brakets{\Verts{ R^N_k}_{\mathcal{G}_k(\delta^N)}   }\leq 2\sqrt{N}\mathbb{E}_{k-1}\brakets{\Verts{ m_{\epsilon}^{N}}_{\mathcal{G}_k(\delta_N)}}
	\end{align}
	where $m^N_{\epsilon} =  \frac{1}{N} \sum_{i=1}^N \epsilon_i\delta_{X_k^{i}}$. Now by conditioning on $\mathcal{F}_k$, we can apply Hoeffding inequality  \citep[Lemma 3.11]{Sen:2018a} which implies that
	\begin{align}
		\mathbb{P}_{k}\brakets{\sqrt{N}\parens{m_{\epsilon}^{N}\brakets{f}- m_{\epsilon}^{N}\brakets{h}   }>s  } 
		\leq 2 e^{-\frac{s^2}{2 \pi_k^{N}\brakets{\parens{f-h}^2} } }.
	\end{align}
	Following the proof of \citep[Theorem 4.8]{Sen:2018a}, we can use the Maximal inequality for sub-Gaussian processes \citep[Theorem 4.5]{Sen:2018a}, which implies that:
	\begin{align}\label{eq:maximal_ineq}
		\sqrt{N} \mathbb{E}_{k}\brakets{\Verts{ m_{\epsilon}^{N}}_{\mathcal{G}_k(\delta^N)}  } \lesssim \int_0^{\psi^N}\sqrt{\log\parens{\mathcal{N}(r, \mathcal{G}_k(\delta^N), L_2(\pi_{k}^{N})    )}}\diff  r, 
	\end{align} 
	where $\psi^N:=   \sup_{f\in \mathcal{G}(\delta^N)}\pi_{k}^N\brakets{f^2}$. 	
	By definition of the locally uniform covering number in \cref{eq:unif_covering_number}, we have that
\begin{align}
	\mathcal{N}\parens{r, \mathcal{G}_k(\delta^N), L_2(\pi_{k}^{N}) }\leq \mathcal{N}^{LU}\parens{  r, \mathcal{G}, r^N},
\end{align}
with $(r^N)^2 = \pi_{k}^N\brakets{H_2^2} $. 
Hence,  we can upper-bound $\sqrt{N} \mathbb{E}_{k}\brakets{\Verts{ m_{\epsilon}^{N}}_{\mathcal{G}(\delta^N)}  }$ using the locally uniform entropy so that \cref{eq:maximal_ineq} becomes 
\begin{align}\label{eq:maximal_ineq_2}
	\sqrt{N} \mathbb{E}_{k}\brakets{\Verts{ m_{\epsilon}^{N}}_{\mathcal{G}_k(\delta^N)}  }\lesssim \mathcal{J}\parens{\psi^N, \mathcal{G}, r^N  }.
\end{align}
We can now take the expectation of \cref{eq:maximal_ineq_2} conditionally on $\mathcal{F}_{k-1}$ and combine the resulting inequality with \cref{eq:symmetrization_ineq} yielding
	\begin{align}
		\mathbb{E}_{k-1}\brakets{\Verts{ R^N_k}_{\mathcal{G}_k(\delta^N)}   }\lesssim \mathbb{E}_{k-1}\brakets{ \mathcal{J}\parens{\psi^N, \mathcal{G}, r^N  }}.
	\end{align}
We only need to prove that $\mathbb{E}_{k-1}\brakets{ \mathcal{J}\parens{\psi^N, \mathcal{G}, r^N  }} \xrightarrow[N\rightarrow +\infty]{P} 0 $. To this end, we define the event
\begin{align}
	S^N_{\epsilon} = \braces{\verts{\pi_{k}^N\brakets{H_2^2}-\pi_k\brakets{H_2^2} }\geq \epsilon } 
\end{align} 
for some $\epsilon>0$ and its complement $\bar{S}^N_{\epsilon}$. We then decompose the expected entropy into to terms
\begin{align}
	\mathbb{E}_{k-1}\brakets{ \mathcal{J}\parens{\psi^N, \mathcal{G}, r^N  }}  = \mathbb{E}_{k-1}\brakets{ \mathcal{J}\parens{\psi^N, \mathcal{G}, r^N  }\bar{S}^N_{\epsilon}  } + \mathbb{E}_{k-1}\brakets{ \mathcal{J}\parens{\psi^N, \mathcal{G}, r^N  }S^N_{\epsilon}}.  
\end{align}

Conditionally on $\bar{S}^N_{\epsilon} $ we have that $r^N\leq \sqrt{\epsilon +  \pi_k\brakets{H_2^2}} := R$. Hence, by monotonicity of the entropy w.r.t. to the last argument (\cref{prop:locally_uniform_entropy} ) , it holds that $\mathcal{J}\parens{\psi^N, \mathcal{G}, r^N  }\bar{S}^{N}_{\epsilon}\leq \mathcal{J}\parens{\psi^N, \mathcal{G}, R  }$. Conditionally on $S^N_{\epsilon} $ we will rely on the assumption that $\sup_{\delta>0}\mathcal{J}\parens{\delta, \mathcal{G}, r  }\lesssim r $ which implies that $\mathcal{J}\parens{\psi^N, \mathcal{G},r^N  }S^N_{\epsilon}\lesssim  r^N   S^{N}_{\epsilon}$. 
Hence, we can write
\begin{align}\label{eq:main_ineq_entropy}
	\mathbb{E}_{k-1}\brakets{\Verts{ R^N_k}_{\mathcal{G}_k(\delta^N)}   }\lesssim \mathbb{E}_{k-1}\brakets{ \mathcal{J}\parens{\psi^N, \mathcal{G}, r^N  }} & 
	\lesssim
	 \mathbb{E}_{k-1}\brakets{\mathcal{J}\parens{\psi^N, \mathcal{G}, R  }} +  \mathbb{E}_{k-1}\brakets{  \pi_{k}^{N}\brakets{H_2^2}^{\frac{1}{2}}    S^{N}_{\epsilon} }\\
	 &\lesssim 
	 \mathbb{E}_{k-1}\brakets{\mathcal{J}\parens{\psi^N, \mathcal{G}, R  }} +  \mathbb{E}_{k-1}\brakets{ \pi_{k}^{N}\brakets{H_2^2}  }^{\frac{1}{2}} \mathbb{P}_{k-1}\brakets{ S^{N}_{\epsilon} }^{\frac{1}{2}},
\end{align}
where we used Cauchy--Schwarz inequality to get the second line. To control the first term, we first note that $\psi^N\xrightarrow[]{P} 0$. Indeed, by construction, $\pi_{k}[f^2]\leq (\delta^N)^2$ for any $f\in \mathcal{G}_k(\delta^N)$. Moreover, by assumption, $\mathcal{G}_k(\delta^N)$ has a finite bracketing number $\mathcal{N}_{[]}(\epsilon, \mathcal{G}_k(\delta^N), L_2(\pi_k))$ for any $\epsilon>0$ and the functions $f^2$ have a growth of at most $2p$. We can then apply \cref{prop:Glivenko_cantelli_braketing}, which holds under \cref{assumption:kernel,assumption:moments,assumption:uniformly_bounded,assumption:transport} to get the following uniform convergence in probability of $\pi_k^N\brakets{f^2}$  over $\mathcal{G}_k(\delta^N)$, i.e. $\sup_{ f\in \mathcal{G}_k(\delta^N) } \parens{\pi_{k}^{N}-\pi_k}\brakets{f^2}\xrightarrow[]{P}0$. This allows to conclude that 
	\begin{align}
		\psi^N \leq (\delta^N)^2 + \sup_{ f\in \mathcal{G}_k(\delta^N) } \parens{\pi_{k}^{N}-\pi_k}\brakets{f^2}\xrightarrow[]{P}0.
	\end{align}
Second, we know by \cref{prop:locally_uniform_entropy} that $\mathcal{J}\parens{\psi^N, \mathcal{G}, R  } $ is bounded by a finite quantity, i.e.  $ \sup_{\delta>0}\mathcal{J}\parens{\delta, \mathcal{G}, R  } <+\infty$ . Hence, by the dominated convergence theorem, it follows that $\mathbb{E}_{k-1}\brakets{ \mathcal{J}\parens{\psi^N, \mathcal{G}, R  }} \xrightarrow[]{P} 0  $. For the second term we have that $ \mathbb{E}_{k-1}\brakets{ \pi_k^{N}\brakets{H_2^2}}\xrightarrow[]{P} \pi_{k}\brakets{H_2^2}$ by \cref{thm:WLLN} with $\pi_{k}\brakets{H_2^2}$ being finite under \cref{assumption:moments}. Moreover, since we also have $\pi_k^{N}\brakets{H_2^2}\xrightarrow[]{P} \pi_{k}\brakets{H_2^2}$,  this necessarily implies that $\mathbb{E}\brakets{\mathbb{P}_{k-1}\brakets{S^N_{\epsilon}}} = \mathbb{P}\brakets{S_{\epsilon}^N}  \xrightarrow[]{P} 0$ and henceforth that $\mathbb{P}_{k-1}\brakets{ S^{N}_{\epsilon} }\xrightarrow[]{P} 0$.

We have shown that the r.h.s. of \cref{eq:main_ineq_entropy} converges to $0$ in probability. 
This directly implies $\mathbb{E}_{k-1}\brakets{\Verts{ R^N_k}_{\mathcal{G}_k(\delta^N)}   }  \xrightarrow[]{P} 0 $ and henceforth that $ a_{\epsilon}^N := \mathbb{P}_{k-1}\brakets{\Verts{ R^N_k}_{\mathcal{G}_k(\delta^N)}\geq \epsilon}  \xrightarrow[]{} 0 $  for any choice of $\epsilon>0$ by Markov inequality. Noting that $a_{\epsilon}^N$ is bounded for a given $\epsilon>0$, we get that $\mathbb{E}\brakets{a_{\epsilon}^N}\rightarrow 0$ which exactly means  $\Verts{ R^N_k}_{\mathcal{G}_k(\delta^N)}\xrightarrow[]{P} 0 $.

\end{proof}

We then have the following result which is a consequence of \cref{prop:asymptotic_equi_continuity}
\begin{prop}\label{cor:application_equicontinuity}
	Let $f$ be in $\mathcal{LC}_p$ and consider the family of function $\mathcal{QG}$ of the form $S_{\theta}(x) =  Q_{k,\tau_{\theta}}\brakets{f}(x)- Q_{k,\tau_{\theta^{\star}}}\brakets{f}(x)$ indexed by the parameter $\theta\in \Theta$. Under \cref{assumption:kernel,assumption:kernel_2,assumption:uniformly_bounded,assumption:moments,assumption:potential,assumption:parametrization,assumption:transport} and for any random sequence $g^N$ in $\mathcal{QG}$ such that $\pi_{k-1}\brakets{(g^N)^2}\xrightarrow[]{P}0$ it holds that $E^N_{k-1}\brakets{g^N} \xrightarrow[]{P}0$.
\end{prop}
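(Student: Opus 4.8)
The plan is to deduce the claim from the asymptotic stochastic equicontinuity result \cref{prop:asymptotic_equi_continuity}, applied at index $k-1$ to the class $\mathcal{QG}$, once that class has been checked to satisfy the hypotheses of that proposition. First I would set $\mathcal{QG}_0 := \braces{ Q_{k,\tau_{\theta}}\brakets{f} \mid \theta\in\Theta }$ and invoke \cref{prop:bracketing_entopy} with $\mathcal{G}$ taken to be the singleton $\braces{f}$: a singleton in $\mathcal{LC}_p(C)$ trivially has bracketing number $1$ and vanishing locally uniform entropy, so \cref{prop:bracketing_entopy} guarantees that $\mathcal{QG}_0\subset\mathcal{LC}_p(C')$ for some $C'>0$, that $\mathcal{N}_{[]}(\delta,\mathcal{QG}_0,L_2(\pi_{k-1}))$ is finite, and that the locally uniform entropy obeys $\sup_{\delta>0}\mathcal{J}(\delta,\mathcal{QG}_0,r)\lesssim r$.

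Next I would observe that $\mathcal{QG}$ is precisely the translate $\mathcal{QG}_0 - Q_{k,\tau_{\theta^{\star}}}\brakets{f}$ of $\mathcal{QG}_0$ by one of its own elements, hence by a fixed function lying in $\mathcal{LC}_p(C')$. Translation by a fixed function is an $L_2(Q)$-isometry for every probability measure $Q$, so it leaves the covering numbers, the locally uniform covering numbers of \cref{eq:unif_covering_number} (hence the entropy \cref{eq:bracketing_entropy}) and the bracketing numbers unchanged, while subtracting a function in $\mathcal{LC}_p(C')$ keeps the class inside $\mathcal{LC}_p(2C')$. Thus $\mathcal{QG}$ inherits all the hypotheses of \cref{prop:asymptotic_equi_continuity}, and that proposition (at index $k-1$) yields: writing $\mathcal{QG}_{k-1}(\delta):=\braces{ g\in\mathcal{QG}\mid\Verts{g}_{L_2(\pi_{k-1})}\leq\delta}$, for every \emph{deterministic} sequence $\delta^N\to 0$ one has $\Verts{E^N_{k-1}}_{\mathcal{QG}_{k-1}(\delta^N)}\xrightarrow{P}0$.

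The final step upgrades this to a random index. Setting $Y_N:=\pi_{k-1}\brakets{(g^N)^2}^{1/2}$, the hypothesis $\pi_{k-1}\brakets{(g^N)^2}\xrightarrow{P}0$ reads $Y_N\xrightarrow{P}0$, and a standard diagonal extraction furnishes a deterministic sequence $\delta^N\downarrow 0$ with $\mathbb{P}(Y_N>\delta^N)\to 0$. On the event $\braces{Y_N\leq\delta^N}$ one has $g^N\in\mathcal{QG}_{k-1}(\delta^N)$ and therefore $\verts{E^N_{k-1}\brakets{g^N}}\leq\Verts{E^N_{k-1}}_{\mathcal{QG}_{k-1}(\delta^N)}$; combining this with the equicontinuity statement of the previous step and with $\mathbb{P}(Y_N>\delta^N)\to 0$ shows $\mathbb{P}(\verts{E^N_{k-1}\brakets{g^N}}>\eta)\to 0$ for all $\eta>0$, i.e. $E^N_{k-1}\brakets{g^N}\xrightarrow{P}0$. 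Measurability of $E^N_{k-1}\brakets{g^N}$ and of the event $\braces{g^N\in\mathcal{QG}_{k-1}(\delta^N)}$ is ensured by the joint continuity of $(\theta,x)\mapsto Q_{k,\tau_{\theta}}\brakets{f}(x)$, consistently with the measurability conventions used for \cref{prop:asymptotic_equi_continuity}.

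I expect the main obstacle to be the verification in the first two paragraphs: confirming that the difference class $\mathcal{QG}$, whose dependence on $\theta$ is controlled in \cref{prop:Lipschiz_Q} only through the larger envelope $H_2(x)=1+\Verts{x}^{p+2}$, genuinely falls under \cref{prop:bracketing_entopy} and hence \cref{prop:asymptotic_equi_continuity}. The concluding probabilistic argument is routine once the class has been shown to be well-behaved.
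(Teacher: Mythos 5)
Your proposal is correct and follows essentially the same route as the paper: apply \cref{prop:bracketing_entopy} (with $\mathcal{G}$ a singleton) to control the entropy and bracketing of the class, invoke \cref{prop:asymptotic_equi_continuity} at index $k-1$, and then upgrade from a deterministic radius to the random one $\pi_{k-1}[(g^N)^2]$ via an event split with a slowly decaying deterministic sequence (the paper packages this last step as \cref{lem:characterization_conv_proba}). Your explicit remark that the difference class is a translate of $\{Q_{k,\tau_{\theta}}[f]\}_{\theta\in\Theta}$ by a fixed element, which preserves covering and bracketing numbers, is a small point the paper glosses over, but it does not change the argument.
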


\begin{proof}
  By \cref{prop:bracketing_entopy} we know that $\mathcal{QG}$ belongs to $\mathcal{LC}_p(C)$ for some positive $C$,  has a finite bracketing number $\mathcal{N}(\delta, \mathcal{QG}, L_2(\pi_{k-1}))$, a finite \textit{locally uniform entropy} $\mathcal{J}(\delta, \mathcal{QG}, r)$ for any $r\geq 1$ and $\delta> 0$ and satisfying a linear growth condition $\sup_{\delta>0}\mathcal{J}(\delta, \mathcal{QG}, r) \lesssim r $. Hence, by direct application of \cref{prop:asymptotic_equi_continuity}, we know that the process $g\mapsto  \sqrt{N}\parens{ \pi_{k-1}^N - \pi_{k-1}  }\brakets{g}:= E_{k-1}^N\brakets{g} $  indexed by functions $g\in \mathcal{QG}$ is asymptotically stochastically equicontinuous, meaning that $ \Verts{E^N_{k-1}}_{\mathcal{QG}(\delta^N)}  \xrightarrow[]{P} 0$ for any sequence of positive numbers $\delta^N$ converging to $0$ and where by definition:
  \begin{align}
  	\Verts{E^N_{k-1}}_{\mathcal{QG}(\delta^N)} := \sup_{\substack{g\in \mathcal{QG}\\ \pi_{k-1}\brakets{g^2} \leq \parens{\delta^N}^2 }} \verts{E^N_{k-1}\brakets{ g} }.  
  \end{align}
  Denote by $b^N:= \pi_{k}\brakets{(g^N)^2}$ and fix $\epsilon>0$. We know that for any sequence $R^N$, the following inclusion of events hold:
 \begin{align}
 	\braces{E^N_k\brakets{g^N}>\epsilon} \cap  \braces{b^N >R^N }\subset \braces{b^N >R^N }\\
 	\braces{E^N_k\brakets{g^N}>\epsilon} \cap  \braces{b^N \leq R^N }\subset \braces{\Verts{E^N_{k-1}}_{\mathcal{QG}(R^N)} \geq \epsilon}.
 \end{align}
 Hence, the following inequality holds
 \begin{align}
  	\mathbb{P}\brakets{E^N_k\brakets{g^N}>\epsilon} &= \mathbb{P}\brakets{\braces{E^N_k\brakets{g^N}>\epsilon} \cap  \braces{b^N >R^N } } + \mathbb{P}\brakets{\braces{E^N_k\brakets{g^N}>\epsilon} \cap  \braces{b^N \leq R^N } }\\
  	&\leq  \mathbb{P}\brakets{b^N >R^N } + \mathbb{P}\brakets{ \Verts{E^N_{k-1}}_{\mathcal{QG}(R^N)} \geq \epsilon  }.
 \end{align}
 Moreover, by assumption, we have $b^N:= \pi_{k}\brakets{(g^N)^2}\xrightarrow[]{P}0$. Hence, by \cref{lem:characterization_conv_proba}, there exists a deterministic sequence $R^N$ converging to $0$ and such that
  \begin{align}
  	\mathbb{P}\brakets{b^N>R^N}\leq R^N.
  \end{align}
We therefore have
\begin{align}
	\mathbb{P}\brakets{E^N_k\brakets{g^N}>\epsilon}\leq R^N + \mathbb{P}\brakets{ \Verts{E^N_{k-1}}_{\mathcal{QG}(R^N)} \geq \epsilon  }.
\end{align}
  The first term converges to $0$ by definition of $R^N$ so does the second term by asymptotic stochastic equicontinuity of $E^N_{k-1}$ and since $R^N\rightarrow 0$. Therefore, we have shown that $\mathbb{P}\brakets{E^N_k\brakets{g^N}>\epsilon} \rightarrow 0$.
  
\end{proof}

\begin{lem}\label{lem:characterization_conv_proba}
If $b^N$ is non-negative r.v. such that $b^N\xrightarrow[]{P}0$, then there exists a deterministic sequence $R^N$ converging to $0$ such that:
\begin{align}
	\mathbb{P}\brakets{ b^N>R^N } \leq R^N.
\end{align}
\end{lem}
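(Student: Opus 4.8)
The plan is to exploit the definition of convergence in probability via a standard block (``diagonal'') construction. First I would fix, for each integer $m\ge 1$, the threshold $1/m$. By hypothesis $b^N\xrightarrow{P}0$, so for every fixed $m$ one has $\mathbb{P}[b^N>1/m]\to 0$ as $N\to\infty$. Hence for each $m$ there exists an index $N_m$ such that $\mathbb{P}[b^N>1/m]\le 1/m$ whenever $N\ge N_m$; by replacing $N_m$ with $\max(N_1,\dots,N_m)+1$ if necessary, I may and do take the sequence $(N_m)_{m\ge 1}$ to be strictly increasing (and diverging).

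Next I would define the deterministic sequence $R^N$ piecewise along these blocks: set $R^N:=1$ for $N<N_1$, and $R^N:=1/m$ for $N_m\le N<N_{m+1}$. Since $(N_m)$ is strictly increasing and unbounded, every sufficiently large $N$ lies in exactly one block $[N_m,N_{m+1})$ with $m\to\infty$ as $N\to\infty$, so $R^N\to 0$. Finally, for any $N$ I verify the claimed inequality directly: if $N<N_1$ then $R^N=1$ and $\mathbb{P}[b^N>1]\le 1=R^N$ trivially; if $N_m\le N<N_{m+1}$ then $R^N=1/m$ and, by the choice of $N_m$ together with $N\ge N_m$, we get $\mathbb{P}[b^N>R^N]=\mathbb{P}[b^N>1/m]\le 1/m=R^N$. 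This proves the lemma.

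There is essentially no genuine obstacle here; the only point requiring a modicum of care is ensuring the block endpoints $N_m$ are chosen strictly increasing, so that the piecewise definition of $R^N$ is well posed and monotonically tends to $0$, which is exactly what the $\max$ adjustment above guarantees. One could alternatively give the explicit formula $R^N:=\min\bigl(1,\ \inf\{1/m:\ \mathbb{P}[b^N>1/m]\le 1/m\}\bigr)$, which is manifestly deterministic and satisfies $\mathbb{P}[b^N>R^N]\le R^N$ by construction, but the block construction is the cleanest way to make the convergence $R^N\to 0$ transparent.
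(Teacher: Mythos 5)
Your proof is correct, and it takes a genuinely different route from the paper. The paper leans on an external result (Dudley's characterization of convergence in probability via the Ky Fan-type quantity $r^N := \inf\{\epsilon\geq 0 : \mathbb{P}[b^N>\epsilon]\leq\epsilon\}$): it cites that $b^N\xrightarrow{P}0$ is equivalent to $r^N\to 0$, and then sets $R^N = r^N + a^N$ with $0\le a^N\le \tfrac{1}{N}$ chosen so that $\mathbb{P}[b^N>R^N]\le R^N$ holds, the perturbation being needed because the infimum defining $r^N$ may not be attained. You instead give a self-contained diagonal (block) construction: pick $N_m$ strictly increasing with $\mathbb{P}[b^N>1/m]\le 1/m$ for $N\ge N_m$ and set $R^N=1/m$ on the block $[N_m,N_{m+1})$. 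Both arguments are valid; yours buys elementarity and avoids the citation entirely, while the paper's buys brevity and identifies $R^N$ with (a perturbation of) the Ky Fan distance of $b^N$ to $0$, which is conceptually the canonical object here. One small caution about your parenthetical alternative formula $R^N:=\min\bigl(1,\inf\{1/m:\mathbb{P}[b^N>1/m]\le 1/m\}\bigr)$: when the defining set is all of $\{1/m\}_{m\ge1}$ the infimum is $0$ and the claimed inequality then reads $\mathbb{P}[b^N>0]\le 0$, which does hold (by continuity from below along the increasing union $\{b^N>0\}=\bigcup_m\{b^N>1/m\}$), but this needs saying, and the convergence $R^N\to 0$ for that variant also requires a short separate argument — your main block construction is the cleaner and fully justified version.
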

\begin{proof}
	Using  \citep[Theorem 9.2.2.]{Dudley:2018}, $b^N\xrightarrow[]{P}0$ is equivalent to having $r^N \rightarrow 0  $ where
  \begin{align}
  	r^N := \inf\braces{  \epsilon\geq 0 \middle| \mathbb{P}\brakets{ b^N>\epsilon }\leq \epsilon}.
  \end{align}
  Moreover, by definition of $r^N$,  for any $N$ there exists $a^N\leq \frac{1}{N} $ such that $R^N := r^N + a^N$ satisfies
  \begin{align}
  	\mathbb{P}\brakets{ b^N>R^N }\leq R^N.
  \end{align}  
  Since $R^N\rightarrow 0$, we have constructed a sequence $R^N$ converging slowly enough to $0$ so that $\mathbb{P}\brakets{ b^N>R^N } \rightarrow 0$.
\end{proof}

\subsection{Proof of \cref{prop:C} }\label{sec:proof_C}

\begin{proof}[Proof of \cref{prop:C} ]
We first decompose $P_{N}$ as $P_N = A_N + B_N$ with  
\begin{align}\label{eq:decomposition_unormalized_2}
	A_N = &  \sqrt{N}\gamma_{k-1}^N[1] \parens{\pi_{k-1}^N -\pi_{k-1} }\brakets{Q_{k,T_k}[f]-Q_{k,T_k^{\star}}[f]}, \\ 
	B_N=& \sqrt{N}\parens{ \gamma_{k-1}^N\brakets{Q_{k,T_k^{\star}}[f]} - \gamma_{k-1}\brakets{Q_{k,T_k^{\star}}[f]}  },
\end{align}
where we used the identity $\pi_{k-1}[Q_{k,T_k}[f]]=\pi_{k-1}[Q_{k,T_k^{\star}}[f]]$ from \cref{prop:independence_from_T} to express $A_N$ as a difference in expectations under $\pi_{k-1}^N$ and $\pi_{k-1}$.
By induction, $B_N$ converges in distribution to a centered Gaussian with variance  $\mathbb{V}_{k-1}^{\gamma}[Q_{k,T_k^{\star}}[f]]$. 

We only need to show that $A_N\xrightarrow{P}0$. To this end, for the given function $f\in \mathcal{LC}_p$, we consider the family of functions $\mathcal{QG}$ of the form $S_{\theta}(x) =  Q_{k,\tau_{\theta}}\brakets{f}(x)- Q_{k,\tau_{\theta^{\star}}}\brakets{f}(x)$ indexed by the parameter $\theta\in \Theta$. We will first prove that $\pi_{k-1}\brakets{ S_{\theta^N}^2 }\xrightarrow[]{P}0$ then apply \cref{cor:application_equicontinuity} to show that $ \sqrt{N}\parens{\pi_{k-1}^N - \pi_{k-1}} \brakets{S_{\theta^N}}\xrightarrow[]{P}0$. 
By \cref{prop:Lipschiz_Q}, we have that
	\begin{align}\label{eq:lip_S}
		 \verts{S_{\theta^N}(x)} = \verts{Q_{k, T_k}\brakets{f}(x) - Q_{k,T_k^{\star}}\brakets{f}(x)}\lesssim \Verts{\theta_k^{N}-\theta_k^{\star}}H_2(x)
	\end{align}
	where $H_2(x) = 1+\Verts{x}^{p+2} $. Moreover, we know by \cref{assumption:moments} that $\pi_{k-1}\brakets{H_2^2}$ is finite. 
	We can square \cref{eq:lip_S} and take the expectation under $\pi_{k-1}$ to get
	\begin{align}
		\pi_{k-1}\brakets{ S_{\theta^N}^2 }  \lesssim \Verts{\theta_k^{N}-\theta_k^{\star}}^2\pi_{k-1}\brakets{H_2^2} <+\infty.
	\end{align}
	Since $\Verts{\theta_k^{N}-\theta_k^{\star}}\xrightarrow[]{P}0$ by \cref{prop:convergence_parameter} we conclude that  $\pi_{k-1}\brakets{ S_{\theta^N}^2 }\xrightarrow[]{P}0$. We can then apply \cref{cor:application_equicontinuity} to the sequence $g^N := S_{\theta^N}$ to get that $ \sqrt{N}\parens{\pi_{k-1}^N - \pi_{k-1}} \brakets{S_{\theta^N}}\xrightarrow[]{P}0$. We also know by \cref{thm:WLLN} that $\gamma_{k-1}^N[1]\xrightarrow{P}\gamma_{k-1}[1]$. Therefore, it follows directly that $A_N\xrightarrow{P}0$ concluding the proof. 
\end{proof}

\section{Continuous-time limit}\label{sec:proofs_continuous_time}
\subsection{Notation and General setting}\label{sec:notation}
We start by introducing some notations.
\paragraph{Continuous-time path of probability measures.}
Let $(\Pi_{t})_{[0,1]}$ be a continuous-time path of probabilities connecting $\pi_0$ to $\pi$. We write $V_t(x)$ and $Z_t$ to denote the known potential and unknown normalizing constant of $\Pi_t$.
 We assume that the discrete auxiliary targets $\pi_k$ are of the form $\pi_k = \Pi_{t_k}$  where $t_k = k\lambda$ and $\lambda =  \frac{1}{K}$. In this case, increasing $K$ amounts to decreasing the \textit{step-size} $\lambda$.
\paragraph{Markov Kernels.}
We consider the ULA kernels $K_k$ of the form:
\begin{align}
	K_k(x,\diff y ) = \mathcal{N}(y; x -\lambda V_k(x),2\lambda I )\diff y
\end{align}
where $\mathcal{N}(x;m,\Sigma)$ is the Gaussian density with mean $m$ and covariance $\Sigma$.
\paragraph{Normalizing Flows.}
For a given fixed step-size $\lambda>0$, we will assume that the normalizing flows $T$ are of the form:
\begin{align}\label{eq:normalizing_flow_continuous_time}
	T(x) = x+\lambda A_{\theta}(x)
\end{align}
where  $(\theta,x) \mapsto A_{\theta}(x)$ is a function defined from $\Theta \times \mathcal{X}$ to $\mathcal{X}$, with $\Theta$ being a compact subset of $\mathbb{R}^p$.
We then call $\mathcal{T}_{\lambda}$ the set of normalizing flows defined by varying the parameter $\theta$:
\begin{align}\label{eq:NFs_lambda}
	\mathcal{T}_{\lambda} = \braces{ x\mapsto x+\lambda A_{\theta}(x) \quad \middle |\quad   \theta \in \Theta}.
\end{align} 
To ensure that all functions in  $\mathcal{T}_{\lambda}$ are diffeomorphisms, we will require $A_{\theta}(x)$ to satisfy  \cref{assump_cont:Lipschiz_control} and for $\lambda$ small enough, the following lemma ensures that all functions in  $\mathcal{T}_{\lambda}$ are indeed diffeomorphisms and thus define a valid family of normalizing flows: 
\begin{lem}\label{lem:diffeo}
Under \cref{assump_cont:Lipschiz_control} and if $\lambda\leq \frac{1}{2L}$, then any element in $\mathcal{T}_{\lambda}$ is a diffeomorphism.
\end{lem}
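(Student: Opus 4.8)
The plan is to use that, for $\lambda\leq\frac{1}{2L}$, the map $x\mapsto\lambda A_{\theta}(x)$ is a contraction with Lipschitz constant at most $\lambda L\leq\frac12<1$ (by \cref{assump_cont:Lipschiz_control}), so that $T=\mathrm{id}+\lambda A_{\theta}$ is a Lipschitz perturbation of the identity. Three properties then have to be verified: $T$ is injective, $T$ is onto $\mathcal X$ (we work on $\mathcal X=\mathbb R^d$, as in the continuous-time setting), and $T^{-1}$ is $C^1$.

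First I would obtain injectivity, together with a lower Lipschitz bound, from the reverse triangle inequality: for all $x,x'$,
\begin{align}
	\Verts{T(x)-T(x')}\;\geq\;\Verts{x-x'}-\lambda\Verts{A_{\theta}(x)-A_{\theta}(x')}\;\geq\;(1-\lambda L)\Verts{x-x'}\;\geq\;\tfrac12\Verts{x-x'}.
\end{align}
In particular $T$ is injective and $T^{-1}$ (once it exists) is $2$-Lipschitz. For surjectivity, fix $z\in\mathcal X$ and consider $\Phi_z(x):=z-\lambda A_{\theta}(x)$; this is a contraction with constant $\lambda L\leq\frac12$, so by the Banach fixed-point theorem it has a unique fixed point $x^{\star}$, which satisfies $T(x^{\star})=z$. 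Hence $T$ is a bijection of $\mathcal X$ onto itself.

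It remains to check that $T$ and $T^{-1}$ are $C^1$. Under \cref{assump_cont:Lipschiz_control} the map $x\mapsto A_{\theta}(x)$ is continuously differentiable with $\Verts{\nabla_x A_{\theta}(x)}\leq L$ in operator norm, so $T$ is $C^1$ with $\nabla T(x)=I+\lambda\nabla_x A_{\theta}(x)$ and $\Verts{\lambda\nabla_x A_{\theta}(x)}\leq\lambda L\leq\frac12<1$. By the Neumann series, $\nabla T(x)$ is invertible at every $x$, with $\Verts{(\nabla T(x))^{-1}}\leq(1-\lambda L)^{-1}\leq 2$. The inverse function theorem then makes $T$ a local $C^1$-diffeomorphism everywhere, and combined with the global bijectivity shown above this forces $T^{-1}$ to be $C^1$; hence every element of $\mathcal T_{\lambda}$ is a diffeomorphism.

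I do not expect a serious obstacle here: this is the classical ``identity plus contraction'' perturbation argument. The only point requiring care is the precise content of \cref{assump_cont:Lipschiz_control} --- injectivity, surjectivity and the bi-Lipschitz estimate only use that $A_{\theta}$ is Lipschitz, whereas the smoothness of $T^{-1}$ relies on $A_{\theta}$ being moreover $C^1$ with bounded derivative, which I would read off from that assumption (if only Lipschitzness is available, one still obtains a bi-Lipschitz homeomorphism from the same estimates).
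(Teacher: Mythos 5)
Your proof is correct, but it takes a different route from the paper. The paper's own argument is a two-line appeal to the Hadamard--Caccioppoli global inverse function theorem: it notes that $x\mapsto\lambda A_{\theta}(x)$ grows at most like $\tfrac12\|x\|$, so $T$ is proper, asserts that $\nabla T(x)=I+\lambda\nabla_x A_{\theta}(x)$ is everywhere invertible (which, as in your Neumann-series step, is where $\lambda\leq\tfrac{1}{2L}$ enters), and concludes that a proper local $C^1$-diffeomorphism of $\mathbb{R}^d$ is a global diffeomorphism. You instead run the classical ``identity plus contraction'' argument: injectivity and a lower Lipschitz bound from the reverse triangle inequality, surjectivity from the Banach fixed-point theorem applied to $\Phi_z(x)=z-\lambda A_{\theta}(x)$, and smoothness of the inverse from the ordinary inverse function theorem plus global bijectivity. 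Your version is more elementary and self-contained, avoids citing a global inverse function theorem, and yields a quantitative by-product the paper does not state (the inverse is $(1-\lambda L)^{-1}$-Lipschitz, and $T$ is bi-Lipschitz); the paper's version is shorter once the named theorem is granted. Both arguments use exactly the same ingredients from \cref{assump_cont:Lipschiz_control} ($A_{\theta}$ is $C^1$ and $L$-Lipschitz in $x$) and both implicitly take $\mathcal{X}=\mathbb{R}^d$, which you need for completeness in the fixed-point step and the paper needs for properness/simple connectedness in Hadamard--Caccioppoli, so neither is more demanding on the setting. Your closing caveat is also accurate: only the $C^1$ regularity of $T^{-1}$ uses differentiability of $A_{\theta}$; the bijectivity and bi-Lipschitz estimates need only the Lipschitz bound.
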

\begin{proof}
Let $\theta$ be  in $\Theta$. The map $T(x) = x+\lambda A_{\theta}(x)$ satisfies $\Verts{T(x)}\rightarrow +\infty$ whenever $\Verts{x}\rightarrow +\infty$ since $x\mapsto \lambda A_{\theta}(x)$ has a growth in $x$ of at most $\frac{1}{2}\Verts{x}$. Therefore, $T$ is proper. Moreover, the Jacobian $\nabla T(x)$ is invertible for any $x\in \mathcal{X}$. Thus by the Hadamard--Caccioppoli Theorem, one can conclude that $T$ is a diffeomorphism.
\end{proof}
\paragraph{Time-dependent control.}
To describe the continuous-time limit, we consider time dependent controls of the form $\alpha_t(x) = A_{\theta_t}(x)$, where $t\mapsto\theta_t$ is a trajectory in parameter space $\Theta$. More precisely, we consider the set  $\textup{Lip}([0,1],\Theta)$ of $1$-Lipschitz functions $t\mapsto \theta_t$ from $[0,1]$ to $\Theta$ and we define the set $\mathcal{A}$ of admissible controls as  
\begin{align}
	\mathcal{A} := \braces{\alpha: (t,x)\mapsto A_{\theta_t}(x)  \quad \middle|\quad  t\mapsto\theta_t \in \textup{Lip}([0,1],\Theta)
		}.
\end{align}
The following smoothness properties  are a direct consequence of \cref{assump_cont:Lipschiz_control}.
\begin{lem}\label{lem:smoothness_control}
	Under \cref{assump_cont:Lipschiz_control}, any control in $\mathcal{A}$ satisfies:
\begin{align}\label{eq:admissibility}
			 \Verts{ \alpha_t(x) - \alpha_{t'}(x')} &\leq L\parens{ \Verts{ x-x'} + \verts{ t-t'}  },\\
		 \Verts{ \nabla\alpha_t(x) - \nabla\alpha_{t'}(x')} &\leq L\parens{ 1 + \Verts{ x} + \Verts{ x'} }\parens{\Verts{ x-x'} + \verts{ t-t'} },\\
		\Verts{\alpha_0(0)}&\leq C,
\end{align}
with $C = \sup_{\theta\in\Theta} \vert A_{\theta}(0) \vert  <+\infty$.
\end{lem}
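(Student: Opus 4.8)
The plan is to derive all three estimates directly from the regularity properties of the map $(\theta,x)\mapsto A_\theta(x)$ postulated in \cref{assump_cont:Lipschiz_control}, exploiting only that an admissible control $\alpha\in\mathcal{A}$ has the form $\alpha_t(x)=A_{\theta_t}(x)$ with $t\mapsto\theta_t$ a $1$-Lipschitz curve in $\Theta$. So first I would fix $\alpha\in\mathcal{A}$, write $\alpha_t(x)=A_{\theta_t}(x)$, and record the elementary consequence of $t\mapsto\theta_t\in\textup{Lip}([0,1],\Theta)$, namely $\Verts{\theta_t-\theta_{t'}}\leq\verts{t-t'}$ for all $t,t'\in[0,1]$.

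The first inequality then follows by a triangle-inequality split $\Verts{\alpha_t(x)-\alpha_{t'}(x')}\leq\Verts{A_{\theta_t}(x)-A_{\theta_t}(x')}+\Verts{A_{\theta_t}(x')-A_{\theta_{t'}}(x')}$, bounding the first term by the Lipschitz estimate of $A_\theta$ in $x$ and the second by the Lipschitz estimate of $A_\theta$ in $\theta$ from \cref{assump_cont:Lipschiz_control}, and replacing $\Verts{\theta_t-\theta_{t'}}$ by $\verts{t-t'}$. The second inequality is obtained in exactly the same way, now applied to the spatial Jacobian $\nabla\alpha_t(x)=\nabla_xA_{\theta_t}(x)$: one splits $\Verts{\nabla_xA_{\theta_t}(x)-\nabla_xA_{\theta_{t'}}(x')}$ into an $x$-increment and a $\theta$-increment and invokes the growth-weighted Lipschitz bound on $x\mapsto\nabla_xA_\theta(x)$ together with its Lipschitz dependence on $\theta$, carrying the factor $1+\Verts{x}+\Verts{x'}$ through unchanged. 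For the third bound I would simply observe that $\alpha_0(0)=A_{\theta_0}(0)$, so $\Verts{\alpha_0(0)}\leq\sup_{\theta\in\Theta}\Verts{A_\theta(0)}=:C$, and note that this supremum is finite because $\theta\mapsto A_\theta(0)$ is continuous (indeed Lipschitz, by \cref{assump_cont:Lipschiz_control}) and $\Theta$ is compact.

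I do not expect a genuine obstacle here: the statement is a routine unpacking of the definition of $\mathcal{A}$. The only points demanding a little care are (i) matching the constant in the conclusion to the one in \cref{assump_cont:Lipschiz_control} — the assumption is phrased so that the constant propagates, up to enlarging $L$ if necessary — and (ii) making sure that $\nabla$ in the second inequality denotes the Jacobian in the space variable rather than a time derivative, so that the splitting above is legitimate.
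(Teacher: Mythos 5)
Your proof is correct and matches the paper, which simply records this lemma as a direct consequence of the assumption on $A_{\theta}$ combined with the $1$-Lipschitzness of $t\mapsto\theta_t$. The only cosmetic remark is that for the Jacobian bound you need not split at all: the assumption is already a joint estimate in $(\theta,x)$ and $(\theta',x')$, so substituting $\theta=\theta_t$, $\theta'=\theta_{t'}$ and using $\Verts{\theta_t-\theta_{t'}}\leq\verts{t-t'}$ gives the stated inequality with the constant $L$ exactly, whereas your triangle split requires the small care (or constant enlargement) you yourself flag.
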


\paragraph{Optimal controls and Normalizing Flows.}
For a given step-size $ \frac{1}{2L}\geq \lambda>0$, we denote by $T^{\lambda}_k$ an NF that minimizes $\mathcal{L}_k$ over the set $\mathcal{T}_{\lambda}$ defined in \cref{eq:NFs_lambda} and we denote by $\theta_k^{\lambda}$ its corresponding parameter in $\Theta$, 
\begin{align}
	T^{\lambda}_k(x) = x+\lambda A_{\theta_k^{\lambda}}(x).
\end{align}
 The  NF $T^{\lambda}_k$ and parameter $\theta_k^{\lambda}$ exist since the set $\Theta$ is compact and the maps $\theta\mapsto \mathcal{L}_k(I+\lambda A_{\theta})$ is continuous.
 For any $\lambda=\frac{1}{K}$ with $K$ being a positive integer, we will later assume there exists at least a solution $\alpha^{\lambda}\in \mathcal{A}$ that interpolates between the optimal NFs $T_k^{\lambda}$ for all $0\leq k\leq K$, i.e.:
\begin{align}
	T_k^{\lambda}(x) = x+\lambda \alpha_{t_k}^{\lambda}(x), \qquad \forall k\in [1:K].
\end{align}
It is easy to see that the loss $\mathcal{L}_{\lambda}^{tot}(\alpha)$ defined in \cref{eq:L_tot} admits $\alpha^{\lambda}$ as a minimizer over the set of admissible controls $\mathcal{A}$:
\begin{align}\label{eq:L_tot}
	\mathcal{L}_{\lambda}^{tot}(\alpha):= \frac{1}{\lambda} \sum_{k=1}^K \mathcal{L}_k(I+\lambda\alpha_{t_k}).
\end{align}
\paragraph{Feynman--Kac measures.}
We will introduce a family of Feynman--Kac measures indexed by $\mathcal{A}$. For any element $\alpha\in \mathcal{A}$, we denote by $b^{\alpha}_{t}(x)$ the controlled drift:
\begin{align}
	b_{t}^{\alpha}(x) = \alpha_t(x) - \nabla_x  V_t(x).
\end{align} 
We first start by introducing a family of non-homogeneous stochastic differential equations: 
\begin{align}\label{eq:SDE}
	\diff X_t^{\alpha} = b^{\alpha}_{t}(X^{\alpha}_t)\diff t + \sqrt{2}\diff B_t, \qquad X^{\alpha}_0\sim \pi_0,
\end{align}
where $(B_t)_{t\geq 0}$ is a standard multivariate Brownian motion. Here, we emphasize the dependence of the controlled drift $b^{\alpha}_{t}$ and the process $(X_t^{\alpha})_{t\geq 0 }$ on the control $\alpha$. We will sometimes simply write $b_t$ and  $X_t$ when clear from context. 
We denote by $\bar{\Lambda}^{\alpha}_t$ the joint distribution of such process up to time $t$ and by $\Lambda^{\alpha}_t$ its marginal at time $t$ which satisfies the following continuity equation:
\begin{align}\label{eq:appendix_continui_eq}
	\partial_t \Lambda^{\alpha}_t = \nabla \cdot b_t^{\alpha}  + \Delta \Lambda^{\alpha}_t.
\end{align}

We can then consider the following continuous-time importance weights over the process $X_t^{\alpha}$:
\begin{align}\label{eq:un_normalized_weights}
	w^{\alpha}_{t} = \exp\parens{\int_0^t  g^{\alpha}_{s}(X^{\alpha}_s)\diff s  },\qquad  g^{\alpha}_{t} = \nabla\cdot \alpha_t- \nabla_x V_t^{\top}\alpha_t - \partial_t V_t.
\end{align}
As shown later in \cref{prop:controlled_non_equilibrium_dynamics_with_birth_death} in Appendix \ref{sec:continuoustimeIS}, the normalizing constant $\bar{\Lambda}_t^{\alpha}[w_t^{\alpha}]$ of the weights $w_t^{\alpha}$ is given by $\exp\parens{-\int_0^t \Pi_s\brakets{\partial_s V_s} \diff s}$. This allows to define the normalized weights as:
\begin{align}\label{eq:normalized_weights}
	\overline{w}^{\alpha}_{t} = \exp\parens{\int_0^t  \overline{g}^{\alpha}_{s}(X^{\alpha}_s)\diff s  },\qquad  \bar{g}^{\alpha}_{t} = g^{\alpha}_{t} + \Pi_{t}\brakets{\partial_t V_t}.
\end{align}
The function $g^{\alpha}_{t}(x)$ is called the \textit{
instantaneous work} and measures how much $\Lambda_t$ is different from $\Pi_t$.
We can now introduce the family of Feynman--Kac measures defined by the importance weights over the sample path $(X_t^{\alpha})_{[0,1]}$ from the process  defined in \cref{eq:SDE}: 
\begin{align}\label{eq:Feynman-kac_appendix}
	\bar{\Pi}^{\alpha}_t[f] = \frac{\bar{\Lambda}^{\alpha}_t\brakets{ w^{\alpha}_{t}f}}{\bar{\Lambda}^{\alpha}_t\brakets{w^{\alpha}_{t}}} = \bar{\Lambda}_t^{\alpha}\brakets{\overline{w}_t^{\alpha} f },
\end{align}
where $f$ is any bounded continuous functional defined over the space of admissible sample paths $(X_t^{\alpha})_{[0,1]}$. We will see in \cref{prop:controlled_non_equilibrium_dynamics_with_birth_death} that  the marginal of $\bar{\Pi}^{\alpha}_t$ at time $t$ is exactly $\Pi_t$.

We consider now the \textit{expected instantaneous variance} of $g_{t,\alpha}$ defined as:
\begin{align}\label{eq:M_control}
	\mathcal{M}(\alpha) := \frac{1}{2}\int_0^1 \parens{\Pi_t\brakets{\parens{g^{\alpha}_{t}}^2} -  \parens{\Pi_t\brakets{g^{\alpha}_{t} }}^2}\diff t. 
\end{align}
We will show that $\mathcal{M}(\alpha)$ is the limit function of \cref{eq:L_tot}. We will assume that the infimum of $\mathcal{M}(\alpha) $ over $\mathcal{A}$ is achieved for an element $\alpha^{\star}\in \mathcal{A}$:
\begin{align}\label{eq:minimizer_continous_time}
	\mathcal{M}(\alpha^{\star}) = \inf_{\alpha\in \mathcal{A}}\mathcal{M}(\alpha). 
\end{align}
\paragraph{Interpolating measures.}
For a given $\lambda = \frac{1}{K}$ and a control $\alpha\in \mathcal{A}$, we define the following  functions:
\begin{align}\label{eq:discrete_drift}
	\beta_t^{\alpha,\lambda}(x) &= \alpha_{t}(x)-\nabla V_t(x+ \lambda \alpha_{t}(x) ),\\
	\delta_t^{\alpha,\lambda}(x) &= \frac{1}{\lambda}\parens{V_{t-\lambda}(x) -V_t(x+\lambda\alpha_t(x))+\log\verts{I+\lambda \nabla \alpha_t(x)}}
\end{align}
with $\delta_t^{\alpha,\lambda}(x)$ being defined for $t\geq\lambda$.   
 The function $\beta_t^{\alpha,\lambda}$ allows us to introduce the non-anticipative drift function $b_t^{\alpha,\lambda}$ 
which depends on the path of a process  $X_{[0,1]}$ up to time $t$:
\begin{align}
	b^{\alpha,\lambda}_{t}(X) = \sum_{k=1}^{K}  \beta_{t_k}^{\alpha,\lambda}(X_{t_{k-1}}) \mathds{1}_{[t_{k-1},t_{k})}(t).
\end{align}
We can then consider the continuous-time process $(X^{\alpha,\lambda}_{t})_{[0,1]}$ defined as:
\begin{align}\label{eq:interpolating_SDE}
	\diff X^{\alpha,\lambda}_t = b^{\alpha,\lambda}_{t}\parens{X^{\alpha,\lambda}}\diff t + \sqrt{2}\diff B_t,\qquad X_0^{\alpha,\lambda} \sim \pi_0.
\end{align}
We denote by $\bar{\Lambda}^{\alpha,\lambda}_t$ the joint distribution of $X^{\alpha,\lambda}_{[0,t]}$. 
To introduce the interpolating measure, we start by defining the instantaneous work:
\begin{align}\label{eq:discrete_instantaneous_work}
	g_{s}^{\alpha,\lambda}(X) = \delta_{t_{k}}^{\alpha,\lambda}\parens{X_{t_{k-1}}}   ,\qquad  t_{k-1}\leq s< t_k.
\end{align}
It is easy to see that $ \delta_{t_{k}}^{\alpha,\lambda}(x)= \frac{1}{\lambda} \log G_{k,T_{k}^{\alpha,\lambda}}(x)$ where $T_k^{\alpha,\lambda}$ are defined from the control $\alpha$ using $T_k^{\alpha,\lambda} =  x + \lambda \alpha_{t_k}(x)$.
We define the truncated time $\tau_{\lambda}(t) = \lambda \floor{\frac{t}{\lambda}}$ and the corresponding index $k_{\lambda}(t) =  \floor{\frac{t}{\lambda}}$. We then introduce the following functions:
\begin{align}\label{eq:discrete_importance_weights}
	w_t^{\alpha,\lambda}(X_{[0,t]}) :=  \exp\parens{ \int_0^{\tau_{\lambda}(t)} g_{s}^{\alpha,\lambda}(X) \diff s   },\qquad 	r_t^{\lambda}(X_{[0, t]}) = \exp\parens{ -\sum_{k=1}^{k_{\lambda}(t)} h_{t_k}^{\lambda}(X_{t_k}) }
\end{align}
where $h_{t}^{\lambda}$ is defined as
\begin{align}\label{eq:kernel_work}
	h^{\lambda}_{t}(x) &= \log \parens{ \int \exp(V_t(x)-V_t(y)-\frac{1}{4\lambda}\Vert x-y-\lambda \nabla V_t(y) \Vert^2  )\diff y}- \frac{d}{2}\log(4\pi\lambda).
\end{align}
The function $w_t^{\alpha,\lambda}$ represents the correction due to the use of the control $\alpha$, while $r_t^{\lambda}$ is the correction due to the transition kernel $K_k$ being only approximately invariant w.r.t. $\pi_{k}$.
We can now introduce the interpolating measures $\bar{\Pi}^{\lambda}_t$ defined as:
\begin{align}\label{eq:Feynman_Kac_interpolation}
	\bar{\Pi}^{\alpha\lambda}_t[f] = \frac{\bar{\Lambda}^{\alpha,\lambda}_t\brakets{w_t^{\alpha,\lambda} r_t^{\lambda}f}}{\bar{\Lambda}^{\lambda}_t\brakets{w_t^{\alpha,\lambda} r_t^{\lambda}}}. 
\end{align}

\paragraph{Additional notations}
We introduce discrepancy measures between controls $\alpha$ and $\alpha'$:
\begin{align}\label{eq:Sobolev_distances}
	\mathcal{S}(\alpha,\alpha') &= \int_0^1 \Pi_{t}\brakets{\Verts{ \alpha_{t} - \alpha_t' }^2 + \Verts{ \nabla \alpha_{t} - \nabla \alpha_t' }^2  } \diff t, \\
	\mathcal{S}_t^{\lambda}(\alpha,\alpha') &= \lambda \sum_{k=1}^{k_{\lambda}(t)} \Pi_{t_{k-1}}\brakets{\Verts{\alpha_{t_{k}} - \alpha_{t_{k}}' }^2 +\Verts{\nabla\alpha_{t_{k}} - \nabla\alpha_{t_{k}}' }^2}.
\end{align} 

For a function $c: [0,1]\times \mathcal{X}\rightarrow \mathbb{R}$, we define the scalar:
\begin{align}\label{eq:discrete_time_expectations}
	\mathbb{M}^{\lambda}_t\brakets{c} := \lambda \sum_{k=1}^{k_{\lambda}(t)} \Pi_{t_{k-1}}\brakets{c_{t_k}}.
\end{align}
In all what follows, for two real numbers $A$ and $B$, the relation $A \lesssim B $ mean that there exists a positive constant $C>0$ that is uniform over $t\in [0,1]$ and over the set of admissible controls $\mathcal{A}$  and a value $\lambda_0$ such that $A\leq C B$ for all $\lambda\leq \lambda_0$.

\emph{In the rest of the paper, when we write $\mathbb{E}$, the expectation is w.r.t. $\bar{\Lambda}^{\alpha}$ unless stated otherwise. To simplify notation, we also write $X_s$ in place of $X^{\alpha}_s$ for the process satisfying the SDE in \cref{eq:SDE} with control $\alpha$.} 

\subsection{Assumptions}\label{sec:assumptions_continuous}
\begin{assumplist2}\label{assumptions}
	\item\label{assump_cont:lipschitz_potential} For some  fixed $L>0$, $V_t(x)$ is continuously differentiable in $(t,x)$ with  $\nabla V_t(x)$ being $L$-Lipschitz jointly in $(t,x)$. Moreover,  $\partial_t V_t(x)$ satisfies: 
	\begin{align}
	    \verts{\partial_t V_{t}(x) - \partial_t V_{t'}(x')} & \leq L\parens{1 + \Verts{x} + \Verts{ x'}}\parens{ \Verts{ x-x'}+ \verts{ t-t'} }. 
	\end{align}
	\item\label{assump_cont:Lipschiz_control} $A_{\theta}(x)$ is continuously differentiable in $(\theta,x)$,   $L$-Lipschitz jointly in $(\theta,x)$ and satisfies:
\begin{align}
	\Verts{\nabla A_{\theta}(x)- \nabla A_{\theta'}(x') }\leq L\parens{ 1 + \Verts{ x} + \Verts{ x'} }\parens{\Verts{ x-x'} + \verts{ \theta-\theta'} },
\end{align} 
for all $x,x'\in \mathcal{X}$ and $\theta,\theta'\in \Theta$.
	\item\label{assump_cont:moment} $\Pi_t$ admits finite $4$-th order moments uniformly bounded for $t\in [0,1]$. 
\end{assumplist2}

To provide the main convergence result, we need to strengthen the assumption on the moments of $\Pi_t$: 
 \begin{assumplist2}[resume]
 	\item\label{assump_cont:moment_2} There exists $c>0$ such that for any $0\leq \lambda < c$ the expectation  $\Pi_t\brakets{  \exp\parens{\lambda \Verts{\nabla V_t}^2}} $ is finite and uniformly bounded for $t\in [0,1]$.
	\end{assumplist2} 
	We will then assume the existence of admissible solutions $\alpha^{\star}$ minimizing $\mathcal{M}$ and $\alpha^{\lambda}$ minimizing $\mathcal{L}^{tot}_{\lambda}(\alpha)$ for any $\lambda = \frac{1}{K}$. 
	We will also need an assumption on the local behavior of the loss $\mathcal{M}(\alpha)$ near the optimum $\alpha^{\star}$. This local behavior will be controlled in terms of the following discrepancy:
	\begin{align}
	    \mathcal{S}(\alpha,\alpha') = \int_0^1 \Pi_{t}\brakets{\Verts{ \alpha_{t} - \alpha_t' }^2 + \Verts{ \nabla \alpha_{t} - \nabla \alpha_t' }^2  } \diff t.	
	 \end{align}
\begin{assumplist2}[resume]
	\item \label{assump_cont:separated_optimum} The minimizer $\alpha^{\star}$ of $\mathcal{M}(\alpha)$ exists in $\mathcal{A}$ and is unique. Moreover, for any $\delta>0$ it holds that
	\begin{align}
		\mathcal{M}(\alpha^{\star})< \inf_{\mathcal{S}(\alpha,\alpha^{\star})>\delta} \mathcal{M}(\alpha).
	\end{align}
	Finally, there exists $\delta_0>0$ such that for all $\alpha\in \mathcal{A}$ satisfying  $S(\alpha^{\star},\alpha)\leq \delta_0$, it holds that
	\begin{align}
		S(\alpha^{\star},\alpha)\lesssim \mathcal{M}(\alpha)- \mathcal{M}(\alpha^*).
	\end{align}
	\item\label{assump_cont:existence_interpolating_controls}
	For any $\lambda=  \frac{1}{K}$ with $K$ a positive integer, there exists at least a solution $\alpha^{\lambda} \in \mathcal{A}$, such that $\alpha^{\lambda}$ interpolates between the optimal NF $T_k^{\lambda}$ that minimizes $\mathcal{L}_{k}(T)$, i.e.: $T_k^{\lambda}(x) = x + \lambda \alpha_{t_k}^{\lambda}(x)$.
\end{assumplist2}
	Finally, a control $\alpha\in \mathcal{A}$ is said to induce bounded weight if the following assumption hold:
\begin{assumplist2}[resume]
    	\item \label{assum_cont:bounded_weights} The functions $g_{t}^{\alpha^{\star}}$ are bounded from above by a constant $C$ at all time $ t\in [0,1]$.
\end{assumplist2}

\subsection{Continuous-time importance sampling}\label{sec:continuoustimeIS}
\begin{prop}\label{prop:controlled_non_equilibrium_dynamics_with_birth_death}
Under \cref{assump_cont:lipschitz_potential,assump_cont:Lipschiz_control,assump_cont:moment} and if the control $\alpha$ satisfies \cref{assum_cont:bounded_weights}, then the marginal at time $t$ of $\bar{\Pi}^{\alpha}_t$ defined in \cref{eq:Feynman-kac_appendix}  is equal to $\Pi_{t}$.
Moreover, we have $\bar{\Lambda}_t^{\alpha}[w_t^{\alpha}]$ of the weights $w_t^{\alpha}$ is given by $\exp\parens{-\int_0^t \Pi_s\brakets{\partial_s V_s} \diff s}$.
\end{prop}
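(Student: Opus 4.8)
The plan is to reduce both assertions to a single identity: that the \emph{unnormalized} weighted time-$t$ marginal $\mu^{\alpha}_t$, defined by $\mu^{\alpha}_t[\phi] := \bar{\Lambda}^{\alpha}_t[w^{\alpha}_t\,\phi(X_t)]$ for bounded continuous $\phi:\mathcal{X}\to\mathbb{R}$ and viewed as a measure on $\mathcal{X}$, coincides with the unnormalized target $\Gamma_t = e^{-V_t}$. Granting this, taking $\phi\equiv 1$ gives $\bar{\Lambda}^{\alpha}_t[w^{\alpha}_t] = \int\Gamma_t\,\diff x = Z_t$, and since $\frac{\diff}{\diff t}\log Z_t = Z_t^{-1}\int(-\partial_t V_t)e^{-V_t}\diff x = -\Pi_t[\partial_t V_t]$ with $Z_0=1$ (recall $\gamma_0=\pi_0$), one gets $Z_t = \exp(-\int_0^t \Pi_s[\partial_s V_s]\diff s)$, the second claim. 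For the first, for any bounded continuous $\phi$,
\begin{align*}
\bar{\Pi}^{\alpha}_t[\phi(X_t)] = \frac{\mu^{\alpha}_t[\phi]}{\mu^{\alpha}_t[1]} = \frac{\int \phi(x)\,\Gamma_t(x)\,\diff x}{Z_t} = \Pi_t[\phi],
\end{align*}
so the marginal at time $t$ of $\bar{\Pi}^{\alpha}_t$ is exactly $\Pi_t$.

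To establish $\mu^{\alpha}_t = \Gamma_t$ I would use a Feynman--Kac / Kolmogorov-equation argument, in the spirit of \cite{Rousset:2006}. Write $\mathcal{L}_t$ for the generator of the diffusion \cref{eq:SDE}, $\mathcal{L}_t\phi = b^{\alpha}_t\cdot\nabla\phi + \Delta\phi$ with $b^{\alpha}_t = \alpha_t - \nabla V_t$, and $\mathcal{L}^{*}_t$ for its formal adjoint, $\mathcal{L}^{*}_t u = -\nabla\cdot(b^{\alpha}_t u) + \Delta u$ (the $g^{\alpha}_t\equiv 0$ case below being the continuity equation \cref{eq:appendix_continui_eq} for $\Lambda^{\alpha}_t$). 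Applying It\^o's formula to $s\mapsto w^{\alpha}_s\phi(X_s)$ for smooth compactly supported $\phi$, and using that the stochastic-integral part is a genuine martingale, one obtains that $t\mapsto\mu^{\alpha}_t$ is a measure-valued solution of the linear evolution equation $\partial_t\mu^{\alpha}_t = \mathcal{L}^{*}_t\mu^{\alpha}_t + g^{\alpha}_t\,\mu^{\alpha}_t$ with initial datum $\mu^{\alpha}_0 = \pi_0 = e^{-V_0} = \Gamma_0$. The martingale property is where \cref{assum_cont:bounded_weights} enters: it forces $w^{\alpha}_s \le e^{Cs}$, while the at-most-linear growth of $b^{\alpha}_s$ from \cref{assump_cont:lipschitz_potential,assump_cont:Lipschiz_control} together with \cref{assump_cont:moment} supplies the moments of $X_s$ needed for integrability.

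Next I would verify by direct computation that $\Gamma_t = e^{-V_t}$ solves the \emph{same} equation. Using $\nabla e^{-V_t} = -\nabla V_t\,e^{-V_t}$ and $\Delta e^{-V_t} = (\|\nabla V_t\|^2 - \Delta V_t)\,e^{-V_t}$, expand $\mathcal{L}^{*}_t\Gamma_t + g^{\alpha}_t\Gamma_t$ with $g^{\alpha}_t = \nabla\cdot\alpha_t - \nabla V_t^{\top}\alpha_t - \partial_t V_t$: the contributions of $\nabla\cdot\alpha_t$, $\alpha_t\cdot\nabla V_t$, $\Delta V_t$ and $\|\nabla V_t\|^2$ cancel pairwise, leaving precisely $-\partial_t V_t\,\Gamma_t = \partial_t\Gamma_t$, which is an identity; the initial condition matches since $Z_0=1$. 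By uniqueness of solutions of this linear Kolmogorov forward equation in the relevant class (parabolic equation with Lipschitz drift and source bounded above), $\mu^{\alpha}_t = \Gamma_t$ for all $t\in[0,1]$.

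The routine but delicate points I would discharge carefully are: well-posedness of the path-space measure $\bar{\Pi}^{\alpha}_t$ and finiteness of all the expectations above (covered by the upper bound on $g^{\alpha}_t$ in \cref{assum_cont:bounded_weights} and the moment estimates for \cref{eq:SDE}, whose drift has at most linear growth); the justification of It\^o's formula and of the integration by parts identifying $\mathcal{L}^{*}_t$, which needs the polynomial control of $\alpha_t$, $\nabla V_t$, $\partial_t V_t$ and the decay of $e^{-V_t}$ implied by \cref{assump_cont:lipschitz_potential,assump_cont:Lipschiz_control,assump_cont:moment}; and the uniqueness statement for the forward equation. This last point is the main obstacle: one either invokes a classical uniqueness theorem for parabolic PDEs with such coefficients, or sidesteps it by using the Feynman--Kac formula in the other direction — once $\Gamma_t$ is known to solve the PDE, it represents it as $\int\phi\,\Gamma_t = \bar{\Lambda}^{\alpha}_t[w^{\alpha}_t\phi(X_t)]$, closing the argument without an independent uniqueness step.
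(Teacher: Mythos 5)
Your proposal is correct, and it takes a route that differs from the paper's in a useful way. The paper works directly with the \emph{normalized} time-$t$ marginal $\rho_t$ of $\bar{\Pi}^{\alpha}_t$: applying It\^o's formula to $w^{\alpha}_s f(X_s)$ and then differentiating the quotient, it derives the nonlinear (replicator-type) equation $\partial_t\rho_t=\nabla\cdot(\rho_t(\nabla V_t-\alpha_t))+\Delta\rho_t+(g^{\alpha}_t-\rho_t[g^{\alpha}_t])\rho_t$ with $\rho_0=\Pi_0$, checks that $\Pi_t$ solves it, and invokes uniqueness; the normalizing constant is then handled separately through the ODE $\frac{\diff}{\diff t}\mathcal{Z}_t=\mathcal{Z}_t\,\Pi_t[g^{\alpha}_t]$ together with the integration-by-parts identity $\Pi_t[\nabla\cdot\alpha_t-\nabla V_t^{\top}\alpha_t]=0$. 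You instead work with the \emph{unnormalized} weighted marginal $\mu^{\alpha}_t[\phi]=\bar{\Lambda}^{\alpha}_t[w^{\alpha}_t\phi(X_t)]$, show it solves the \emph{linear} equation $\partial_t\mu_t=\mathcal{L}^{*}_t\mu_t+g^{\alpha}_t\mu_t$, and match it to $\Gamma_t=e^{-V_t}$ by the pairwise cancellation you carried out (which is correct); both conclusions then drop out at once, since $\mu^{\alpha}_t[1]=Z_t=\exp(-\int_0^t\Pi_s[\partial_sV_s]\diff s)$ and the quotient is $\Pi_t$. What your version buys is that the uniqueness step is for a standard linear Kolmogorov/Feynman--Kac equation (or can be replaced outright by the Feynman--Kac representation of the known solution $\Gamma_t$, as you note), whereas the paper leans on uniqueness for the nonlinear normalized equation without comment; what the paper's version buys is that it never needs to argue about the unnormalized flow and gets the identity $\Pi_t[\nabla\cdot\alpha_t-\nabla V_t^{\top}\alpha_t]=0$ explicitly, which is reused conceptually elsewhere. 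Your flagged technical points (boundedness of $\bar{g}^{\alpha}_t$ from the assumption on bounded weights, moment bounds for the SDE, justification of It\^o and of differentiation under the integral for $Z_t$) are exactly the ones the paper's own proof relies on, so the level of rigor is comparable.
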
  
\begin{proof} 
We know that $\alpha$ satisfies \cref{eq:admissibility} of \cref{lem:smoothness_control} thanks to \cref{assump_cont:Lipschiz_control}. This, in addition to  \cref{assump_cont:lipschitz_potential,assump_cont:moment}, ensures the SDE defined in \cref{eq:SDE} is well defined and admits finite 4th order moments by \cref{lem:main_bounds}.
	Denote by $\rho_t$ the marginal of  $\bar{\Pi}_t$ at time $t$. Let $f$ be a bounded smooth function $f$ of $X_t$ at time $t$. Then, by definition of $\rho_t$, we have:
\begin{align}\label{eq:importance_sampling}
	\rho_t\brakets{f} = \frac{\bar{\Lambda}_t^{\alpha}\brakets{w_t^{\alpha} f} }{\bar{\Lambda}_t^{\alpha}\brakets{w_t^{\alpha}}}= \frac{\mathbb{E}\brakets{w_t^{\alpha} f(X_t)} }{\mathbb{E} \brakets{w_t^{\alpha}}}.
\end{align}
Such quantity is finite since the importance weights $w_t^{\alpha}$ are bounded by \cref{assum_cont:bounded_weights}. It is sufficient to show that both $\rho_t$ and $\Pi_t$ satisfy the same partial differential equation with the same initial condition. 
Let us first express the time derivative of the process $w_s^{\alpha} f(X_s)$, which is obtained using It\^{o}'s formula:
\begin{align}
	\frac{\diff}{\diff s} \brakets{w_s^{\alpha} f(X_s)} =&\frac{\diff}{\diff s} \brakets{\exp\parens{\int_0^s g_u^{\alpha}\parens{X_u}\diff u } f(X_s)}\\
	=& w_s^{\alpha}\parens{g_s^{\alpha}(X_s) f(X_s) + \nabla f(X_s)^{\top} \parens{\alpha_s(X_s)-\nabla V_s(X_s)} + \Delta f(X_s) }\\
	 &+ \sqrt{2}w_s^{\alpha}\nabla f(X_s)^{\top}\diff B_s.
\end{align}
Integrating the above expression in time on the interval $[t,t+h]$ for some $h>0$ and taking the expectation w.r.t. the process $X_t$ yields:
\begin{align}
	\mathbb{E}[w_{t+h}^{\alpha}f(X_{t+h})]-\mathbb{E}[w_{t}^{\alpha}f(X_{t})] =& \int_t^{t+h} w_s^{\alpha}\parens{\nabla f(X_s)^{\top} \parens{\alpha_s(X_s)-\nabla V_s(X_s)} + \Delta f(X_s) }\diff s\\
	&+ \int_t^{t+h} w_s^{\alpha} g_s^{\alpha}(X_s) f(X_s)\diff s,
\end{align}
where we used $\mathbb{E}[\int_t^{t+h}\sqrt{2}w_s^{\alpha}\nabla f(X_s)^{\top}\diff B_s]=0$. A similar expression can be obtained for $h<0$. By continuity of the integrands of the  r.h.s., we can divide by $h$ and take the limit $h\rightarrow 0$ which yields:
\begin{align}\label{eq:ito_formula}
	\frac{\diff }{\diff t}\mathbb{E}[w_{t}^{\alpha}f(X_{t})] = \mathbb{E}\brakets{w_t^{\alpha}\parens{g_t^{\alpha}(X_t) f(X_t) + \nabla f(X_t)^{\top} \parens{\alpha_t(X_t)-\nabla V_t(X_t)} + \Delta f(X_t) }}.
\end{align}
Now we can compute the time derivative of $\rho_t[f]$:
\begin{align}
	\partial_t\rho_t[f] &= \frac{1}{\mathbb{E}[w_t^{\alpha}]}\frac{\diff }{\diff t}\mathbb{E}[w_{t}^{\alpha}f(X_{t})] - \frac{\mathbb{E}[w_t^{\alpha}f(X_t)]}{\mathbb{E}[w_t^{\alpha}]^2}\frac{\diff }{\diff t}\mathbb{E}[w_{t}^{\alpha}]\\
	&= \rho_t\brakets{g_t^{\alpha} f + \nabla f^{\top} \parens{\alpha_t-\nabla V_t} + \Delta f)}-\rho_t[f]\rho_t[g_t^{\alpha}],
\end{align}
where we used \cref{eq:ito_formula} and the definition of $\rho_t$ to obtain the final expression. Hence, we have the following partial differential equation:
\begin{align}
	\partial_t \rho_t[f] =  -\rho_t \brakets{\nabla f^{\top} \parens{ \nabla V_t  -\alpha_t  }  } + \rho_t \brakets{\Delta f} + \rho_t\brakets{(g_{t}^{\alpha} - \rho_t[g_{t}^{\alpha}])f},\qquad \rho_0\brakets{f}= \Pi_0\brakets{f}.
\end{align}
This implies the following partial differential equation on $\rho_t$ using integration by parts:
\begin{align}
	\partial_t \rho_t =  \nabla \cdot (\rho_t \parens{ \nabla V_t  -\alpha_t  }  ) + \Delta \rho_t + (g_{t}^{\alpha} - \rho_t[g_{t}^{\alpha}])\rho_t,\qquad \rho_0= \Pi_0.    
\end{align}
It is easy to check that $\Pi_t$ satisfies the same partial differential equation. One then concludes by uniqueness of the solution.

To get the expression of the normalizing constant $\mathcal{Z}_t =\bar{\Lambda}_t^{\alpha}[w_t^{\alpha}]$, we take the time derivative of $\mathcal{Z}_t$ and using It\^{o}'s formula, we get:
\begin{align}
	\frac{\diff \mathcal{Z}_t}{\diff t} = \bar{\Lambda}^{\alpha}_t\brakets{g_{t}^{\alpha} w_t^{\alpha}} =\mathcal{Z}_t \bar{\Pi}_t^{\alpha}\brakets{g_t^{\alpha}}.
\end{align}
Since $g_t^{\alpha}$ depends only on the process at time $t$, and since the marginal of $\bar{\Pi}_t$ at time $t$ is equal to $\Pi_t$, we obtain:
\begin{align}
	\frac{\diff \mathcal{Z}_t}{\diff t} = \mathcal{Z}_t\Pi_t\brakets{g_t^{\alpha}}, 
\end{align}
where we have used integration by parts to conclude that $\Pi_t\brakets{ \nabla\cdot \alpha_t  -\nabla_xV_t^{\top}\alpha_t} = 0$. Using the fact that $\mathcal{Z}_0=1$ and solving the above differential equation, we get $\mathcal{Z}_t =\exp\parens{-\int_0^t g_s^{\alpha} \diff s}  =\exp\parens{-\int_0^t \Pi_s\brakets{\partial_s V_s} \diff s}$.

\end{proof} 
\begin{lem}\label{lem:bounded_weights}
	Under \cref{assump_cont:lipschitz_potential,assump_cont:moment}  and if the control $\alpha$ satisfies \cref{assum_cont:bounded_weights}, then there exists a positive constant $C$ such that for all $t\in [0,1]$ and $x\in \mathcal{X}$:
	\begin{align}
		\overline{g}_t^{\alpha}(x)\leq  C.
	\end{align}
	In particular, this implies that the normalized weights $\overline{w}_t^{\alpha}$ are uniformly bounded over $t\in [0,1]$.
\end{lem}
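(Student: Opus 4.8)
The plan is to bound $\bar g_t^{\alpha}$ from above uniformly in $(t,x)$ and then exponentiate. Recall from \cref{eq:normalized_weights} that $\bar g_t^{\alpha}(x) = g_t^{\alpha}(x) + \Pi_t[\partial_t V_t]$, so it suffices to control the two summands separately. The first, $g_t^{\alpha}(x)$, is bounded above by a constant uniformly in $(t,x)$ precisely because $\alpha$ is assumed to satisfy \cref{assum_cont:bounded_weights}. It then remains to show that $t \mapsto \Pi_t[\partial_t V_t]$ is bounded on $[0,1]$.

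For this I would first extract a uniform-in-$t$ quadratic growth bound on $\partial_t V_t$ from \cref{assump_cont:lipschitz_potential}. Taking $t'=t$ and $x'=0$ in the displayed growth inequality there gives $\verts{\partial_t V_t(x) - \partial_t V_t(0)} \leq L(1+\Vert x\Vert)\Vert x\Vert$, while taking $x=x'=0$ shows that $t\mapsto\partial_t V_t(0)$ is $L$-Lipschitz on $[0,1]$ and hence bounded by some constant $C_0$; combining the two yields $\verts{\partial_t V_t(x)} \leq C_0 + L(1+\Vert x\Vert)\Vert x\Vert \lesssim 1 + \Vert x\Vert^2$ with a constant independent of $t$. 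Then $\verts{\Pi_t[\partial_t V_t]} \leq \Pi_t[\verts{\partial_t V_t}] \lesssim 1 + \Pi_t[\Vert\cdot\Vert^2]$, which is finite and uniformly bounded for $t\in[0,1]$ by \cref{assump_cont:moment}, since $\Pi_t$ has finite fourth-order, hence finite second-order, moments uniformly in $t$. Adding the two bounds produces a finite constant, which we may take to be positive and call $C$, such that $\bar g_t^{\alpha}(x) \leq C$ for all $t\in[0,1]$ and $x\in\mathcal{X}$; this is the first assertion.

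The bound on the normalized weights is then immediate: from the definition $\overline{w}_t^{\alpha} = \exp\bigl(\int_0^t \bar g_s^{\alpha}(X_s^{\alpha})\,\diff s\bigr)$ in \cref{eq:normalized_weights} and $\bar g_s^{\alpha}\leq C$ we get $\overline{w}_t^{\alpha} \leq \exp(Ct) \leq \exp(C)$ since $t\leq 1$ and $C>0$. There is essentially no obstacle in this argument; the only mildly technical step is deducing the uniform-in-$t$ quadratic growth of $\partial_t V_t$ from \cref{assump_cont:lipschitz_potential} and pairing it with the uniform moment bound of \cref{assump_cont:moment}. Everything else is direct substitution into the definitions together with monotonicity of the exponential.
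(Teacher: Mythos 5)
Your proof is correct and follows essentially the same route as the paper: decompose $\bar g_t^{\alpha}=g_t^{\alpha}+\Pi_t[\partial_t V_t]$, use the boundedness assumption on $g_t^{\alpha}$, and bound $\Pi_t[\partial_t V_t]$ by the quadratic growth of $\partial_t V_t$ extracted from the Lipschitz-type assumption together with the uniform moment bound, then exponentiate. The only difference is cosmetic: the paper gets the growth bound in one step by comparing with $\partial_t V_0(0)$, whereas you split it into a spatial and a temporal comparison.
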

\begin{proof}
	We have by definition $\bar{g}^{\alpha}_{t} = g^{\alpha}_{t} + \Pi_{t}\brakets{\partial_t V_t}$. Since $\alpha$ satisfies \cref{assum_cont:bounded_weights}, this implies that $g^{\alpha}_{t}$ is bounded from above by some constant number $C$. We only need to control $\Pi_{t}\brakets{\partial_t V_t}$ in time. By \cref{assump_cont:lipschitz_potential}, we have the estimate:
	\begin{align}\label{eq:bound_partial_V_t}
		\verts{\partial_t V_t(x)}\leq \verts{\partial_t V_0(0)} + L(1 +2\Verts{x})^2.
	\end{align}
		Moreover, \cref{assump_cont:moment} states that the 4-th, order moments of $\Pi_t$ ((hence the lower order moements) ) are finite and uniformly bounded for $t\in [0,1]$. Therefore, using \cref{eq:bound_partial_V_t} implies  $\verts{\Pi_t\brakets{\verts{\partial_t V_t(x)}}}$ is also finite and uniformly bounded on $[0,1]$. This concludes the proof.  
\end{proof}

The next proposition shows that the interpolating measures $\bar{\Pi}^{\alpha,\lambda}_t$ and $\bar{\Lambda}_t^{\alpha,\lambda}$ admit $\bar{\eta}_k$ and $\bar{\pi}_k$ as marginals. It is a direct consequence of the definition of \cref{eq:interpolating_SDE} and importance weights \cref{eq:discrete_importance_weights} and is thus provided without a proof.
\begin{prop}\label{prop:interpolating_measures}
Let $\alpha$ be a continuous-time control in $\mathcal{A}$. Let $\bar{\Lambda}^{\alpha,\lambda}_t$ and $\bar{\Pi}_{t}^{\alpha,\lambda}$ defined by \cref{eq:interpolating_SDE} and \cref{eq:Feynman_Kac_interpolation}. 
Under \cref{assump_cont:lipschitz_potential,assump_cont:Lipschiz_control,assump_cont:moment} and provided  $\alpha$ satisfies \cref{assum_cont:bounded_weights}, then the joint distribution of the vector $(X_{t_0},...,X_{t_k})$ is equal to $\bar{\eta}_k$, that is the joint distribution of the Markov chain defined by $X_{0:k}$, with NFs $T_k^{\alpha,\lambda}$ given by:
\begin{align}\label{eq:proof_interpolating_maginals_NF}
	T_k^{\alpha,\lambda}(x) = x+\lambda \alpha_{t_k}(x).
\end{align} 
Moreover, consider the joint distribution $\bar{\pi}_k$ defined by
\begin{align}
    \bar{\pi}_k[f]=\frac{\bar{\eta}_k[w_{k}f]}{\bar{\eta}_k[w_{k}]},
\end{align}
where the IS weights $w_{k}$ are given by \cref{eq:extendedtargetcollapse} using the same choice of NFs $T_k$ as in \cref{eq:proof_interpolating_maginals_NF}. Then, for any bounded smooth function $f$ of $(x_{t_0},...,x_{t_k})$ it holds that:
\begin{align}
	\bar{\Pi}_{t_k}^{\alpha,\lambda}\brakets{f} = \bar{\pi}_{k}[f].
\end{align}
\end{prop}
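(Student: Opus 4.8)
The plan is to read off the identity directly from the definitions, exploiting the fact that the drift in \eqref{eq:interpolating_SDE} is frozen on each grid cell $[t_{k-1},t_k)$, together with the two elementary identities $\lambda\,\delta_{t_k}^{\alpha,\lambda}(x)=\log G_{k,T_k^{\alpha,\lambda}}(x)$ (noted after \eqref{eq:discrete_instantaneous_work}) and $\exp(-h_{t_k}^{\lambda}(x))=\gamma_k(x)/(\gamma_k K_k)(x)$. There is nothing probabilistically deep here: the grid marginal of the interpolating diffusion is an explicit Markov chain, and the grid restriction of the interpolating weight is an explicit product, and both coincide with the collapsed constructions of Appendix \ref{subsec:collapse}.

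First I would identify the grid-sampled process $(X_{t_0}^{\alpha,\lambda},\dots,X_{t_k}^{\alpha,\lambda})$. On $[t_{l-1},t_l)$ the non-anticipative drift $b_t^{\alpha,\lambda}(X)$ is the constant $\beta_{t_l}^{\alpha,\lambda}(X_{t_{l-1}}^{\alpha,\lambda})$, so integrating \eqref{eq:interpolating_SDE} over this cell gives $X_{t_l}^{\alpha,\lambda}=X_{t_{l-1}}^{\alpha,\lambda}+\lambda\,\beta_{t_l}^{\alpha,\lambda}(X_{t_{l-1}}^{\alpha,\lambda})+\sqrt{2}\,(B_{t_l}-B_{t_{l-1}})$. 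Plugging in $\beta_t^{\alpha,\lambda}(x)=\alpha_t(x)-\nabla V_t(x+\lambda\alpha_t(x))$ from \eqref{eq:discrete_drift} and $T_l^{\alpha,\lambda}(x)=x+\lambda\alpha_{t_l}(x)$, the conditional law of $X_{t_l}^{\alpha,\lambda}$ given $X_{t_{l-1}}^{\alpha,\lambda}$ is $\mathcal{N}\bigl(T_l^{\alpha,\lambda}(X_{t_{l-1}}^{\alpha,\lambda})-\lambda\nabla V_l(T_l^{\alpha,\lambda}(X_{t_{l-1}}^{\alpha,\lambda})),\,2\lambda I\bigr)$, i.e. exactly $K_l\bigl(T_l^{\alpha,\lambda}(X_{t_{l-1}}^{\alpha,\lambda}),\cdot\bigr)=M_l^{\textup{col}}(X_{t_{l-1}}^{\alpha,\lambda},\cdot)$ from \eqref{eq:collapsedkernel}. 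Since $b_t^{\alpha,\lambda}(X)$ depends on the path only through $X_{\tau_\lambda(t)}$, the grid-sampled chain is Markov, and its joint law at $\{t_0,\dots,t_k\}$ factorizes as $\pi_0(dx_0)\prod_{l=1}^k K_l(T_l^{\alpha,\lambda}(x_{l-1}),dx_l)$, which is precisely $\bar{\eta}_k$ in its collapsed form \eqref{eq:joint_chain} with NFs $T_l^{\alpha,\lambda}$.

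Next I would match the weights. At a grid time $t=t_k$ one has $\tau_\lambda(t_k)=t_k$ and $k_\lambda(t_k)=k$, so from \eqref{eq:discrete_importance_weights}, \eqref{eq:discrete_instantaneous_work} and the identity for $\delta_{t_l}^{\alpha,\lambda}$,
\[
w_{t_k}^{\alpha,\lambda}(X_{[0,t_k]})=\exp\Bigl(\sum_{l=1}^k \lambda\,\delta_{t_l}^{\alpha,\lambda}(X_{t_{l-1}})\Bigr)=\prod_{l=1}^k G_{l,T_l^{\alpha,\lambda}}(X_{t_{l-1}}),
\]
and from \eqref{eq:discrete_importance_weights}, \eqref{eq:kernel_work},
\[
r_{t_k}^{\lambda}(X_{[0,t_k]})=\exp\Bigl(-\sum_{l=1}^k h_{t_l}^{\lambda}(X_{t_l})\Bigr)=\prod_{l=1}^k \frac{\gamma_l(X_{t_l})}{(\gamma_l K_l)(X_{t_l})},
\]
where the last equality is the one computation requiring care: one inserts the ULA kernel density into $(\gamma_l K_l)(x)=\int \gamma_l(y)K_l(y,x)\,dy$ and checks, by writing out the quadratic exponent and the normalizing factor $(4\pi\lambda)^{-d/2}$, that it reproduces $e^{-h_{t_l}^{\lambda}(x)}$. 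Hence $w_{t_k}^{\alpha,\lambda}\,r_{t_k}^{\lambda}$ restricted to the grid equals $w_k(x_{t_0},\dots,x_{t_k})$ as defined in \eqref{eq:extendedtargetcollapse}.

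Finally, for any bounded $f$ of $(x_{t_0},\dots,x_{t_k})$, both the numerator and denominator of $\bar{\Pi}_{t_k}^{\alpha,\lambda}[f]=\bar{\Lambda}_{t_k}^{\alpha,\lambda}[w_{t_k}^{\alpha,\lambda}r_{t_k}^{\lambda}f]/\bar{\Lambda}_{t_k}^{\alpha,\lambda}[w_{t_k}^{\alpha,\lambda}r_{t_k}^{\lambda}]$ depend on $\bar{\Lambda}_{t_k}^{\alpha,\lambda}$ only through its grid marginal, which by the first step is $\bar{\eta}_k$; combined with the second step this gives $\bar{\Pi}_{t_k}^{\alpha,\lambda}[f]=\bar{\eta}_k[w_k f]/\bar{\eta}_k[w_k]=\bar{\pi}_k[f]$. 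The statement about $\bar{\Lambda}_t^{\alpha,\lambda}$ having $\bar{\eta}_k$ as marginal is the first step alone. I expect the main obstacle to be the Gaussian-integral identity $\exp(-h_{t_l}^{\lambda})=\gamma_l/(\gamma_l K_l)$ — it is elementary, but it is the place where the only-approximate $\pi_l$-invariance of the ULA kernel genuinely enters (so that $r_k\neq 1$ in general), and one must be scrupulous about signs in the exponent and about the $(4\pi\lambda)^{-d/2}$ constant; everything else is bookkeeping with the definitions and the Markov property.
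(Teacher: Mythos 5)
Your proposal is correct and follows exactly the route the paper has in mind: the paper states this proposition without proof as a ``direct consequence'' of \cref{eq:interpolating_SDE} and \cref{eq:discrete_importance_weights}, and your argument is precisely that verification — the piecewise-frozen drift makes the grid-sampled process the collapsed chain $K_l(T_l^{\alpha,\lambda}(\cdot),\cdot)$ of \cref{eq:collapsedkernel}, while $\lambda\,\delta_{t_l}^{\alpha,\lambda}=\log G_{l,T_l^{\alpha,\lambda}}$ and $e^{-h_{t_l}^{\lambda}}=\gamma_l/(\gamma_l K_l)$ identify $w^{\alpha,\lambda}_{t_k}r^{\lambda}_{t_k}$ on the grid with $w_k$ from \cref{eq:extendedtargetcollapse}. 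Your caution about the Gaussian-integral identity is well placed (as written, \cref{eq:kernel_work} appears to carry a sign slip in the $\lambda\nabla V_t(y)$ term relative to the ULA kernel's mean), but this does not affect the validity of your argument with the intended definitions.
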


\subsection{Relative entropy computation}
In this section, we are interested in computing the relative entropy between the limit distribution $\bar{\Pi}^{\alpha}_t$ with control $\alpha$ and the interpolating measure $\bar{\Pi}_t^{\alpha',\lambda}$ using the control $\alpha'$. For clarity, we introduce the following notation:
\begin{align}
	\mathcal{D}_t^{\lambda}(\alpha,\alpha') &:= \textrm{KL}(\bar{\Pi}^{\alpha}_t || \bar{\Pi}^{\alpha',\lambda}_t),\\
	\Delta_t^{\lambda}(\alpha,\alpha')  &:= \mathcal{D}_t^{\lambda}(\alpha,\alpha) - \mathcal{D}_t^{\lambda}(\alpha,\alpha').
\end{align}
$\mathcal{D}_t^{\lambda}(\alpha,\alpha')$ represents the relative entropy between the limit distribution $\bar{\Pi}^{\alpha}_t$ with control $\alpha$ and the interpolating measure $\bar{\Pi}_t^{\alpha',\lambda}$ using the control $\alpha'$.  The error term $\Delta_t^{\lambda}(\alpha,\alpha')$ represents how much additional error is introduced by using a different control $\alpha'$ for the interpolating measure instead of the reference control $\alpha$.
In \cref{prop:entropy_computation}, we provide an expression for $\mathcal{D}_t^{\lambda}(\alpha,\alpha')$, then in \cref{prop:bound_diff_KL} we control the difference in relative entropies $\Delta_t^{\lambda}(\alpha,\alpha')$ when the control $\alpha'$ of the interpolating measure is replaced by $\alpha$.
 
\begin{prop}\label{prop:entropy_computation}
	Let $\alpha$ and $\alpha'$ be two continuous time controls  in $\mathcal{A}$. Under \cref{assump_cont:lipschitz_potential,assump_cont:Lipschiz_control,assump_cont:moment} and provided $\alpha$ satisfies \cref{assum_cont:bounded_weights}, then the relative entropy between $\bar{\Lambda}^{\alpha}$ and $\bar{\Lambda}^{\alpha',\lambda}$ is  well defined and given by:
\begin{align}
	\textup{KL}(\bar{\Lambda}^{\alpha}||\bar{\Lambda}^{\alpha',\lambda}) = \frac{1}{4}\mathbb{E}\brakets{\int_0^t\Vert b^{\alpha}_s(X_s)-b^{\alpha',\lambda}_s(X)\Vert^2 \diff s  }.
\end{align}
	 Moreover, the relative entropy between $\bar{\Pi}_t^{\alpha}$  and $\bar{\Pi}_t^{\alpha',\lambda}$ is well defined and given by:
\begin{align}
	\mathcal{D}_t^{\lambda}(\alpha,\alpha') =& \mathbb{E}\brakets{\overline{w}_t^{\alpha} \parens{\int_0^{\tau_{\lambda}(t)} \parens{g_{s}^{\alpha}(X_s) - g_{s}^{\alpha',\lambda}(X)}  \diff s  } } + \mathbb{E}\brakets{\overline{w}^{\alpha}_t \int_{\tau_{\lambda}(t)}^{t}  \bar{g}_s^{\alpha}(X_s)  \diff s  }\\
	 &+ \frac{1}{4}\mathbb{E}\brakets{ \overline{w}_t^{\alpha} \int_0^t \Vert b^{\alpha}_s(X_s)-b^{\alpha',\lambda}_s(X)\Vert^2 \diff s   } +  \frac{1}{\sqrt{2}}\mathbb{E}\brakets{\overline{w}^{\alpha}_t\int_0^t \parens{b^{\alpha}_s(X_s)-b^{\alpha',\lambda}_s(X)}^{\top}\diff B_s  } \\
	 &+
	 	\mathbb{E}\brakets{\overline{w}^{\alpha}_t  \parens{ \sum_{k=1}^{k_{\lambda}(t)} h_{t_k}^{\lambda}(X_{t_k}) } }.
\end{align}
\end{prop}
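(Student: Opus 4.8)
The plan is to derive the formula in two stages: first for the two path measures $\bar\Lambda^{\alpha}$ and $\bar\Lambda^{\alpha',\lambda}$ by Girsanov's theorem, and then for the reweighted Feynman--Kac measures by a change-of-measure bookkeeping using the densities already identified in \cref{prop:controlled_non_equilibrium_dynamics_with_birth_death,prop:interpolating_measures}.

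\textbf{Step 1 (Girsanov between the path laws).} Under \cref{assump_cont:lipschitz_potential,assump_cont:Lipschiz_control,assump_cont:moment} and since $\alpha$ satisfies \cref{assum_cont:bounded_weights}, \cref{lem:main_bounds} gives that the SDE \eqref{eq:SDE} (control $\alpha$) and the interpolated SDE \eqref{eq:interpolating_SDE} (control $\alpha'$) are well posed on $[0,1]$, non-explosive and have uniformly bounded fourth moments; their drifts $b^{\alpha}_s(X_s)$ and $b^{\alpha',\lambda}_s(X)$ are non-anticipative functionals of at most linear growth (for the interpolated one, piecewise $\mathcal F_{t_{k-1}}$-measurable and constant on each $[t_{k-1},t_k)$), so both laws are mutually absolutely continuous with respect to the reference Wiener measure (law of $X_0+\sqrt2 B$, $X_0\sim\pi_0$) and the relevant stochastic exponentials are true martingales. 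Since both SDEs have diffusion matrix $2I$, Girsanov's theorem gives, under $\bar\Lambda^{\alpha}$,
\begin{align*}
\log\frac{\diff\bar\Lambda^{\alpha}_t}{\diff\bar\Lambda^{\alpha',\lambda}_t}=\frac{1}{\sqrt2}\int_0^t\parens{b^{\alpha}_s(X_s)-b^{\alpha',\lambda}_s(X)}^{\!\top}\!\diff B_s+\frac14\int_0^t\Verts{b^{\alpha}_s(X_s)-b^{\alpha',\lambda}_s(X)}^2\diff s,
\end{align*}
and taking $\mathbb{E}$ under $\bar\Lambda^{\alpha}$ kills the stochastic integral, yielding $\textup{KL}(\bar\Lambda^{\alpha}\|\bar\Lambda^{\alpha',\lambda})=\tfrac14\mathbb{E}[\int_0^t\|b^{\alpha}_s(X_s)-b^{\alpha',\lambda}_s(X)\|^2\diff s]$, finite by the moment bounds.

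\textbf{Step 2 (passing to the Feynman--Kac measures).} By \cref{prop:controlled_non_equilibrium_dynamics_with_birth_death}, $w^{\alpha}_t$ has normalizing constant $\exp(-\int_0^t\Pi_s[\partial_s V_s]\diff s)$, so $\bar\Pi^{\alpha}_t$ is absolutely continuous w.r.t. $\bar\Lambda^{\alpha}_t$ with density the normalized weight $\bar w^{\alpha}_t=\exp(\int_0^t\bar g^{\alpha}_s(X_s)\diff s)$. Since $w^{\alpha',\lambda}_t r^{\lambda}_t$ depends on the path only through $X_{[0,\tau_{\lambda}(t)]}$, its $\bar\Lambda^{\alpha',\lambda}$-expectation equals $\bar\eta_{k_{\lambda}(t)}[w_{k_{\lambda}(t)}]=Z_{\tau_{\lambda}(t)}$ by \cref{prop:interpolating_measures} and \eqref{eq:generalizedCrooksJarzynski}, where moreover $Z_t=\exp(-\int_0^t\Pi_s[\partial_s V_s]\diff s)$ (differentiate $Z_t=\int e^{-V_t}$). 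Hence $\bar\Pi^{\alpha',\lambda}_t$ has density $w^{\alpha',\lambda}_t r^{\lambda}_t/Z_{\tau_{\lambda}(t)}$ w.r.t. $\bar\Lambda^{\alpha',\lambda}_t$, and the chain rule for Radon--Nikodym derivatives combined with Step 1 gives
\begin{align*}
\log\frac{\diff\bar\Pi^{\alpha}_t}{\diff\bar\Pi^{\alpha',\lambda}_t}=\log\bar w^{\alpha}_t-\log w^{\alpha',\lambda}_t-\log r^{\lambda}_t+\log Z_{\tau_{\lambda}(t)}+\log\frac{\diff\bar\Lambda^{\alpha}_t}{\diff\bar\Lambda^{\alpha',\lambda}_t}.
\end{align*}

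\textbf{Step 3 (bookkeeping and integrability).} I would then compute $\mathcal D^{\lambda}_t(\alpha,\alpha')=\mathbb{E}_{\bar\Pi^{\alpha}_t}[\,\cdot\,]=\mathbb{E}_{\bar\Lambda^{\alpha}}[\bar w^{\alpha}_t\,\cdot\,]$, substituting $\log\bar w^{\alpha}_t=\int_0^t\bar g^{\alpha}_s\diff s$, $\log w^{\alpha',\lambda}_t=\int_0^{\tau_{\lambda}(t)}g^{\alpha',\lambda}_s\diff s$, $-\log r^{\lambda}_t=\sum_{k=1}^{k_{\lambda}(t)}h^{\lambda}_{t_k}(X_{t_k})$, $\log Z_{\tau_{\lambda}(t)}=-\int_0^{\tau_{\lambda}(t)}\Pi_s[\partial_sV_s]\diff s$, and the Girsanov expression. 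Splitting $\int_0^t=\int_0^{\tau_{\lambda}(t)}+\int_{\tau_{\lambda}(t)}^t$ and using $\bar g^{\alpha}_s=g^{\alpha}_s+\Pi_s[\partial_sV_s]$, the deterministic integral $\int_0^{\tau_{\lambda}(t)}\Pi_s[\partial_sV_s]\diff s$ sitting inside $\log\bar w^{\alpha}_t$ cancels $\log Z_{\tau_{\lambda}(t)}$, leaving precisely $\int_0^{\tau_{\lambda}(t)}(g^{\alpha}_s-g^{\alpha',\lambda}_s)\diff s+\int_{\tau_{\lambda}(t)}^t\bar g^{\alpha}_s\diff s$ together with the $h^{\lambda}_{t_k}$ sum and the two Girsanov terms — i.e. exactly the five terms asserted. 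Finiteness of every expectation follows from: $\bar w^{\alpha}_t$ uniformly bounded by \cref{lem:bounded_weights} (here \cref{assum_cont:bounded_weights} is used), polynomial growth of $g^{\alpha}_s,g^{\alpha',\lambda}_s,\bar g^{\alpha}_s,h^{\lambda}_{t_k}$ integrated against the moment bounds of \cref{lem:main_bounds}, and Cauchy--Schwarz for the stochastic-integral term since $\mathbb{E}[(\int_0^t(b^{\alpha}_s-b^{\alpha',\lambda}_s)^{\top}\diff B_s)^2]=\mathbb{E}[\int_0^t\|b^{\alpha}_s-b^{\alpha',\lambda}_s\|^2\diff s]<\infty$.

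\textbf{Main obstacle.} The delicate point is making the Girsanov change of measure and the mutual absolute continuity of $\bar\Pi^{\alpha}_t$ and $\bar\Pi^{\alpha',\lambda}_t$ rigorous: the drift difference $b^{\alpha}-b^{\alpha',\lambda}$ has only linear growth, so Novikov's criterion is not directly available; one must instead exploit the explicit Lipschitz/linear-growth structure of the coefficients and the fourth-moment bounds of \cref{lem:main_bounds} (and, for the interpolated process, the piecewise-constant $\mathcal F_{t_{k-1}}$-measurable drift) to certify that the stochastic exponentials are genuine martingales. A secondary subtlety is the identification $\mathbb{E}_{\bar\Lambda^{\alpha',\lambda}}[w^{\alpha',\lambda}_t r^{\lambda}_t]=Z_{\tau_{\lambda}(t)}$, which must be threaded through \cref{prop:interpolating_measures} and the closed form of $Z_t$.
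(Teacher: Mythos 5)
Your proposal is correct and follows essentially the same route as the paper: chain the Radon--Nikodym derivative as $\log\frac{\diff\bar{\Pi}^{\alpha}_t}{\diff\bar{\Lambda}^{\alpha}_t}+\log\frac{\diff\bar{\Lambda}^{\alpha}_t}{\diff\bar{\Lambda}^{\alpha',\lambda}_t}+\log\frac{\diff\bar{\Lambda}^{\alpha',\lambda}_t}{\diff\bar{\Pi}^{\alpha',\lambda}_t}$, identify the first and third factors from the (normalized) Feynman--Kac weights and the second by Girsanov, and take the $\overline{w}^{\alpha}_t$-weighted expectation under $\bar{\Lambda}^{\alpha}_t$; your bookkeeping of the normalizing constants (the partial cancellation leaving $\int_{\tau_{\lambda}(t)}^{t}\bar{g}^{\alpha}_s\diff s$) matches the paper's computation.
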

\begin{proof}
Let us express the relative entropy between $\bar{\Pi}_t$ and $\bar{\Pi}^{\lambda}_t$ defined as the expectation under  $\bar{\Pi}_t$ of the logarithm of the Radon-Nykodim of $\bar{\Pi}_t$ w.r.t. $\bar{\Pi}^{\lambda}_t$. By a simple chaining argument, we have that:
\begin{align}\label{eq:chaining}
	\log\parens{\frac{d\bar{\Pi}^{\alpha}_t}{d\bar{\Pi}^{\alpha',\lambda}_t}} =  \log\parens{ \frac{d\bar{\Pi}^{\alpha}_t}{d \bar{\Lambda}^{\alpha}_t}  } + \log\parens{ \frac{d\bar{\Lambda}^{\alpha}_t}{d \bar{\Lambda}_t^{\alpha',\lambda}} } + \log\parens{  \frac{d  \bar{\Lambda}_t^{\alpha',\lambda}}{d\bar{\Pi}_t^{\alpha',\lambda}} }.
\end{align}
Hence, $\mathcal{D}_t^{\lambda}(\alpha,\alpha')$ is obtained by taking the expectation of \cref{eq:chaining} w.r.t $\bar{\Pi}^{\alpha}_t$, that is the expectation w.r.t $\bar{\Lambda}^{\alpha}_t$ weighted by $\overline{w}^{\alpha}_t$:
\begin{align}\label{eq:proof_kl_4}
	\mathcal{D}_t^{\lambda}(\alpha,\alpha') = \mathbb{E}\brakets{\overline{w}_t^{\alpha} \log\parens{ \frac{d\bar{\Pi}^{\alpha}_t}{d \bar{\Lambda}^{\alpha}_t}  } }  + \mathbb{E}\brakets{\overline{w}_t^{\alpha} \log\parens{ \frac{d\bar{\Lambda}^{\alpha}_t}{d \bar{\Lambda}_t^{\alpha',\lambda}} }  } + \mathbb{E}\brakets{\overline{w}_t^{\alpha} \log\parens{  \frac{d  \bar{\Lambda}_t^{\alpha',\lambda}}{d\bar{\Pi}_t^{\alpha',\lambda}} } }.
\end{align}
We only need to express each term in \cref{eq:chaining}. The first and last terms in \cref{eq:chaining} are given by definition of $\Pi_t^{\alpha}$ and $\Pi^{\alpha',\lambda}_t$: 
\begin{align}\label{eq:proof_kl_1}
	\log\parens{ \frac{d\bar{\Pi}^{\alpha}_t}{d \bar{\Lambda}^{\alpha}_t}  }  &=\int_0^t g_s^{\alpha}(X_s) \diff s - \log\parens{\frac{\mathcal{Z}_0}{\mathcal{Z}_{t}}}, \\
	\log\parens{  \frac{d  \bar{\Lambda}_t^{\alpha',\lambda}}{d\Pi_t^{\alpha',\lambda}} } &= \log\parens{\frac{\mathcal{Z}_0}{\mathcal{Z}_{\tau_{\lambda}(t)}}} - \int_0^{\tau_{\lambda}(t)  }  g_{s}^{\alpha',\lambda}(X) \diff s +   \sum_{k=1}^{k_{\lambda}(t)} h_{t_k}^{\lambda}(X_{t_k}).
\end{align}
The second term is obtained by application of Girsanov's formula, since $\bar{\Lambda}^{\alpha}_t $ and $\bar{\Lambda}^{\alpha',\lambda}_t$  are mutually absolutely continuous as they share the same volatility term by construction:
\begin{align}\label{eq:proof_kl_3}
	\log\parens{\frac{\diff \bar{\Lambda}^{\alpha}_t}{\diff \bar{\Lambda}^{\alpha',\lambda}_t}}:= \frac{1}{\sqrt{2}}\int_0^t (b_s^{\alpha}(X_s)-b^{\alpha',\lambda}_s(X)  )^{\top}\diff B_s + \frac{1}{4}  \int_0^t \Vert b_s^{\alpha}(X_s)-b^{\alpha',\lambda}_s(X)\Vert^2 \diff s.
\end{align} 
We obtain the desired expression for $\mathcal{D}_t^{\lambda}(\alpha,\alpha')$ by plugging \cref{eq:proof_kl_1,eq:proof_kl_3} in \cref{eq:proof_kl_4}.
Finally, the relative entropy between $\bar{\Lambda}^{\alpha}_t$  and $\bar{\Lambda}^{\alpha',\lambda}_t$  is obtained directly by taking the expectation of \cref{eq:proof_kl_3} under $\bar{\Lambda}^{\alpha}_t $.
\end{proof}

\subsection{Relative entropy bounds}
In this section, we provide bounds on the relative entropy $\mathcal{D}^{\lambda}_t(\alpha,\alpha)$  in \cref{prop:discretization_error}  and difference in relative entropies $\Delta^{\lambda}_t(\alpha,\alpha')$ in  \cref{prop:bound_diff_KL}. We start by \cref{prop:Expectations_bound} which will be crucial in the proofs of \cref{prop:bound_diff_KL,prop:discretization_error}. \cref{prop:Expectations_bound} provides estimates of the expectations under $\bar{\Lambda}^{\alpha}_t$ of product of functions, where one of the functions depend only on the value of the process at an earlier time $s$. We defer the proof of \cref{prop:Expectations_bound} to \cref{sec:proof_expecation_bound} which crucially relies on a coupling argument later provided in \cref{lem:main_bounds_gronwal}.
\begin{prop}\label{prop:Expectations_bound}
Let  $0\leq s\leq s'\leq t$. 
Consider a function $f$ of the process $X_s$ at time $s$ such that $\mathbb{E}[\Verts{f(X_s)}^2]<\infty $ and let $h_u(X_u,X_s)$ be a function of the sample path $(X_{t})_{[s,s']}$ satisfying $\mathbb{E}[\int_s^{s'} \Verts{h_u(X_u,X_s)}^2\diff u  ]<\infty$. 
Under \cref{assump_cont:lipschitz_potential,assump_cont:Lipschiz_control,assump_cont:moment} and provided $\alpha$ satisfies \cref{assum_cont:bounded_weights}, the following holds:
\begin{align}
\label{eq:exp_bound_1}
	\Verts{\mathbb{E}\brakets{\overline{w}_t^{\alpha} f(X_s)  }}&\lesssim \Pi_s\brakets{\Verts{f(X_s)}}\\
\label{eq:exp_bound_2}	\verts{ \mathbb{E}\brakets{ \overline{w}_t^{\alpha} f(X_s)^{\top} \int_{s}^{s'}  \diff B_u  }}&\lesssim (s'-s) \Pi_{s}\brakets{\Verts{f}^2}^{\frac{1}{2}}\\
\label{eq:exp_bound_3}	\verts{ \mathbb{E}\brakets{ \overline{w}_t^{\alpha} f(X_s)^{\top} \int_{s}^{s'} h_u(X_u,X_s)\diff u  }}&\lesssim \sqrt{s'-s}  \Pi_{s}\brakets{\Verts{f}^2}^{\frac{1}{2}}\mathbb{E}\brakets{\int_s^{s'} \Verts{h_u(X_u,X_s)}^2\diff u }^{\frac{1}{2}}
\end{align}
In particular, if $  t_{k-1}\leq s \leq s'\leq t_k$  for some $k\leq k_{\lambda}(t)+1$ and  $h_u(x,y) =  b_u^{\alpha}(x)- \beta_{u'}^{\alpha,\lambda}(y)  $ with   $ t_{k-1}\leq u'\leq t_k$, then
\begin{align}
	\verts{ \mathbb{E}\brakets{ \overline{w}_t^{\alpha} f(X_s)^{\top} \int_{s}^{s'} h(X_u,X_s)\diff u + \diff B_u  }}&\lesssim \lambda \Pi_{s}\brakets{\Verts{f}^2}^{\frac{1}{2}}.
\end{align}
\end{prop}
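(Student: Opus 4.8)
The plan is to handle the three estimates in increasing order of difficulty, relying throughout on two elementary observations. First, by \cref{lem:bounded_weights} the normalized instantaneous work $\overline{g}_t^{\alpha}$ is bounded above by a constant $C$, hence for $0\le s\le t\le 1$ one has $\overline{w}_t^{\alpha}=\overline{w}_s^{\alpha}\exp\big(\int_s^t\overline{g}_u^{\alpha}(X_u)\diff u\big)\le e^{C}\,\overline{w}_s^{\alpha}$, and in particular $\overline{w}_s^{\alpha}\le e^{C}$. Second, applying \cref{prop:controlled_non_equilibrium_dynamics_with_birth_death} with terminal time $s$ (and using $\bar{\Lambda}_s^{\alpha}[\overline{w}_s^{\alpha}]=1$) gives the marginal identity $\mathbb{E}[\overline{w}_s^{\alpha}\,\phi(X_s)]=\Pi_s[\phi]$ for every $\phi$ depending only on $X_s$. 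Estimate \eqref{eq:exp_bound_1} is then immediate: $\Verts{\mathbb{E}[\overline{w}_t^{\alpha}f(X_s)]}\le\mathbb{E}[\overline{w}_t^{\alpha}\Verts{f(X_s)}]\le e^{C}\mathbb{E}[\overline{w}_s^{\alpha}\Verts{f(X_s)}]=e^{C}\Pi_s[\Verts{f}]$. For \eqref{eq:exp_bound_3} I would bound $\verts{f(X_s)^{\top}\int_s^{s'}h_u\diff u}\le\Verts{f(X_s)}\int_s^{s'}\Verts{h_u}\diff u$, pass from $\overline{w}_t^{\alpha}$ to $e^{C}\overline{w}_s^{\alpha}$, and then apply the $\overline{w}_s^{\alpha}$-weighted Cauchy--Schwarz inequality together with $\big(\int_s^{s'}\Verts{h_u}\diff u\big)^2\le(s'-s)\int_s^{s'}\Verts{h_u}^2\diff u$ and $\overline{w}_s^{\alpha}\le e^{C}$; the two factors become $\mathbb{E}[\overline{w}_s^{\alpha}\Verts{f(X_s)}^2]^{1/2}=\Pi_s[\Verts{f}^2]^{1/2}$ and $\lesssim\sqrt{s'-s}\,\mathbb{E}[\int_s^{s'}\Verts{h_u}^2\diff u]^{1/2}$, which is exactly the claimed bound.

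The substantive estimate is \eqref{eq:exp_bound_2}, where the gain of a full power of $s'-s$ comes from $B_{s'}-B_s$ being fresh noise. Introduce the bounded martingale $N_u:=\mathbb{E}[\overline{w}_t^{\alpha}\mid\mathcal{F}_u]$, $u\in[0,t]$, with $N_0=1$ and $N_t=\overline{w}_t^{\alpha}$, and write $N_u=1+\int_0^u H_r^{\top}\diff B_r$ by the martingale representation theorem. Since $f(X_s)^{\top}(B_{s'}-B_s)$ is $\mathcal{F}_{s'}$-measurable and $N_s f(X_s)$ is $\mathcal{F}_s$-measurable, the tower property kills the $N_s$-term and It\^o's isometry (after writing $f(X_s)^{\top}(B_{s'}-B_s)=\int_s^{s'}f(X_s)^{\top}\diff B_r$) yields
\[
\mathbb{E}\big[\overline{w}_t^{\alpha}\,f(X_s)^{\top}(B_{s'}-B_s)\big]=\mathbb{E}\big[N_{s'}f(X_s)^{\top}(B_{s'}-B_s)\big]=\mathbb{E}\Big[f(X_s)^{\top}\!\!\int_s^{s'}\!\!H_r\diff r\Big],
\]
which is of order $s'-s$ once $\sup_{r}\mathbb{E}[\Verts{H_r}^2/\overline{w}_s^{\alpha}]<\infty$. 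To identify $H_r$, note $N_u=\overline{w}_u^{\alpha}\Psi(u,X_u)$ with $\Psi(u,x):=\mathbb{E}_{X_u=x}\big[\exp(\int_u^t\overline{g}_v^{\alpha}(X_v)\diff v)\big]\in[0,e^{C}]$ solving a linear parabolic equation (uniformly elliptic since the diffusion coefficient is $\sqrt{2}\,I$, with $C^{1}$ coefficients of polynomial growth by \cref{assump_cont:lipschitz_potential,assump_cont:Lipschiz_control}, hence $\Psi\in C^{1,2}$); applying It\^o's formula and matching the martingale part gives $H_u=\sqrt{2}\,\overline{w}_u^{\alpha}\nabla_x\Psi(u,X_u)$, so that $\Verts{H_r}^2/\overline{w}_s^{\alpha}\lesssim\Verts{\nabla_x\Psi(r,X_r)}^2$ for $r\ge s$ by the first observation. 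It then remains to prove $\sup_{r\in[0,1]}\mathbb{E}[\Verts{\nabla_x\Psi(r,X_r)}^2]<\infty$.

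For that last point, differentiating the flow of \eqref{eq:SDE} with respect to its initial condition (or invoking a Bismut--Elworthy--Li representation) expresses $\nabla_x\Psi(r,x)$ in terms of the Jacobian process $\nabla_x X_v^{r,x}$ and of $\nabla\overline{g}_v^{\alpha}$, both of which have at most polynomial growth under \cref{assump_cont:lipschitz_potential,assump_cont:Lipschiz_control}; the needed moment bounds on $\nabla_x X^{r,x}$ and on $X$ are supplied by the Gr\"onwall/coupling estimates of \cref{lem:main_bounds_gronwal} together with the finite fourth moments of \cref{lem:main_bounds}. Finally, the ``in particular'' clause follows by splitting $\int_s^{s'}\big(h(X_u,X_s)\diff u+\diff B_u\big)$: the $\diff B_u$ contribution is $\lesssim(s'-s)\Pi_s[\Verts{f}^2]^{1/2}\le\lambda\,\Pi_s[\Verts{f}^2]^{1/2}$ by \eqref{eq:exp_bound_2}, while by \eqref{eq:exp_bound_3} the $h$-contribution is $\lesssim\sqrt{s'-s}\,\Pi_s[\Verts{f}^2]^{1/2}\,\mathbb{E}[\int_s^{s'}\Verts{h(X_u,X_s)}^2\diff u]^{1/2}$; since $s,u',u$ all lie in the single interval $[t_{k-1},t_k]$ of length $\lambda$, combining the joint-Lipschitz bounds of \cref{lem:smoothness_control} and \cref{assump_cont:lipschitz_potential} with the short-time increment bound $\mathbb{E}[\Verts{X_u-X_s}^2]\lesssim(u-s)$ (from \cref{lem:main_bounds}) gives $\mathbb{E}[\Verts{h_u(X_u,X_s)}^2]\lesssim\lambda$, whence $\mathbb{E}[\int_s^{s'}\Verts{h_u}^2\diff u]\lesssim\lambda^2$ and the $h$-contribution is $\lesssim\lambda^{3/2}\le\lambda$.

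The main obstacle is precisely the Feynman--Kac gradient estimate $\sup_r\mathbb{E}[\Verts{\nabla_x\Psi(r,X_r)}^2]<\infty$ entering \eqref{eq:exp_bound_2}: the drift of \eqref{eq:SDE} has a Jacobian with (at least) quadratic growth, so the Jacobian process $\nabla_x X^{r,x}$ is not controlled by a naive Gr\"onwall inequality, and handling this is exactly what the coupling argument of \cref{lem:main_bounds_gronwal} is designed to do; once that estimate is available, all the remaining steps are routine bookkeeping.
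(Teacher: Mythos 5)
Your handling of \eqref{eq:exp_bound_1}, \eqref{eq:exp_bound_3} and the final ``in particular'' clause matches the paper's proof: factor $\overline{w}_t^{\alpha}=\overline{w}_s^{\alpha}\exp(\int_s^t\overline{g}_u^{\alpha}(X_u)\diff u)$, use the boundedness of $\overline{g}^{\alpha}$ from \cref{lem:bounded_weights} and the marginal identity of \cref{prop:controlled_non_equilibrium_dynamics_with_birth_death}, then weighted Cauchy--Schwarz; for the last clause the paper likewise gets $\mathbb{E}[\int_s^{s'}\Vert h_u\Vert^2\diff u]\lesssim\lambda\vert s'-s\vert$ from \cref{prop:beta_error} (via \cref{prop:relative_entropy}) and \cref{lem:main_bounds}. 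The problem is \eqref{eq:exp_bound_2}, which is the substantive estimate, and there your argument is not complete. You reduce it, via the martingale representation of $N_u=\mathbb{E}[\overline{w}_t^{\alpha}\mid\mathcal{F}_u]$, to the Feynman--Kac gradient bound $\sup_r\mathbb{E}[\Vert\nabla_x\Psi(r,X_r)\Vert^2]<\infty$, which you then defer. But the two ingredients you would need are not available as stated: (i) identifying $H_u=\sqrt{2}\,\overline{w}_u^{\alpha}\nabla_x\Psi(u,X_u)$ by It\^o requires $\Psi\in C^{1,2}$, hence differentiating $\overline{g}_t^{\alpha}$ in $x$; under \cref{assump_cont:lipschitz_potential,assump_cont:Lipschiz_control} the terms $\nabla\cdot\alpha_t$ and $\partial_t V_t$ are only Lipschitz-type (and $V_t$ is not assumed $C^2$ in the continuous-time section), so this needs a mollification or Clark--Ocone/Malliavin argument you have not supplied, with constants uniform over $\alpha\in\mathcal{A}$ and $t$; (ii) your diagnosis of the remaining obstacle is wrong: the drift $b_t^{\alpha}=\alpha_t-\nabla V_t$ is globally Lipschitz in $x$ uniformly in $t$ and $\alpha$ (by \cref{assump_cont:lipschitz_potential} and \cref{lem:smoothness_control}), so the Jacobian of the stochastic flow is bounded by Gr\"onwall and there is no quadratic-growth obstruction there; the delicate object is $\nabla\overline{g}^{\alpha}$ (linear growth, a.e. defined), not the flow derivative. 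Moreover \cref{lem:main_bounds_gronwal} cannot close your gap: it is not a flow-derivative estimate but a coupling bound between two copies of the SDE driven by independent noise on $[s,s']$.

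That coupling lemma is in fact the paper's entire mechanism for \eqref{eq:exp_bound_2}, and it avoids every differentiation step you rely on: one writes $\Psi_{s,s'}^t=\mathbb{E}[\exp(\int_s^t\overline{g}_u^{\alpha}(X_u)\diff u)\int_s^{s'}\diff B_u\mid\mathcal{F}_s]$, subtracts the same weight functional evaluated on the decoupled copy $X'$ (whose conditional product with $B_{s'}-B_s$ vanishes by independence), bounds the difference of the exponentials using $\overline{g}^{\alpha}\leq C$ and the pointwise estimate $\vert g_u^{\alpha}(x)-g_u^{\alpha}(x')\vert\lesssim(1+\Vert x\Vert+\Vert x'\Vert)\Vert x-x'\Vert$ from \cref{lem:appendix_estimate_control}, and then applies Cauchy--Schwarz together with $\mathbb{E}[\int_s^t\Vert X_u-X'_u\Vert^2\diff u]\lesssim\vert s'-s\vert$ from \cref{lem:main_bounds_gronwal} to obtain $\mathbb{E}[\Vert\Psi_{s,s'}^t\Vert^2]\lesssim(s'-s)^2$, which gives \eqref{eq:exp_bound_2} after the same weighted Cauchy--Schwarz step you use. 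So either carry out the Feynman--Kac gradient estimate rigorously under the paper's (weak) regularity assumptions, or replace that part of your argument by the noise-decoupling computation; as written, the proof of \eqref{eq:exp_bound_2} has a genuine gap.
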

  
We can now provide an upper-bound on the difference in relative entropies $\Delta_t^{\lambda}(\alpha,\alpha')$ in terms of the time discretization step-size $\lambda$ and the time-discrete discrepancy $\mathcal{S}_t^{\lambda}(\alpha,\alpha')$ between the controls $\alpha$ and $\alpha'$, defined in  \cref{eq:Sobolev_distances}.
\begin{prop}\label{prop:bound_diff_KL}
	Let $\alpha$ and $\alpha'$ be two continuous time controls  in $\mathcal{A}$. Under \cref{assump_cont:lipschitz_potential,assump_cont:Lipschiz_control,assump_cont:moment} and provided $\alpha$ satisfies \cref{assum_cont:bounded_weights}, then  the following upper-bound holds: 
	\begin{align}
	\verts{\Delta_t^{\lambda}(\alpha,\alpha')} \lesssim & \mathcal{S}_t^{\lambda}(\alpha,\alpha') + \mathcal{S}_t^{\lambda}(\alpha,\alpha')^{\frac{1}{2}} + \lambda,
	\end{align}
	where $\mathcal{S}_t^{\lambda}$ is defined in  \cref{eq:Sobolev_distances}. 
\end{prop}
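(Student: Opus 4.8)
The plan is to start from the explicit relative–entropy formula of \cref{prop:entropy_computation} and take the difference $\Delta_t^{\lambda}(\alpha,\alpha') = \mathcal{D}_t^{\lambda}(\alpha,\alpha) - \mathcal{D}_t^{\lambda}(\alpha,\alpha')$. In that formula the two contributions that do not involve the second control argument — the term $\mathbb{E}[\overline{w}^{\alpha}_t \int_{\tau_{\lambda}(t)}^{t}\bar g_s^{\alpha}(X_s)\diff s]$ and the term $\mathbb{E}[\overline{w}^{\alpha}_t\sum_{k\leq k_\lambda(t)} h_{t_k}^{\lambda}(X_{t_k})]$ — cancel, and one is left with three groups: a work term $\mathbb{E}[\overline{w}_t^{\alpha}\int_0^{\tau_{\lambda}(t)}(g_s^{\alpha',\lambda}(X)-g_s^{\alpha,\lambda}(X))\diff s]$, a quadratic drift term $\tfrac14\mathbb{E}[\overline{w}_t^{\alpha}\int_0^t(\Verts{b_s^{\alpha}-b_s^{\alpha,\lambda}}^2 - \Verts{b_s^{\alpha}-b_s^{\alpha',\lambda}}^2)\diff s]$, and a Girsanov martingale term $\tfrac{1}{\sqrt2}\mathbb{E}[\overline{w}_t^{\alpha}\int_0^t(b_s^{\alpha',\lambda}-b_s^{\alpha,\lambda})^{\top}\diff B_s]$. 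For the quadratic term I would use the identity $\Verts{u-v}^2-\Verts{u-w}^2 = 2\langle w-v,\,u-v\rangle-\Verts{w-v}^2$ with $u=b_s^{\alpha}(X_s)$, $v=b_s^{\alpha,\lambda}(X)$, $w=b_s^{\alpha',\lambda}(X)$, so that $\Delta_t^{\lambda}$ is rewritten as (i) the work term, plus (ii) $-\tfrac14\mathbb{E}[\overline{w}_t^{\alpha}\int_0^t\Verts{b_s^{\alpha',\lambda}-b_s^{\alpha,\lambda}}^2\diff s]$, plus (iii) $\tfrac12\mathbb{E}[\overline{w}_t^{\alpha}\int_0^t\langle b_s^{\alpha',\lambda}-b_s^{\alpha,\lambda},\,b_s^{\alpha}-b_s^{\alpha,\lambda}\rangle\diff s]$ together with the martingale term. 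Throughout, $\overline{w}_t^{\alpha}$ is bounded by \cref{lem:bounded_weights} (using \cref{assum_cont:bounded_weights}), so \cref{prop:Expectations_bound} is applicable.

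On $[t_{k-1},t_k)$ one has $b_s^{\alpha,\lambda}(X)-b_s^{\alpha',\lambda}(X) = \beta_{t_k}^{\alpha,\lambda}(X_{t_{k-1}})-\beta_{t_k}^{\alpha',\lambda}(X_{t_{k-1}})$ and $g_s^{\alpha,\lambda}(X)-g_s^{\alpha',\lambda}(X) = \delta_{t_k}^{\alpha,\lambda}(X_{t_{k-1}})-\delta_{t_k}^{\alpha',\lambda}(X_{t_{k-1}})$, both functions of $X_{t_{k-1}}$ only. A first–order Taylor expansion in $\lambda$ of the definitions \eqref{eq:discrete_drift}, using the $L$-Lipschitzness of $\nabla V_t$ (\cref{assump_cont:lipschitz_potential}) and the smoothness and at-most-linear growth of $A_\theta$ (\cref{assump_cont:Lipschiz_control}), together with $\log\verts{I+\lambda M}=\lambda\,\Tr M + \bigO(\lambda^2\Verts{M}^2)$, yields the pointwise bounds $\Verts{\beta_{t_k}^{\alpha,\lambda}(x)-\beta_{t_k}^{\alpha',\lambda}(x)} \lesssim \Verts{\alpha_{t_k}(x)-\alpha'_{t_k}(x)}$ and $\verts{\delta_{t_k}^{\alpha,\lambda}(x)-\delta_{t_k}^{\alpha',\lambda}(x)} \lesssim (1+\Verts{x}^2)(\Verts{\alpha_{t_k}(x)-\alpha'_{t_k}(x)}+\Verts{\nabla\alpha_{t_k}(x)-\nabla\alpha'_{t_k}(x)}) + \lambda(1+\Verts{x}^3)$; crucially the $V_{t_k-\lambda}(x)$ contribution of $\delta_{t_k}^{\cdot,\lambda}$ cancels in the difference, so no $\bigO(1)$ error survives. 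Then \eqref{eq:exp_bound_1} of \cref{prop:Expectations_bound} (with $s=t_{k-1}$), followed by Cauchy--Schwarz in $x$ and the uniformly bounded moments of $\Pi_t$ (\cref{assump_cont:moment}), controls group (i) by $\lambda\sum_{k\le k_\lambda(t)}\Pi_{t_{k-1}}[\Verts{\alpha_{t_k}-\alpha'_{t_k}}^2+\Verts{\nabla\alpha_{t_k}-\nabla\alpha'_{t_k}}^2]^{1/2} + \lambda$, and group (ii) by $\lambda\sum_{k\le k_\lambda(t)}\Pi_{t_{k-1}}[\Verts{\alpha_{t_k}-\alpha'_{t_k}}^2]$. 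For group (iii) and the martingale term the relevant integrand on $[t_{k-1},t_k)$ is $\langle f_k(X_{t_{k-1}}),\,b_s^{\alpha}(X_s)-\beta_{t_k}^{\alpha,\lambda}(X_{t_{k-1}})\rangle$ with $f_k=\beta_{t_k}^{\alpha',\lambda}-\beta_{t_k}^{\alpha,\lambda}$ — exactly the configuration of \eqref{eq:exp_bound_2}--\eqref{eq:exp_bound_3} (indeed the ``in particular'' case of \cref{prop:Expectations_bound}) — since $b_s^{\alpha}(X_s)-\beta_{t_k}^{\alpha,\lambda}(X_{t_{k-1}})$ is the drift discretization error, of $L^2(\bar\Lambda^{\alpha})$-size $\bigO(\lambda)$ over an interval of length $\lambda$ by the standard short-time moment estimates for the SDE \eqref{eq:SDE}; this bounds group (iii) plus the martingale term by $\sum_{k\le k_\lambda(t)}\lambda\,\Pi_{t_{k-1}}[\Verts{\alpha_{t_k}-\alpha'_{t_k}}^2]^{1/2}$ up to an $\bigO(\lambda^{1/2})$ factor.

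Collecting, $\verts{\Delta_t^{\lambda}(\alpha,\alpha')}$ is bounded by $\lambda\sum_{k}\Pi_{t_{k-1}}[\Verts{\alpha_{t_k}-\alpha'_{t_k}}^2+\Verts{\nabla\alpha_{t_k}-\nabla\alpha'_{t_k}}^2]^{1/2} + \lambda\sum_{k}\Pi_{t_{k-1}}[\Verts{\alpha_{t_k}-\alpha'_{t_k}}^2+\Verts{\nabla\alpha_{t_k}-\nabla\alpha'_{t_k}}^2] + \lambda$. The first sum is $\lesssim \mathcal{S}_t^{\lambda}(\alpha,\alpha')^{1/2}$ by Cauchy--Schwarz over the grid, $\lambda\sum_k a_k^{1/2}\le(\lambda\sum_k 1)^{1/2}(\lambda\sum_k a_k)^{1/2}$ and $\lambda k_\lambda(t)\le 1$; the second sum equals $\mathcal{S}_t^{\lambda}(\alpha,\alpha')$ by definition \eqref{eq:Sobolev_distances}. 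This gives the claimed $\verts{\Delta_t^{\lambda}(\alpha,\alpha')}\lesssim \mathcal{S}_t^{\lambda}(\alpha,\alpha') + \mathcal{S}_t^{\lambda}(\alpha,\alpha')^{1/2} + \lambda$.

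The main obstacle I anticipate is not any single estimate but the bookkeeping: correctly differencing the two relative-entropy formulas so that the right contributions cancel, tracking which quantities are $\mathcal{F}_{t_{k-1}}$-measurable so that \cref{prop:Expectations_bound} applies with anchor time $s=t_{k-1}$, and verifying that the $\lambda$-Taylor remainders of $\beta_{t_k}^{\alpha,\lambda}$ and $\delta_{t_k}^{\alpha,\lambda}$ (including the $\log\det$ term) only contribute $\bigO(\lambda)$ after summation rather than $\bigO(1)$ or $\bigO(\sqrt\lambda)$ — a cancellation that hinges on the $V_{t_k-\lambda}$ piece dropping out of the difference and on $\nabla V_t$ being Lipschitz.
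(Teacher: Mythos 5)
Your proposal is correct and follows essentially the same route as the paper's proof: difference the two entropy formulas from \cref{prop:entropy_computation} so the control-independent terms cancel, rewrite the quadratic drift difference so that a $\Verts{\Delta\beta}^2$ term and a cross term paired with the Girsanov martingale emerge (the paper's $\Delta\beta^{\top}(h\,\diff u+\sqrt{2}\diff B_u)$ grouping), bound these with \cref{prop:Expectations_bound} (including its ``in particular'' case) and the pointwise Lipschitz estimates on $\Delta\beta,\Delta\delta$ of \cref{lem:appendix_estimate_control}, then apply Cauchy--Schwarz over the grid to reach $\mathcal{S}_t^{\lambda}+(\mathcal{S}_t^{\lambda})^{1/2}+\lambda$. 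The only differences are cosmetic: your Taylor-derived bound on $\Delta\delta$ carries a slightly larger polynomial weight and an extra $O(\lambda)$ remainder, both harmless given the fourth-moment assumption, and the partial final interval $[\tau_{\lambda}(t),t]$ should be noted explicitly as an additional $O(\lambda)$ term as in the paper's decomposition.
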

\begin{proof}
	By direct computation using the expression of the relative entropy in \cref{prop:entropy_computation}, we have that:
	\begin{align}
	\begin{aligned}
		-\Delta_t^{\lambda}(\alpha,\alpha') =& \mathcal{D}_t^{\lambda}(\alpha,\alpha') - \mathcal{D}_t^{\lambda}(\alpha,\alpha)\\
		=& \mathbb{E}\brakets{\overline{w}_t^{\alpha} \parens{\int_0^{\tau_{\lambda}(t)} \parens{g_{s}^{\alpha,\lambda}(X) - g_{s}^{\alpha',\lambda}(X)}  \diff s  } } \\
	 &+ \frac{1}{4}\mathbb{E}\brakets{ \overline{w}_t^{\alpha} \int_0^t \Verts{ b^{\alpha}_s(X_s)-b^{\alpha',\lambda}_s(X)}^2-\Verts{ b^{\alpha}_s(X_s)-b^{\alpha,\lambda}_s(X)}^2 \diff s   }
	\end{aligned}
	\end{align} 
 We further introduce: 
		\begin{align}
			\Delta\delta_{s}(x) := \delta_{s}^{\alpha,\lambda}(x) -  \delta_{s}^{\alpha',\lambda}(x),\qquad \Delta\beta_s(x) := \beta_{s}^{\alpha,\lambda}(x)  -\beta_{s}^{\alpha',\lambda}(x),
		\end{align}
	\begin{align}
		 h_{u,s}(x,y) = b_u^{\alpha}(x) - \beta_{s}^{\alpha,\lambda}(y). 
	\end{align}
	Recalling the definition of  $g_s^{\alpha,\lambda}(X)$ from \cref{eq:discrete_instantaneous_work}, we can express $\Delta_t^{\lambda}(\alpha,\alpha')$ in terms of $\Delta\delta$, $\Delta\beta$ and $h_{u,s}(x,y)$:
	\begin{align}\label{eq:expression_diff_entropy}
	\begin{aligned}
		-\Delta_t^{\lambda}(\alpha,\alpha')
		=&\lambda \sum_{k=1}^{k_{\lambda}(t)} \mathbb{ E}\brakets{\overline{w}_{t}^{\alpha} \parens{\Delta\delta_{t_k}(X_{t_{k-1}}) + \frac{1}{4}\Verts{\Delta\beta_{t_k}(X_{t_{k-1}})}^2 } } \\
		+&  \frac{1}{2}\sum_{k=1}^{k_{\lambda}(t)} \mathbb{ E}\brakets{\overline{w}_{t}^{\alpha} \parens{\Delta\beta_{t_k}(X_{t_{k-1}})}^{\top}\int_{t_{k-1}}^{t_k}\parens{h_{u,t_k}(X_u,X_{t_{k-1}})\diff u + \sqrt{2}\diff B_u }   }\\
		+& \frac{1}{2}\mathbb{ E}\brakets{\overline{w}_{t}^{\alpha} \parens{\Delta\beta_{\tau_{\lambda}(t)+\lambda}(X_{\tau_{\lambda}(t)})}^{\top}\int_{\tau_{\lambda}(t)}^{t}\parens{ h_{u, \tau_{\lambda}(t)+\lambda }(X_u,X_{\tau_{\lambda}(t)}) \diff u + \sqrt{2}\diff B_u }}.
	\end{aligned}
	\end{align}
	By \cref{prop:Expectations_bound}, we know that the remainder term in the last line of \cref{eq:expression_diff_entropy} is of order $\lambda$. Moreover, \cref{prop:Expectations_bound} allows us to control the first two terms in \cref{eq:expression_diff_entropy} so that the following bound holds:
	\begin{align}\label{eq:main_estimate}
		\verts{\Delta_t^{\lambda}(\alpha,\alpha')}&\lesssim \lambda \sum_{k=1}^{k_{\lambda}(t)} \Pi_{t_{k-1}}\brakets{ \verts{\Delta \delta_{t_k}  } +\Verts{\Delta \beta_{t_k}  }^2   } + \lambda\sum_{k=1}^{k_{\lambda}(t)} \Pi_{t_{k-1}}\brakets{\Verts{\Delta \beta_{t_k}}^2}^{\frac{1}{2}} + \lambda\\
		&\lesssim \lambda \sum_{k=1}^{k_{\lambda}(t)} \Pi_{t_{k-1}}\brakets{ \verts{\Delta \delta_{t_k}  } +\Verts{\Delta \beta_{t_k}  }^2   } + \parens{\lambda\sum_{k=1}^{k_{\lambda}(t)} \Pi_{t_{k-1}}\brakets{\Verts{\Delta \beta_{t_k}}^2}}^{\frac{1}{2}} + \lambda\\
		&\lesssim \mathbb{M}^{\lambda}_t\brakets{ \verts{\Delta \delta} } + \mathbb{M}^{\lambda}_t\brakets{ \Verts{\Delta \beta}^2 } + \mathbb{M}^{\lambda}_t\brakets{ \Verts{\Delta \beta}^2 }^{\frac{1}{2}} +\lambda,
	\end{align}
		where we used Cauchy--Schwarz inequality in the second line  and introduced the notation  $\mathbb{M}^{\lambda}_t$ from \cref{eq:discrete_time_expectations} in the last line,  $\vert\Delta \delta \vert $  and  $\Verts{\Delta \beta}$ being viewed as functions from $[0,1]\times \mathcal{X}$ to $\mathbb{R}$. Thus, we only need to control $\mathbb{M}^{\lambda}_t\brakets{ \verts{\Delta \delta} }$ and $\mathbb{M}^{\lambda}_t\brakets{ \Verts{\Delta \beta}^2 }$.
To control $\mathbb{M}^{\lambda}_t\brakets{ \verts{\Delta \delta} }$, we can rely on the following pointwise estimate from  \cref{lem:appendix_estimate_control}:
\begin{align}
	\verts{\delta_{t_k}^{\alpha,\lambda}(x)- \delta_{t_k}^{\alpha',\lambda}(x)} &\lesssim  \parens{\parens{1 + \Verts{x}}\Verts{\alpha_{t_k}(x)-\alpha_{t_k}'(x)}  + \Vert \nabla \alpha_{t_k}(x) - \nabla\alpha'_{t_k}(x)  \Vert}.
\end{align}
Further defining $\overline{L}: [0,1]\times \mathbb{X} \rightarrow \mathbb{R}$ to be $(s,x)\mapsto \overline{L}_s(x) = 1 + \Verts{x}$, this allows us to write
	\begin{align}
\mathbb{M}^{\lambda}_t\brakets{ \Verts{\Delta \beta}^2 } &\lesssim \mathbb{M}_t^{\lambda}\brakets{\Verts{\alpha-\alpha'}^2}\\
\mathbb{M}^{\lambda}_t\brakets{ \verts{\Delta \delta} }
		&\lesssim \mathbb{M}_t^{\lambda}\brakets{ \overline{L}\Verts{ \alpha - \alpha' }  } + \mathbb{M}_t^{\lambda}\brakets{\Verts{ \nabla\alpha - \nabla\alpha' }} \\
		&\lesssim \mathbb{M}_t^{\lambda}\brakets{ \overline{L}^2}^{\frac{1}{2}} \mathbb{M}_t^{\lambda}\brakets{\Verts{ \alpha - \alpha' }^{2}  }^{\frac{1}{2}} +  \mathbb{M}_t^{\lambda}\brakets{\Verts{ \nabla\alpha - \nabla\alpha' }}
	\end{align}
	where we used Cauchy--Schwarz inequality to get the last inequality. The factor $\mathbb{M}_t^{\lambda}\brakets{ \overline{L}^2}$ is also bounded as it is a Riemann sum and converges towards:
	\begin{align}
		\mathbb{M}_t^{\lambda}\brakets{ \overline{L}^2}\xrightarrow[\lambda\rightarrow 0]{} \int_0^t\Pi_{t}\brakets{\parens{1+\Verts{ X} }^2}\diff t,
	\end{align} 
	which is finite by \cref{assump_cont:moment}. 
	Again using the pointwise estimate from  \cref{lem:appendix_estimate_control}:
\begin{align}
	\Vert{\beta}^{\alpha,\lambda}_t(x)-\beta^{\alpha',\lambda}_t(x)\Vert &\lesssim \Vert \alpha_t(x)-\alpha_t'(x)\Vert,
\end{align}
it follows directly that $\mathbb{M}^{\lambda}_t\brakets{ \Verts{\Delta \beta}^2 } \lesssim \mathbb{M}_t^{\lambda}\brakets{\Verts{\alpha-\alpha'}^2}$. Therefore, we have shown:
	\begin{align}\label{eq:bound_1}
			\mathbb{M}^{\lambda}_t\brakets{ \Verts{\Delta \beta}^2 } &\lesssim \mathbb{M}_t^{\lambda}\brakets{\Verts{\alpha-\alpha'}^2}\\	
		\mathbb{M}^{\lambda}_t\brakets{ \verts{\Delta \delta} }
		&\lesssim
		\mathbb{M}_t^{\lambda}\brakets{\Verts{ \alpha- \alpha' }^{2}  }^{\frac{1}{2}} + \mathbb{M}_t^{\lambda}\brakets{\Verts{ \nabla\alpha - \nabla\alpha' }}. 
	\end{align}
The desired upper-bound follows using \cref{eq:bound_1} in  \cref{eq:main_estimate}.
\end{proof}

Next we control the relative entropy $\mathcal{D}_{t}^{\lambda}(\alpha,\alpha)$  between the Feynman-Kac measure $\bar{\Pi}^{\alpha}_t$ and the interpolating measure $\bar{\Pi}^{\alpha,\lambda}_t$ using the same control $\alpha$.
\begin{prop}\label{prop:discretization_error}
Under \cref{assump_cont:lipschitz_potential,assump_cont:Lipschiz_control,assump_cont:moment,assump_cont:moment_2} and if  $\alpha$ satisfies \cref{assum_cont:bounded_weights}, the following upper bound holds:  
\begin{align}
	  \mathcal{D}_{t}^{\lambda}(\alpha,\alpha):= \textup{KL}(\bar{\Pi}^{\alpha}_t||\bar{\Pi}^{\alpha,\lambda}_t)  \lesssim \sqrt{\lambda}.
\end{align}
\end{prop}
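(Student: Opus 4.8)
The plan is to start from the exact identity for $\mathcal{D}_t^{\lambda}(\alpha,\alpha)=\textup{KL}(\bar\Pi^{\alpha}_t\,\|\,\bar\Pi^{\alpha,\lambda}_t)$ provided by \cref{prop:entropy_computation} with $\alpha'=\alpha$, so that on each subinterval $[t_{k-1},t_k)$ one has $g_s^{\alpha,\lambda}(X)=\delta_{t_k}^{\alpha,\lambda}(X_{t_{k-1}})$ and $b_s^{\alpha,\lambda}(X)=\beta_{t_k}^{\alpha,\lambda}(X_{t_{k-1}})$, and then to bound each of the five resulting terms by $O(\sqrt\lambda)$. The standing tools are: the deterministic bound $\overline{w}_t^{\alpha}\le e^{C}$ from \cref{lem:bounded_weights}, which lets me replace $\verts{\mathbb{E}[\overline{w}_t^{\alpha}Y]}$ by $\lesssim\mathbb{E}[\verts{Y}]$; the fact that the diffusion \cref{eq:SDE} driven by $\alpha$ (under which all expectations are taken) is well defined with uniformly bounded fourth moments; the one-step increment estimates $\mathbb{E}[\Verts{X_s-X_{t_{k-1}}}^{2}]\lesssim\lambda$ and $\mathbb{E}[\Verts{X_s-X_{t_{k-1}}}^{4}]\lesssim\lambda^{2}$ for $s\in[t_{k-1},t_k)$, obtained by splitting the increment into a drift part of norm $\lesssim\lambda(1+\Verts{X_{t_{k-1}}})$ and a Brownian part; the uniform (over admissible controls) Taylor expansions $\delta_{t_k}^{\alpha,\lambda}(x)=g_{t_k}^{\alpha}(x)+O(\lambda(1+\Verts{x}^{4}))$ and $\beta_{t_k}^{\alpha,\lambda}(x)=\alpha_{t_k}(x)-\nabla V_{t_k}(x)+O(\lambda(1+\Verts{x}))$, which follow from \cref{assump_cont:lipschitz_potential,assump_cont:Lipschiz_control} by expanding $V_{t-\lambda}$, $V_{t}(x+\lambda\alpha_{t}(x))$ and $\log\verts{I+\lambda\nabla\alpha_{t}(x)}$; and the Lipschitz-in-$(t,x)$ bounds, with at most quadratic growth for $g^{\alpha}_{t}$ and at most linear growth for $b^{\alpha}_{t}$, that are consequences of \cref{lem:smoothness_control} together with \cref{assump_cont:lipschitz_potential}.

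Most of the terms are routine once these tools are in place. The boundary term $\mathbb{E}[\overline{w}_t^{\alpha}\int_{\tau_{\lambda}(t)}^{t}\bar{g}_s^{\alpha}(X_s)\diff s]$ is $O(\lambda)$ because the interval has length $<\lambda$, $\bar{g}^{\alpha}_s$ has at most quadratic growth, and the marginals of $X$ have bounded second moments. The Girsanov term $\tfrac14\mathbb{E}[\overline{w}_t^{\alpha}\int_0^{t}\Verts{b_s^{\alpha}(X_s)-\beta_{t_k}^{\alpha,\lambda}(X_{t_{k-1}})}^{2}\diff s]$ is $O(\lambda)$, since the Lipschitz bounds for $\alpha$ and $\nabla V$ give $\Verts{b_s^{\alpha}(X_s)-\beta_{t_k}^{\alpha,\lambda}(X_{t_{k-1}})}\lesssim\Verts{X_s-X_{t_{k-1}}}+\lambda(1+\Verts{X_{t_{k-1}}})$, whose square has expectation $\lesssim\lambda$ uniformly in $s$; the It\^o term $\tfrac1{\sqrt2}\mathbb{E}[\overline{w}_t^{\alpha}\int_0^{t}(b_s^{\alpha}(X_s)-\beta_{t_k}^{\alpha,\lambda}(X_{t_{k-1}}))^{\top}\diff B_s]$ is then $O(\sqrt\lambda)$ by Cauchy--Schwarz and the It\^o isometry (it is at most $e^{C}$ times the square root of the previous integrand's expected integral), and \cref{prop:Expectations_bound} could be used instead. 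In the first term $\mathbb{E}[\overline{w}_t^{\alpha}\int_0^{\tau_{\lambda}(t)}(g_s^{\alpha}(X_s)-\delta_{t_k}^{\alpha,\lambda}(X_{t_{k-1}}))\diff s]$ I split the integrand as $(g_s^{\alpha}(X_s)-g_{t_k}^{\alpha}(X_s))+(g_{t_k}^{\alpha}(X_s)-g_{t_k}^{\alpha}(X_{t_{k-1}}))+(g_{t_k}^{\alpha}(X_{t_{k-1}})-\delta_{t_k}^{\alpha,\lambda}(X_{t_{k-1}}))$: the first piece is $O(\lambda(1+\Verts{X_s}))$ by Lipschitz-in-time of $g^{\alpha}$, and the third is $O(\lambda(1+\Verts{X_{t_{k-1}}}^{4}))$ by the Taylor expansion of $\delta_{t_k}^{\alpha,\lambda}$, so both integrate to $O(\lambda)$.

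The two delicate contributions are the middle piece of the first term and the kernel-invariance correction. For the middle piece, $\verts{g_{t_k}^{\alpha}(X_s)-g_{t_k}^{\alpha}(X_{t_{k-1}})}\lesssim(1+\Verts{X_s}+\Verts{X_{t_{k-1}}})\Verts{X_s-X_{t_{k-1}}}$, so Cauchy--Schwarz together with $\mathbb{E}[\Verts{X_s-X_{t_{k-1}}}^{2}]\lesssim\lambda$ bounds its contribution by $\lesssim\int_0^{\tau_{\lambda}(t)}\sqrt\lambda\,\diff s\lesssim\sqrt\lambda$; this is precisely the term that pins the rate at $\sqrt\lambda$ rather than $\lambda$, because $\overline{w}_t^{\alpha}$ is correlated with the Brownian increments over $[t_{k-1},s]$ and the martingale cancellation that would annihilate the leading displacement term is unavailable. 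For the correction term $\mathbb{E}[\overline{w}_t^{\alpha}\sum_{k=1}^{k_{\lambda}(t)}h_{t_k}^{\lambda}(X_{t_k})]$, since there are at most $1/\lambda$ summands it suffices to establish a one-step bias bound of the form $\verts{h_{t_k}^{\lambda}(x)}\lesssim\lambda^{3/2}(1+\Verts{x}^{m})$ for a fixed $m$, with a $\Pi_{t_k}$-integrable, $t_k$-uniform majorant; this comes from a Laplace-type expansion of $\Gamma_{t_k}K_{k}(x)=\int \Gamma_{t_k}(y)\,\mathcal{N}(x; y-\lambda\nabla V_{t_k}(y),2\lambda I)\,\diff y$ about $y=x$ and from writing $h_{t_k}^{\lambda}=\log(\Gamma_{t_k}K_{k}/\Gamma_{t_k})$, the coefficient of $\lambda$ vanishing because $\Pi_{t_k}$ is invariant for the continuous Langevin dynamics of which $K_k$ is the Euler discretisation, while the remainder is controlled by the exponential-moment assumption \cref{assump_cont:moment_2} (which is exactly what keeps the Gaussian integrals against $\Verts{\nabla V_{t_k}}$ finite and bounded uniformly in $t_k$). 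Summing the five bounds yields $\mathcal{D}_t^{\lambda}(\alpha,\alpha)\lesssim\sqrt\lambda$. I expect this last step --- the ULA one-step bias estimate and the uniform control of its remainders under \cref{assump_cont:moment_2}, in particular extracting the $O(\lambda)$ cancellation at the level of densities rather than only of weak expectations --- to be the main obstacle.
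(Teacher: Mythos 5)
Your proposal is correct and takes essentially the same route as the paper's proof: the same decomposition of $\mathcal{D}_t^{\lambda}(\alpha,\alpha)$ via \cref{prop:entropy_computation} with $\alpha'=\alpha$, the same $O(\lambda)$ bounds for the boundary and Girsanov terms, the same Cauchy--Schwarz-with-increment argument that makes the work term the $\sqrt{\lambda}$ bottleneck, and the same bound of the stochastic-integral term by $\mathcal{E}_{KL}^{1/2}$. The one-step ULA bias estimate you flag as the main obstacle is exactly the paper's \cref{lem:kerenl_error_main_lemma} (pointwise $\verts{h^{\lambda}_t(x)}\lesssim \lambda^{3/2}$ times a majorant made $\Pi_t$-integrable by \cref{assump_cont:moment_2}), and your reduction to expectations under $\Pi_{t_k}$ is the paper's \cref{prop:Expectations_bound}.
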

\begin{proof}{\textbf{of \cref{prop:discretization_error}}.}\label{proof:discretization_error}
We use the expression of the relative entropy $\mathcal{D}_{t}^{\lambda}(\alpha,\alpha)$ provided in \cref{prop:entropy_computation}, then recalling that the normalized weights  $\bar{w}_t^{\alpha}$ are bounded by \cref{lem:bounded_weights}  we have
\begin{align}
	\mathcal{D}_t^{\lambda}(\alpha,\alpha') \lesssim & \mathbb{E}\brakets{ \verts{\int_0^{\tau_{\lambda}(t)} \parens{g_{s}^{\alpha}(X_s) - g_{s}^{\alpha',\lambda}(X)}  \diff s  } } + \mathbb{E}\brakets{\verts{\int_{\tau_{\lambda}(t)}^{t}  \bar{g}_s^{\alpha}(X_s)  \diff s } }\\
	 &+ \frac{1}{4}\mathbb{E}\brakets{  \int_0^t \Vert b^{\alpha}_s(X_s)-b^{\alpha',\lambda}_s(X)\Vert^2 \diff s   } +  \frac{1}{\sqrt{2}}\mathbb{E}\brakets{\verts{\int_0^t \parens{b^{\alpha}_s(X_s)-b^{\alpha',\lambda}_s(X)}^{\top}\diff B_s } } \\
	 &+
	 	\mathbb{E}\brakets{\overline{w}^{\alpha}_t  \parens{ \sum_{k=1}^{k_{\lambda}(t)} \verts {h_{t_k}^{\lambda}(X_{t_k})} } }.
\end{align}
Now by a direct application of Cauchy--Schwarz inequality and triangular inequalities, we get the following upper-bound on $\mathcal{D}_{t}^{\lambda}(\alpha,\alpha)$
\begin{align}
	D^{\lambda}_t(\alpha,\alpha) \lesssim \parens{  \mathcal{E}_W + \mathcal{E}_{R}  + \mathcal{E}_H + \frac{1}{4}\mathcal{E}_{KL} + \frac{1}{\sqrt{2}}\mathcal{E}_{KL}^{\frac{1}{2}} }
\end{align}
where $\mathcal{E}_W$, $\mathcal{E}_R$, $\mathcal{E}_H$ and $\mathcal{E}_{KL}$  are given by
\begin{align}
	\mathcal{E}_W &:= \mathbb{E}\brakets{ \verts{ \int_0^{\tau_{\lambda}(t)} g^{\alpha}_s(X_s)- g_{s}^{\alpha,\lambda}(X)  \diff s  }  },\qquad  &\mathcal{E}_{R} &:= \mathbb{E}\brakets{\int_{\tau_{\lambda}(t)}^t  \bar{g}_s^{\alpha}(X_s) \diff s },  \\
	 \mathcal{E}_{KL} &:= \mathbb{E}\brakets{\int_0^t\Verts{b_{s}^{\alpha}(X_s)-b_{s}^{\alpha,\lambda}(X)}^{2} },\qquad &\mathcal{E}_H &:= \mathbb{E}\brakets{w_t^{\alpha}\sum_{k=1}^{k_{\lambda}(t)}\verts{ h_{t_k}^{\lambda}}}.
\end{align}
\paragraph{Bound on $\mathcal{E}_R$. } $\mathbb{E}[\vert \overline{g}_{s}^{\alpha}(X_s)\vert]$ has a quadratic growth by \cref{lem:appendix_estimate_control} and the process $X_s$ has a bounded second moment by \cref{lem:main_bounds}, therefore  $\mathcal{E}_R\lesssim \vert t-\tau_{\lambda}(t)\vert\leq \lambda $.
\paragraph{Bound on  $\mathcal{E}_W$.}
We first start by applying a triangular inequality:
\begin{align}
	\mathcal{E}_{W}&\leq \sum_{k=1}^{k_{\lambda}(t)}\mathbb{E}\underbrace{\brakets{\int_{t_{k-1}}^{t_k}\verts{g_s^{\alpha}(X_s)- \delta_{s}^{\alpha,\lambda}(X_{t_{k-1})}} \diff s}}_{\mathcal{E}_{W,k}}
\end{align}
where by definition of $g_{s}^{\alpha,\lambda}(X) = \delta_{s}^{\alpha,\lambda}(X_{t_{k-1}})$ for $t_{k-1}\leq s\leq t_k$. We then use the point-wise upper-bound on the difference $g_s^{\alpha}(x') - \delta_t^{\alpha,\lambda}(x)$ provided in  \cref{lem:bound_intermediate_terms}:
\begin{align}
	\verts{g_s^{\alpha}(x') - \delta_t^{\alpha,\lambda}(x) }\lesssim \parens{1+\Verts{x}+\Verts{x'} }\Verts{x -x'}+  \lambda\parens{1 +\Verts{x} + \Verts{x'}}^2. 
\end{align}
This allows to control each integral $\mathcal{E}_{W,k}$ after a simple application of Cauchy--Schwarz inequality:
\begin{align}
	\mathcal{E}_{W,k}
	\lesssim & \lambda  \int_{t_{k-1}}^{t_k}\mathbb{E}\brakets{ \parens{1+\Verts{X_s} + \Verts{X_{t_{k-1}}}}^2} \diff s \\
	&+ \parens{\int_{t_{k-1}}^{t_k}\mathbb{E}\brakets{\Verts{X_s-X_{t_{k-1}}}^2}\diff s}^{\frac{1}{2}}\parens{\int_{t_{k-1}}^{t_k}\mathbb{E}\brakets{ \parens{1+\Verts{X_s} + \Verts{X_{t_{k-1}}}}^2 }\diff s}^{\frac{1}{2}}
\end{align}
where we also used Fubini's theorem to exchange the order of the expectation and time integral. 
By \cref{lem:main_bounds}, we know that the second moments of the process $X_{t}$ are bounded over the time interval $[0,1]$ and that $\mathbb{E}\brakets{\Verts{X_s-X_{t_{k-1}}}^2}\lesssim \vert s-t_{k-1} \vert $. Therefore, we get the upper-bound $\mathbb{E}[\mathcal{E}_{W,k}]\lesssim \lambda^2 + \lambda\sqrt{\lambda}$. Finally, summing over $k$ ranging from $1$ to $k_{\lambda}(t)$ yields:
\begin{align}
	\mathcal{E}_{W}\leq \sum_{k=1}^{k_{\lambda}(t)}\mathbb{E}\brakets{\mathcal{E}_{W,k}}\lesssim \sqrt{\lambda}.
\end{align}
\paragraph{Bound on  $\mathcal{E}_{H}$. }
By a direct application of \cref{prop:Expectations_bound}, we know that:
\begin{align}
	\mathcal{E}_{H}\lesssim \sum_{k=1}^{k_{\lambda}(t)} \Pi_{t_k}\brakets{\verts{h_{t_k}^{\lambda}}}.
\end{align}
We just need to control each term $\Pi_{t_k}\brakets{\verts{h_{t_k}^{\lambda}}}$ as a function of $\lambda$.  The technical \cref{lem:kerenl_error_main_lemma} provides an upper-bound on $\vert h^{\lambda}_t(x)\vert$ that is point-wise in $x$ and of the form:
\begin{align}
			\vert h^{\lambda}_t(x) \vert \leq \lambda \sqrt{\lambda} \frac{Q_1(x)}{1 + Q_2(x)} \exp( C \lambda \Vert \nabla V_t(x)\Vert^2 )  
	\end{align}
where $Q_1$ and $Q_2$ are non-negative functions of $\Vert x \Vert$ of polynomial growth and  independent from $\lambda$ and $t$ and $C$ is a non-negative constant. Taking the expectation under $\Pi_t$ and using the integrability condition of \cref{assump_cont:moment_2}, it follows that $\Pi_{t_k}\brakets{\verts{h^{\lambda}_{t_k}}} \lesssim \lambda \sqrt{\lambda}$.
Thus summing over $k$ directly yields $\mathcal{E}_{H} \lesssim \sqrt{\lambda}$.
\paragraph{Bound on $\mathcal{E}_{KL}$.} We start by decomposing $\mathcal{E}_{KL}$ as a sum of $k_{\lambda}(t)+1$ integral terms over intervals of size less or equal to $\lambda$:
	\begin{align}
		\mathcal{E}_{KL} 
		=&   \int_{\tau_{\lambda}(t)}^{t} \mathbb{E}\brakets{ \Vert b_s^{\alpha}(X_s)-\beta^{\alpha,\lambda}_{\tau_{\lambda}(t)+1}(X_{\tau_{\lambda}(t)}) \Vert^2  }\diff s + \sum_{k=1}^{k_{\lambda}(t)} \int_{t_{k-1}}^{t_{k}} \mathbb{E}\brakets{ \Vert b_s^{\alpha}(X_s)-\beta^{\alpha,\lambda}_{t_{k}}(X_{t_{k-1}}) \Vert^2  }\diff s 
		\end{align}
where we used that $b_s^{\alpha,\lambda}(X) = \beta^{\alpha,\lambda}_{t_{k}}(X_{t_{k-1}})$ for $t_{k-1}\leq s\leq t_{k}$ by definition of $b_s^{\alpha,\lambda}(X)$. We can then use  \cref{prop:relative_entropy} which allows to control each term by $\lambda^2$. Hence, after summing, we directly get that $\mathcal{E}_{KL}\lesssim \lambda \verts{t-\tau_{\lambda}(t)} + \sum_{k=1}^{k_{\lambda}(t)} \lambda^2\lesssim \lambda$.

We finally get the desired result by combining upper-bounds on each quantity $\mathcal{E}_W$, $\mathcal{E}_R$, $\mathcal{E}_H$ and $\mathcal{E}_{KL}$  which are all of order $\sqrt{\lambda}$ at least.
\end{proof}

\begin{lem}\label{prop:relative_entropy}
Let $ s $ and $s'$ be such that $ t_{k-1}\leq s\leq s'\leq t_k $ for $0\leq  k\leq  K$ and $t_{k-1}\leq  u' \leq t_k $.
Under \cref{assump_cont:lipschitz_potential,assump_cont:Lipschiz_control,assump_cont:moment} it holds that:
\begin{align} 
		  \mathbb{E}\brakets{\int_s^{s'} \Verts{b_u^{\alpha}(X_u)- \beta_{u'}^{\alpha,\lambda}(X_{s}) }^2 \diff u}\lesssim \lambda\verts{s'-s}.
\end{align} 
\end{lem}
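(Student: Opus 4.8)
The plan is to reduce the claim to a pointwise‑in‑$u$ bound of order $\lambda$ on the $L^2$ norm of the integrand, which then integrates to $\lambda\verts{s'-s}$. First I would unfold the definitions: since $b_u^{\alpha}(x)=\alpha_u(x)-\nabla V_u(x)$ and $\beta_{u'}^{\alpha,\lambda}(x)=\alpha_{u'}(x)-\nabla V_{u'}(x+\lambda\alpha_{u'}(x))$, I would split
\begin{align*}
b_u^{\alpha}(X_u) - \beta_{u'}^{\alpha,\lambda}(X_s) = \bigl(\alpha_u(X_u) - \alpha_{u'}(X_s)\bigr) - \bigl(\nabla V_u(X_u) - \nabla V_{u'}(X_s + \lambda\alpha_{u'}(X_s))\bigr),
\end{align*}
and estimate the two brackets separately, using throughout that $u,u',s,s'$ all lie in $[t_{k-1},t_k]$, an interval of length $\lambda$, so every time difference among them is at most $\lambda$.

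For the first bracket I would invoke the joint Lipschitz estimate on admissible controls from \cref{lem:smoothness_control} (a consequence of \cref{assump_cont:Lipschiz_control}), which gives $\Verts{\alpha_u(X_u)-\alpha_{u'}(X_s)}\le L(\Verts{X_u-X_s}+\verts{u-u'})\le L(\Verts{X_u-X_s}+\lambda)$. For the second bracket, the joint $L$‑Lipschitzness of $\nabla V$ in \cref{assump_cont:lipschitz_potential} yields $\Verts{\nabla V_u(X_u)-\nabla V_{u'}(X_s+\lambda\alpha_{u'}(X_s))}\le L\bigl(\Verts{X_u-X_s}+\lambda\Verts{\alpha_{u'}(X_s)}+\lambda\bigr)$, and since $\alpha$ has at most linear growth uniformly over $\mathcal{A}$ (again \cref{lem:smoothness_control}), $\Verts{\alpha_{u'}(X_s)}\lesssim 1+\Verts{X_s}$. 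Squaring and taking expectations, everything is controlled once I have (i) $\mathbb{E}\Verts{X_s}^2$ bounded uniformly in $s\in[0,1]$ and (ii) the short‑time increment estimate $\mathbb{E}\Verts{X_u-X_s}^2\lesssim\lambda$ for $u,s\in[t_{k-1},t_k]$; both follow from \cref{lem:main_bounds} (for (ii), split at $t_{k-1}$ and use $\mathbb{E}\Verts{X_u-X_{t_{k-1}}}^2\lesssim\verts{u-t_{k-1}}\le\lambda$ twice with the triangle inequality). Combining, $\mathbb{E}\Verts{b_u^{\alpha}(X_u)-\beta_{u'}^{\alpha,\lambda}(X_s)}^2\lesssim\lambda$, with the implied constant uniform in $u\in[s,s']$, in $t\in[0,1]$, and over $\mathcal{A}$.

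Finally I would apply Fubini/Tonelli to interchange $\mathbb{E}$ and $\int_s^{s'}\cdot\,\diff u$, so that $\mathbb{E}\bigl[\int_s^{s'}\Verts{b_u^{\alpha}(X_u)-\beta_{u'}^{\alpha,\lambda}(X_s)}^2\diff u\bigr]=\int_s^{s'}\mathbb{E}\Verts{b_u^{\alpha}(X_u)-\beta_{u'}^{\alpha,\lambda}(X_s)}^2\diff u\lesssim\lambda\verts{s'-s}$, which is the assertion. The only input that is not a one‑line manipulation is the increment bound $\mathbb{E}\Verts{X_u-X_s}^2\lesssim\lambda$: proved from scratch it needs a Gr\"onwall argument for the second moment of the drift $b^{\alpha}$ (linear in $x$ under \cref{assump_cont:lipschitz_potential,assump_cont:Lipschiz_control}) together with the It\^o isometry for the Brownian part, so this — rather than either Lipschitz step — is where the real work sits; but since \cref{lem:main_bounds} already packages exactly these moment and increment estimates, the proof itself is short. (Recall $\lambda\le\frac{1}{2L}$ makes $\beta_{u'}^{\alpha,\lambda}$ well defined via \cref{lem:diffeo}, though here only the composition, not invertibility, is used.)
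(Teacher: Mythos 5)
Your argument is correct and is essentially the paper's proof: the paper simply packages your pointwise Lipschitz/linear-growth estimate as \cref{prop:beta_error} (giving $\Verts{b_u^{\alpha}(x')-\beta_{u'}^{\alpha,\lambda}(x)}\lesssim\Verts{x-x'}+\lambda(1+\Verts{x})$) and then, exactly as you do, combines it with the moment and increment bounds of \cref{lem:main_bounds} and integrates over $[s,s']$. The only cosmetic difference is that the paper keeps $\mathbb{E}\Verts{X_u-X_s}^2\lesssim\verts{u-s}$ inside the integral while you coarsen it to $\lesssim\lambda$; both give $\lambda\verts{s'-s}$.
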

\begin{proof}
By a direct application of \cref{prop:beta_error}, we have
		\begin{align}
		\mathbb{E}\brakets{\int_s^{s'} \Verts{b_u^{\alpha}(X_u)- \beta_{u'}^{\alpha,\lambda}(X_{s}) }^2 \diff u}
		&\lesssim \int_s^{s'} \mathbb{E}\brakets{ \Verts{X_u -X_{s} }^2  +\lambda^2\parens{1+ \Verts{X_{s}}^2 }}\diff u.
		\end{align}
Moreover, by \cref{lem:main_bounds}, we directly know that the second moments of the process are bounded at any time and that $\mathbb{E}[\Vert X_{u'} - X_u \Vert^2  ]\leq \vert u- u' \vert $, hence:
		\begin{align}
			\mathbb{E}\brakets{\int_s^{s'} \Verts{b_u^{\alpha}(X_u)- \beta_{u'}^{\alpha,\lambda}(X_{s}) }^2 \diff u}&\lesssim  \int_{s}^{s'} \parens{\verts{u-s}+ \lambda^2} \diff u\lesssim \lambda\verts{s'-s}.
		\end{align}\end{proof}

\subsection{Convergence towards an optimal Feynman-Kac measure}\label{sec:convergence_optimal_control}
In this section, we show that the optimal interpolating controls $\alpha^{\lambda}$ converge towards the continuous-time limit optimal control $\alpha^{\star}$.
We will first need proposition \cref{prop:convergence_loss} bellow showing that the discrete-time loss $\mathcal{L}^{tot}_{\lambda}(\alpha)$ converges towards $\mathcal{M}(\alpha)$ uniformly over the class of admissible controls with a rate of $\lambda$. \cref{prop:convergence_loss} is a restatement of \cref{prop:convergence_loss_main}, we defer its proof to \cref{appendix_proof_discrete_loss}.
\begin{prop}\label{prop:convergence_loss}
Under  \cref{assump_cont:lipschitz_potential,assump_cont:Lipschiz_control,assump_cont:moment} and for $\lambda\leq \frac{1}{2L}$, it holds that:
	\begin{align}
		\verts{ \mathcal{L}^{tot}_{\lambda}(\alpha) -  \mathcal{M}(\alpha)  }\lesssim \lambda ,
		\end{align}
		where $\mathcal{M}(\alpha)$ and $\mathcal{L}^{tot}_{\lambda}(\alpha)$ are defined in \cref{eq:discrete_loss,eq:variance_work}.  
\end{prop}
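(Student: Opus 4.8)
The plan is to Taylor-expand each summand $\mathcal{L}_k(I+\lambda\alpha_{t_k})$ to second order in $\lambda$, identify the $\lambda^2$-coefficient as a discrete instantaneous variance, and then recognise $\mathcal{L}^{tot}_\lambda(\alpha)$ as a Riemann sum converging to $\mathcal{M}(\alpha)$; throughout, every error constant must be shown to depend only on $L$, on $C=\sup_{\theta}\verts{A_{\theta}(0)}$, and on the uniform moment bounds of \cref{assump_cont:moment}, so that the resulting $O(\lambda)$ is uniform over $\mathcal{A}$.

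First I would obtain an exact expression for the summand. With $T_k=I+\lambda\alpha_{t_k}$ one has $\log G_{k,T_k}(x)=\lambda\,\delta_{t_k}^{\alpha,\lambda}(x)$ by \eqref{eq:discrete_drift}, while $\pi_{t_{k-1}}[h_{T_k}]=-\lambda\,\pi_{t_{k-1}}[\delta_{t_k}^{\alpha,\lambda}]$ and, by the $f\equiv 1$ case of \cref{prop:independence_from_T}, $\log(Z_k/Z_{k-1})=\log\pi_{t_{k-1}}[G_{k,T_k}]$. Hence
\begin{align}
\mathcal{L}_k(I+\lambda\alpha_{t_k})=\log\pi_{t_{k-1}}\bigl[e^{\lambda\delta_{t_k}^{\alpha,\lambda}}\bigr]-\lambda\,\pi_{t_{k-1}}\bigl[\delta_{t_k}^{\alpha,\lambda}\bigr],
\end{align}
which is the cumulant generating function at $\lambda$ of the $\pi_{t_{k-1}}$-centred variable $\psi_k:=\delta_{t_k}^{\alpha,\lambda}-\pi_{t_{k-1}}[\delta_{t_k}^{\alpha,\lambda}]$ (in particular $\mathcal{L}_k\ge 0$). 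A change-of-variables/Taylor computation — using that $\nabla V_t$ is $L$-Lipschitz, that $\log\det(I+\lambda M)=\lambda\,\Tr M+O(\lambda^2\Verts{M}^2)$ for $\Verts{\lambda M}\le\tfrac12$, and the control of $\partial_t V_t$ in \cref{assump_cont:lipschitz_potential} — yields the pointwise bound $\delta_{t_k}^{\alpha,\lambda}(x)=g_{t_k}^{\alpha}(x)+\lambda E_k^{\lambda}(x)$ with $\sup_{k,\alpha}\verts{E_k^{\lambda}(x)}\le C(1+\Verts{x}^2)$ (this is the kind of estimate isolated in \cref{lem:bound_intermediate_terms}). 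Since $g_t^{\alpha}$ has at most quadratic growth with a uniform constant and $\Pi_t$ has uniformly bounded fourth moments, $\pi_{t_{k-1}}[\psi_k^2]=\textup{Var}_{\Pi_{t_{k-1}}}(g_{t_k}^{\alpha})+O(\lambda)$ and is uniformly bounded.

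Next I would expand the cumulant generating function to get $\mathcal{L}_k(I+\lambda\alpha_{t_k})=\tfrac{\lambda^2}{2}\,\textup{Var}_{\Pi_{t_{k-1}}}(g_{t_k}^{\alpha})+O(\lambda^3)$, uniformly in $k$ and $\alpha$. This is the main obstacle: $\psi_k$ has only quadratic growth and $\Pi_{t_{k-1}}$ only fourth moments, so neither the cubic remainder $\pi_{t_{k-1}}[\psi_k^3]$ nor $\pi_{t_{k-1}}[e^{\lambda\verts{\psi_k}}]$ is controlled by the moment assumption alone. The resolution is to exploit the a priori identity $\pi_{t_{k-1}}[e^{\lambda\delta_{t_k}^{\alpha,\lambda}}]=Z_{t_k}/Z_{t_{k-1}}=1+O(\lambda)$ uniformly (because $\tfrac{d}{dt}\log Z_t=-\Pi_t[\partial_t V_t]$ is bounded by \cref{assump_cont:lipschitz_potential,assump_cont:moment}), together with the fact that $\pi_{t_{k-1}}[e^{(1+\epsilon)\lambda\delta_{t_k}^{\alpha,\lambda}}]$ is uniformly bounded for $\epsilon,\lambda$ small — this follows from \eqref{eq:discrete_drift}, since $(1+\epsilon)\lambda\delta_{t_k}^{\alpha,\lambda}-V_{t_{k-1}}$ equals, up to $O(\epsilon+\lambda)$, $-V_{t_{k-1}}$ plus bounded-growth terms, so the exponential tail stays integrable. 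One then splits on $\{\verts{\psi_k}\le\lambda^{-1/2}\}$, where $\verts{e^u-1-u-u^2/2}\le\verts{u}^3e^{\verts{u}}$ with $\verts{u}=\lambda\verts{\psi_k}\le\lambda^{1/2}$ and the fourth moment of $\psi_k$ (via H\"older) give an $O(\lambda^3)$ contribution, and on its complement, where the bounded exponential moments above and Markov's inequality with the fourth moment of $\psi_k$ again give $O(\lambda^3)$.

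Finally, summing and dividing by $\lambda$, $\mathcal{L}^{tot}_\lambda(\alpha)=\tfrac12\sum_{k=1}^K\lambda\,\textup{Var}_{\Pi_{t_{k-1}}}(g_{t_k}^{\alpha})+\lambda^{-1}\sum_{k=1}^K O(\lambda^3)$, and $\lambda^{-1}\cdot K\cdot O(\lambda^3)=O(\lambda)$ since $K\lambda=1$. It remains to identify the first sum as a Riemann sum for $\mathcal{M}(\alpha)$ with error $O(\lambda)$: I would show that $t\mapsto \textup{Var}_{\Pi_t}(g_t^{\alpha})=\Pi_t[(g_t^{\alpha})^2]-\Pi_t[g_t^{\alpha}]^2$ is Lipschitz on $[0,1]$ with a constant uniform over $\mathcal{A}$, using that $t\mapsto g_t^{\alpha}(x)$ is Lipschitz with a linear-growth constant (\cref{lem:smoothness_control,assump_cont:lipschitz_potential}) and that $t\mapsto\Pi_t$ is Lipschitz strongly enough to integrate quadratic-growth functions, because $\verts{\partial_t\Pi_t(x)}=\Pi_t(x)\verts{\partial_t V_t(x)-\Pi_t[\partial_t V_t]}$ with $\partial_t V_t$ of quadratic growth and $\Pi_t$ of uniform fourth moments; the index mismatch between $g_{t_k}$ and the law $\Pi_{t_{k-1}}$ is likewise $O(\lambda)$. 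All constants depending only on $L$, $C$ and the moment bounds, this gives $\verts{\mathcal{L}^{tot}_\lambda(\alpha)-\mathcal{M}(\alpha)}\lesssim\lambda$ uniformly; the threshold $\lambda\le\tfrac1{2L}$ is precisely what makes every $I+\lambda\alpha_{t_k}$ a diffeomorphism (\cref{lem:diffeo}) and $\Verts{\lambda\nabla\alpha_{t_k}}\le\tfrac12$.
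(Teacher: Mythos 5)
Your route is genuinely different from the paper's (a per-transition cumulant-generating-function expansion of $\log\Pi_{t_{k-1}}[e^{\lambda\delta^{\alpha,\lambda}_{t_k}}]$ followed by a Riemann sum, versus the paper's double time-differentiation of the KL), but the step you yourself flag as the main obstacle is a genuine gap under the stated hypotheses. Your resolution needs (i) a uniform bound on $\Pi_{t_{k-1}}[e^{(1+\epsilon)\lambda\delta^{\alpha,\lambda}_{t_k}}]$ and (ii) third/fourth moments of $\psi_k$. For (i), writing the exponent out gives $-V_{t_{k-1}}(x)+O(1)+O(\lambda(1+\Vert x\Vert^2))$, so the claim amounts to $\Pi_t[e^{c\lambda\Vert x\Vert^2}]$ being uniformly bounded; this is a Gaussian-type moment of exactly the kind assumed only later in \cref{assump_cont:moment_2} and it fails under \cref{assump_cont:lipschitz_potential,assump_cont:moment} alone (e.g.\ $V_t(x)=\sqrt{1+\Vert x\Vert^2}$ has Lipschitz gradient and all polynomial moments, yet $\Pi_t[e^{c\lambda\Vert x\Vert^2}]=\infty$ for every $\lambda>0$). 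For (ii), $\psi_k$ has quadratic growth, so $\Pi_{t_{k-1}}[\vert\psi_k\vert^3]$ and $\Pi_{t_{k-1}}[\psi_k^4]$ (which you invoke via H\"older and Markov) require sixth and eighth moments of $\Pi_t$, while \cref{assump_cont:moment} provides only fourth. Even the patch that stays within fourth moments — bounding $\vert\psi_k\vert^3\le\lambda^{-1/2}\psi_k^2$ on the truncation event — gives a per-term error $O(\lambda^{5/2})$, hence an overall error $O(\sqrt{\lambda})$ after summing $K=1/\lambda$ terms and dividing by $\lambda$, short of the claimed $O(\lambda)$ rate. So the proposed proof does not close under the assumptions of the proposition.

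For contrast, the paper's proof is engineered precisely so that no exponential and no third or fourth power of a quadratic-growth function is ever integrated: each summand is the KL $R_{t_k,t_{k+1}}$, which (using $R_{t,t}=0$ and $\partial_s R_{s,s}=0$) is written as $-\int_{t_k}^{t_{k+1}}\int_{t_k}^{s}\partial_t\partial_s R_{t,s}\,\mathrm{d}t\,\mathrm{d}s$; the mixed derivative is computed in closed form (\cref{lem:first_partial_derivatives_loss}) and shown to lie within $O(\vert t-s\vert)$ of $\Pi_t[(\overline{g}^{\alpha}_t)^2]$ with integrands of growth at most four (\cref{eq:bounds_loss}, \cref{prop:flow_error}), so only \cref{assump_cont:moment} is used and the $O(\lambda)$ rate follows from the Riemann-sum comparison you also sketch. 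Several of your ingredients do survive and mirror the paper's: the exact identity $\Pi_{t_{k-1}}[e^{\lambda\delta^{\alpha,\lambda}_{t_k}}]=Z_{t_k}/Z_{t_{k-1}}$ (your convention thus includes the normalizing constant, which is indeed what the paper actually bounds via the KLs $R_{t_k,t_{k+1}}$), the pointwise estimate $\delta^{\alpha,\lambda}_{t_k}=g^{\alpha}_{t_k}+O(\lambda(1+\Vert x\Vert^2))$ (this is \cref{lem:bound_intermediate_terms} with $x=x'$), and the Lipschitz-in-$t$ Riemann-sum step. To repair your approach you would either have to strengthen the moment assumptions (Gaussian-type tails plus higher polynomial moments), or reorganize the second-order expansion so that only products of at most two quadratic-growth functions appear — which is essentially what the paper's differentiation of $R_{t,s}$ once in $t$ and once in $s$ accomplishes.
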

Using \cref{prop:convergence_loss} we can show that the interpolating control $\alpha^{\lambda}$, which exists by \cref{assump_cont:existence_interpolating_controls}, converges towards the continuous-time limit optimal control $\alpha^{\star}$. This essentially relies on the local behavior of the objective $\mathcal{M}$ near the optimum $\alpha^{\star}$ as described by \cref{assump_cont:separated_optimum}. \cref{prop:convergence_controls} makes this idea more precise.
\begin{prop}[Convergence of the controls]\label{prop:convergence_controls}
	
	Under \cref{assump_cont:lipschitz_potential,assump_cont:Lipschiz_control,assump_cont:separated_optimum,assump_cont:moment,assump_cont:existence_interpolating_controls}, it holds that:
	\begin{align}\label{eq:convergence_discrete_sum_controls}
		\mathcal{S}^{\lambda}_t(\alpha^{\star},\alpha^{\lambda})\lesssim \lambda,\qquad \mathcal{S}(\alpha^{\star},\alpha^{\lambda})\lesssim \lambda,
	\end{align}
		where $\mathcal{S}$  and $\mathcal{S}^{\lambda}_t$ are defined in \cref{eq:Sobolev_distances}.
\end{prop}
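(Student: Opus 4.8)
The plan is to run the classical two-step M-estimation argument — first a qualitative consistency statement $\mathcal{S}(\alpha^{\star},\alpha^{\lambda})\to 0$, then the quantitative rate — and to finish with a Riemann-sum comparison between the discrete discrepancy $\mathcal{S}^{\lambda}_t$ and the continuous one $\mathcal{S}$. The key external inputs are the uniform convergence $\sup_{\alpha\in\mathcal{A}}\verts{\mathcal{L}^{tot}_{\lambda}(\alpha)-\mathcal{M}(\alpha)}\lesssim\lambda$ from \cref{prop:convergence_loss}, the optimality of $\alpha^{\lambda}$ for $\mathcal{L}^{tot}_{\lambda}$ over $\mathcal{A}$ (\cref{assump_cont:existence_interpolating_controls} and the remark preceding it), and the structure of $\mathcal{M}$ near $\alpha^{\star}$ encoded in \cref{assump_cont:separated_optimum}. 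First I would write the standard sandwich: since $\alpha^{\lambda}$ minimizes $\mathcal{L}^{tot}_{\lambda}$ over $\mathcal{A}$ and $\alpha^{\star}\in\mathcal{A}$, one has $\mathcal{L}^{tot}_{\lambda}(\alpha^{\lambda})\leq\mathcal{L}^{tot}_{\lambda}(\alpha^{\star})$, whence, splitting $\mathcal{M}(\alpha^{\lambda})-\mathcal{M}(\alpha^{\star})$ as $(\mathcal{M}(\alpha^{\lambda})-\mathcal{L}^{tot}_{\lambda}(\alpha^{\lambda}))+(\mathcal{L}^{tot}_{\lambda}(\alpha^{\lambda})-\mathcal{L}^{tot}_{\lambda}(\alpha^{\star}))+(\mathcal{L}^{tot}_{\lambda}(\alpha^{\star})-\mathcal{M}(\alpha^{\star}))$ and using that the middle term is $\leq 0$ while $\alpha^{\star}$ minimizes $\mathcal{M}$,
\[
0 \;\le\; \mathcal{M}(\alpha^{\lambda}) - \mathcal{M}(\alpha^{\star}) \;\le\; 2\sup_{\alpha\in\mathcal{A}}\verts{\mathcal{L}^{tot}_{\lambda}(\alpha)-\mathcal{M}(\alpha)} \;\lesssim\; \lambda .
\]

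Next I would upgrade this excess-loss bound to closeness in $\mathcal{S}$. I would argue by contradiction that $\mathcal{S}(\alpha^{\star},\alpha^{\lambda})\to 0$ as $\lambda\to 0$: if not, there is $\delta>0$ and a subsequence $\lambda_n\to 0$ with $\mathcal{S}(\alpha^{\star},\alpha^{\lambda_n})>\delta$, so the well-separatedness clause of \cref{assump_cont:separated_optimum} forces $\mathcal{M}(\alpha^{\lambda_n})\geq\inf_{\mathcal{S}(\alpha,\alpha^{\star})>\delta}\mathcal{M}(\alpha)>\mathcal{M}(\alpha^{\star})$, contradicting $\mathcal{M}(\alpha^{\lambda_n})-\mathcal{M}(\alpha^{\star})\to 0$. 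Hence for $\lambda$ small enough $\mathcal{S}(\alpha^{\star},\alpha^{\lambda})\leq\delta_0$, and the local quadratic-growth clause of \cref{assump_cont:separated_optimum} then yields $\mathcal{S}(\alpha^{\star},\alpha^{\lambda})\lesssim\mathcal{M}(\alpha^{\lambda})-\mathcal{M}(\alpha^{\star})\lesssim\lambda$, which is the second assertion of the proposition.

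For the first assertion $\mathcal{S}^{\lambda}_t(\alpha^{\star},\alpha^{\lambda})\lesssim\lambda$, I would compare the Riemann-type sum defining $\mathcal{S}^{\lambda}_t$ with the integral defining $\mathcal{S}$. Writing $\psi_s(x):=\Verts{\alpha^{\star}_s(x)-\alpha^{\lambda}_s(x)}^2+\Verts{\nabla\alpha^{\star}_s(x)-\nabla\alpha^{\lambda}_s(x)}^2$, which by \cref{lem:smoothness_control} has at most quadratic growth in $x$ uniformly in $s$, each summand $\lambda\,\Pi_{t_{k-1}}[\psi_{t_k}]$ differs from $\int_{t_{k-1}}^{t_k}\Pi_s[\psi_s]\diff s$ by an amount controlled through two estimates: the temporal Lipschitz regularity of $s\mapsto\psi_s(x)$ on $[t_{k-1},t_k]$ (combining the identity $\verts{\Verts{a}^2-\Verts{b}^2}\leq\Verts{a-b}(\Verts{a}+\Verts{b})$ with the $\mathcal{O}(\lambda)$ Lipschitz bounds of \cref{lem:smoothness_control}), and the $\mathcal{O}(\lambda)$ variation of $s\mapsto\Pi_s$ tested against $\psi_{t_k}$, which I would get from $\tfrac{d}{ds}\Pi_s[\phi]=\Pi_s[(\Pi_s[\partial_s V_s]-\partial_s V_s)\phi]$ together with the quadratic growth of $\partial_s V_s$ (\cref{assump_cont:lipschitz_potential}) and the moment bound \cref{assump_cont:moment}. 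Each per-step error is then $\mathcal{O}(\lambda^{2})$; summing the $k_{\lambda}(t)\leq 1/\lambda$ steps and using the truncation $\int_0^{\tau_{\lambda}(t)}\leq\int_0^1$ gives $\mathcal{S}^{\lambda}_t(\alpha^{\star},\alpha^{\lambda})\leq\mathcal{S}(\alpha^{\star},\alpha^{\lambda})+\mathcal{O}(\lambda)\lesssim\lambda$.

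The main obstacle I expect is this last step: making the Riemann-sum comparison rigorous requires a quantitative, moment-weighted continuity estimate for the path $s\mapsto\Pi_s$ and careful bookkeeping of the polynomial growth of all integrands so that every error term stays integrable against $\Pi_s$ under \cref{assump_cont:moment}. The first two steps are essentially the textbook "argmin consistency plus quadratic growth yields rates" pattern and should be routine once \cref{prop:convergence_loss} is invoked.
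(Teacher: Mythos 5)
Your proposal is correct and follows essentially the same route as the paper's proof: the sandwich bound $0\leq\mathcal{M}(\alpha^{\lambda})-\mathcal{M}(\alpha^{\star})\lesssim\lambda$ via \cref{prop:convergence_loss}, the contradiction argument from well-separatedness followed by the local coercivity clause of \cref{assump_cont:separated_optimum} to get $\mathcal{S}(\alpha^{\star},\alpha^{\lambda})\lesssim\lambda$, and finally a Riemann-sum comparison giving $\verts{\mathcal{S}^{\lambda}_t-\mathcal{S}}\lesssim\lambda$. Your treatment of the last step is in fact somewhat more explicit than the paper's (which only sketches the rate via time-Lipschitzness of the controls and growth of $\dot V_t$), but it is the same argument.
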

\begin{proof}
	Recall that $\alpha^{\lambda}$ is a minimizer of $\mathcal{L}_{\lambda}^{tot}(\alpha)$ defined in \cref{eq:L_tot} over $\mathcal{A}$, while $\alpha^{\star}$ is a minimizer of $\mathcal{M}(\alpha)$ (defined in \cref{eq:M_control} ) over $\mathcal{A}$. 	By \cref{assump_cont:moment,assump_cont:existence_interpolating_controls,assump_cont:separated_optimum} both exist and belong to $\mathcal{A}$. We will first show that $\verts{\mathcal{M}(\alpha^{\star}) - \mathcal{M}(\alpha^{\lambda})}\lesssim \lambda$, then, we will use the coercivity property of the loss $\mathcal{M}$ ( \cref{assump_cont:separated_optimum}) to obtain convergence rates in terms of the divergence $\mathcal{S}$ between $\alpha^{\star}$ and $\alpha^{\lambda}$. Finally, we establish the convergence of the discrete sums $\mathcal{S}^{\lambda}_t$,  by interpreting it as a Riemann sum.
	\paragraph{Bounds on  $\vert\mathcal{M}(\alpha^{\star}) - \mathcal{M}(\alpha^{\lambda})\vert$ and $\vert\mathcal{L}_{\lambda}^{tot}(\alpha^{\star}) - \mathcal{L}_{\lambda}^{tot}(\alpha^{\lambda})\vert$.}
	By definition of the minimizers, the following lower bound holds $0\leq \mathcal{M}(\alpha^{\lambda}) -\mathcal{M}(\alpha^{\star})$ holds. It remains to find an upper-bound. For this purpose, we use the following decomposition:
	\begin{align}
		\mathcal{M}(\alpha^{\lambda}) -\mathcal{M}(\alpha^{\star})&= \parens{\mathcal{M}(\alpha^{\lambda}) - \mathcal{L}_{\lambda}^{tot}(\alpha^{\lambda})} + \parens{\mathcal{L}_{\lambda}^{tot}(\alpha^{\lambda}) -  \mathcal{L}_{\lambda}^{tot}(\alpha^{\star})} + \parens{\mathcal{L}_{\lambda}^{tot}(\alpha^{\star}) -\mathcal{M}(\alpha^{\star})}\\
	&\leq \parens{\mathcal{M}(\alpha^{\lambda}) - \mathcal{L}_{\lambda}^{tot}(\alpha^{\lambda})} + \parens{\mathcal{L}_{\lambda}^{tot}(\alpha^{\star}) -\mathcal{M}(\alpha^{\star})},
	\end{align}
	where we used that $\mathcal{L}_{\lambda}^{tot}(\alpha^{\lambda}) -  \mathcal{L}_{\lambda}^{tot}(\alpha^{\star})\leq 0$ to get the second line. 
	Moreover, \cref{prop:convergence_loss} provides an error bound of the form $\verts{\mathcal{M}(\alpha)- \mathcal{L}_{\lambda}^{tot}(\alpha)}\lesssim \lambda$ that holds uniformly over the set $\mathcal{A}$. This directly implies $\mathcal{M}(\alpha^{\lambda}) -\mathcal{M}(\alpha^{\star})\lesssim \lambda.$
	Thus, we have shown:
	\begin{align}\label{eq:convergence_loss_control}
		\verts{\mathcal{M}(\alpha^{\lambda}) -\mathcal{M}(\alpha^{\star})}\lesssim \lambda.
	\end{align}
	\paragraph{Convergence in Sobolev norm $\mathcal{S}(\alpha^{\star},\alpha^{\lambda})$.} 
	By \cref{assump_cont:separated_optimum}, we know that the minimizer $\alpha^{\star}$ is well-separated, i.e. for any $\delta>0$ it holds that
	\begin{align}\label{eq:well_separation}
		\mathcal{M}(\alpha^{\star})< \inf_{\mathcal{S}(\alpha,\alpha^{\star})>\delta} \mathcal{M}(\alpha)
	\end{align}
Along with \cref{eq:convergence_loss_control},  this necessarily implies that $\mathcal{S}(\alpha^{\lambda},\alpha^{\star}) \xrightarrow[\lambda \rightarrow 0]{} 0$. Indeed, if by contradiction, $\mathcal{S}(\alpha^{\lambda},\alpha^{\star})$ does not converge to $0$, then there exists a positive number $\delta_0$ and a sequence of elements $(\lambda_{l})_{l\geq 0}$ such that $\lambda_{l}\rightarrow 0$ and $\mathcal{S}(\alpha^{\lambda_l},\alpha^{\star})>\delta_0$.  If we set $M_{\delta_0}= \inf_{\mathcal{S}(\alpha,\alpha^{\star})>\delta_0} \mathcal{M}(\alpha)
 $, then \cref{eq:well_separation} implies that $M_{\delta_0}> \mathcal{M}(\alpha^{\star})$. Hence,  $\mathcal{M}(\alpha^{\lambda_l})> M_{\delta_0}> \mathcal{M}(\alpha^{\star})$. This contradicts the fact that $\mathcal{M}(\alpha^{\lambda_l})\xrightarrow[\lambda\rightarrow 0]{} \mathcal{M}(\alpha^{\star})$.
 \paragraph{Convergence rate in Sobolev norm.} Now, we can get a convergence rate for $\mathcal{S}(\alpha^{\lambda},\alpha^{\star})$. By \cref{assump_cont:separated_optimum}, we know that for $\delta$ small enough, the following local coercivity property holds:
 \begin{align}
 	\mathcal{S}(\alpha,\alpha^{\star})\lesssim \mathcal{M}(\alpha)-\mathcal{M}(\alpha^{\star}),\qquad \forall \alpha\in \mathcal{A}:\quad \mathcal{S}(\alpha,\alpha^{\star})\leq \delta  
 \end{align}
 Since $\mathcal{S}(\alpha^{\lambda},\alpha^{\star}) \xrightarrow[\lambda\rightarrow 0]{} 0 $, there exists $\lambda_0$ small enough such that $\mathcal{S}(\alpha^{\lambda},\alpha^{\star})\leq \delta$ for all $\lambda\leq \lambda_0$. The convergence rate follows directly using \cref{eq:convergence_loss_control}: $\mathcal{S}(\alpha^{\lambda},\alpha^{\star})\lesssim \lambda$ for all $ \lambda\leq \lambda_0$.
 \paragraph{Convergence of discrete sums.}
 We will first show $\mathcal{S}^{\lambda}_t(\alpha^{\star},\alpha^{\lambda})  \lesssim \lambda $ for $t=1$. The result will follow for any $t\leq 1$ since by definition we have $\mathcal{S}^{\lambda}_t(\alpha,\alpha') \leq \mathcal{S}^{\lambda}_1(\alpha,\alpha')$ for any $\alpha$ and $\alpha'$ in $\mathcal{A}$. By the triangular inequality, we have:
 \begin{align}
    \mathcal{S}^{\lambda}_1(\alpha^{\star},\alpha^{\lambda}) &\leq \mathcal{S}(\alpha^{\star},\alpha^{\lambda}) +  \verts{\mathcal{S}^{\lambda}_1(\alpha^{\star},\alpha^{\lambda})  - \mathcal{S}(\alpha^{\star},\alpha^{\lambda})} \\
    &\lesssim \lambda +\verts{\mathcal{S}^{\lambda}_1(\alpha^{\star},\alpha^{\lambda})  - \mathcal{S}(\alpha^{\star},\alpha^{\lambda})}
 \end{align}
where we used that $\mathcal{S}(\alpha^{\star},\alpha^{\lambda})\lesssim \lambda$ to get the second line. Therefore, it suffices to show that:
 \begin{align}
 	\verts{\mathcal{S}^{\lambda}_1(\alpha,\alpha')  - \mathcal{S}(\alpha,\alpha')}\lesssim \lambda.
 \end{align}
 uniformly over $\alpha$ and $\alpha'$ in $\mathcal{A}$. 
The term $\mathcal{S}^{\lambda}_1(\alpha,\alpha')$ is simply a Riemann  sum converging towards $\mathcal{S}(\alpha,\alpha')$ and whose convergence rate depends on the smoothness in time of the integrand. The finiteness of the moments of $\Pi_t$, along with the controls $\alpha^{\star}$ and $\alpha^{\lambda}$ and their gradient being locally Lipschitz in time and the variations of the potential $\dot{V}_t$ having a linear growth allows to get the desired rate.
\end{proof}
We are now ready to prove our main result of \cref{prop:discretization_error_main} which we restate as  \cref{thm:convergence_continuous_time} below for convenience and to keep a consistent notation. The proof simply combines the estimates in \cref{prop:discretization_error,prop:bound_diff_KL} along with the convergence results of the controls in \cref{prop:convergence_controls} and loss (\cref{prop:convergence_loss}).
\begin{thm}\label{thm:convergence_continuous_time}
	Under \cref{assump_cont:lipschitz_potential,assump_cont:Lipschiz_control,assump_cont:separated_optimum,assump_cont:moment,assump_cont:moment_2,assump_cont:existence_interpolating_controls} and if $\alpha^{\star}$ satisfies \cref{assum_cont:bounded_weights}, then it holds that:
	\begin{align}
		\textup{KL}\parens{\bar{\Pi}_{t}^{\alpha^{\star}}\middle|\middle| \bar{\Pi}_{t}^{\alpha^{\lambda},\lambda} }\lesssim \sqrt{\lambda}
	\end{align}
	Moreover, the discrete-time objective $\mathcal{L}_{\lambda}^{tot}(\alpha)$ converges towards $\mathcal{M}(\alpha)$ uniformly over the class $\mathcal{A}$ with at rate  $\lambda$.
\end{thm}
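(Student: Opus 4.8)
The plan is to obtain the bound by assembling the three quantitative estimates already proved for the interpolating measures. Recall the notation $\mathcal{D}_t^{\lambda}(\alpha,\alpha') = \textup{KL}(\bar{\Pi}^{\alpha}_t \,\|\, \bar{\Pi}^{\alpha',\lambda}_t)$ and $\Delta_t^{\lambda}(\alpha,\alpha') = \mathcal{D}_t^{\lambda}(\alpha,\alpha) - \mathcal{D}_t^{\lambda}(\alpha,\alpha')$, so that trivially $\mathcal{D}_t^{\lambda}(\alpha,\alpha') = \mathcal{D}_t^{\lambda}(\alpha,\alpha) - \Delta_t^{\lambda}(\alpha,\alpha')$. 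Specializing to $\alpha = \alpha^{\star}$ and $\alpha' = \alpha^{\lambda}$ — both of which exist and belong to $\mathcal{A}$ by \cref{assump_cont:separated_optimum,assump_cont:existence_interpolating_controls} — I would write
\begin{align}
  \textup{KL}\bigl(\bar{\Pi}_{t}^{\alpha^{\star}}\,\|\,\bar{\Pi}_{t}^{\alpha^{\lambda},\lambda}\bigr) = \mathcal{D}_t^{\lambda}(\alpha^{\star},\alpha^{\star}) - \Delta_t^{\lambda}(\alpha^{\star},\alpha^{\lambda}),
\end{align}
and it then suffices to control each term on the right-hand side by $O(\sqrt{\lambda})$, with the implied constant uniform in $t\in[0,1]$.

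For the first term, since $\alpha^{\star}$ satisfies \cref{assum_cont:bounded_weights} by hypothesis and \cref{assump_cont:lipschitz_potential,assump_cont:Lipschiz_control,assump_cont:moment,assump_cont:moment_2} all hold, \cref{prop:discretization_error} applies verbatim and gives $\mathcal{D}_t^{\lambda}(\alpha^{\star},\alpha^{\star}) \lesssim \sqrt{\lambda}$. For the second term, \cref{prop:bound_diff_KL} — which requires only \cref{assump_cont:lipschitz_potential,assump_cont:Lipschiz_control,assump_cont:moment} together with \cref{assum_cont:bounded_weights} imposed on its first argument $\alpha^{\star}$ — yields $\verts{\Delta_t^{\lambda}(\alpha^{\star},\alpha^{\lambda})} \lesssim \mathcal{S}_t^{\lambda}(\alpha^{\star},\alpha^{\lambda}) + \mathcal{S}_t^{\lambda}(\alpha^{\star},\alpha^{\lambda})^{1/2} + \lambda$. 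Plugging in the convergence rate $\mathcal{S}_t^{\lambda}(\alpha^{\star},\alpha^{\lambda}) \lesssim \lambda$ from \cref{prop:convergence_controls} (available under \cref{assump_cont:lipschitz_potential,assump_cont:Lipschiz_control,assump_cont:separated_optimum,assump_cont:moment,assump_cont:existence_interpolating_controls}), the governing term is the square root $\mathcal{S}_t^{\lambda}(\alpha^{\star},\alpha^{\lambda})^{1/2} \lesssim \sqrt{\lambda}$, whence $\verts{\Delta_t^{\lambda}(\alpha^{\star},\alpha^{\lambda})} \lesssim \sqrt{\lambda}$. Combining the two bounds in the displayed identity gives $\textup{KL}(\bar{\Pi}_{t}^{\alpha^{\star}}\,\|\,\bar{\Pi}_{t}^{\alpha^{\lambda},\lambda}) \lesssim \sqrt{\lambda}$, uniformly in $t$ because every $\lesssim$ above carries a $t$-uniform constant. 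The "moreover" assertion is then immediate, being precisely the statement of \cref{prop:convergence_loss}: under \cref{assump_cont:lipschitz_potential,assump_cont:Lipschiz_control,assump_cont:moment} and $\lambda \le \tfrac{1}{2L}$ one has $\verts{\mathcal{L}^{tot}_{\lambda}(\alpha) - \mathcal{M}(\alpha)} \lesssim \lambda$ uniformly over $\mathcal{A}$.

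The proof itself is short bookkeeping; all of the analytic content already sits in \cref{prop:discretization_error,prop:bound_diff_KL,prop:convergence_controls}. The only point that needs genuine care — and hence the "main obstacle" here — is hygiene in applying those propositions: checking that \cref{assum_cont:bounded_weights} is invoked on the correct (first) argument of $\mathcal{D}_t^{\lambda}$, that the hypothesis sets of the three propositions are indeed subsumed by the hypotheses of the theorem, that it is the $\mathcal{S}^{1/2}$ term rather than $\mathcal{S}$ that dictates the final $\sqrt{\lambda}$ rate, and that the uniformity in $t$ really is inherited through the composition. No new estimate is required beyond these checks.
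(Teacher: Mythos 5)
Your proposal is correct and follows essentially the same route as the paper's proof: the identity $\mathcal{D}_t^{\lambda}(\alpha^{\star},\alpha^{\lambda}) = \mathcal{D}_t^{\lambda}(\alpha^{\star},\alpha^{\star}) - \Delta_t^{\lambda}(\alpha^{\star},\alpha^{\lambda})$ is the paper's decomposition, and the three estimates you invoke (\cref{prop:discretization_error}, \cref{prop:bound_diff_KL} with \cref{assum_cont:bounded_weights} on the first argument, and \cref{prop:convergence_controls} giving $\mathcal{S}_t^{\lambda}(\alpha^{\star},\alpha^{\lambda}) \lesssim \lambda$ so the $\mathcal{S}^{1/2}$ term dominates) are exactly the ones the paper combines, with the "moreover" part being \cref{prop:convergence_loss} verbatim.
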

\begin{proof}
	By \cref{assump_cont:existence_interpolating_controls,assump_cont:separated_optimum} , we know that both  $\alpha^{\lambda}$ and $\alpha^{\star}$ exist and belong to the class of admissible controls $\mathcal{A}$. 
	We can then use the following decomposition of the relative entropy $\mathcal{D}^{\lambda}_t(\alpha^{\star},\alpha^{\lambda})$:
	\begin{align}
		\verts{\mathcal{D}^{\lambda}_t(\alpha^{\star},\alpha^{\lambda})}\leq  \verts{\mathcal{D}^{\lambda}_t(\alpha^{\star},\alpha^{\star})} + \verts{\mathcal{D}^{\lambda}_t(\alpha^{\star},\alpha^{\lambda})- \mathcal{D}^{\lambda}_t(\alpha^{\star},\alpha^{\star}) }.
	\end{align}
	\cref{prop:discretization_error} ensures that $\verts{\mathcal{D}^{\lambda}_t(\alpha^{\star},\alpha^{\star})}\lesssim \sqrt{\lambda}$, while \cref{prop:bound_diff_KL} shows that:
	\begin{align}
		\verts{\mathcal{D}^{\lambda}_t(\alpha^{\star},\alpha^{\lambda})- \mathcal{D}^{\lambda}_t(\alpha^{\star},\alpha^{\star}) }\lesssim \lambda + \mathcal{S}_t^{\lambda}(\alpha^{\star},\alpha^{\lambda}) + \mathcal{S}_t^{\lambda}(\alpha^{\star},\alpha^{\lambda})^{\frac{1}{2}}.
	\end{align}
	We can then use \cref{prop:convergence_controls} to ensure the discrepancy $\mathcal{S}^{\lambda}_t(\alpha^{\star},\alpha^{\lambda})\lesssim \lambda  $ for $\lambda$ small enough.
	This directly yields:
	\begin{align}
		\verts{D^{\lambda}_t(\alpha^{\star},\alpha^{\lambda})}\lesssim \sqrt{\lambda}.
	\end{align}
	Finally, \cref{prop:convergence_loss} shows that the discrete time losses $\mathcal{L}_{\lambda}^{tot}$ converge towards $\mathcal{M}$ uniformly over the class $\mathcal{A}$  at rate $\lambda$. This concludes the proof.
\end{proof}

\subsection{Uniform convergence of the objective: Proof of \cref{prop:convergence_loss_main} (\cref{prop:convergence_loss})}\label{appendix_proof_discrete_loss} 
We will first introduce some notations that are used only in this section. In all what follows, $T_k$ are normalizing flows of the form $T_k(x) = x + \lambda \alpha_{t_k}(x)$ for some fixed $\alpha\in \mathcal{A}$. We further denote by $T_{t,s}(x) = x + (s-t)\alpha_s(x)$ and introduce $R_{s,t}$:
\begin{align}
	R_{t,s} := \log(\frac{Z_t}{Z_s})+ \Pi_t\brakets{\parens{ V_s\circ T_{t,s}- V_t  - \log\vert\nabla T_{t,s}\vert) }}
\end{align}	
With the above notations we clearly have $T_k =  T_{t_{k-1},t_{k}}$ and $R_{t_{k},t_{k+1}} =  \textup{KL}((T_{k})_{\#}\Pi_{t_k}|| \Pi_{t_{k+1}})$. 
The discrete time objective can then be expressed in terms of $R$:
\begin{align}\label{eq:Rieman_sum}
	\mathcal{F}_{\lambda}^{tot}(\alpha) = \frac{1}{\lambda}\sum_{k=0}^{K-1} R_{t_k,t_{k+1}}.
\end{align}
We will show that such sum can be written as a double integral of the form:
\begin{align}
	\mathcal{F}_{\lambda}^{tot}(\alpha) = - \frac{1}{\lambda} \sum_{k=0}^{K-1} \int_{t_k}^{t_{k+1}}\int_{t_k}^{s}\partial_t\partial_s R_{t,s}\diff t\diff s.
\end{align}
This allows to view \cref{eq:Rieman_sum} as a Riemann's sum  whose limit is determined by the local behavior of the function $\partial_{t}\partial_s R_{t,s}$ when $\verts{t-s}$ is small. We first show in \cref{prop:flow_error} that $-\partial_{t}\partial_s R_{t,s}$ approaches $\Pi_t[(\overline{g}_t^{\alpha})^2]$ when $t$ and $s$ are close. The proof of \cref{prop:flow_error} is deferred to  \cref{sec:proof_techincal_lemmas} and relies on  \cref{lem:first_partial_derivatives_loss,eq:bounds_loss}. \cref{lem:first_partial_derivatives_loss} provides closed form expressions for the derivatives  $\partial_s R_{t,s}$ and $\partial_{t}\partial_s R_{t,s}$  as expectations of some integrable functions under $\Pi_t$. Then \cref{eq:bounds_loss} shows that the integrand  in $-\partial_{t}\partial_s R_{t,s}$ approaches  $(\overline{g}_{t}^{\alpha})^2(x)$ with an error that is  polynomial in $x$ and proportional to the distance $\verts{t-s}$.
\begin{prop}\label{prop:flow_error}
Under \cref{assump_cont:lipschitz_potential,assump_cont:Lipschiz_control,assump_cont:moment} and for $t$ and $s$  such that  $\vert  t-s\vert \leq \frac{1}{2L}$, it holds that:
	\begin{align}
		\label{eq:bound_loss_main_1}\verts{ \partial_t \partial_s R_{t,s}  +  \Pi_{t}\brakets{\parens{\overline{g}^{\alpha}_{t}}^2} } \lesssim\vert t-s \vert\\
		   \label{eq:bound_loss_main_2}\verts{    \Pi_{t}\brakets{\parens{\overline{g}^{\alpha}_{t}}^2} -  \Pi_{s}\brakets{\parens{\overline{g}^{\alpha}_{s}}^2}} \lesssim\vert t-s \vert 
	\end{align}
	Moreover, when $s=t$, we have:  $\partial_sR_{s,s} = 0$ and  $\partial_t\partial_s R_{t,t}=  \Pi_t\brakets{ \parens{\overline{g}_{t}^{\alpha}}^2  }  $.
\end{prop}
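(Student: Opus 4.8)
The plan is to reduce everything to two ingredients established separately: the closed-form expressions for $\partial_s R_{t,s}$ and $\partial_t\partial_s R_{t,s}$ as $\Pi_t$-expectations of explicit functions of $T_{t,s}$, $\nabla T_{t,s}$, $\alpha_s$, $V_s$ and their time-derivatives (\cref{lem:first_partial_derivatives_loss}), together with the pointwise Taylor-type estimate comparing the integrand of $\partial_t\partial_s R_{t,s}$ with $\parens{\overline g^\alpha_t}^2$ up to an error of order $\verts{t-s}$ with polynomial prefactor in $\Verts{x}$ (\cref{eq:bounds_loss}). With these in hand, each of the three displayed bounds follows by taking a $\Pi_t$-expectation of a pointwise inequality and invoking the uniform moment control of \cref{assump_cont:moment}.

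\textbf{The diagonal $s=t$.} I would plug $s=t$ into the closed forms of \cref{lem:first_partial_derivatives_loss}, using $T_{t,t}=\mathrm{id}$, $\nabla T_{t,t}=I$, $\partial_s T_{t,s}\big|_{s=t}=\alpha_t$ and $\partial_s\nabla T_{t,s}\big|_{s=t}=\nabla\alpha_t$, so that $\partial_s R_{t,s}\big|_{s=t}$ collapses to a linear combination of $\Pi_t\brakets{\partial_t V_t}$, $\Pi_t\brakets{\nabla V_t^{\top}\alpha_t}$ and $\Pi_t\brakets{\nabla\cdot\alpha_t}$. The integration-by-parts identity $\Pi_t\brakets{\nabla\cdot\alpha_t}=\Pi_t\brakets{\nabla V_t^{\top}\alpha_t}$ — valid since $\nabla\log\pi_t=-\nabla V_t$, $\alpha_t$ has at most linear growth by \cref{assump_cont:Lipschiz_control} (cf. \cref{lem:smoothness_control}) and $\Pi_t$ has finite moments, so boundary terms vanish — then gives $\partial_s R_{t,t}=0$. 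Differentiating the closed form of $\partial_s R_{t,s}$ once more in $t$ and evaluating at $s=t$, the same bookkeeping together with $\Pi_t\brakets{g^\alpha_t}=-\Pi_t\brakets{\partial_t V_t}$ (equivalently $\Pi_t\brakets{\overline g^\alpha_t}=0$) identifies $\partial_t\partial_s R_{t,t}$ with $\Pi_t\brakets{\parens{\overline g^\alpha_t}^2}=\textup{Var}_{\Pi_t}\brakets{g^\alpha_t}$, as claimed.

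\textbf{Near-diagonal bounds.} For \cref{eq:bound_loss_main_1}, I would write $\partial_t\partial_s R_{t,s}+\Pi_t\brakets{\parens{\overline g^\alpha_t}^2}$ as a single $\Pi_t$-expectation of the difference between the integrand supplied by \cref{lem:first_partial_derivatives_loss} and $\parens{\overline g^\alpha_t}^2(x)$; the pointwise estimate \cref{eq:bounds_loss} bounds this difference by $\verts{t-s}$ times a polynomial in $\Verts{x}$ uniformly over $\alpha\in\mathcal{A}$, and \cref{assump_cont:moment} makes the integral finite and uniformly bounded. The constraint $\verts{t-s}\le\tfrac{1}{2L}$ enters here to keep $T_{t,s}$ a diffeomorphism with $\nabla T_{t,s}=I+(s-t)\nabla\alpha_s$ uniformly invertible (cf. \cref{lem:diffeo}), while $T_{t,s}(x)-x=(s-t)\alpha_s(x)$ and $\nabla T_{t,s}-I$ are $O(\verts{t-s})$ with polynomial prefactors; combined with the joint Lipschitz-in-$(t,x)$ control of $\nabla V_t$ and $\partial_t V_t$ (\cref{assump_cont:lipschitz_potential}) and of $\alpha,\nabla\alpha$ (\cref{assump_cont:Lipschiz_control}, \cref{lem:smoothness_control}), this yields the claimed remainder. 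For \cref{eq:bound_loss_main_2}, I would split $\Pi_s\brakets{\parens{\overline g^\alpha_s}^2}-\Pi_t\brakets{\parens{\overline g^\alpha_t}^2}$ into the change of the integrand — controlled pointwise by $\verts{t-s}$ with polynomial prefactor, since $g^\alpha_t$ is Lipschitz in $t$ with polynomially-growing constant and $\Pi_t\brakets{\partial_t V_t}$ is Lipschitz in $t$ — plus the change of measure, controlled via $\verts{V_t-V_s}\lesssim\verts{t-s}(1+\Verts{x})$-type bounds from \cref{assump_cont:lipschitz_potential}, and again conclude with \cref{assump_cont:moment}.

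\textbf{Main obstacle.} The real work sits inside the preparatory results: justifying differentiation under the expectation for $R_{t,s}$ (the growth and moment hypotheses are needed to dominate the differentiated integrands), and establishing the pointwise second-order Taylor expansion of the integrand of $\partial_t\partial_s R_{t,s}$ around the diagonal uniformly in $\alpha\in\mathcal{A}$. Within the proposition itself, the only delicate point is checking that the boundary terms in the two integration-by-parts identities vanish, which is precisely where the linear growth of $\alpha_t$ and the finite moments of $\Pi_t$ are used.
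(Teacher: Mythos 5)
Your proposal is correct and follows essentially the same route as the paper: the paper likewise isolates the closed-form expressions for $\partial_s R_{t,s}$ and $\partial_t\partial_s R_{t,s}$ (together with the diagonal identities $c_{t,t}=\pm\overline{g}^{\alpha}_t$, $q_{t,t}=\Pi_t$) in a preparatory lemma, proves exactly the pointwise $O(\vert t-s\vert)$ estimates with polynomial-in-$\Vert x\Vert$ prefactors that you invoke, and then obtains both displayed bounds by integrating those pointwise estimates against $\Pi_t$ and using the fourth-moment assumption. The only cosmetic difference is that the paper bounds the two terms $E^{(1)}_{t,s}$ and $E^{(2)}_{t,s}$ of $\partial_t\partial_s R_{t,s}$ separately (the score term via the third pointwise estimate, and $E^{(2)}_{t,s}+\Pi_t[(\overline{g}^{\alpha}_t)^2]$ via the other two), which is precisely the single-expectation-of-the-difference argument you describe.
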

Now that we have described the behavior of $\partial_{t}\partial_s R_{t,s}$ when $\verts{t-s}$ is small, we can proceed to the proof of \cref{prop:convergence_loss}.
\begin{proof}{\textbf{of \cref{prop:convergence_loss}}}
	 By the fundamental theorem of calculus, we can directly write:
	\begin{align}
		R_{t_k,t_{k+1}} = \int_{t_k}^{t_{k+1}}\partial_s R_{t_k,s}\diff s = -\int_{t_k}^{t_{k+1}} \int_{t_k}^{s} \partial_t\partial_s R_{t,s}\diff t \diff s
	\end{align}
	where we used that  $R_{t_k,t_k}=0$ and that $\partial_s R_{s,s}=0$ by \cref{prop:flow_error}. Moreover, define $\mathcal{M}_{\lambda} $ such that:
	\begin{align}
		\mathcal{M}_{\lambda}(\alpha):=   \frac{1}{2}\sum_{k=0}^{K-1}\int_{t_k}^{t_{k+1}}\Pi_{t_k}\brakets{\parens{\overline{g}^{\alpha}_{t_k}}^{2}}\diff t = \frac{1}{\lambda} \sum_{k=0}^{K-1}\int_{t_k}^{t_{k+1}} \int_{t_k}^{s}  \Pi_{t_k}\brakets{\parens{\overline{g}^{\alpha}_{t_k}}^{2}} \diff t\diff s ,
	\end{align}
	where the second expression of $\mathcal{M}_{\lambda}(\alpha)$ is obtained by direct calculation. We will control both errors  $\verts{ \mathcal{F}^{tot}_{\lambda}(\alpha) - \mathcal{M}_{\lambda}(\alpha)}$ and $\verts{ \mathcal{M}_{\lambda}(\alpha) - \mathcal{M}(\alpha) }$. We get an upper-bound on $\verts{ \mathcal{M}_{\lambda}(\alpha) - \mathcal{M}(\alpha) }$ directly using the first expression of $\mathcal{M}_{\lambda}(\alpha)$  and using the Lipschitz smoothness of $\Pi_{t}\brakets{\parens{\overline{g}^{\alpha}_{t}}^{2}}$ as shown in  \cref{eq:bound_loss_main_2} of \cref{prop:flow_error}: 
	\begin{align}
		\verts{ \mathcal{M}_{\lambda}(\alpha) - \mathcal{M}_{\lambda}(\alpha) } &= \verts{ \frac{1}{2}\sum_{k=0}^{K-1}\int_{t_k}^{t_{k+1}}\Pi_{t_k}\brakets{\parens{\overline{g}^{\alpha}_{t_k}}^{2}} -  \Pi_{t}\brakets{\parens{\overline{g}^{\alpha}_{t}}^{2}} \diff t   } \\
		 &\lesssim \sum_{k=0}^{K-1}\int_{t_k}^{t_{k+1}} \verts{ t_k-t} \diff t \lesssim \lambda
	\end{align}
	To control $\verts{ \mathcal{F}^{tot}_{\lambda}(\alpha) - \mathcal{M}_{\lambda}(\alpha)}$ we use the second expression of $\mathcal{M}_{\lambda}(\alpha)$ and rely on the following decomposition:  
	\begin{align}
		\verts{ \mathcal{F}^{tot}_{\lambda}(\alpha) - \mathcal{M}_{\lambda}(\alpha)} & \leq \frac{1}{\lambda} \sum_{k=0}^{K-1} \int_{t_k}^{t_{k+1}} \int_{t_k}^{s} \verts{ \partial_t\partial_s R_{t,s} + \Pi_{t_k}\brakets{\parens{\overline{g}^{\alpha}_{t_k}}^{2}}} \diff t\diff s \\
		 &\leq \frac{1}{\lambda}\sum_{k=0}^{K-1}\int_{t_k}^{t_{k+1}} \int_{t_k}^{s}\parens{ \verts{\partial_t\partial_s R_{t,s} +\Pi_{t}\brakets{\parens{\overline{g}^{\alpha}_{t}}^{2}}} + \verts{ \Pi_{t}\brakets{\overline{g}^2_{t}} - \Pi_{t_k}\brakets{\parens{\overline{g}^{\alpha}_{t_k}}^{2}}}}  \diff t \diff s \\
		&\lesssim \frac{1}{\lambda}\sum_{k=0}^{K-1}\int_{t_k}^{t_{k+1}} \int_{t_k}^{s} \verts{ t-s} + \verts{ t - t_{k}} \diff t \diff s \lesssim \lambda \\
	\end{align}
	where we used the estimates  \cref{eq:bound_loss_main_1,eq:bound_loss_main_2} in  \cref{prop:flow_error} to get the last line. The result follows by direct application of triangular inequality.
\end{proof}

\subsubsection{Proof of the technical lemmas}\label{sec:proof_techincal_lemmas}
We first provide expressions for $\partial_{s}R_{t,s}$ and $\partial_{t,s}R_{t,s}$
\begin{lem}\label{lem:first_partial_derivatives_loss}
Under \cref{assump_cont:lipschitz_potential,assump_cont:Lipschiz_control,assump_cont:moment}, we have the following expressions for $\partial_{s}R_{t,s}$ and $\partial_{t,s}R_{t,s}$:
\begin{align}
	\partial_s R_{t,s} &=	\int  c_{t,s}(y) q_{t,s}(y)\diff y,\\
	\partial_t\partial_s R_{t,s} &= E_{t,s}^{(1)} + E_{t,s}^{(2)},
\end{align}
where we define $c_{t,s}$, $q_{t,s}$, $E_{t,s}^{(1)}$ and $E_{t,s}^{(2)}$ as:
\begin{align}\label{eq:intermediate_densities}
		q_{t,s}(y) &:= \Pi_t(T_{t,s}^{-1}(y))\vert \nabla T_{t,s}\parens{T_{t,s}^{-1}(y)} \vert^{-1} \\
	 	c_{t,s}(y) &:= \dot{V}_s(y)-\Pi_{s}[\dot{V}_s]+ \nabla V_s(y)^{\top}\alpha_{t}\parens{T_{t,s}^{-1}(y)} - Tr\parens{ \nabla T^{-1}_{t,s}(y)\nabla \alpha_t\parens{T_{t,s}^{-1}(y)} }\\
	 				E_{t,s}^{(1)} =& \Pi_{t}\brakets{ \parens{\partial_{t}\alpha_t + \parens{\nabla T_{t,s}}^{-1}\nabla \alpha_t }^{\top} \nabla \log\parens{ \frac{q_{t,s}}{\Pi_s}  }\circ T_{t,s}  }\\
			E_{t,s}^{(2)} =& \Pi_{t}\brakets{\parens{c_{t,s}\circ T_{t,s}} \parens{\partial_{t}\log(q_{t,s})\circ T_{t,s}}}
\end{align}
In particular, when $s=t$, we have:  $\partial_sR_{t,t} = 0$ and  $\partial_t\partial_s R_{t,t}=  \Pi_t\brakets{ \parens{\overline{g}_{t}^{\alpha}}^2  }  $.
\end{lem}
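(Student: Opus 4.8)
The plan is to differentiate the explicit integral defining $R_{t,s}$ twice — first in $s$, then in $t$ — translating between expectations against $\Pi_t$ and against the pushforward density $q_{t,s}=(T_{t,s})_{\#}\Pi_t$ by the change of variables $y=T_{t,s}(x)$. Before that I would record the facts that make every interchange of derivative and integral legitimate and uniform over admissible controls: for $\vert t-s\vert\le\tfrac1{2L}$ the map $T_{t,s}$ is a diffeomorphism (\cref{lem:diffeo}), and since $\alpha,\nabla\alpha$ satisfy \cref{assump_cont:Lipschiz_control} while $(s-t)\nabla\alpha$ has operator norm at most $\tfrac12$, the matrices $\nabla T_{t,s}$, $(\nabla T_{t,s})^{-1}$ and their $t$- and $s$-derivatives are bounded with at most linear growth in $x$; hence $q_{t,s}$ of \eqref{eq:intermediate_densities} is well defined and jointly smooth in $(t,s,y)$, all integrands below have polynomial growth in $x$, and, together with the uniform moment bound \cref{assump_cont:moment} and the regularity of $V_t,\partial_tV_t$ in \cref{assump_cont:lipschitz_potential}, dominated convergence applies. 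The only calculus inputs needed are $\partial_s\log Z_s=-\Pi_s[\dot V_s]$, $\partial_t\log\Pi_t=-\dot V_t+\Pi_t[\dot V_t]$, Jacobi's formula $\partial\log\vert\nabla T_{t,s}\vert=\Tr\big((\nabla T_{t,s})^{-1}\partial\nabla T_{t,s}\big)$, the identity $\Tr\big(\nabla T_{t,s}^{-1}(y)\nabla\alpha_t(T_{t,s}^{-1}(y))\big)=\nabla\!\cdot\!\big(\alpha_t\circ T_{t,s}^{-1}\big)(y)$, and the weighted integration by parts $\nabla V_s^{\top}w-\nabla\!\cdot\!w=-\Pi_s^{-1}\nabla\!\cdot\!(w\Pi_s)$.

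For $\partial_sR_{t,s}$ I would differentiate $R_{t,s}=\log(Z_t/Z_s)+\Pi_t[V_s\circ T_{t,s}-V_t-\log\vert\nabla T_{t,s}\vert]$ in $s$: the normalizing-constant term contributes $-\Pi_s[\dot V_s]$, the chain rule on $V_s\circ T_{t,s}$ gives $\dot V_s\circ T_{t,s}+(\nabla V_s\circ T_{t,s})^{\top}\partial_sT_{t,s}$, and Jacobi's formula handles the Jacobian term. Changing variables $y=T_{t,s}(x)$ and identifying the divergence of $\alpha_t\circ T_{t,s}^{-1}$ turns the resulting $\Pi_t$-expectation into $\int c_{t,s}(y)q_{t,s}(y)\diff y$, with $c_{t,s}$ exactly as in \eqref{eq:intermediate_densities}.

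For $\partial_t\partial_sR_{t,s}$ the cleanest route is first to rewrite the transport part of $c_{t,s}$, using the weighted integration by parts above with $w:=\alpha_t\circ T_{t,s}^{-1}$, so that $\partial_sR_{t,s}=\int(\dot V_s-\Pi_s[\dot V_s])q_{t,s}\diff y+\int w^{\top}\nabla\log(q_{t,s}/\Pi_s)\,q_{t,s}\diff y$. Differentiating this in $t$ by the product rule, the pieces coming from $\partial_tq_{t,s}$ in both integrals and from $\partial_t\nabla\log(q_{t,s}/\Pi_s)=\nabla\partial_t\log q_{t,s}$ in the second (the latter integrated by parts once more, producing $-\nabla\!\cdot\!(wq_{t,s})\,\partial_t\log q_{t,s}$) reassemble — using $c_{t,s}=(\dot V_s-\Pi_s[\dot V_s])+w^{\top}\nabla V_s-\nabla\!\cdot\!w$ and changing variables back — into $E^{(2)}_{t,s}=\Pi_t[(c_{t,s}\circ T_{t,s})(\partial_t\log q_{t,s}\circ T_{t,s})]$. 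The only remaining term is the one from $\partial_tw$, which by the chain rule and $\partial_tT_{t,s}^{-1}=-(\nabla T_{t,s})^{-1}(\partial_tT_{t,s})\circ T_{t,s}^{-1}$, after evaluation at $y=T_{t,s}(x)$, collapses to $E^{(1)}_{t,s}$ with the leading vector field built from $\partial_t\alpha_t$ and $(\nabla T_{t,s})^{-1}\nabla\alpha_t$ as in \eqref{eq:intermediate_densities}. I expect this reassembly — carrying out the two integrations by parts with the correct signs, checking the boundary terms vanish from the growth and moment bounds, and sorting which factors become the score $\nabla\log(q_{t,s}/\Pi_s)$ versus the leading vector field — to be the main obstacle; the rest is routine once differentiation under the integral is in place.

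Finally I would set $s=t$, where $T_{t,t}=\mathrm{id}$ and $q_{t,t}=\Pi_t$: then $\partial_sR_{t,t}=\Pi_t[c_{t,t}]$, the $\dot V_t$ terms cancel under $\Pi_t$, and $\Pi_t[\nabla\!\cdot\!\alpha_t-\alpha_t^{\top}\nabla V_t]=0$ by integration by parts, so $\partial_sR_{t,t}=0$; since $\nabla\log(q_{t,t}/\Pi_t)=0$ makes $E^{(1)}_{t,t}$ vanish and a first-order expansion of $T_{t,s}^{-1}$ and $\log\vert\nabla T_{t,s}\vert$ in $s-t$ identifies $\partial_t\log q_{t,s}\big|_{s=t}$ with $\overline g^{\alpha}_t$, one obtains $\partial_t\partial_sR_{t,t}=E^{(2)}_{t,t}=\Pi_t[(\overline g^{\alpha}_t)^2]$, which are the asserted boundary values.
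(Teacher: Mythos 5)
Your proposal is correct and follows essentially the same route as the paper: differentiate under the integral (justified by the same growth and moment bounds), change variables $y=T_{t,s}(x)$ to obtain $\partial_s R_{t,s}=\int c_{t,s}\,q_{t,s}\,\diff y$, pass to the score form $\int(\dot V_s-\Pi_s[\dot V_s])q_{t,s}\diff y+\int w^{\top}\nabla\log(q_{t,s}/\Pi_s)\,q_{t,s}\diff y$ by integration by parts, and split the $t$-derivative into the $\partial_t w$ contribution ($E^{(1)}_{t,s}$) and the density-variation contribution ($E^{(2)}_{t,s}$), with the same evaluation at $s=t$. The paper merely organizes the mixed partial through an auxiliary two-argument function $r_{t,t',s}$ (differentiating the non-score form in $t'$ and the score form in $t$, then setting $t'=t$), which avoids your extra reverse integration by parts but yields exactly the same decomposition.
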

\begin{proof}
\begin{itemize}
	\item Computing $\partial_s R_{t,s}$.
\end{itemize}
First, recall that the control $\alpha$ satisfies \cref{eq:admissibility} of \cref{lem:smoothness_control} under \cref{assump_cont:Lipschiz_control}. In addition and under    \cref{assump_cont:lipschitz_potential,assump_cont:moment}, it is possible to apply the dominated convergence theorem, thus yielding:
	\begin{align}\label{eq:partial_R}
		\partial_s R_{t,s} = -\Pi_{s}\brakets{\dot{V}_s} +  \Pi_{t}\brakets{ \dot{V}_s \circ T_{t,s} + (\nabla V_s\circ T_{t,s})^{\top}\alpha_{t} - Tr\parens{ \nabla T_{t,s}^{-1}\nabla \alpha_t }}
	\end{align}
 We can then perform a change of variables $y = T_{t,s}(x)$ in the above expression to get:
\begin{align}
	\partial_s R_{t,s} =& \int \parens{ \dot{V}_s(y)-\Pi_{s}[\dot{V}_s]+ \nabla V_s(y)^{\top}\alpha_{t}\parens{T_{t,s}^{-1}(y)} - Tr\parens{ \nabla T^{-1}_{t,s}(y)\nabla \alpha_t\parens{T_{t,s}^{-1}(y)} }} q_{t,s}(y)\diff y\\
	=&
	\int  c_{t,s}(y) q_{t,s}(y)\diff y.
\end{align}
When $s=t$, we get  $c_{t,t}(x) =  \overline{g}_t^{\alpha}(x) $  and $q_{t,t}(x) = \Pi_t(x) $, thus  $\partial_s R_{t,t} = 0$.
\begin{itemize}
	\item Computing $\partial_t \partial_s R_{t,s}$.
\end{itemize}
To compute $\partial_t \partial_s R_{t,s}$ we will introduce an auxiliary function $r_{t,t',s}$:
\begin{align}\label{eq:auxiliarty_function}
	r_{t,t',s} :=& \int \parens{ \dot{V}_s(y)-\Pi_{s}[\dot{V}_s] + \nabla V_s(y)^{\top}\alpha_{t}\parens{T^{-1}_{t,s}(y)}}q_{t',s}(y) \diff y  \\
	&- \int\parens{  Tr\parens{ \nabla T^{-1}_{t,s}(y)\nabla \alpha_t\parens{T^{-1}_{t,s}(y)} }}q_{t',s}(y)\diff y.\\
	=& \int  c_{t,s}(y)q_{t,s}(y)\diff y.
\end{align}
Hence, from the expression of $\partial_s R_{t,s}  $, it always holds that  $\partial_s R_{t,s} = r_{t,t,s} $. Moreover, provided the  partial derivatives $\partial_t r_{t,t',s} $ and $\partial_{t'}r_{t,t',s}$ are continuous, the partial derivative $\partial_t \partial_s R_{t,s}$ is simply given by:
\begin{align}
	\partial_t \partial_s R_{t,s} = \partial_t r_{t,t',s}|_{t'=t} + \partial_{t'}r_{t,t',s}|_{t'=t}.
\end{align} 
Thus, we only need to compute each term $E_{t,s}^{(1)}= \partial_t r_{t,t',s}|_{t'=t}$ and $E_{t,s}^{(2)}= \partial_{t'}r_{t,t',s}|_{t'=t}$ separately.
\paragraph{Computing $E_{t,s}^{(1)} = \partial_t r_{t,t',s}|_{t'=t}$.}
First, a simple computation shows that 
\begin{align}
	 \nabla \cdot [\alpha_t(T_{t,s}^{-1}(y))  ] =  Tr(\nabla \alpha_t(T_{t,s}^{-1}(y))\nabla T_{t,s}^{-1}(y))
\end{align}
hence, we can perform integration by parts on the last term of \cref{eq:auxiliarty_function} to obtain a second expression for $r_{t,t',s}$
\begin{align}\label{eq:second_exp_derivative_KL}
	r_{t,t',s} =  \int \parens{\dot{V}_s(y)- \Pi_s\brakets{ \dot{V}_s}} q_{t',s}(y)\diff y  +  \int \alpha_t\parens{T^{-1}_{t,s}(y)}^{\top}  \nabla \log\parens{ \frac{q_{t',s}(y)}{\Pi_s(y)}  } q_{t',s}(y) \diff y.
\end{align}
After differentiating \cref{eq:second_exp_derivative_KL} w.r.t. $t$, only the second integral in \cref{eq:second_exp_derivative_KL} contributes to the expression of the derivative:
\begin{align}
	\partial_t r_{t,t',s} = \int \parens{ \partial_t \alpha_t\parens{T^{-1}_{t,s}\parens{y}} + \nabla T^{-1}_{t,s}\parens{y}\nabla\alpha_t\parens{T^{-1}_{t,s}(y)}}^{\top}   \nabla \log\parens{ \frac{q_{t',s}(y)}{\Pi_s(y)}  }q_{t',s}(y)\diff y.
\end{align}
Taking $t'=t$ and performing again a change of variables $y = T_{t,s}(x)$, it follows that:
\begin{align}
	E_{t,s}^{(1)} &= \int \parens{ \partial_t \alpha_t(x) + \parens{\nabla T_{t,s}(x)}^{-1} \nabla \alpha_t(x)  }^{\top} \nabla \log\parens{ \frac{q_{t,s}}{\Pi_s}  }(T_{t,s}(x))\Pi_{t}(x)\diff x.\\
	&= \Pi_{t}\brakets{ \parens{\partial_{t}\alpha_t + \parens{\nabla T_{t,s}}^{-1}\nabla \alpha_t }^{\top} \nabla \log\parens{ \frac{q_{t,s}}{\Pi_s}  }\circ T_{t,s}  }
\end{align}

\paragraph{Computing $E_{t,s}^{(2)} =\partial_{t'}r_{t,t',s}|_{t'=t}$.}
From \cref{eq:auxiliarty_function} we have that $r_{t,t',s} = \int c_{t,s}(y)q_{t',s}(y)\diff y$. Therefore, by the dominated convergence theorem,
we can write:
\begin{align}
	\partial_{t'}r_{t,t',s} = \int c_{t,s}(y)  \partial_t\log\parens{q_{t',s}(y)} q_{t',s}(y))\diff y.  
\end{align}
Taking $t'=t$ and applying a change of variables $ y = T_{t,s}(x) $, we get:
\begin{align}
	E_{t,s}^{(2)} =& \int c_{t,s}(T_{t,s}(x)) \partial_{t}\log(q_{t,s})(T_{t,s}(x)) \Pi_t(x)\diff x\\
	=& \Pi_{t}\brakets{\parens{c_{t,s}\circ T_{t,s}} \parens{\partial_{t}\log(q_{t,s})\circ T_{t,s}}}.
\end{align}
\end{proof}
Next we show that the functions  $\partial_t \log(q_{t,s}(T_{t,s}(x)))$ and $-c_{t,s}(T_{t,s}(x))$ appearing in the expression of $\partial_t\partial_s R_{t,s}$ provided in \cref{lem:first_partial_derivatives_loss}, approach $\overline{g}_{t}^{\alpha}(x)$ point-wise with an error proportional to $\verts{t-s}$.
\begin{lem}\label{eq:bounds_loss}
Under \cref{assump_cont:lipschitz_potential,assump_cont:Lipschiz_control,assump_cont:moment} and for $t$ and $s$  such that  $\vert  t-s\vert \leq \frac{1}{2L}$, it holds:
\begin{align}
\label{eq:bound_derivative_q}
	\verts{\partial_t \log(q_{t,s}(T_{t,s}(x))) -\overline{g}_{t}^{\alpha}(x) }\lesssim \verts{t-s}\parens{1 + \Verts{x} + \Verts{x}^2 }\\
\label{eq:bound_b}	\verts{c_{t,s}(T_{t,s}(x)) + \overline{g}_{t}^{\alpha}(x)  }\lesssim \verts{t-s}\parens{1 + \Verts{x} + \Verts{x}^2 }\\
\label{eq:bound_score} \Verts{ \nabla \log\parens{ \frac{q_{t',s}}{\Pi_s}}(T_{t,s}(x)) } 
	 \lesssim \vert t-s \vert \parens{1 + \Verts{x} + \Verts{x}^2 } \\ 
\end{align}
\end{lem}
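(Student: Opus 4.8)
The plan is to prove all three pointwise bounds by a common mechanism. Each left‑hand side is, after unfolding definitions, an explicit function of $(x,t,s)$: the one in \cref{eq:bound_b} is read off from \cref{lem:first_partial_derivatives_loss} using $T_{t,s}^{-1}(T_{t,s}(x))=x$ (so $c_{t,s}(T_{t,s}(x))$ only involves $T_{t,s}$ and $\alpha_t$ at $x$ and $V_s$ at $T_{t,s}(x)$), while the factors in \cref{eq:bound_derivative_q,eq:bound_score} are obtained by differentiating the change‑of‑variables formula $q_{t',s}(y)=\Pi_{t'}(T_{t',s}^{-1}(y))\verts{\nabla T_{t',s}(T_{t',s}^{-1}(y))}^{-1}$ in $t'$, resp.\ in $y$, and evaluating at $t'=t$, $y=T_{t,s}(x)$. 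At $s=t$ one has $T_{t,t}=\mathrm{id}$, $\nabla T_{t,t}=I$, and a direct check gives $c_{t,t}(x)=-\overline g_t^\alpha(x)$, $[\partial_{t'}\log q_{t',t}]_{t'=t}(x)=\overline g_t^\alpha(x)$, $\nabla\log(q_{t,t}/\Pi_t)(x)=0$. So I would write each quantity as its value at $s=t$ plus a remainder, and show the remainder is a finite sum of terms each carrying a factor $\verts{t-s}$ times a factor of at most quadratic growth in $x$.

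The first concrete step is to assemble the elementary estimates used throughout, valid for $\verts{t-s}\le\frac1{2L}$. By \cref{lem:diffeo}, $T_{t,s}=\mathrm{id}+(s-t)\alpha_s$ is a diffeomorphism with $\nabla T_{t,s}(x)=I+(s-t)\nabla\alpha_s(x)$ and $\Verts{\nabla T_{t,s}(x)-I}\le\frac12$, whence by a Neumann series $\Verts{(\nabla T_{t,s}(x))^{-1}}\le2$ and $\Verts{(\nabla T_{t,s}(x))^{-1}-I}\lesssim\verts{t-s}$. From \cref{assump_cont:lipschitz_potential,assump_cont:Lipschiz_control} and \cref{lem:smoothness_control} one has $\Verts{T_{t,s}(x)-x}\lesssim\verts{t-s}(1+\Verts{x})$, $\Verts{\nabla V_t(x)},\Verts{\alpha_t(x)}\lesssim1+\Verts{x}$, $\Verts{\nabla\alpha_t(x)}\le L$, $\verts{\partial_tV_t(x)}\lesssim1+\Verts{x}^2$, $\Verts{\alpha_s(x)-\alpha_t(x)}\lesssim\verts{t-s}$, $\Verts{\nabla\alpha_s(x)-\nabla\alpha_t(x)}\lesssim\verts{t-s}(1+\Verts{x})$, $\Verts{\nabla V_s(T_{t,s}(x))-\nabla V_t(x)}\lesssim\verts{t-s}(1+\Verts{x})$, $\verts{\partial_sV_s(T_{t,s}(x))-\partial_tV_t(x)}\lesssim\verts{t-s}(1+\Verts{x}^2)$, and the Hessian $\nabla^2\alpha_s$ has at most linear growth. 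One also needs the Lipschitz‑in‑time bound $\verts{\Pi_s[\psi_s]-\Pi_t[\psi_t]}\lesssim\verts{t-s}$ for $\psi$ of at most quadratic growth and $L$‑Lipschitz in time with linear‑growth constant: this follows by differentiating $\Pi_u[\psi_u]=\int\psi_u e^{-V_u}/Z_u$ in $u$, giving $\Pi_u[\partial_u\psi_u]-\Pi_u[\psi_u\partial_uV_u]+\Pi_u[\psi_u]\Pi_u[\partial_uV_u]$, which is $\lesssim\Pi_u[1+\Verts{x}^4]\lesssim1$ uniformly by \cref{assump_cont:moment}.

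With these in hand each bound is a short computation. For \cref{eq:bound_b}: subtract $-\overline g_t^\alpha(x)=\partial_tV_t(x)-\Pi_t[\partial_tV_t]+\nabla V_t(x)^\top\alpha_t(x)-\nabla\cdot\alpha_t(x)$ from $c_{t,s}(T_{t,s}(x))=\partial_sV_s(T_{t,s}(x))-\Pi_s[\partial_sV_s]+\nabla V_s(T_{t,s}(x))^\top\alpha_t(x)-\mathrm{Tr}((\nabla T_{t,s}(x))^{-1}\nabla\alpha_t(x))$ and bound the four differences termwise by the preliminaries, the trace difference being $\lesssim\Verts{(\nabla T_{t,s}(x))^{-1}-I}\,\Verts{\nabla\alpha_t(x)}\lesssim\verts{t-s}$. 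For \cref{eq:bound_derivative_q}: differentiate $\log q_{t',s}(y)=\log\Pi_{t'}(T_{t',s}^{-1}(y))-\log\verts{\nabla T_{t',s}(T_{t',s}^{-1}(y))}$ in $t'$ via the chain rule through $T_{t',s}^{-1}$, using $\partial_{t'}T_{t',s}(z)=-\alpha_s(z)$ and $\partial_{t'}\log Z_{t'}=-\Pi_{t'}[\partial_{t'}V_{t'}]$; at $t'=t$, $y=T_{t,s}(x)$ the inner argument becomes $x$, and the resulting expression equals $\overline g_t^\alpha(x)$ once $(\nabla T_{t,s}(x))^{-1}$ is replaced by $I$, $\alpha_s,\nabla\alpha_s$ by $\alpha_t,\nabla\alpha_t$, and the $\partial_{t'}(\nabla T_{t',s})/\nabla^2\alpha_s$ remainder discarded, each step costing $O(\verts{t-s}(1+\Verts{x}+\Verts{x}^2))$. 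For \cref{eq:bound_score}: from $\log(q_{t,s}/\Pi_s)(T_{t,s}(x))=-V_t(x)-\log\verts{\nabla T_{t,s}(x)}+V_s(T_{t,s}(x))+\log(Z_s/Z_t)$, the chain rule gives $\nabla_y\log(q_{t,s}/\Pi_s)(T_{t,s}(x))=(\nabla T_{t,s}(x))^{-\top}(-\nabla V_t(x)-\nabla_x\log\verts{\nabla T_{t,s}(x)}+(\nabla T_{t,s}(x))^\top\nabla V_s(T_{t,s}(x)))$, where $-\nabla V_t(x)$ and $\nabla V_s(T_{t,s}(x))$ cancel up to $O(\verts{t-s}(1+\Verts{x}))$, the other two terms are each $O(\verts{t-s}(1+\Verts{x}))$, and $\Verts{(\nabla T_{t,s}(x))^{-\top}}\le2$ is harmless.

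I expect the main obstacle to be the bookkeeping rather than any single estimate: the transport map $T_{t,s}$ is built from $\alpha_s$ while the target $\overline g_t^\alpha$ (and $c_{t,s}$) is built from $\alpha_t$, so the argument must repeatedly swap $\alpha_s,\nabla\alpha_s$ for $\alpha_t,\nabla\alpha_t$ and track the $O(\verts{t-s})$ corrections, and one must check that the terms involving $\nabla^2\alpha_s$ (in $\partial_{t'}\nabla T_{t',s}$ and $\nabla_x\log\verts{\nabla T_{t,s}}$) are legitimate given only a Lipschitz gradient in \cref{assump_cont:Lipschiz_control} — they are, up to a standard a.e./mollification argument — and have at most linear growth. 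The one genuinely non‑routine ingredient is the Lipschitz‑in‑time control of $s\mapsto\Pi_s[\partial_sV_s]$ and of $\Pi_t[\partial_tV_t]$, which is the only place the uniform finiteness of fourth moments of $\Pi_t$ in \cref{assump_cont:moment} is really used.
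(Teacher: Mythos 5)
Your proposal is correct and follows essentially the same route as the paper's proof: write each quantity explicitly by differentiating the pushforward density $q_{t,s}$ and the flow's (inverse) Jacobian (the paper packages these derivatives in \cref{lem:derivatives_flow}), evaluate at $y=T_{t,s}(x)$, use $\|(\nabla T_{t,s}(x))^{-1}-I\|\lesssim |t-s|$, and conclude by termwise Lipschitz/growth estimates under \cref{assump_cont:lipschitz_potential,assump_cont:Lipschiz_control,assump_cont:moment}, with the $\Pi_t[\partial_t V_t]-\Pi_s[\partial_s V_s]$ term in \cref{eq:bound_b} controlled via the fourth-moment bound exactly as you indicate (the paper merely pivots through $\overline{g}_s^{\alpha}$ there instead of comparing directly with $\overline{g}_t^{\alpha}$). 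One cosmetic caution: to avoid invoking $\partial_u\psi_u$ for $\psi_u=\partial_u V_u$ (a second time-derivative not granted by \cref{assump_cont:lipschitz_potential}), bound $\Pi_s[\partial_s V_s]-\Pi_t[\partial_t V_t]$ by splitting it as $\Pi_s[\partial_s V_s-\partial_t V_t]+(\Pi_s-\Pi_t)[\partial_t V_t]$ and differentiating only in the measure, which is exactly the estimate your fourth-moment argument already delivers.
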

The proof of \cref{eq:bounds_loss} relies on the expressions of the time derivatives $\partial_t T_{t,s}^{-1}(y)$, $\partial_t\nabla T_{t,s}^{-1}(y)$ and $\partial_t \log\verts{  \nabla T_{t,s}^{-1}(y)  } $ which we provide in \cref{lem:derivatives_flow} without proof as they follow by direct calculations.
\begin{lem}\label{lem:derivatives_flow}
Let $y$ be in $\mathcal{X}$ and denote $x_{t,s} := T_{t,s}^{-1}(y) $ and $C_{t,s} =  \nabla T_{t,s}^{-1}(y) $. Under \cref{assump_cont:lipschitz_potential,assump_cont:Lipschiz_control,assump_cont:moment} and using  \cref{eq:admissibility} of \cref{lem:smoothness_control}, it holds that:
	\begin{align}
		\partial_t T_{t,s}^{-1}(y) &= C_{t,s}\parens{\alpha_t(x) -(s-t)\partial_t\alpha_t(x) }\\
		\partial_t\nabla T_{t,s}^{-1}(y) &= C_{t,s}\nabla\alpha_t(x_{t,s})C_{t,s} - (s-t)C_{t,s}\mathcal{G}_{t,s}(y) C_{t,s} \\
				\partial_t \log\verts{  \nabla T_{t,s}^{-1}(y)  } &= tr(C_{t,s} \nabla \alpha_t(x) )-(s-t) Tr(C_{t,s} \mathcal{G}_{t,s}(y) )
	\end{align}
	with 
	\begin{align}\label{function_g}
		\mathcal{G}_{t,s}(y) &:= \partial_t\nabla \alpha_t(x_{t,s}) + H\alpha_t(x_{t,s})\partial_t T_{t,s}^{-1}(y)
	\end{align}
\end{lem}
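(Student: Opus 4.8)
The plan is to derive the three identities by pure differential calculus applied to the defining relation $T_{t,s}(T_{t,s}^{-1}(y))=y$: I would differentiate it (and its consequences) in $t$ and rearrange, with no estimates involved. Throughout write $x:=x_{t,s}=T_{t,s}^{-1}(y)$, and recall that $T_{t,s}$ depends on $t$ both through the factor $(s-t)$ and through the control, so that $(\partial_t T_{t,s})(x)=-\alpha_t(x)+(s-t)\partial_t\alpha_t(x)$ and $\nabla T_{t,s}(x)=I+(s-t)\nabla\alpha_t(x)$. Since $\Verts{\nabla\alpha_t(x)}\le L$ by \cref{assump_cont:Lipschiz_control}, for $\verts{t-s}\le\frac{1}{2L}$ the Jacobian $\nabla T_{t,s}(x)$ is invertible (Neumann series), $T_{t,s}$ is a diffeomorphism as in \cref{lem:diffeo}, and by the inverse function theorem $C_{t,s}=\nabla T_{t,s}^{-1}(y)=[(\nabla T_{t,s})(x)]^{-1}$; moreover $(t,x)\mapsto T_{t,s}(x)-y$ is jointly $C^1$ with invertible $x$-Jacobian, so the implicit function theorem makes $t\mapsto x_{t,s}$ continuously differentiable, which justifies all the manipulations below.

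First I would differentiate $y=T_{t,s}(x_{t,s})$ in $t$, obtaining $0=(\partial_t T_{t,s})(x_{t,s})+(\nabla T_{t,s})(x_{t,s})\,\partial_t x_{t,s}$; solving for $\partial_t x_{t,s}=\partial_t T_{t,s}^{-1}(y)$, using $C_{t,s}=[(\nabla T_{t,s})(x)]^{-1}$ and the expression for $\partial_t T_{t,s}$ above, yields $\partial_t T_{t,s}^{-1}(y)=C_{t,s}(\alpha_t(x)-(s-t)\partial_t\alpha_t(x))$, the first identity.

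For the second, I would write $C_{t,s}=M(t)^{-1}$ with $M(t)=(\nabla T_{t,s})(x_{t,s})$ and use $\partial_t(M^{-1})=-C_{t,s}(\partial_t M)C_{t,s}$. Differentiating $M(t)$ in $t$ produces an explicit term from $\partial_t\nabla T_{t,s}(x)=-\nabla\alpha_t(x)+(s-t)\partial_t\nabla\alpha_t(x)$ and a chain-rule term $(s-t)H\alpha_t(x)\,\partial_t x_{t,s}$ from the $t$-dependence of $x_{t,s}$; collecting them gives $\partial_t M=-\nabla\alpha_t(x)+(s-t)\mathcal{G}_{t,s}(y)$ with $\mathcal{G}_{t,s}$ as in \cref{function_g}, and substituting back gives exactly $\partial_t\nabla T_{t,s}^{-1}(y)=C_{t,s}\nabla\alpha_t(x)C_{t,s}-(s-t)C_{t,s}\mathcal{G}_{t,s}(y)C_{t,s}$. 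For the log-Jacobian, I would note that $\det C_{t,s}>0$ for $\verts{t-s}\le\frac{1}{2L}$, so $\log\verts{\nabla T_{t,s}^{-1}(y)}=\log\det C_{t,s}$ is smooth, and apply Jacobi's formula $\partial_t\log\det C_{t,s}=\Tr(C_{t,s}^{-1}\partial_t C_{t,s})$; combined with the previous display and cyclicity of the trace this gives $\Tr(C_{t,s}\nabla\alpha_t(x))-(s-t)\Tr(C_{t,s}\mathcal{G}_{t,s}(y))$.

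There is no genuinely hard step here — as the paper itself observes, the identities are a matter of direct calculation — so the only point requiring attention is regularity bookkeeping: the formulas feature $\partial_t\nabla\alpha_t$ and the Hessian $H\alpha_t$, so one must either work with controls that are $C^1$ in $(t,x)$ and $C^2$ in $x$ (consistent with the standing smoothness hypotheses on $A_\theta$) or else read the identities as valid for almost every $t$. Since \cref{lem:derivatives_flow} is used only to control the integrands appearing in $\partial_t\partial_s R_{t,s}$ in the proof of \cref{eq:bounds_loss}, either reading suffices.
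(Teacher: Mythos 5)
Your derivation is correct and is exactly the ``direct calculation'' the paper invokes: \cref{lem:derivatives_flow} is stated without proof, and your route --- differentiating $T_{t,s}(T_{t,s}^{-1}(y))=y$ in $t$, using $\partial_t(M^{-1})=-M^{-1}(\partial_t M)M^{-1}$ for $M=\nabla T_{t,s}(x_{t,s})$, and Jacobi's formula for the log-determinant --- reproduces all three identities with the paper's $\mathcal{G}_{t,s}$, reading $T_{t,s}(x)=x+(s-t)\alpha_t(x)$ as the paper's subsequent computations make clear. Your closing regularity caveat (needing $\partial_t\nabla\alpha_t$ and $H\alpha_t$, or an a.e.-in-$t$ reading) is consistent with how these quantities are later used in the proof of \cref{eq:bounds_loss}, so nothing is missing.
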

We can now prove \cref{eq:bounds_loss}.
\begin{proof}{\bf{of \cref{eq:bounds_loss} } }
Let $y\in \mathcal{X}$ and define  $x_{t,s} =  T^{-1}_{t,s}(y) $ and $C_{t,s} =  \nabla T^{-1}_{t,s}(y)$. 

	\paragraph{Equation \cref{eq:bound_derivative_q}.}
	We first express the partial derivative $\partial_t\log(q_{t,s})$:
\begin{align}
	\partial_t \log(q_{t,s}(y)) =& -\parens{\dot{V}_t(x_{t,s}) - \Pi_t\brakets{ \dot{V}_t}   + \nabla V_t(x_{t,s})\partial_t T_{t,s}^{-1}(y) -  \partial_t \log \vert \nabla T_{t,s}^{-1}(y) \vert} \\
	=& -\parens{\dot{V}_t(x_{t,s}) - \Pi_t\brakets{ \dot{V}_t}   + \nabla V_t(x_{t,s}) C_{t,s} \alpha_t(x_{t,s})  - Tr\parens{C_{t,s}\nabla \alpha_t(x_{t,s}) }   }\\
	&+(s-t)\nabla V_t(x_{t,s}) \nabla C_{t,s}\partial_t\alpha_t(x_{t,s})     \\
	& -(s-t)Tr\parens{C_{t,s}\parens{\partial_t\nabla \alpha_t(x_{t,s}) + H\alpha_t(x_{t,s}) C_{t,s} \parens{\alpha_t(x_{t,s})-  (s-t)\partial_t\alpha_t(x_{t,s}) } }} 
\end{align}
where the expressions of the time derivatives $\partial_t T_{t,s}^{-1}(y)$ and $\partial_t \log \vert \nabla T_{t,s}^{-1}(y)\vert$ are given by \cref{lem:derivatives_flow}. We can then evaluate $\partial_t \log(q_{t,s}(y))$ at $y =  T_{t,s}(x) $ for some $x\in \mathcal{X}$:
\begin{align}
	\partial_t \log(q_{t,s}(T_{t,s}(x))) :=& 	-\parens{\dot{V}_t(x) - \Pi_t\brakets{ \dot{V}_t}   + \nabla V_t(x) C_{t,s} \alpha_t(x)  - Tr\parens{C_{t,s}\nabla \alpha_t(x) }   }\\
	&+(s-t)\nabla V_t(x) C_{t,s}\partial_t\alpha_t(x)     \\
	& -(s-t)Tr\parens{C_{t,s}\parens{\partial_t\nabla \alpha_t(x) + H\alpha_t(x) C_{t,s} \parens{\alpha_t(x)-  (s-t)\partial_t\alpha_t(x) } }}
\end{align}
Hence, by definition of $\overline{g}_t^{\alpha}$, we have:
\begin{align}
	\partial_t \log(q_{t,s}(T_{t,s}(x)))-\overline{g}_t^{\alpha}(x) =&
	-\nabla V_t(x)^{\top}\parens{C_{t,s}-I }\alpha_t(x) + Tr\parens{\parens{C_{t,s}-I}\nabla \alpha_t(x)}\\
	&+(s-t)\nabla V_t(x) C_{t,s}\partial_t\alpha_t(x)     \\
	& -(s-t)Tr\parens{C_{t,s}\parens{\partial_t\nabla \alpha_t(x) + H\alpha_t(x) C_{t,s} \parens{\alpha_t(x)-  (s-t)\partial_t\alpha_t(x) } }}
\end{align}
We first note that:
\begin{align}
	C_{t,s}-I &= \nabla T_{t,s}^{-1}(y) -I  =   (\nabla T_{t,s}(x))^{-1}-I \\
	&= \parens{I + (s-t)\nabla \alpha_t(x) }-I \\
	&= (t-s)C_{t,s}\nabla\alpha_t(x).
\end{align}
 This implies:
 \begin{align}
  \partial_t \log(q_{t,s}(T_{t,s}(x)))-\overline{g}_t^{\alpha}(x)  =&
	(s-t)\parens{\nabla V_t(x)^{\top}C_{t,s}\nabla\alpha_t(x)\alpha_t(x) - Tr\parens{C_{t,s}\nabla\alpha_t(x)^2}}\\
	&+(s-t)\nabla V_t(x) C_{t,s}\partial_t\alpha_t(x)     \\
	& -(s-t)Tr\parens{C_{t,s}\parens{\partial_t\nabla \alpha_t(x) + H\alpha_t(x) C_{t,s} \parens{\alpha_t(x)-  (s-t)\partial_t\alpha_t(x) } }}.
 \end{align}
Finally, we know by \cref{lem:Linear_growth} that $\nabla V_t$ and $\alpha_t$ have at most a linear growth in $x$ and by \cref{lem:smoothness_control} we also have that $\nabla\alpha_t$ and $\partial_t \alpha_t$ are bounded, that $H\alpha_t(x)$ and $\partial_t\nabla \alpha_t$ have at most a linear growth in $x$. This directly yields the desired upper-bound:
\begin{align}
	\verts{ \partial_t \log(q_{t,s}(T_{t,s}(x)))-\overline{g}_t^{\alpha}(x)}\lesssim \verts{t-s}\parens{1 + \Verts{x} + \Verts{x}^2}.
\end{align}

\paragraph{Equation \cref{eq:bound_b}}
We will first control the term $c_{t,s}(T_{t,s}(x)) + \overline{g}_s^{\alpha}(x)$, then we will show that $\overline{g}_t^{\alpha}(x) - \overline{g}_s^{\alpha}(x)$ is of the same order. 
By definition of $c_{t,s}(x)$ and $\overline{g}_s^{\alpha}$, we can write:
\begin{align}
	c_{t,s}(T_{t,s}(x)) + \overline{g}_s^{\alpha}(x) =& \parens{\dot{V}_s(T_{t,s}(x))-\dot{V}_s(x)} + \parens{\nabla V_{s}\parens{T_{t,s}(x)}-\nabla V_s(x)}^{\top}\alpha_t(x)\\
	&+ \nabla V_s(x)^{\top}\parens{\alpha_t(x)-\alpha_s(x)}+  Tr\parens{\nabla \alpha_s(x)-\nabla\alpha_t(x)	}\\
	&+ (s-t)Tr\parens{C_{t,s}\nabla \alpha_t(x)^2 }. 
\end{align}
Hence, using \cref{assump_cont:lipschitz_potential} nad \cref{lem:smoothness_control} we have:
\begin{align}
	\Verts{c_{t,s}(T_{t,s}(x)) + \overline{g}_s^{\alpha}(x)}\lesssim \verts{t-s}\parens{1 + \Verts{x} + \Verts{x}^2}.
\end{align}
Similarly, we can show that $\overline{g}_t^{\alpha}(x) - \overline{g}_s^{\alpha}(x)$ satisfies a similar bound. This allows to get \cref{eq:bound_b} using triangular inequality.
\paragraph{Equation \cref{eq:bound_score}}
By direct calculation, we have:
\begin{align}
	\nabla \log\parens{ \frac{q_{t,s}}{\Pi_s}}(y) 
	&= 
	\nabla V_s(y) -  C_{t,s} \nabla V_t(x_{t,s}) - (s-t)Tr(C_{t,s} H\alpha_t(x_{t,s})C_{t,s} ).
\end{align}
In particular, choosing $y = T_{t,s}(x)$, we get:
\begin{align}
	\nabla \log\parens{ \frac{q_{t,s}}{\Pi_s}}(T_{t,s}(x)) =&  \nabla V_s(T_{t,s}(x)) -  C_{t,s} \nabla V_t(x) - (s-t)Tr(C_{t,s} H\alpha_t(x)C_{t,s} )\\
 =& 
 \nabla V_s(T_{t,s}(x))-V_t(x)  -  (C_{t,s}-I)\nabla V_t(x) - (s-t)Tr(C_{t,s} H\alpha_t(x)C_{t,s} ).
\end{align}
The result follows directly by a similar argument as done previously.

\end{proof}

\begin{proof}{\bf{of \cref{prop:flow_error}}}
\paragraph{Equation  \cref{eq:bound_loss_main_1}}
For the first inequality, we use the expression of $\partial_t \partial_s R_{t,s} $ from \cref{lem:first_partial_derivatives_loss}:
	\begin{align}
		\partial_t\partial_s R_{t,s} &= E_{t,s}^{(1)} + E_{t,s}^{(2)}
	\end{align}
	where
	\begin{align}
			E_{t,s}^{(1)} =& \Pi_{t}\brakets{ \parens{\partial_{t}\alpha_t + \parens{\nabla T_{t,s}}^{-1}\nabla \alpha_t }^{\top} \nabla \log\parens{ \frac{q_{t,s}}{\Pi_s}  }\circ T_{t,s}  },\\
			E_{t,s}^{(2)} =& \Pi_{t}\brakets{\parens{c_{t,s}\circ T_{t,s}} \parens{\partial_{t}\log(q_{t,s})\circ T_{t,s}}}.
	\end{align}
Recall that $\nabla \alpha_t$ and  $\partial_{t}\alpha_t$ are bounded by \cref{lem:smoothness_control} and that since $\lambda\leq \frac{1}{2L}$, we also have that $\Verts{\nabla T_{t,s}}^{-1}$ is bounded. Moreover, \cref{eq:bound_score} allows to write:
\begin{align}\label{eq:bound_loss_1}
	\verts{E_{t,s}^{(1)}}\lesssim \verts{t-s} \Pi_{t}\brakets{1+\Verts{X}  + \Verts{X}^2 }. 
\end{align}
For $E_{t,s}^{(2)}$, we use the following decomposition:
\begin{align} 
	E_{t,s}^{(2)} + \Pi_{t}\brakets{\parens{\overline{g}_t^{\alpha}}^2}  =& \Pi_t\brakets{ \parens{c_{t,s}\circ T_{t,s}  + \overline{g}_t^{\alpha}} \parens{\partial_{t}\log(q_{t,s})\circ T_{t,s}}}\\
	&- \Pi_t\brakets{ \overline{g}_t^{\alpha}\parens{\partial_{t}\log(q_{t,s})\circ T_{t,s} -  \overline{g}_t^{\alpha} }}.
\end{align}
The functions $c_{t,s}\circ T_{t,s}$ and $\overline{g}_t^{\alpha}$ have at most a quadratic growth and by \cref{eq:bound_derivative_q,eq:bound_b} we get:
\begin{align}\label{eq:bound_loss_2}
	\verts{E_{t,s}^{(2)} + \Pi_{t}\brakets{\parens{\overline{g}_t^{\alpha}}^2}}\lesssim \verts{t-s}\Pi_{t}\brakets{1+\Verts{X}  + \Verts{X}^2 + \Verts{X}^3 + \Verts{X}^4 }.
\end{align}
Both bound \cref{eq:bound_loss_1,eq:bound_loss_2} are finite due to \cref{assump_cont:moment}, which implies the result.
\paragraph{Equation  \cref{eq:bound_loss_main_2}.} To get the last inequality it suffice to  differentiate in time and use the growth assumptions and integrability conditions.
\end{proof}

\subsection{Diffusion estimates}

\begin{lem}\label{lem:main_bounds}
Under \cref{assump_cont:lipschitz_potential,assump_cont:Lipschiz_control,assump_cont:moment}, the following inequalities hold:
\begin{align}
\mathbb{E}\brakets{\Verts{X_t}^2 } \lesssim 1\qquad \mathbb{E}\brakets{\Verts{X_t}^4}\lesssim 1,\qquad 
\mathbb{E}\brakets{\Verts{X_t- X_s}^2 }\lesssim \verts{t-s}
\end{align}
\end{lem}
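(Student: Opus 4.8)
\textbf{Proof proposal for \cref{lem:main_bounds}.}

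The plan is to establish the three bounds by standard It\^o calculus / Gr\"onwall arguments applied to the SDE \eqref{eq:SDE}, $\diff X_t = b_t^{\alpha}(X_t)\diff t + \sqrt{2}\diff B_t$ with $X_0\sim \pi_0$. The key structural fact I will exploit is that the controlled drift $b_t^{\alpha}(x) = \alpha_t(x) - \nabla V_t(x)$ has at most linear growth in $x$, uniformly in $t$ and uniformly over $\alpha\in\mathcal{A}$: this follows from \cref{assump_cont:lipschitz_potential} (which gives $\|\nabla V_t(x)\| \le \|\nabla V_0(0)\| + L(1+2\|x\|)$, since $\nabla V_t$ is $L$-Lipschitz jointly in $(t,x)$) together with \cref{lem:smoothness_control}/\cref{assump_cont:Lipschiz_control} (which gives $\|\alpha_t(x)\| \le C + L(\|x\| + 1)$ via the Lipschitz property and the bound $\|\alpha_0(0)\|\le C$). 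Hence $\|b_t^{\alpha}(x)\| \lesssim 1 + \|x\|$ with a constant that is uniform over $t\in[0,1]$ and $\alpha\in\mathcal{A}$. Also note $\pi_0$ has finite fourth moments (it is $\Pi_0$, and \cref{assump_cont:moment} asserts uniformly bounded fourth moments of $\Pi_t$ on $[0,1]$).

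For the second-moment bound, I would apply It\^o's formula to $t\mapsto \|X_t\|^2$, take expectations (the stochastic integral is a martingale after a standard localization argument, using the linear growth of $b^{\alpha}$ to control moments), obtaining
\begin{align}
\mathbb{E}\brakets{\|X_t\|^2} = \mathbb{E}\brakets{\|X_0\|^2} + \int_0^t \mathbb{E}\brakets{2 X_s^{\top} b_s^{\alpha}(X_s) + 2d}\diff s.
\end{align}
Using $2x^{\top}b \le \|x\|^2 + \|b\|^2 \lesssim 1 + \|x\|^2$, this yields $\mathbb{E}\brakets{\|X_t\|^2} \le C_1 + C_2\int_0^t \mathbb{E}\brakets{\|X_s\|^2}\diff s$, and Gr\"onwall's inequality on $[0,1]$ gives the uniform bound $\mathbb{E}\brakets{\|X_t\|^2}\lesssim 1$. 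For the fourth moment I would do the same with $t\mapsto\|X_t\|^4$: It\^o gives drift terms $4\|X_s\|^2 X_s^{\top}b_s^{\alpha}(X_s) + (4d+8)\|X_s\|^2$ (the constant depending on $d$ from the Laplacian term), each bounded by $C(1+\|X_s\|^4)$ using Young's inequality, so again $\mathbb{E}\brakets{\|X_t\|^4}\le C_1 + C_2\int_0^t\mathbb{E}\brakets{\|X_s\|^4}\diff s$ and Gr\"onwall closes it. The localization needed to justify dropping the martingale term is routine: one stops at $\tau_n = \inf\{t: \|X_t\|\ge n\}$, derives the integral inequality for the stopped process, applies Gr\"onwall to get a bound uniform in $n$, then lets $n\to\infty$ via monotone/Fatou.

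For the increment bound, for $s\le t$ write $X_t - X_s = \int_s^t b_u^{\alpha}(X_u)\diff u + \sqrt{2}(B_t - B_s)$. Then
\begin{align}
\mathbb{E}\brakets{\|X_t - X_s\|^2} \le 2\,\mathbb{E}\brakets{\Verts{\int_s^t b_u^{\alpha}(X_u)\diff u}^2} + 4\,\mathbb{E}\brakets{\|B_t - B_s\|^2},
\end{align}
where the last term equals $4d(t-s)$, and by Cauchy--Schwarz (or Jensen) the first term is bounded by $2(t-s)\int_s^t \mathbb{E}\brakets{\|b_u^{\alpha}(X_u)\|^2}\diff u \lesssim (t-s)\int_s^t (1 + \mathbb{E}\brakets{\|X_u\|^2})\diff u \lesssim (t-s)^2$, using the already-established uniform second-moment bound. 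Since $|t-s|\le 1$ we have $(t-s)^2 \le (t-s)$, so $\mathbb{E}\brakets{\|X_t-X_s\|^2}\lesssim |t-s|$ as claimed. The main (mild) obstacle is merely bookkeeping: making sure the constants in all the $\lesssim$ relations are genuinely uniform over $t\in[0,1]$ and over $\alpha\in\mathcal{A}$, which is guaranteed because the linear-growth constants for $b_t^{\alpha}$ are uniform and $\Pi_0$'s fourth moment is finite; and the standard localization argument to legitimize the expectations of the It\^o integrals. Everything else is routine Gr\"onwall.
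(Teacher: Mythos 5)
Your proof is correct and is exactly the argument the paper has in mind: the paper's own proof is a one-line remark that the bounds follow from the (uniform) Lipschitz/linear-growth property of the drift $b_t^{\alpha}$ and the finite fourth moments of $\Pi_0$, and your It\^o-plus-Gr\"onwall elaboration (with the localization and the Cauchy--Schwarz increment bound) is the standard way of filling in those details. No gaps; the minor constant bookkeeping in the fourth-moment It\^o term is immaterial to the $\lesssim$ bounds.
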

\begin{proof}
This is  a direct consequence of the drift $b_{t}(x)$ being jointly Lipschitz in $t$ and $x$ and the initial distribution $\Pi_0$ having finite moments of order $4$.
\end{proof}

\subsubsection{Proof of  \cref{prop:Expectations_bound}}\label{sec:proof_expecation_bound}

\begin{proof}{\bf{of \cref{prop:Expectations_bound}}}
	\paragraph{Bound on Equation \cref{eq:exp_bound_1}}
	We use the decomposition of $w_t^{\alpha}$ as a product of $w_s^{\alpha}$ and a bounded term:
	\begin{align}
		\mathbb{E}\brakets{\overline{w}_t^{\alpha} f(X_s)  } = \mathbb{E}\brakets{\overline{w}_s^{\alpha} f(X_s) \exp\parens{\int_s^{t} \overline{g}^{\alpha}_u \diff u}  }.
	\end{align}
	By \cref{lem:bounded_weights}, we know that the normalized weight $\bar{w}_t^{\alpha}$ is upper-bounded by a term of the form $\exp(C(t-s))$, hence we get
	 	\begin{align}
		\Verts{\mathbb{E}\brakets{\overline{w}_t^{\alpha} f(X_s)  }} &\lesssim \mathbb{E}\brakets{\overline{w}_s^{\alpha} \Verts{f(X_s)}} = 
	\Pi_s\brakets{\Verts{f}}
	\end{align}
	where we used  \cref{prop:controlled_non_equilibrium_dynamics_with_birth_death} to get the final result.

	\paragraph{Bound on Equation \cref{eq:exp_bound_2}}
	We first define the random variable:
\begin{align}
	\Psi_{s,s'}^t := \mathbb{E}\brakets{\exp\parens{\int_s^{t}  \bar{g}_u^{\alpha}(X_u) \diff u } \int_s^{s'} \diff B_u  \middle| \mathcal{F}_s  },
\end{align}
where $\mathcal{F}_{s}$ is the filtration associated to the Brownian motion $B_t$ defining the process $X_{t}$.
Using the tower property for conditional expectations and Cauchy-Schwartz, we have
\begin{align}
\verts{\mathbb{E}\brakets{ \overline{w}_t^{\alpha} f(X_s)^{\top} \int_{s}^{s'}  \diff B_u }} =   \verts{\mathbb{E}\brakets{\overline{w}_t^{\alpha} f(X_s)^{\top} \Psi_{s,s'}^t }}\leq \mathbb{E}\brakets{\overline{w}_t^{\alpha} \Verts{f}^2 }^{\frac{1}{2}} \mathbb{E}\brakets{\overline{w}_t^{\alpha} \Verts{\Psi_{s,s'}^t}^2 }^{\frac{1}{2}}.
\end{align}
Using \cref{eq:exp_bound_1} for both terms on the r.h.s., we obtain 
\begin{align}
	\verts{\mathbb{E}\brakets{ \overline{w}_t^{\alpha} f(X_s)^{\top} \int_{s}^{s'}  \diff B_u }}\lesssim \Pi_s\brakets{\Verts{f}^2}^{\frac{1}{2}} \mathbb{E}\brakets{ \Verts{\Psi_{s,s'}^t}^2 }^{\frac{1}{2}}.
\end{align}
It remains to show that $\mathbb{E}\brakets{ \Verts{\Psi_{s,s'}^t}^2 }\lesssim (s'-s)^2$. To achieve this, we first introduce the notation  $\mathbb{E}_s$ for the conditional expectation knowing the process up to time $s$. We also write $w_{s,s'}(X) = \exp( \int_s^{s'} \overline{g}_u^{\alpha}(X_u)\diff u )$  and $r_{s,t}(X_s) = \mathbb{E}_{s}[w_{s,t}(X)]$.
We consider now a second process $(X_t')_{t\geq 0}$ that is coupled to $(X_t)_{t\geq 0}$ as follows. $X_u=X'_u$ up to time $s$, then on the interval $[s,s']$, $X'_t$ is driven by a Brownian motion $B_t'$ that is independent from $B_t$. Finally, stating from time $s'$ up to time $t$, the process $X'_t$ is driven again by the same Brownian motion $B_t$ as $X_t$. Hence, $(X_{s},X_s^{'})$ defines a coupling of two SDEs.  We denote by $\mathbb{E}_{s',s}$ the conditional expectation knowing the process $(X_{t})$ up to time $s'$ and the process $(X_{t}')$ up to time $s$. Such coupling is shown in \cref{lem:main_bounds_gronwal} to satisfy $\mathbb{E}[\int_s^{t} \Verts{X_u-X'_u}^2\diff u  ]\lesssim \vert s'- s\vert $. 
To use this property, we will express $\Psi_{s,s'}^t$ in terms of the two coupled processes. We start by noting that $\mathbb{E}_s\brakets{ \mathbb{E}_s[w_{s,t}(X')]\parens{B_{s'}-B_s} } = 0  $ by independence of the increments of Brownian motions. Hence, we can write
\begin{align}
	\Verts{\Psi_{s,s'}^t} &= \Verts{\mathbb{E}_{s}\brakets{\parens{w_{s,t}(X) - \mathbb{E}_s[w_{s,t}(X')] }\parens{B_{s'}-B_s}}}\\
	&\leq \mathbb{E}_s\brakets{\verts{\mathbb{E}_{s',s}\brakets{w_{s,t}(X)- w_{s,t}(X') }}\Verts{B_{s'}-B_s} }\\
	&\leq e^{C(t-s)}\mathbb{E}_{s,s}\brakets{\Verts{B_{s'}-B_s}\int_s^{t} \verts{g^{\alpha}_u(X_u) -  g^{\alpha}_u(X'_u)}\diff u     }
\end{align}
	For the last line, we  rely on local Lipschitzness  of the exponential along with the fact  $\bar{g}^{\alpha}_{u}(X_u)\leq C$ by \cref{assum_cont:bounded_weights}. Moreover, \cref{lem:appendix_estimate_control} shows that 
	\begin{align}
		\verts{g_t^{\alpha}(x) - g_t^{\alpha}(x')} &\lesssim \parens{1 + \Verts{x} + \Verts{x'} }\Verts{x-x'},
	\end{align}
	    hence we have:
\begin{align}
	\Verts{\Psi_{s,s'}^t} &\lesssim 
	\mathbb{E}_{s,s}\brakets{\Verts{B_{s'}-B_s}\int_s^{t} \parens{1 + \Verts{X_u}+ \Verts{X'_u}}\Verts{X_u-X'_u}\diff u     }\\
	&\lesssim
	\mathbb{E}_{s,s}\brakets{\Verts{B_{s'}-B_s}^2\int_s^{t} \parens{1 + \Verts{X_u}+ \Verts{X'_u}}^2\diff u     }^{\frac{1}{2}}\mathbb{E}_{s,s}\brakets{\int_s^{t} \Verts{X_u-X'_u}^2\diff u }^{\frac{1}{2}}\\
	&\lesssim \verts{s'-s}\mathbb{E}_{s,s}\brakets{\int_s^{t} \Verts{X_u-X'_u}^2\diff u }^{\frac{1}{2}}
\end{align}
where for the second line we used Cauchy-Schwarz inequality. For the last line, we used the fact that the processes  $X_s$ and $X_s'$ have finite moments of order $4$  by \cref{lem:main_bounds} and that the increment $B_{s'}-B_s$ follows a zero-mean multivariate normal of covariance given by $(s'-s)I_d$. Finally, by application of \cref{lem:main_bounds_gronwal}, we get the desired bound and conclude the proof
\begin{align}
	\mathbb{E}\brakets{\Verts{\Psi_{s,s'}^t}^2} &\lesssim 
	\verts{s'-s}\mathbb{E}\brakets{\int_s^{t} \Verts{X_u-X'_u}^2\diff u  }\lesssim \verts{s'-s}^2.
\end{align}

	\paragraph{Bound on Equation \cref{eq:exp_bound_3}}
	We first apply Cauchy-Schwarz inequality:
	\begin{align}\label{eq:first_inequality}
		 \verts{\mathbb{E}\brakets{ \overline{w}_t^{\alpha} f(X_s)^{\top} \int_{s}^{s'} h_u(X_u,X_s)\diff u }}&\leq \mathbb{E}\brakets{\overline{w}_t^{\alpha} \Verts{f(X_s)}^2}^{\frac{1}{2}} \mathbb{E}\brakets{\overline{w}_t^{\alpha} \Verts{\int_{s}^{s'}  h_u(X_u,X_s)\diff u}^2 }^{\frac{1}{2}}\\
		 &\lesssim \Pi_s\brakets{\Verts{f}^2}^{\frac{1}{2}} \mathbb{E}\brakets{\Verts{\int_{s}^{s'}  h_u(X_u,X_s)\diff u}^2 }^{\frac{1}{2}}\\
		 &\lesssim \sqrt{s'-s}\Pi_s\brakets{\Verts{f}^2}^{\frac{1}{2}} \mathbb{E}\brakets{\int_{s}^{s'}  \Verts{h_u(X_u,X_s)}^2\diff u }^{\frac{1}{2}}
	\end{align}
	To get the second line in the above inequality, we use \cref{eq:exp_bound_1} for the first expectation in the r.h.s.  and use that $\bar{w}_t^{\alpha}$ is bounded by \cref{lem:bounded_weights} to bound the second expectation. The last line follows by direct application of Cauchy-Schwarz inequality in time and yields the desired result.
In the particular case when $h_u(x,y) =  b_u^{\alpha}(x)- \beta_{u'}^{\alpha,\lambda}(y)$ we further use \cref{prop:relative_entropy} which yields:
	\begin{align} 
		 \mathbb{E}\brakets{\int_s^{s'}\Verts{h_{u}(X_u,X_s)}^2}= \mathbb{E}\brakets{\int_s^{s'} \Verts{b_u^{\alpha}(X_u)- \beta_{u'}^{\alpha,\lambda}(X_{s}) }^2 \diff u}\lesssim \lambda\verts{s'-s}
\end{align} 
The result then follows directly by applying this bound to \cref{eq:exp_bound_3} and recalling that $\verts{s'-s}\leq \lambda$ by assumption. 
\end{proof}

\begin{lem}\label{lem:main_bounds_gronwal}
Let $s$ and $s'$ be two numbers in  $[0,t]$ with $s\leq s'$.
Let $(X_t)_{t\geq 0}$ and $(X'_t)_{t\geq 0}$ be two coupled processes following the SDE \cref{eq:SDE} and such that $X_t=X_t'$ up to time $s$, then on the interval $[s,s']$ the two processes are driven with two independent motions $B_t$ and $B_t'$ and finally, starting from time $s'$, the two processes are again diffused with the same Brownian motion.
Then, under \cref{assump_cont:lipschitz_potential} and \cref{lem:smoothness_control}, we have the following:
\begin{align}
	\mathbb{E}\brakets{ \int_{s}^{t}\Verts{X_u-X_u'}^{2}\diff u  }\lesssim \vert s-s' \vert
\end{align}
\end{lem}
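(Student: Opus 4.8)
The plan is to split the time interval into three regimes dictated by the coupling: $[0,s]$, where $X_u=X_u'$ and there is nothing to show; $[s,s']$, where the two processes are driven by \emph{independent} Brownian motions; and $[s',t]$, where they are driven by the \emph{same} Brownian motion. The crux is to first establish the one-time estimate $\mathbb{E}\brakets{\Verts{X_{s'}-X_{s'}'}^2}\lesssim\verts{s'-s}$, and then propagate it over $[s',t]$ by a pathwise Grönwall argument, the noise having cancelled there. Throughout I use that the controlled drift $b_u^{\alpha}(x)=\alpha_u(x)-\nabla_x V_u(x)$ is $L$-Lipschitz in $x$ uniformly in $u$, which follows from \cref{assump_cont:lipschitz_potential} together with \cref{lem:smoothness_control}, and that $X_u,X_u'$ have bounded moments by \cref{lem:main_bounds}.

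For the interval $[s,s']$ I would write, using $X_s=X_s'$,
\begin{align}
	X_{\tau}-X_{\tau}' = \int_s^{\tau}\parens{b_u^{\alpha}(X_u)-b_u^{\alpha}(X_u')}\diff u + \sqrt{2}\parens{B_{\tau}-B_s}-\sqrt{2}\parens{B_{\tau}'-B_s'},\qquad \tau\in[s,s'],
\end{align}
and then take second moments. The martingale part contributes $O(\verts{\tau-s})$ by independence of $B,B'$ and It\^o's isometry, while the drift part is bounded, via Cauchy--Schwarz in time and the $L$-Lipschitz property, by $\verts{\tau-s}\int_s^{\tau}\mathbb{E}\brakets{\Verts{X_u-X_u'}^2}\diff u$. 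Grönwall's lemma applied to $\tau\mapsto\mathbb{E}\brakets{\Verts{X_{\tau}-X_{\tau}'}^2}$ then yields $\mathbb{E}\brakets{\Verts{X_{\tau}-X_{\tau}'}^2}\lesssim\verts{s'-s}$ for all $\tau\in[s,s']$ (using $\verts{s'-s}\le1$ so $e^{L\verts{s'-s}}\le e^L$), and integrating over $[s,s']$ gives $\mathbb{E}\brakets{\int_s^{s'}\Verts{X_u-X_u'}^2\diff u}\lesssim\verts{s'-s}^2\le\verts{s'-s}$.

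For the interval $[s',t]$ the two processes share the same Brownian motion, so
\begin{align}
	X_u-X_u' = \parens{X_{s'}-X_{s'}'} + \int_{s'}^u\parens{b_v^{\alpha}(X_v)-b_v^{\alpha}(X_v')}\diff v,\qquad u\in[s',t],
\end{align}
has no stochastic term; a pathwise Grönwall estimate gives $\Verts{X_u-X_u'}\le e^{L(u-s')}\Verts{X_{s'}-X_{s'}'}\le e^L\Verts{X_{s'}-X_{s'}'}$ since $t\le1$. Squaring, taking expectations, using the bound on $\mathbb{E}\brakets{\Verts{X_{s'}-X_{s'}'}^2}$ from the previous step, and integrating over $[s',t]\subset[0,1]$ yields $\mathbb{E}\brakets{\int_{s'}^t\Verts{X_u-X_u'}^2\diff u}\lesssim\verts{s'-s}$. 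Adding the contributions of $[s,s']$ and $[s',t]$ gives the claim. The main obstacle is the estimate on $[s,s']$: because the driving noises differ there one cannot argue pathwise and must run Grönwall in expectation, carefully separating the It\^o-isometry bound on the independent increments from the Lipschitz bound on the drift; the rest is routine given the joint Lipschitz regularity in \cref{assump_cont:lipschitz_potential,assump_cont:Lipschiz_control} and the moment bounds of \cref{lem:main_bounds}.
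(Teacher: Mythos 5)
Your proof is correct and takes essentially the same route as the paper: a Grönwall estimate in expectation on $[s,s']$, where the independent driving noises contribute an $O(|s'-s|)$ term, followed by a pathwise Grönwall bound on $[s',t]$, where the shared Brownian motion cancels, and then integration in time. The only cosmetic differences are the order in which the two subintervals are treated and that you run the second-moment Grönwall argument on $\mathbb{E}[\Vert X_\tau - X_\tau'\Vert^2]$ directly, whereas the paper applies it to $\mathbb{E}[\Vert X_\tau - X_\tau'\Vert^2]^{1/2}$.
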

\begin{proof}
By definition of the $X_t$ and $X_t'$, since they share the same Brownian motion starting from time $s'$, we have
	\begin{align}
		X_u &= X_{s'} + \int_{s'}^u b_l(X_l)\diff l + \sqrt{2}\int_{s'}^u\diff B_l,\\
		X_u' &= X_{s'}' + \int_{s'}^u b_l(X_l')\diff l + \sqrt{2}\int_{s'}^u\diff B_l.
	\end{align}
	Hence, 
	\begin{align}
		\Verts{X_u-X_u'} \leq \Verts{X_{s'}-X_{s}'} + \int_{s'}^u \Verts{b_l(X_l)-b_l(X_l')}\diff l.
	\end{align}
	By \cref{assump_cont:lipschitz_potential} and  \cref{lem:smoothness_control}, we have that $\Vert b_l(x)- b_l(x') \Vert\lesssim \Vert x- x'\Vert $, hence we have:
	\begin{align}
		\Verts{X_u-X_u'} \lesssim \Verts{X_{s'}-X_{s}'} + \int_{s'}^u \Verts{X_l-X_l'}\diff l.
	\end{align}
	By application of Gronwall's lemma, we have for any $s'\leq u\leq t$:
	\begin{align}\label{eq:error_gronwal_1}
		\Verts{X_u-X_u'} \lesssim \Verts{X_{s'}-X_{s}'}.
	\end{align}
	Now, for $s\leq u\leq s'$, we have 
	\begin{align}
		X_{u} &= X_s +\int_{s}^{u} b_l(X_l)\diff l + \sqrt{2}\int_{s}^{u}\diff B_l,\\
		X'_{u} &= X_s +\int_{s}^{u} b_l(X_l')\diff l + \sqrt{2}\int_{s}^{u}\diff B_l'.
	\end{align}
	Hence, we have the following bound:
	\begin{align}
		\Verts{X_{u}-X_{u}'}\lesssim \int_{s}^{u}  \Verts{X_l-X_l'}\diff l + \sqrt{2}\parens{\Verts{B_{u}-B_s} + \Verts{B'_{u}-B'_s}}.
	\end{align}
	This allows to upper-bound the expectation $\mathbb{E}[\Vert X_{u}-X_{u}'\Vert^2]^{\frac{1}{2}}$:
	\begin{align}
		\mathbb{E}\brakets{\Verts{X_{u}-X_{u}'}^2}^{\frac{1}{2}}\lesssim \int_s^{u}\mathbb{E}\brakets{\Verts{X_{l}-X_{l}'}^2}^{\frac{1}{2}} + 2\sqrt{2}\sqrt{(u-s)d}
	\end{align}
	Using Gronwall's lemma a second time, we get for any $s\leq u\leq s'$:
	\begin{align}\label{eq:error_gronwal_2}
		\mathbb{E}\brakets{\Verts{X_{u}-X_{u}'}^2}^{\frac{1}{2}}\lesssim \sqrt{u-s}.
	\end{align}
	For $u>s'$, we can then use \cref{eq:error_gronwal_1} along with  \cref{eq:error_gronwal_2} to write
	\begin{align}\label{eq:error_gronwal_3}
		\mathbb{E}\brakets{\Verts{X_{u}-X_{u}'}^2}\lesssim s'-s.
	\end{align}
	Finally,  using \cref{eq:error_gronwal_2,eq:error_gronwal_3} and integrating over $u$ on the interval $[s,t]$, we get the desired result after applying Fubini's theorem to exchange the order of the expectation and time integral:
	\begin{align}
		\mathbb{E}\brakets{\int_s^{t}\Verts{X_u-X_u'}^2 \diff u }\lesssim \vert s'-s\vert.
	\end{align} 
\end{proof} 

\subsection{Pointwise estimates}

\begin{lem}\label{lem:bound_intermediate_terms}
Under \cref{assump_cont:lipschitz_potential,assump_cont:Lipschiz_control,assump_cont:moment}, and for any $ t_{k-1}\leq s\leq t_k $ and $0\leq u \leq 1$,  it holds that
\begin{align}
	\verts{g_s^{\alpha}(x') - \delta_t^{\alpha,\lambda}(x) }\lesssim \parens{1+\Verts{x}+\Verts{x'} }\Verts{x -x'}+  \lambda\parens{1 +\Verts{x} + \Verts{x'}}^2.
\end{align}
\end{lem}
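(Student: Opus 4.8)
The plan is to bound $\verts{g_s^{\alpha}(x') - \delta_t^{\alpha,\lambda}(x)}$ by inserting the continuous-time instantaneous work $g_t^{\alpha}(x) = \nabla\cdot\alpha_t(x) - \nabla V_t(x)^{\top}\alpha_t(x) - \partial_t V_t(x)$ at the point $x$ and at the ``reference'' time $t$, and splitting
\[
	\verts{g_s^{\alpha}(x') - \delta_t^{\alpha,\lambda}(x)} \le \verts{g_s^{\alpha}(x') - g_t^{\alpha}(x)} + \verts{g_t^{\alpha}(x) - \delta_t^{\alpha,\lambda}(x)} =: (\mathrm{I}) + (\mathrm{II}).
\]
In the regime of interest $s$ and $t$ lie in a common interval of length $\lambda$, so $\verts{s-t}\le\lambda$. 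Term $(\mathrm{I})$ is a joint Lipschitz/growth estimate on $g^{\alpha}$, while term $(\mathrm{II})$ is a Taylor expansion of $\delta_t^{\alpha,\lambda}$ to first order in the step size $\lambda$.

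For $(\mathrm{I})$ I would bound the three summands of $g^{\alpha}$ separately. The divergence term $\verts{\nabla\cdot\alpha_s(x')-\nabla\cdot\alpha_t(x)}=\verts{\Tr(\nabla\alpha_s(x')-\nabla\alpha_t(x))}$ is controlled by the local Lipschitz bound on $\nabla\alpha$ in \cref{lem:smoothness_control}, giving $\lesssim(1+\Verts{x}+\Verts{x'})(\Verts{x-x'}+\verts{s-t})$. For the bilinear term $\nabla V_s(x')^{\top}\alpha_s(x')-\nabla V_t(x)^{\top}\alpha_t(x)$ I add and subtract $\nabla V_t(x)^{\top}\alpha_s(x')$ and combine joint $L$-Lipschitzness of $\nabla V$ (\cref{assump_cont:lipschitz_potential}), joint $L$-Lipschitzness of $\alpha$ (\cref{assump_cont:Lipschiz_control}), linear growth of $\alpha_t$ in $x$ uniformly in $t$ (which follows from \cref{lem:smoothness_control}) and linear growth of $\nabla V_t$ in $x$ uniformly in $t$ (a consequence of \cref{assump_cont:lipschitz_potential}); this again yields $\lesssim(1+\Verts{x}+\Verts{x'})(\Verts{x-x'}+\verts{s-t})$. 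The $\partial_t V$ term $\verts{\partial_t V_s(x')-\partial_t V_t(x)}$ is directly controlled by the growth/Lipschitz bound on $\partial_t V_t$ in \cref{assump_cont:lipschitz_potential}. Summing and using $\verts{s-t}\le\lambda$ gives $(\mathrm{I})\lesssim(1+\Verts{x}+\Verts{x'})\Verts{x-x'}+\lambda(1+\Verts{x}+\Verts{x'})$, which is dominated by the right-hand side of the claim.

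For $(\mathrm{II})$ I would expand the three pieces of $\lambda\,\delta_t^{\alpha,\lambda}(x) = V_{t-\lambda}(x) - V_t(x+\lambda\alpha_t(x)) + \log\verts{I+\lambda\nabla\alpha_t(x)}$ to first order in $\lambda$. Writing $V_{t-\lambda}(x)-V_t(x) = -\int_{t-\lambda}^{t}\partial_r V_r(x)\,\diff r$ and using the Lipschitz bound on $\partial_t V$ gives $V_{t-\lambda}(x)-V_t(x) = -\lambda\,\partial_t V_t(x) + O(\lambda^2(1+\Verts{x}))$. A first-order Taylor expansion with integral remainder, together with joint Lipschitzness of $\nabla V$ and linear growth of $\alpha_t$, gives $V_t(x+\lambda\alpha_t(x))-V_t(x) = \lambda\,\nabla V_t(x)^{\top}\alpha_t(x) + O(\lambda^2(1+\Verts{x})^2)$, the quadratic factor arising because the displacement $\lambda\alpha_t(x)$ is itself of size $\lambda(1+\Verts{x})$. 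Since $\nabla\alpha_t$ is bounded by $L$ and $\lambda\le\frac1{2L}$, the estimate $\verts{\log\det(I+M)-\Tr M}\lesssim\Verts{M}^2$ valid for $\Verts{M}\le\frac12$ gives $\log\verts{I+\lambda\nabla\alpha_t(x)} = \lambda\,\nabla\cdot\alpha_t(x) + O(\lambda^2)$. Adding the three expansions, the two copies of $V_t(x)$ cancel, and dividing by $\lambda$ yields $\delta_t^{\alpha,\lambda}(x) = g_t^{\alpha}(x) + O(\lambda(1+\Verts{x})^2)$, i.e. $(\mathrm{II})\lesssim\lambda(1+\Verts{x}+\Verts{x'})^2$. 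Combining this with the bound on $(\mathrm{I})$ finishes the proof.

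The main obstacle is the bookkeeping of polynomial growth in the $O(\lambda^2)$ remainders of $(\mathrm{II})$, in particular checking that the $V_t(x+\lambda\alpha_t(x))$ remainder is genuinely $O(\lambda^2(1+\Verts{x})^2)$ and not worse; but all the ingredients needed for this — joint Lipschitzness of $\nabla V$ and $\partial_t V$, boundedness of $\nabla\alpha_t$, linear growth of $\alpha_t$, and $\lambda\le\frac1{2L}$ — are exactly supplied by \cref{assump_cont:lipschitz_potential,assump_cont:Lipschiz_control,assump_cont:moment} together with \cref{lem:smoothness_control}.
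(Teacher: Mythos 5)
Your proposal is correct, and all the ingredients you invoke (joint Lipschitzness of $\nabla V_t$ and $\partial_t V_t$ from \cref{assump_cont:lipschitz_potential}, the uniform bounds of \cref{lem:smoothness_control}, linear growth from \cref{lem:Linear_growth}, and $\lambda \leq \tfrac{1}{2L}$ for the log-determinant) are available; your remainder bookkeeping in (II), giving $\delta_t^{\alpha,\lambda}(x) = g_t^{\alpha}(x) + O\parens{\lambda(1+\Verts{x})^2}$, is accurate. The organization differs mildly from the paper's: instead of passing through the intermediate quantity $g_t^{\alpha}(x)$ via a triangle inequality and then Taylor-expanding $\lambda\,\delta_t^{\alpha,\lambda}(x)$ with explicit remainders, the paper represents $\delta_t^{\alpha,\lambda}(x)$ exactly as an integral over $u\in[0,1]$ by the fundamental theorem of calculus (evaluating the potential at the shifted points $y_u = x+\lambda(1-u)\alpha_t(x)$ and times $t-\lambda u$, and writing the log-determinant as an integral of $\mathrm{Tr}\parens{(I+\lambda u\nabla\alpha_t)^{-1}\nabla\alpha_t}$), and then compares the integrand directly with the corresponding pieces of $g_s^{\alpha}$, bounding four terms $B_u, C_u, D_u, E_u$ with the same Lipschitz/growth estimates you use. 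The two routes are equivalent in substance and cost: your version isolates a clean, reusable statement that the discrete incremental work converges to the instantaneous work at rate $\lambda$ with quadratic-growth error, at the price of a second "continuity of $g^{\alpha}$" step; the paper's single decomposition handles the time shift, space shift, and $O(\lambda)$ perturbation simultaneously inside each term and so never names the intermediate object.
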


\begin{proof}
By definition of $\delta_{t}^{\alpha,\lambda}(x)$ in \cref{eq:discrete_drift} and using the fundamental theorem of calculus we have:
\begin{align}
	\delta_{t}^{\alpha,\lambda}(x) 	=&    \int_{0}^1  Tr\parens{\parens{\parens{I + \lambda u \nabla\alpha_{t}(x)}^{-1}-I}\nabla\alpha_{t}(x)} +  Tr(\nabla \alpha_t(x))  \diff u\\
			  &- \int_{0}^1 \parens{\dot{V}_{t-\lambda u}( y_u )+ \nabla V_{t-\lambda u}(y_u)^{\top}\alpha_{t}(x)}\diff u,
\end{align}
where we introduced $y_u = x+\lambda(1-u)\alpha_t(x)$.
This directly yields:
\begin{align}\label{lem:bound_intermediate_terms:eq_0}
	  g_{s}^{\alpha}(x) - \delta_t^{\alpha,\lambda}(x') = \int_0^1 \parens{B_{u} + C_{u} + D_{u} +E_{u}}  \diff u
\end{align}
where:
\begin{align}\label{eq:intermediate_terms}
	B_{u} &=  Tr(\nabla\alpha_s(x'))- Tr(\nabla \alpha_t(u)), \\
	C_{u} &= -Tr\parens{\parens{\parens{I + \lambda u \nabla\alpha_{t}(x)}^{-1}-I}\nabla\alpha_{t}(x)}, \\
	D_{u} &= \nabla V_{t-\lambda u}(y_u)^{\top}\alpha_{t}(x) - \nabla V_s(x')\alpha_s(x'),\\
	E_{u} &=  \dot{V}_{t-\lambda u}( y_u ) -\dot{V}_s(x').
\end{align}
\paragraph{Bound on $B_{u}$.}
Under \cref{assump_cont:Lipschiz_control}, \cref{lem:smoothness_control} applies and we directly have:
\begin{align}\label{lem:bound_intermediate_terms:eq_2}
	\verts{B_{u}}&\lesssim \parens{1+\Verts{x}+ \Verts{x'}}\parens{\Verts{x-x'}+\verts{t-s}}\\
	&\lesssim \parens{1+\Verts{x}+ \Verts{x'}}\parens{\Verts{x-x'}+\lambda}
\end{align}
where we used that $ t-\lambda \leq  s\leq t$ to get the last line.
\paragraph{Bound on $\vert C_{u}\vert$.}
By direct calculation we get:
\begin{align}
	C_{u} &= Tr\parens{\parens{\parens{I + \lambda u \nabla\alpha_{t}(x)}^{-1}-I}\nabla\alpha_{t}(x)} \\
	&=\lambda u Tr\parens{\parens{I + \lambda u \nabla\alpha_{t}(x)}^{-1} \nabla\alpha_{t}(x)^2}
\end{align}
By \cref{lem:smoothness_control}, we know that $\nabla\alpha_t(x)$ is bounded by a constant term $L$. Hence, when $\lambda<\frac{1}{2L}$, it follows that:
\begin{align}
	\Vert(I  +  u\lambda \nabla \alpha_{t}(x) )^{-1} \Vert\leq \frac{1}{1-u\lambda L}\leq 2. 
\end{align}
This directly implies that:
\begin{align}\label{lem:bound_intermediate_terms:eq_1}
	\verts{C_{u}}\leq 2L^2\lambda.
\end{align}
\paragraph{Bound on $D_{u}$.}
\begin{align}
	\verts{D_{u}} &= \verts{\parens{\nabla V_{t-\lambda u }(y_u)^{\top}-\nabla V_s(x')}^{\top}\alpha_t(x) + \nabla V_s(x')^{\top}\parens{\alpha_t(x)-\alpha_s(x')}}\\
	&\leq \Verts{\nabla V_{t-\lambda u }(y_u)^{\top}-\nabla V_s(x')}\Verts{\alpha_t(x)} + \Verts{\nabla V_s(x')}\Verts{\alpha_t(x)-\alpha_s(x')}\\
	&\lesssim \parens{\verts{t-s-\lambda u } + \Verts{y_u-x'}}\Verts{\alpha_t(x)} + \Verts{\nabla V_s(x')}\parens{ \Verts{x-x'} + \verts{t-s} }\\
	&\lesssim \parens{\lambda  + \Verts{y_u-x'} }\Verts{\alpha_t(x)} + \Verts{\nabla V_s(x')}\parens{ \Verts{x-x'} + \lambda }
\end{align}
where we used \cref{assump_cont:lipschitz_potential} and   \cref{eq:admissibility} of \cref{lem:smoothness_control} for the third line and that  $   \vert t-s - \lambda u\vert \leq \lambda $  and $\vert t-s \vert\leq \lambda$ to get the last line.  Moreover, by \cref{lem:Linear_growth}, we have that:
\begin{align}\label{eq:linear_growth_bound_intermediate_terms}
	\Verts{\alpha_t(x)} \lesssim (1+ \Verts{x} ),\qquad \Verts{\nabla V_t(x)}\lesssim (1+\Verts{x}). 
\end{align}
This allows to further write:
\begin{align}\label{lem:bound_intermediate_terms:eq_3}
	\verts{D_{u}} &\lesssim  \parens{\lambda(1+\Verts{x})  + \Verts{x-x'} }\parens{1+\Verts{x}}. 
\end{align}

\paragraph{Bound on $E_{u}$.}
Using \cref{assump_cont:lipschitz_potential}, we directly have:
\begin{align}
	\verts{E_{u}}&\lesssim \parens{1+\Verts{y_u}+\Verts{x'}}\parens{ \Verts{y_u-x'} + \verts{t-s-\lambda u} }\\
	&\lesssim
	\parens{1+\Verts{x}+\Verts{x'} +\lambda u\Verts{\alpha_t(x)} }\parens{ \Verts{x -x'} + \lambda u\Verts{\alpha_t(x)}  + \verts{t-s-\lambda u } }\\
	&\lesssim\parens{1+\Verts{x}+\Verts{x'} +\lambda u\Verts{\alpha_t(x)} }\parens{ \Verts{x -x'}+  \lambda u\Verts{\alpha_t(x)}   + \lambda },
\end{align}
	where we used  that  $   \vert t-s - \lambda u\vert \leq \lambda $  since $t-\lambda \leq  s\leq t$ and $0\leq u\leq 1$. Moreover, using that $\alpha$ has a linear growth in $x$ (by \cref{eq:linear_growth_bound_intermediate_terms})  we have:
	\begin{align}\label{lem:bound_intermediate_terms:eq_4}
		\verts{E_{u}}\lesssim \parens{1+\Verts{x}+\Verts{x'} }\parens{ \Verts{x -x'}+  \lambda\parens{1 +\Verts{x}} }.
	\end{align}
\paragraph{Final bound.} Using \cref{lem:bound_intermediate_terms:eq_1,lem:bound_intermediate_terms:eq_2,lem:bound_intermediate_terms:eq_3,lem:bound_intermediate_terms:eq_4} in \cref{lem:bound_intermediate_terms:eq_0}, it follows:
\begin{align}
	\verts{g_s^{\alpha}(x) - \delta_t^{\alpha,\lambda}(x') }\lesssim \parens{1+\Verts{x}+\Verts{x'} }\parens{ \Verts{x -x'}+  \lambda\parens{1 +\Verts{x}} }.
\end{align}
\end{proof}
\begin{prop}\label{prop:beta_error}
Let $t$ and $s$ be in $[0,1]$ such that $\vert t-s\vert \leq \lambda$. Under \cref{assump_cont:lipschitz_potential,assump_cont:Lipschiz_control}, it holds that:
\begin{align}
	\Verts{b_s^{\alpha}(x')-\beta_t^{\alpha,\lambda}(x) }\lesssim \Verts{x-x'} + \lambda \parens{1+\Verts{x}}.
\end{align}
\end{prop}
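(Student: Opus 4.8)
The plan is to bound the difference by splitting it into a ``control part'' and a ``potential part'' and then invoking the Lipschitz estimates already available from \cref{lem:smoothness_control} and \cref{assump_cont:lipschitz_potential}. Recalling $b_s^{\alpha}(x') = \alpha_s(x') - \nabla V_s(x')$ and $\beta_t^{\alpha,\lambda}(x) = \alpha_t(x) - \nabla V_t(x + \lambda\alpha_t(x))$, the triangle inequality gives
\[
\Verts{b_s^{\alpha}(x') - \beta_t^{\alpha,\lambda}(x)} \le \Verts{\alpha_s(x') - \alpha_t(x)} + \Verts{\nabla V_s(x') - \nabla V_t\parens{x + \lambda\alpha_t(x)}}.
\]
I would treat the two summands separately.

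For the control part, since $\alpha\in\mathcal{A}$, \cref{assump_cont:Lipschiz_control} lets me apply \eqref{eq:admissibility} of \cref{lem:smoothness_control}, so $\Verts{\alpha_s(x') - \alpha_t(x)} \le L\parens{\Verts{x-x'} + \verts{t-s}} \le L\parens{\Verts{x-x'} + \lambda}$, using $\verts{t-s}\le\lambda$. For the potential part, \cref{assump_cont:lipschitz_potential} states that $(t,x)\mapsto\nabla V_t(x)$ is $L$-Lipschitz jointly in $(t,x)$, hence
\[
\Verts{\nabla V_s(x') - \nabla V_t\parens{x + \lambda\alpha_t(x)}} \le L\parens{\Verts{x' - x - \lambda\alpha_t(x)} + \verts{t-s}} \le L\parens{\Verts{x-x'} + \lambda\Verts{\alpha_t(x)} + \lambda}.
\]
It then remains to note the linear growth $\Verts{\alpha_t(x)} \lesssim 1 + \Verts{x}$, which follows from \cref{lem:smoothness_control} by writing $\alpha_t(x) = \parens{\alpha_t(x) - \alpha_0(0)} + \alpha_0(0)$ and combining the joint Lipschitz bound with $\Verts{\alpha_0(0)}\le C$. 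Substituting gives that the potential part is $\lesssim \Verts{x-x'} + \lambda\parens{1 + \Verts{x}}$.

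Adding the two estimates and absorbing the stray $\lambda$ into $\lambda\parens{1+\Verts{x}}$ yields the claim. I do not expect any genuine obstacle here: the whole argument is a single application of the triangle inequality followed by the joint-Lipschitz bounds already established, and the only point requiring attention is that $\nabla V_t$ is evaluated at the shifted point $x + \lambda\alpha_t(x)$ in $\beta_t^{\alpha,\lambda}$, which is precisely what produces the extra $\lambda\Verts{\alpha_t(x)}$ term and hence the $\lambda\parens{1+\Verts{x}}$ on the right-hand side.
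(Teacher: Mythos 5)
Your proposal is correct and follows essentially the same argument as the paper: a triangle-inequality split of $b_s^{\alpha}(x')-\beta_t^{\alpha,\lambda}(x)$, the joint Lipschitz bounds on $\alpha$ (via \cref{lem:smoothness_control}) and on $\nabla V_t$ (via \cref{assump_cont:lipschitz_potential}), and the linear growth $\Verts{\alpha_t(x)}\lesssim 1+\Verts{x}$, which the paper invokes as \cref{lem:Linear_growth} and you re-derive in exactly the same way.
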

\begin{proof}
	By definition of $b_s^{\alpha}(x')$  and $\beta_t^{\alpha,\lambda}(x)$, the following identity holds:
	\begin{align}
		\Verts{b_s^{\alpha}(x')-\beta_t^{\alpha,\lambda}(x) } = \Verts{\alpha_s(x')-\alpha_t(x) - \nabla V_s(x')+\nabla V_t(x+\lambda \alpha_t(x))  }.
	\end{align}
	Therefore, since $\alpha_t(x)$ and $\nabla V_t(x)$ are Lipschitz by \cref{assump_cont:lipschitz_potential} and  \cref{lem:smoothness_control}, we have:
	\begin{align}
		\Verts{b_s^{\alpha}(x')-\beta_t^{\alpha,\lambda}(x) }&\leq L\parens{2\Verts{x-x'} + 2\verts{t-s}  + \lambda \Verts{\alpha_t(x)}}\\
		&\leq L\parens{2\Verts{x-x'} + 2\lambda  + \lambda \Verts{\alpha_t(x)}}.
	\end{align}
	Moreover, \cref{lem:Linear_growth} below ensures that $\Vert \alpha_t(x)\Vert \lesssim (1+\Verts{x}) $, therefore:
		\begin{align}
		\Verts{b_s^{\alpha}(x')-\beta_t^{\alpha,\lambda}(x) }&\lesssim \Verts{x-x'} + \lambda\parens{1 + \Verts{x}}. 
	\end{align}
\end{proof}

\begin{lem}\label{lem:Linear_growth}
	Under \cref{assump_cont:lipschitz_potential} the gradient of the potential has a linear growth in $x$, i.e:
	\begin{align}\label{eq:linear_growth_potential}
		\Verts{\nabla_x V_t(x)}\lesssim \parens{1+\Verts{x}},\qquad \forall x\in \mathcal{X}, \forall t\in [0,1]. 
	\end{align}
	Under \cref{assump_cont:Lipschiz_control}, there exists a positive constant $D>0$, such that for all $\alpha\in \mathcal{A}$.
	\begin{align}\label{eq:linear_growth_control}
		\Verts{\alpha_t(x)}\leq D\parens{1+\Verts{x}},\qquad \forall x\in \mathcal{X}, \forall t\in [0,1].
	\end{align}
\end{lem}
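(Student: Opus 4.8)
The plan is to derive both bounds by comparison with a fixed reference point, exploiting the joint Lipschitz hypotheses and the compactness of the index sets $[0,1]$ and $\Theta$. Neither estimate is deep; the work is just in recording which constants appear and checking uniformity.

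First I would establish \cref{eq:linear_growth_potential}. Under \cref{assump_cont:lipschitz_potential}, the map $(t,x)\mapsto \nabla_x V_t(x)$ is $L$-Lipschitz jointly in $(t,x)$, so comparing against the reference point $(0,0)$ gives
\begin{align}
\Verts{\nabla_x V_t(x)}\leq \Verts{\nabla_x V_0(0)} + L\parens{\Verts{x} + \verts{t}}\leq \Verts{\nabla_x V_0(0)} + L\parens{1 + \Verts{x}}
\end{align}
for every $t\in[0,1]$, where I used $\verts{t}\leq 1$. Since $\Verts{\nabla_x V_0(0)}$ is a finite constant, independent of $t$ and $x$, this is precisely $\Verts{\nabla_x V_t(x)}\lesssim \parens{1 + \Verts{x}}$.

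Next I would prove \cref{eq:linear_growth_control}. Any $\alpha\in\mathcal{A}$ has the form $\alpha_t(x) = A_{\theta_t}(x)$ with $\theta_t\in\Theta$ for all $t\in[0,1]$. By \cref{assump_cont:Lipschiz_control} the map $x\mapsto A_\theta(x)$ is $L$-Lipschitz uniformly in $\theta$, hence
\begin{align}
\Verts{\alpha_t(x)} = \Verts{A_{\theta_t}(x)}\leq \Verts{A_{\theta_t}(0)} + L\Verts{x}.
\end{align}
It then remains only to bound $\Verts{A_{\theta_t}(0)}$ uniformly in $t$. Since $\Theta$ is compact and $\theta\mapsto A_\theta(0)$ is continuous (being Lipschitz), the constant $C := \sup_{\theta\in\Theta}\Verts{A_\theta(0)}$ is finite — this is the same $C$ already introduced in \cref{lem:smoothness_control} — and therefore $\Verts{\alpha_t(x)}\leq C + L\Verts{x}\leq D\parens{1+\Verts{x}}$ with $D := \max(C,L)$. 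Since $D$ depends only on $L$ and $\Theta$, the bound holds uniformly over $\alpha\in\mathcal{A}$, $t\in[0,1]$ and $x\in\mathcal{X}$.

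The argument is entirely routine; the only point demanding a little care is the uniformity in the second part, since the estimate must hold simultaneously for \emph{every} control in $\mathcal{A}$ — this is exactly what forces the appeal to compactness of $\Theta$ to make $\sup_{\theta}\Verts{A_\theta(0)}$ finite, rather than merely finiteness for an individual $\theta$.
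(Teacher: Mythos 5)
Your proof is correct and follows essentially the same route as the paper: a Lipschitz comparison with a fixed reference point for $\nabla V_t$, and for the control the bound $\Verts{\alpha_t(x)}\leq C + L\Verts{x}$ with $C=\sup_{\theta\in\Theta}\Verts{A_\theta(0)}$ finite by compactness, which is precisely the constant from \cref{lem:smoothness_control} that the paper invokes. The only cosmetic difference is your reference point ($A_{\theta_t}(0)$ rather than $\alpha_0(0)$), which changes nothing of substance.
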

\begin{proof}
The equation \cref{eq:linear_growth_potential} is a simple consequence of the Lipschitz assumption on $\nabla V_t$. For \cref{eq:linear_growth_control}, we recall that under \cref{assump_cont:Lipschiz_control}, \cref{lem:smoothness_control} holds. Therefore we have
	\begin{align}
		\Verts{\alpha_t(x)}&\leq \Verts{\alpha_0(0)} +   \Verts{\alpha_t(x)- \alpha_0(0)}\\
		&\leq C + L\parens{\Verts{x} +  \verts{t} }\\
		&\leq C + L\parens{1+ \Verts{x}}.
	\end{align}
	The result follows since the constants $C$ and $L$ are independent of the choice of the control $\alpha$ in $\mathcal{A}$.
\end{proof}

\begin{lem}\label{lem:appendix_estimate_control}
	Under \cref{assump_cont:lipschitz_potential,assump_cont:Lipschiz_control}, the following bounds hold:
	\begin{align}
			\Verts{\beta_{t}^{\alpha,\lambda}(x)- \beta_{t}^{\alpha',\lambda}(x) }&\lesssim \Verts{\alpha_t(x)- \alpha'_t(x)},\\
			\verts{\delta_{t}^{\alpha,\lambda}(x)- \delta_{t}^{\alpha',\lambda}(x)}&\lesssim  \parens{\parens{1 + \Verts{x}}\Verts{\alpha_{t}(x)-\alpha_{t}'(x)}  + \Vert \nabla \alpha_{t}(x) - \nabla\alpha'_{t}(x)  \Vert },\\
	\verts{g_t^{\alpha}(x) - g_t^{\alpha}(x')} &\lesssim \parens{1 + \Verts{x} + \Verts{x'} }\Verts{x-x'}.
\end{align}
\end{lem}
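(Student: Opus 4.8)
The plan is to prove each of the three estimates by a direct computation, using only the Lipschitz and linear-growth properties already collected in \cref{assump_cont:lipschitz_potential,assump_cont:Lipschiz_control}, \cref{lem:smoothness_control} and \cref{lem:Linear_growth}, together with the standing restriction $\lambda\leq\frac{1}{2L}$.

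\emph{First bound.} I would write $\beta_{t}^{\alpha,\lambda}(x)-\beta_{t}^{\alpha',\lambda}(x) = \parens{\alpha_t(x)-\alpha'_t(x)} - \parens{\nabla V_t(x+\lambda\alpha_t(x)) - \nabla V_t(x+\lambda\alpha'_t(x))}$ and apply the $L$-Lipschitzness of $\nabla V_t$ from \cref{assump_cont:lipschitz_potential} to the second difference, which contributes a factor $L\lambda\leq\frac12$. Hence $\Verts{\beta_{t}^{\alpha,\lambda}(x)-\beta_{t}^{\alpha',\lambda}(x)}\leq(1+L\lambda)\Verts{\alpha_t(x)-\alpha'_t(x)}\lesssim\Verts{\alpha_t(x)-\alpha'_t(x)}$.

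\emph{Second bound.} Using the definition of $\delta_t^{\alpha,\lambda}$, the difference splits into a potential part $\tfrac1\lambda\parens{V_t(x+\lambda\alpha'_t(x))-V_t(x+\lambda\alpha_t(x))}$ and a log-Jacobian part $\tfrac1\lambda\parens{\log\verts{I+\lambda\nabla\alpha_t(x)}-\log\verts{I+\lambda\nabla\alpha'_t(x)}}$. For the potential part I would use the fundamental theorem of calculus along the segment joining $x+\lambda\alpha_t(x)$ to $x+\lambda\alpha'_t(x)$; the $\tfrac1\lambda$ cancels, leaving $\sup$ of $\Verts{\nabla V_t}$ along that segment times $\Verts{\alpha_t(x)-\alpha'_t(x)}$, and $\Verts{\nabla V_t}$ is bounded by $1+\Verts{x}$ on the segment since $\Verts{\alpha_t(x)},\Verts{\alpha'_t(x)}\lesssim 1+\Verts{x}$ by \cref{lem:Linear_growth}. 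For the log-Jacobian part I would differentiate $s\mapsto\log\verts{I+\lambda(\nabla\alpha'_t(x)+s(\nabla\alpha_t(x)-\nabla\alpha'_t(x)))}$, obtaining $\lambda\,\Tr\parens{(I+\lambda(\cdots))^{-1}(\nabla\alpha_t(x)-\nabla\alpha'_t(x))}$; since $\Verts{\nabla\alpha_t}\leq L$ by \cref{lem:smoothness_control} and $\lambda\leq\frac{1}{2L}$, the inverse has operator norm $\leq 2$, so in finite dimension the trace is $\lesssim\Verts{\nabla\alpha_t(x)-\nabla\alpha'_t(x)}$ after cancelling $\lambda$. Adding the two parts gives the claimed bound.

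\emph{Third bound.} I would decompose $g_t^{\alpha}(x)-g_t^{\alpha}(x') = \Tr\parens{\nabla\alpha_t(x)-\nabla\alpha_t(x')} - \parens{\nabla V_t(x)^{\top}\alpha_t(x)-\nabla V_t(x')^{\top}\alpha_t(x')} - \parens{\partial_t V_t(x)-\partial_t V_t(x')}$. The first term is controlled by the second inequality of \cref{lem:smoothness_control}; for the middle term I would add and subtract $\nabla V_t(x')^{\top}\alpha_t(x)$ and use $L$-Lipschitzness of $\nabla V_t$, Lipschitzness and linear growth of $\alpha_t$, and linear growth of $\nabla V_t$ (\cref{lem:Linear_growth}); the last term is exactly the hypothesis on $\partial_t V_t$ in \cref{assump_cont:lipschitz_potential}. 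Each piece produces the factor $\parens{1+\Verts{x}+\Verts{x'}}\Verts{x-x'}$, and summing concludes. The only slightly delicate point, and the part I would write out most carefully, is the log-determinant estimate in the second bound: it is where the uniform bound $\Verts{\nabla\alpha_t}\leq L$ and $\lambda\leq\frac{1}{2L}$ are needed to keep $(I+\lambda\nabla\alpha_t(x))^{-1}$ bounded uniformly in $x$, and where one must pass from an operator-norm bound to a bound on the trace $\Tr(MN)\lesssim\Verts{M}_{\mathrm{op}}\Verts{N}$ in finite dimension.
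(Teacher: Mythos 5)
Your proof is correct and follows essentially the same route as the paper's: the same splittings of $\beta$, $\delta$ (potential part plus log-Jacobian part, with the $V_{t-\lambda}(x)$ terms cancelling) and $g_t^{\alpha}$, controlled via the Lipschitz/linear-growth estimates of \cref{assump_cont:lipschitz_potential,assump_cont:Lipschiz_control} together with \cref{lem:smoothness_control,lem:Linear_growth} and the restriction $\lambda\leq\tfrac{1}{2L}$. You merely spell out details the paper compresses (the segment argument for the log-determinant and the third bound, which the paper dismisses as a direct computation), and these details are sound.
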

\begin{proof}
{\bf Bound on $ \vert\beta_{t}^{\alpha,\lambda}- \beta_{t}^{\alpha',\lambda}\vert$.}
By \cref{assump_cont:lipschitz_potential} and \cref{lem:smoothness_control} we have that:
\begin{align}
	\Verts{\beta_{t}^{\alpha,\lambda}(x)- \beta_{t}^{\alpha',\lambda}(x) }\leq \parens{1+L\lambda}\Verts{\alpha_t(x)- \alpha'_t(x)}.
\end{align}
{\bf Bound on $\vert\delta_{t}^{\alpha,\lambda}- \delta_{t}^{\alpha',\lambda}\vert$.} For the second inequality, we use that:
\begin{align}
	\lambda\parens{\delta_{t}^{\alpha,\lambda}(x)- \delta_{t}^{\alpha',\lambda}(x)} =&  V_{t}\parens{x + \lambda \alpha_{t}'(x)} - V_{t}\parens{x+\lambda \alpha_{t}(x) }\\ 
	&+ \log\verts{I+\lambda \nabla\alpha_{t}(x) } -  \log\verts{I+\lambda \nabla \alpha_{t}'(x) }.
\end{align}
Hence, for $\lambda\leq \frac{1}{2L}$, we get by \cref{assump_cont:lipschitz_potential} and  \cref{lem:smoothness_control} that
\begin{align}
	\lambda\verts{\delta_{t}^{\alpha,\lambda}(x)- \delta_{t}^{\alpha',\lambda}(x)}\lesssim & 
	\lambda\brakets{\parens{1+\Verts{x}}  + \lambda \parens{\Verts{\alpha_{t}(x)}+ \Verts{\alpha_{t}'(x)}} }\Verts{\alpha_{t}(x)-\alpha_{t}'(x)}\\
	&+ \lambda \Vert \nabla \alpha_{t}(x) - \nabla\alpha'_{t}(x)  \Vert.
\end{align}
Moreover, by \cref{lem:Linear_growth}, we know that $\alpha$ and $\alpha'$ have at most a linear growth. This allows to further write: 
\begin{align}
	\verts{\delta_{t}^{\alpha,\lambda}(x)- \delta_{t}^{\alpha',\lambda}(x)}\lesssim  \parens{\parens{1 + \Verts{x}}\Verts{\alpha_{t}(x)-\alpha_{t}'(x)}  + \Vert \nabla \alpha_{t}(x) - \nabla\alpha'_{t}(x)  \Vert }.
\end{align}
{\bf Bound on $\vert g_t^{\alpha}(x) - g_t^{\alpha}(x')\vert$.} It follows by direct computation. 	

\end{proof}

\begin{lem}\label{lem:kerenl_error_main_lemma}
	Under \cref{assump_cont:lipschitz_potential,assump_cont:moment} there exists a constant $\lambda_0$ independent on $t$ such that for $\lambda\leq \lambda_0$, there exists $M>0$ such that for all $0\leq t\leq 1$ it holds that:
	\begin{align}
			\vert h^{\lambda}_t(x) \vert \leq \lambda \sqrt{\lambda} \frac{Q_1(x)}{1 + Q_2(x)} \exp( C \lambda \Vert \nabla V_t(x)\Vert^2 )  
	\end{align}
where $Q_1$ and $Q_2$ are non-negative functions of $\Vert x \Vert$ of polynomial growth and  independent from $\lambda$ and $t$ and $C$ is a non-negative constants.
\end{lem}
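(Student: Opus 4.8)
The starting point is the identity $\exp(h^\lambda_t(x)) = (\gamma_t K_t)(x)/\gamma_t(x)$, where $K_t$ is the ULA kernel with step $\lambda$ and $\gamma_t = e^{-V_t}$; thus $h^\lambda_t$ measures the logarithmic failure of $K_t$ to leave $\gamma_t$ invariant, a quantity that vanishes in the continuous-time limit. Writing $(\gamma_t K_t)(x) = \int \gamma_t(y)\,\mathcal N(x;\,y-\lambda\nabla V_t(y),\,2\lambda I)\,dy$ and performing the change of variables $y = x + \sqrt{2\lambda}\,w$, the prefactor $(4\pi\lambda)^{-d/2}$ turns into the standard Gaussian normalisation and the exponent becomes $\phi_\lambda(w,x) = V_t(x) - V_t(x+\sqrt{2\lambda}w) - \frac{1}{4\lambda}\Vert \sqrt{2\lambda}\,w - \lambda\nabla V_t(x+\sqrt{2\lambda}w)\Vert^2$, so that $e^{h^\lambda_t(x)} = (2\pi)^{-d/2}\int e^{\phi_\lambda(w,x)}\,dw$.

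I would then expand $\phi_\lambda$ in powers of $\sqrt\lambda$. Writing $\nabla V_t(x+\sqrt{2\lambda}w) = \nabla V_t(x) + r_w$ and $V_t(x+\sqrt{2\lambda}w) = V_t(x) + \sqrt{2\lambda}\,w^\top\nabla V_t(x) + \rho_w$, the $L$-Lipschitzness of $\nabla V_t$ from \cref{assump_cont:lipschitz_potential} gives $\Vert r_w\Vert \le L\sqrt{2\lambda}\Vert w\Vert$ and $|\rho_w| \le L\lambda\Vert w\Vert^2$. Collecting the quadratic and linear terms in $w$ and completing the square, the $O(\lambda\Vert\nabla V_t(x)\Vert^2)$ constant terms cancel exactly, giving $\phi_\lambda(w,x) = -\frac12\Vert w + \frac{\sqrt{2\lambda}}{2}\nabla V_t(x)\Vert^2 + R_\lambda(w,x)$ for an explicit remainder $R_\lambda$; after the affine shift $w \mapsto w - \frac{\sqrt{2\lambda}}{2}\nabla V_t(x)$ this yields $e^{h^\lambda_t(x)} = \mathbb{E}_{W\sim\mathcal N(0,I_d)}[\exp(\widetilde R_\lambda(W,x))]$ with $\widetilde R_\lambda(W,x) := R_\lambda(W - \frac{\sqrt{2\lambda}}{2}\nabla V_t(x),x)$.

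The heart of the proof is to control $\widetilde R_\lambda$ both pointwise and in mean under $W$. Pointwise, the Lipschitz bounds above together with the linear growth $\Vert\nabla V_t(x)\Vert \lesssim 1 + \Vert x\Vert$ (\cref{lem:Linear_growth}) only give $|\widetilde R_\lambda(W,x)| \lesssim \lambda(1+\Vert x\Vert)^2(1+\Vert W\Vert)^2$; the improvement to order $\lambda^{3/2}$ comes from the mean. The order-$\lambda$ part of $\widetilde R_\lambda$ reorganises into a trapezoidal-quadrature remainder for $s\mapsto V_t(x+s\sqrt{2\lambda}\,\cdot)$ plus a Jacobian / Gaussian-smoothing term, and their leading contributions cancel (the single-step near-stationarity of ULA), so that a second-order Taylor expansion of $V_t$ yields $|\mathbb{E}_W[\widetilde R_\lambda(W,x)]| \lesssim \lambda^{3/2}(1+\Vert x\Vert)^3$ and $\mathbb{E}_W[\widetilde R_\lambda(W,x)^2 e^{|\widetilde R_\lambda(W,x)|}] \lesssim \lambda^2(1+\Vert x\Vert)^4 e^{C\lambda\Vert\nabla V_t(x)\Vert^2}$. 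The restriction $\lambda \le \lambda_0$ enters precisely here: once $\lambda$ is small enough, the quadratic-in-$\Vert W\Vert$ coefficient appearing in the exponent of $e^{|\widetilde R_\lambda|}$ is below $\tfrac12$, so the residual integral is a convergent Gaussian moment-generating-function type quantity, and evaluating it (using once more $\Vert\nabla V_t(x)\Vert \lesssim 1+\Vert x\Vert$) produces the polynomial factor and the $\exp(C\lambda\Vert\nabla V_t(x)\Vert^2)$ factor.

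Finally I would combine these via $\mathbb{E}_W[e^{\widetilde R_\lambda}] = 1 + \mathbb{E}_W[\widetilde R_\lambda] + \mathbb{E}_W[e^{\widetilde R_\lambda} - 1 - \widetilde R_\lambda]$ and $|e^r - 1 - r| \le \frac12 r^2 e^{|r|}$, to obtain $|e^{h^\lambda_t(x)} - 1| \lesssim \lambda^{3/2}(1+\Vert x\Vert)^3 + \lambda^2(1+\Vert x\Vert)^4 e^{C\lambda\Vert\nabla V_t(x)\Vert^2}$, and then take logarithms: where $\lambda(1+\Vert x\Vert)^2$ is small this is $\log(1+o(1))$, while on the complementary region the factor $e^{C\lambda\Vert\nabla V_t(x)\Vert^2}$ already dominates a crude polynomial bound on $|h^\lambda_t(x)|$. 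Absorbing the $\lambda^2$ term into $\lambda\sqrt\lambda$ at the cost of an extra polynomial factor gives $|h^\lambda_t(x)| \le \lambda\sqrt\lambda\,\frac{Q_1(x)}{1+Q_2(x)}\exp(C\lambda\Vert\nabla V_t(x)\Vert^2)$ for polynomials $Q_1,Q_2$. The main obstacle is making the order-$\lambda$ cancellation in $\mathbb{E}_W[\widetilde R_\lambda]$ quantitative and uniform in $x$ — showing it is $O(\lambda^{3/2})$ rather than the naive $O(\lambda)$ — while keeping every residual Gaussian integral convergent with the correct polynomial-times-exponential dependence on $x$.
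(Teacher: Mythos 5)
Your outline is correct and follows essentially the same route as the paper's proof: the same change of variables reducing $h^{\lambda}_t$ to a standard-Gaussian integral, the same cancellation of the order-$\lambda$ term under the Gaussian mean (the second moment of $\nabla V_t(x)^{\top}W$ against the Hessian/trace and $\|\nabla V_t(x)\|^2$ terms) supplying the extra $\sqrt{\lambda}$, and the same Gaussian moment-generating-function estimates, valid once $\lambda\le\lambda_0$, producing the polynomial factors and the $\exp(C\lambda\|\nabla V_t(x)\|^2)$ factor. The only difference is packaging: the paper applies the fundamental theorem of calculus in $\lambda$ (and again in an inner time variable) directly to $h^{\lambda}_t=\log\int(\cdot)$, bounding the numerator by $\sqrt{\lambda}\,Q_1(x)\,e^{L\lambda\|\nabla V_t(x)\|^2}$ and lower-bounding the denominator by $(1+Q_2(x))\,e^{L'\lambda\|\nabla V_t(x)\|^2}$, whereas you expand $e^{h^{\lambda}_t}-1$ and take logarithms at the end, which is an equivalent bookkeeping of the same estimates.
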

\begin{proof}
Let us perform a change of variables $ u = \frac{x-y}{\sqrt{2\lambda}} $ in the integral appearing in the definition of $h^{\lambda}_t(x)$:
	\begin{align}
			h^{\lambda}_{t}(x):= \log\int  h(x,u,\lambda) \diff u,
	\end{align} 
	where we introduce the function $h(x,u,\lambda)$:
	\begin{align}
		h(x,u,\lambda) = \frac{1}{(\sqrt{2\pi})^d}\exp\parens{V_t(x) - V_t(y_{\lambda}) -\frac{1}{2}\Verts{ u +\sqrt{\frac{\lambda}{2}} \nabla V_t(y_{\lambda}) }^2},
	\end{align}
and where $y_{\lambda} := x - \sqrt{2\lambda}u $ for conciseness. By the fundamental theorem of calculus and using the dominated convergence theorem we can write:
\begin{align}\label{eq:fundamental_eq}
	h^{\lambda}_t(x) = \lambda \int_0^1\frac{\int \partial_{\lambda} h(x,u,s\lambda) \diff u }{\int  h(x,u,s\lambda) \diff u} \diff s.
\end{align}
\paragraph{Upper-bound on $\vert \int \partial_{\lambda} h(x,u,s\lambda) \diff u \vert$.} We now find an expression  for $\int  \partial_{\lambda} h(x,u,s\lambda)\diff u$ which exhibits an explicit factor $\sqrt{\lambda}$. By simple computation, we have that $\partial_{\lambda} h(x,u,s\lambda)$ is  given by:
\begin{align}
	\partial_{\lambda} h(x,u,\lambda) =&\frac{1}{2}\parens{\frac{1}{\sqrt{2\lambda}} \nabla V_t(y_{\lambda})^{\top}u + u^{\top}H V_t(y_{\lambda})u - \frac{1}{2}\Vert \nabla V_t(y_{\lambda}) \Vert^2 } h(x,u,\lambda),
	\end{align}
	where $H V_t$ is the Hessian of $V$. Integrating w.r.t $u$, we get:
\begin{align}
	\int \partial_{\lambda} h(x,u,\lambda) \diff u = \frac{1}{2}\frac{1}{\sqrt{2\lambda}} \phi_{\lambda}(1) + \frac{1}{2} \int \parens{ u^{\top}H V_t(y_{\lambda})u  - \frac{1}{2}\Vert \nabla V_t(y_{\lambda}) \Vert^2 }h(x,u,\lambda)\diff u,
\end{align}
where the function $s\mapsto \phi_{\lambda}(s)$ is defined for $0\leq s\leq 1$ as
\begin{align}
	\phi_{\lambda}(s) = \int \nabla V_t(y_{s\lambda})^{\top}u h(x,u,s\lambda)\diff u,
\end{align}
with  $\phi_{\lambda}(0)= 0$. By the fundamental theorem of calculus and using the dominated convergence theorem, we have that:
\begin{align}
	\phi_{\lambda}(1)  =& \int_0^1 \partial_s \phi_{\lambda}(s)\diff s\\
	=& -\sqrt{2\lambda} \int_0^1 \frac{1}{2\sqrt{s}} \int u^{\top} H V_t(y_{s\lambda})u h(x,u,s\lambda) \diff u  \diff s\\
	 &+ \frac{\lambda}{2} \int_{0}^1 \frac{1}{\sqrt{2s\lambda}}\int (\nabla V_t(y_{s\lambda})^{\top}u)^2h(x,u,s\lambda)\diff u \diff s\\
	 &+  \frac{\lambda}{2} \int_{0}^1 \int \nabla V_t(y_{s\lambda})^{\top}u  \parens{u^{\top} H V_t(y_{s\lambda})u -  \frac{1}{2}\Vert \nabla V_t(y_{s\lambda})\Vert^2   }h(x,u,s\lambda) \diff u \diff s.
\end{align}
We can further use that $h(x,u,s\lambda) = h(x,u,\lambda) + \lambda \int_s^1 \partial_{\lambda}h(x,y,s'\lambda)\diff s'$ and express $\int \partial_{\lambda} h(x,u,\lambda) \diff u$ as:
\begin{align}
	\int \partial_{\lambda} h(x,u,\lambda) \diff u = A_{\lambda}(x) + B_{\lambda}(x) + C_{\lambda}(x) + D_{\lambda}(x),
	\end{align} 
where $A_{\lambda}$, $B_{\lambda}$, $C_{\lambda}$ and $D_{\lambda}$ are given by:
\begin{align}
	A_{\lambda}(x) &=   \frac{1}{2}\int_0^1\int \frac{1}{2\sqrt{s}} u^{\top}\parens{ H V_t(y_{\lambda}) -H V_t(y_{s\lambda}) }u h(x,u,\lambda) \diff u \diff s,\\
	B_{\lambda}(x) &=   \frac{1}{4}\int_0^1\frac{1}{2\sqrt{s}} \int \parens{(\nabla V_t(y_{s\lambda})^{\top}u)^2-\Vert \nabla V_t(y_{\lambda}) \Vert^2}h(x,u,\lambda)\diff u\diff s,\\
	C_{\lambda}(x) &= \frac{1}{4\sqrt{2}}\sqrt{\lambda} \int_0^1 \int \nabla  V_t(y_{s\lambda})^{\top}u  \parens{ u^{\top} H V_t(y_{s\lambda})u - \frac{1}{2}\Vert \nabla V_t(y_{s\lambda})\Vert^2  }h(x,u,s\lambda) \diff u \diff s,\\
	D_{\lambda}(x) &= \frac{\lambda}{4}\int_0^1 \int \int_s^1\partial_{\lambda}h(x,u,s'\lambda)\parens{(\nabla V_t(y_{s\lambda})^{\top}u)^2 - 2 HV_t(y_{s\lambda})  }\diff u \diff s'\diff s. 
\end{align}
We can further decompose $B_{\lambda}(x)$ in two terms  $B^{(1)}_{\lambda}(x)$ and $B^{(2)}_{\lambda}(x)$  by adding and subtracting $\nabla V_t(x)$ inside each square:
\begin{align}
	B_{\lambda}(x) := B^{(1)}_{\lambda}(x) + B^{(2)}_{\lambda}(x),
	\end{align}
with:
\begin{align}
	B^{(1)}_{\lambda}(x) :=& \frac{1}{4}\int_0^1\frac{1}{2\sqrt{s}} \int  u^{\top}\parens{(\nabla V_t(y_{s\lambda})-\nabla V_t(x)}\parens{\nabla V_t(y_{s\lambda})-\nabla V_t(x)}^{\top}u \diff u \diff s \\
	&- \frac{1}{4}\int_0^1\frac{1}{2\sqrt{s}} \int \parens{\nabla V_t(y_{\lambda})-\nabla V_t(x)}^{\top}\parens{\nabla V_t(y_{\lambda})+\nabla V_t(x)} \diff u \diff s,\\
	B^{(2)}_{\lambda}(x) :=& \frac{1}{4} \int  \parens{ \nabla V_t(x)^{\top}u }^2 (h(x,u,\lambda)-h(x,u,0) )\diff u  \\
	=&   \frac{\lambda}{4} \int_0^1\frac{1}{2\sqrt{s}}\int  \parens{ \nabla V_t(x)^{\top}u }^2 \partial_{\lambda}h(x,u,s\lambda) \diff u\diff s.
\end{align}
For $B^{(2)}_{\lambda}(x)$ we used that $ \int   \parens{ \nabla V_t(x)^{\top}u }^2  h(x,u,0)  \diff u = \Vert \nabla V_t(x) \Vert^2 $ and then applied the fundamental theorem of calculus for the difference  $(h(x,u,\lambda)-h(x,u,0)$ to exhibit a factor $\lambda$. 
Using that $\nabla V_t(x)$ and $HV_t(x)$ are Lipschitz in $x$ and the growth assumptions, it can be shown that each of the terms $A_{\lambda}$, $B^{(1)}_{\lambda}$, $B^{(2)}_{\lambda}$, $C_{\lambda}$ and $D_{\lambda}$:
\begin{align}
	\vert B^{(2)}_{\lambda}(x)\vert +\vert C_{\lambda}(x)\vert  \leq  \sqrt{\lambda} &\int E^{2}_{\lambda}(x,u) \diff u ,\qquad
	\vert A_{\lambda}(x) \vert + \vert B^{(1)}_{\lambda}(x) \vert \leq \sqrt{\lambda}\int  E^{1}_{\lambda}(x,u)\diff u,\\
	&\vert D_{\lambda}(x)\vert  \leq  \sqrt{\lambda} \int E^{3}_{\lambda}(x,u)\diff u,
\end{align}
with $E^{1}_{\lambda}(x,u)$, $E^{2}_{\lambda}(x,u)$ and $E^{3}_{\lambda}(x,u)$ given by:
\begin{align}
	E^{1}_{\lambda}(x,u) &=   Q_1(x,u) \parens{\int_0^1\int_s^1 h(x,u, s'\lambda)\diff s'\diff s}, \\
	E^2_{\lambda}(x,u) &=   Q_2(x,u) \parens{\int_0^1 h(x,u, s\lambda)\diff s}, \\
	E^3_{\lambda}(x,u) &=   Q_3(x,u)  h(x,u, s\lambda).
\end{align}
Here $ Q_1(x,u)$, $ Q_2(x,u)$ and $ Q_3(x,u)$ are polynomials in $\Vert x \Vert$ and $\Vert u \Vert$ with non-negative coefficients independent from $\lambda$ and $t$. Using again the Lipschitz regularity of $\nabla V_t(x)$, we can further find an upper-bound on $h(x,u,\lambda)$ of the form:
\begin{align}
	h(x,u,\lambda) \leq C  \exp(M\lambda  \Vert \nabla V_t(x) \Vert^2-  \alpha(\lambda)\Vert u\Vert^2 +  \beta(\lambda)\Vert u \Vert\Vert \nabla V_t(x) \Vert    ),\\
	h(x,u,\lambda) \geq C'  \exp(M'\lambda  \Vert \nabla V_t(x) \Vert^2-  \alpha'(\lambda)\Vert u\Vert^2 -  \beta(\lambda)\Vert u \Vert\Vert \nabla V_t(x) \Vert    ). 
\end{align}
Here,  $C,C'$ and $M,M'$ are independent of $\lambda$ and $t$. The functions $\alpha(\lambda)$ and $\alpha'(\lambda)$ are positive for $\lambda$ small enough  and converge to $\frac{1}{2}$ with rate $\lambda$ when $\lambda\rightarrow 0$.  Finally $\beta(\lambda)$ converges to $0$ with rate $\sqrt{\lambda}$. Using those bounds for $E^1_{\lambda}(x,u)$, $E^2_{\lambda}(x,u)$ and   $E^3_{\lambda}(x,u)$ and $h(x,y,\lambda)$ and integrating over $u$, it follows that:  
\begin{align}\label{eq:main_inequalities_E}
	\begin{cases}
	\verts{\int  \partial_{\lambda} h(x,u,\lambda) \diff u }\leq \sqrt{\lambda} Q_1(\Vert x\Vert) \exp(L\lambda \Vert \nabla V_t(x)\Vert^2 ), \\
	\int h(x,u,\lambda) \diff u  \geq  (1 + Q_2(\Vert x\Vert)) \exp(L'\lambda \Vert \nabla V_t(x)\Vert^2 ),
		\end{cases}
\end{align}
where $Q_1$ and $Q_2$ are non-negative functions of $\Vert x \Vert$ of polynomial growth and  independent from $\lambda$ and $t$ and $L,L'$ are some positive constants.
\paragraph{Upper-bound on $\vert h^{\lambda}_{t}(x)\vert $.}
Recalling \cref{eq:fundamental_eq}, and using \cref{eq:main_inequalities_E} we directly have:
\begin{align}
	\vert h^{\lambda}_t(x) \vert \leq \lambda \sqrt{\lambda} \frac{Q_1(\Vert x\Vert)}{1 + Q_2(\Vert x\Vert )} \exp(\vert L-L'\vert\lambda \Vert \nabla V_t(x)\Vert^2 ).
\end{align}
\end{proof}

\section{Variant of the algorithm for unbiased estimates}\label{sec:variant_algo}

\cref{alg:SMC-detailed} and \cref{alg:LearnFlow} give details of the full practical implementation described in Section \ref{sec:appli} of the main paper. 

\begin{algorithm}
\caption{Annealed Flow Transport: Detailed Version}\label{alg:SMC-detailed}
	\begin{algorithmic}[1]
		\STATE \textbf{Input:} Number of training, test and validation particles $N_{\textup{train}}$, $N_{\textup{test}}$, $N_{\textup{val}}$, unnormalized annealed targets $\{\gamma_k\}_{k=0}^K$ such that $\gamma_0=\pi_0$ and $\gamma_K=\gamma$, resampling thresholds $A_a\in\left[1/{N_a},1\right)$ for $a \in \{\textup{train},\textup{test},\textup{val}\}$, number of training iterations $J$.
		\STATE \textbf{Ouput:} Approximations $\pi^{N_{\textup{test}}}_K$ and $Z_K^{N_{\textup{test}},\textup{test}}$ of $\pi$ and $Z$.
		\FOR{$a \in \{\textup{train},\textup{test},\textup{val}\}$}
			\STATE Sample  $X^{i,a}_0 \sim \pi_0$ and set $W_0^{i,a} \leftarrow \frac{1}{N_a}$ and  $Z_0^{N,a}\leftarrow 1$.
		\ENDFOR
		\FOR{$k=1,\dots, K$} 
			\STATE Learn the flow
				$T_k  \leftarrow \verb+LearnFlow+\parens{J,\braces{X^{i,\textup{train}}_{k-1} ,W^{i, \textup{train}}_{k-1}}_{i=1}^{N_{\textup{train}}},  \braces{X^{i,\textup{val}}_{k-1}, W^{i, \textup{val}}_{k-1}}_{i=1}^{N_{\textup{val}}}}$
			\FOR{$a \in \{\textup{train},\textup{test},\textup{val}\}$}
			\STATE Transport particles: $\widetilde{X}^{i,a}_{k} \leftarrow T_k(X^{i,a}_{k-1})$.
			\STATE Estimate normalizing constant $Z_k$: \\$Z_k^{N_a,a} \leftarrow Z_{k-1}^{N_a,a} \parens{\sum_{i=1}^{N_a} W_{k-1}^{i,a}G_{k,T_k}(X_{k-1}^{i,a})}$.
			\STATE Compute IS weights: \\$w^{i,a}_k \leftarrow W_{k-1}^{i,a} \G_{k,T_k}(X_{k-1}^{i,a})$ // unnormalized\\
				$W^{i,a}_k \leftarrow  \frac{w_{k}^{i,a}}{\sum_{j=1}^{N,a} w_{k}^{j,a}}$ // normalized
		    \STATE Compute effective sample size $\textup{ESS}_k^{N_a}$\\
		    $\textup{ESS}_k^{N_a} \leftarrow \left(\sum_{i=1}^{N_a} \parens{W_k^{i,a}}^2\right)^{-1}$.
		    \IF{$\textup{ESS}^{N_a}_{k}/N_a \leq A_a$}
			    \STATE Resample $N_a$ particles from split $a$ denoted abusively also $\widetilde{X}^{i,a}_k$ according to the weights $W_k^{i,a}$, 
			    \STATE Set $W_k^{i,a} \leftarrow \frac{1}{N_a}$. 
		    \ENDIF
			    \STATE  Sample $X_{k}^{i,a} \sim K_{k}(\widetilde{X}_{k}^{i,a},\cdot)$. // MCMC
			\ENDFOR
		\ENDFOR
	\end{algorithmic}
\end{algorithm}

\begin{algorithm}
\caption{LearnFlow}\label{alg:LearnFlow}
	\begin{algorithmic}[1]
		\STATE \textbf{Input:} Number of training iterations $J$, training and validation particles and weights $\braces{X^{i,\textup{train}}_{k-1} , W^{i, \textup{train}}_{k-1} }_{i=1}^{N_{\textup{train}}}$ and  $\braces{X^{i,\textup{val}}_{k-1} , W^{i, \textup{val}}_{k-1} }_{i=1}^{N_{\textup{val}}}$.
		\STATE \textbf{Ouput:} Estimated flow $T_k$
		\STATE Initialize flow to identity $T_k= Id$.
		\STATE Initialize list of flows $\mathcal{T}_{opt} \leftarrow \{T_k\}$.
		\STATE Initialize list of validation losses\\
		    $\mathcal{E}\leftarrow  \braces{\sum_{i=1}^{N_{\textup{val}}} W_{k-1}^{i,\textup{val}} h_{T_k}\parens{X_{k-1}^{i,\textup{val}}}}$
		\FOR{$j=1,...,J$}
		    \STATE Compute training loss using \cref{eq:empirical_loss}  \\
		        $ \mathcal{L}_k^{N_{\textup{train}}}(T_k) \leftarrow \sum_{i=1}^{N_{\textup{train}}} W_{k-1}^{i,\textup{train}} h_{T_k}\parens{X_{k-1}^{i,\textup{train}}}$.
		    \STATE Update $T_k$ using SGD to minimize $\mathcal{L}_k^{N_{\textup{train}}}(T_k)$.
		    \STATE Update list of flows $\mathcal{T}_{opt} \leftarrow \mathcal{T}_{opt}\cup \{T_k\}$
		    \STATE Initialize list of validation losses $\mathcal{E}$\\
		    $\mathcal{E}\leftarrow  \mathcal{E}\cup \braces{\sum_{i=1}^{N_{\textup{val}}} W_{k-1}^{i,\textup{val}} h_{T_k}\parens{X_{k-1}^{i,\textup{val}}}}$
		\ENDFOR
	    \STATE Return flow with smallest validation error from the list of flows $\mathcal{T}_{opt}$.
	\end{algorithmic}
\end{algorithm}

\section{Additional experimental details and discussion}\label{section:add_experiment}

\iffalse
Mathematically speaking, we know that if Markov kernel is an exact sampler, independent of the current state so that $K_k(x,x')=\pi_k(x')$ it will mix immediately. Similarly, under weak conditions on the two distributions, an exact flow mapping between the distributions exists \cite{papamakarios2019normalizing}. As we get closer to either of these two ideal cases there will be little the other components can (or need) to add to the performance. The practical consequence of this is that the choice of Markov kernel and flow family will affect how much our combined method can add. We have taken the common sense attitude of choosing what we think are reasonable and illustrative choices for each example. 
\fi

The algorithm is implemented in JAX \cite{Bradbury:2018}, Haiku \cite{Hennigan:2020} and Optax \cite{optax2020github}. In all cases we use the Adam optimizer \cite{Kingma:2014} for learning the flow with the Optax defaults for all parameters except for the learning rates which are chosen for each example and are given below. For the MCMC kernels we used the TensorFlow Probability \cite{Dillon:2017} JAX substrate.

Each experimental configuration was performed using an NVIDIA v100 GPU and 4 CPUs. All experimental configurations took under 30 minutes and most were much shorter. 

In all cases $N_{\textup{test}}=2000$ and this was the batch size for SMC and for VI ensuring fair comparison of trained estimators. Note VI is not sensitive to the training batch size because samples are replenished at each training step.

\begin{figure}[ht]
\vskip 0.2in
\begin{center}
\centerline{\includegraphics[width=\columnwidth]{}}
\caption{Larger version of Figure \ref{fig:two-dim-series}. Weighted samples for a 2-D target density with AFT. The colours show the normalized weights which are clipped at the 95th percentile for clarity.}
\label{fig:two-dim-series-large}
\end{center}
\vskip -0.2in
\end{figure}

\begin{figure}
\begin{center}
\centerline{\includegraphics[height=0.8\textheight]{combined_results_26_05_21-17-07.png}}
\caption{Larger version of results from the four different examples. Cyan lines denote gold standard values of the log normalizing constant. In (c) and (d) green horizontal lines denote the median value for an importance sampling estimate based on variational inference. Note that in (d) the small AFT error bars can make it difficult to see- it can be found next to the gold standard value in each case.}
\label{fig:combined-results-large}
\end{center}
\vskip -0.2in
\end{figure}

\subsection{Two dimensional example}

For each transition, we used 10 iterations of Hamiltonian Monte Carlo with 10 leapfrog steps per iteration.
The rational quadratic splines closely match the implementation described by \cite{Durkan:2019}. We used ten bins and padded with the identity outside of the range $[-4, 4]$. The bins height and widths were parameterized in terms of unconstrained real values. We then took the soft-max of these values and scaled and shifted them so that they had a minimal value of $10^{-4}$ and fitted with the range $[-4, 4]$. The derivatives were again parameterized using unconstrained real values and then made positive using the transformation $\nu + \log (1 + \exp(x))$ where $\nu=10^{-4}$.

The inverse autoregressive flow used the same autoregressive network detailed in the Funnel example below but with a changed input dimensionality and outputting the spline parameters instead of the parameters of an affine transformation.

The Adam learning rate was $10^{-3}$ the training period $J$ was $1000$. AFT flow estimation particle numbers were $N_{\textup{train}}=N_{\textup{val}}=2000$. 

With the total time number of time steps scaled to be in the interval $[0, 1]$ with $0$ being the initial distribution and $1$ the final distribution, the HMC step sizes were interpolated between the times $[0., 0.25, 0.5, 1.]$ using the step sizes $[0.5, 0.5. 0.5, 0.3]$. 

\subsection{Additional details of Funnel example}

We used 1000 steps of slice sampling per temperature with a maximum of 5 step size doublings as defined in the TensorFlow probability interface.   
For the affine inverse regressive flow we used an autoregressive neural network \cite{Germain:15} where the correct autoregressive structure is achieved by masking network weights. We used a Leaky Relu non-linearity. The unmasked network would have $30$ hidden units per input dimension and we used 3 hidden layers. To achieve identity initialisation of the flow we initialised the final weights and biases of the network to zero. The weights of the rest of network where initialised using a truncated normal distribution scaled by the fan-in and biases were initialised to zero. The final output of the MLP was taken and mapped directly into the mean and also the scale of the network after adding one to give the identity transformation overall. 

The Adam learning rate was $10^{-3}$ the training period $J$ was $4000$. AFT flow estimation particle numbers were $N_{\textup{train}}=N_{\textup{val}}=6000$. 

With the total time number of time steps scaled to be in the interval $[0, 1]$ with $0$ being the initial distribution and $1$ the final distribution, the slice sampling step sizes were interpolated between the times $[0., 0.25, 0.5, 0.75, 1.]$ using the step sizes $[0.9, 0.7. 0.6, 0.5, 0.4]$. 

\subsection{Additional details of Variational Autoencoder example}

\subsubsection{Autoencoder training and architecture details}

The VAE encoder architecture was as follows. The encoder, which parameterized the amortized variational distribution started with two convolutional layers each followed by a Rectified Linear non-linearity. The first convolution had kernel shape $4\times 4$, stride $2 \times 2$, 16 output channels and `valid' padding. The second convolution had  $4\times 4$, stride $2 \times 2$, 32 output channels and `valid' padding. In all cases layers where initialized using the Haiku defaults. The output of the second convolution was fed into linear layers which parameterized the variational mean, and the value of the diagonal variational standard deviation. After each of these linear layers we used layer normalization \cite{Ba2016} with the standard additional scale and translation parameters to ensure no loss in expressivity. Positivity of the standard deviation parameter was ensured by transforming the real valued vector through a softplus non-linearity. 

The VAE decoder architecture was as follows. We affinely projected the $30$ dimensional latents into a $7 \times 7 \times 32 = 1568$ dimensional space, which we then followed again by layer normalization. We then reshaped this vector to $7 \times 7 \times 32$ ready for feeding into the deconvolutional layers. There were three such deconvolutional layers separated by two rectified linear non-linearity layers. The first deconvolutional layer had kernel shape $3 \times 3$, a stride of $2 \times 2$ and 64 channels. The second deconvolutional layer had kernel shape $3 \times 3$, a stride of $2 \times 2$ and 32 channels.  The third and final deconvolutional layer had kernel shape $3 \times 3$, a stride of $1 \times 1$ and one output channel to match the target image shape.

Next we describe the training algorithm for learning the VAE, which is distinct from the experiments we ran evaluating the latent space. We used the ADAM optimizer with a small learning rate of $5 \times 10^{-5}$. We used gradient clipping with value of $10^{5}$ and trained for $5 \times 10^{5}$ iterations. We used the reparameterization trick \cite{kingma+welling:2014, rezende+al:2014:icml} for estimating the expectation of the log likelihood and the analytic expression for the KL divergence between the normal approximating distribution and the standard normal prior.  

\subsubsection{Additional details and discussion of VAE experimental results}

Figure \ref{fig:vae-samples} shows reconstructions and samples for the trained autoencoder. Note that we adopt the common practice of plotting the pixel probabilities for the reconstructions and the samples. In the latter case, this has the effect of making the samples look smoother than if we sampled pixel values. 

As discussed in the main text, the experiments on the test set were divided into two types. One group of experiments (Group A) was run over the whole test set. These experiments were used to verify the quality of the autoencoder and identify interesting/challenging examples for the more detailed study. The detailed study (Group B) mirrored the pattern of the other examples. 

We now give more detail of the Group A experiments. For each image in the binarized MNIST test set we ran SMC once with 1000 temperatures. The variance in the log likelihood estimate from these runs was small relative to our the variational inference experiments. For each image in the binarized MNIST test set we did four repeats of the unamortized normal variational approximation with diagonal covariance and associated importance estimates. Figure \ref{fig:vae-scatter} shows log likelihood estimates for SMC and the VI based importance sampler. We have averaged the VI estimates over the four repeats. The plot shows strong correlation- the variation in log likelihood from different digits is the chief source of variance as reported by \cite{Wu:2017}. The four VI repeats were then used to estimate the mean absolute error in the variational approximation relative to SMC as shown in Figure \ref{fig:vae-error-histogram}. These errors were used to identify challenging inference digits. In particular we chose digits on the $99.8$-th percentile of error, which implies about 1 in 500 digits is more challenging than the ones we chose. The three digits selected- one used for the main paper and two for repeats in this Appendix are shown in Figure \ref{fig:second-challenging-digit}. The experiment repeats are shown in Figures \ref{fig:second-challenging-digit} and \ref{fig:third-challenging-digit}. 

As discussed in the main text we found that the variational inference performed well considering the simple form of the approximate posterior. This is likely a consequence of the fact that the training objective favours posteriors that are well matched by the variational approximation. Whilst there can be bias from this effect relative to using an exact marginal likelihood \cite{Hoffman:2017} the ability of a flexible generative network to adapt to this constraint can make the effect less extreme than in simpler statistical models \cite{turner+sahani:2011a}.

\begin{figure}
\begin{center}
\centerline{\includegraphics[width=0.8\textwidth]{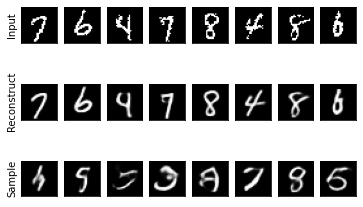}}
\caption{Variational Autoencoder samples and reconstructions. Top line: Images from the binarized MNIST dataset. Middle line: Reconstructions of the same images. Bottom line. Free samples from the model- these have no particular relationship to the images above them.}
\label{fig:vae-samples}
\end{center}
\vskip -0.2in
\end{figure}

\begin{figure}
\begin{center}
\centerline{\includegraphics[width=0.8\textwidth]{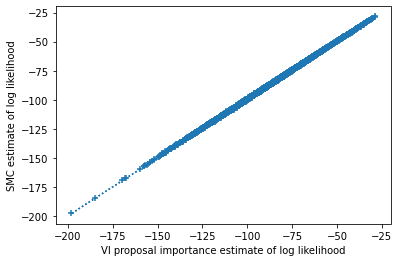}}
\caption{Estimated log likelihood from long run and SMC and mean estimated log likelihood from VI proposal for the binarized MNIST test set. The VI means are estimated based on four repeats of the VI training and sampling. The dotted line shows equality for reference.}
\label{fig:vae-scatter}
\end{center}
\vskip -0.2in
\end{figure}

\begin{figure}
\begin{center}
\centerline{\includegraphics[width=0.8\textwidth]{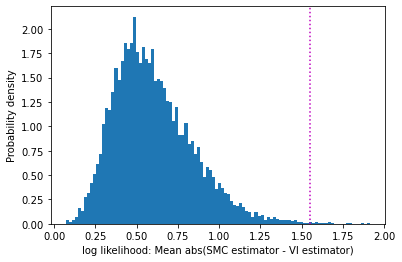}}
\caption{Histogram showing the mean absolute error of the variational proposal importance sampler for the binarized MNIST test set relative to a long run of SMC. The means are estimated based on four repeats of the VI training and sampling. The cyan line shows the discrepancy of the challenging digit chosen for detailed investigation in the main text}
\label{fig:vae-error-histogram}
\end{center}
\vskip -0.2in
\end{figure}

\begin{figure}
\begin{center}
\centerline{\includegraphics[width=0.8\textwidth]{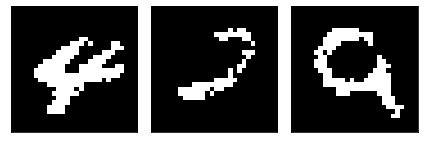}}
\caption{Binarized MNIST digits used in detailed experiments. These digits were selected as challenging for the variational approximation. Left: The digit used in the main paper. Centre and Right: Respectively the digits used for the repeat experiments in Figure \ref{fig:second-challenging-digit} and \ref{fig:third-challenging-digit}.}
\label{fig:vae-scatter}
\end{center}
\vskip -0.2in
\end{figure}

\begin{figure}
\begin{center}
\centerline{\includegraphics[width=0.8\textwidth]{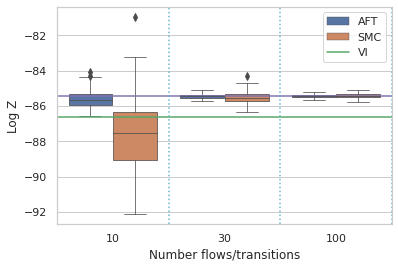}}
\caption{Results for a second challenging digit VAE likelihood, selected from the tail of plot \ref{fig:vae-error-histogram} similar to the result in the main text.}
\label{fig:second-challenging-digit}
\end{center}
\vskip -0.2in
\end{figure}

\begin{figure}
\begin{center}
\centerline{\includegraphics[width=0.8\textwidth]{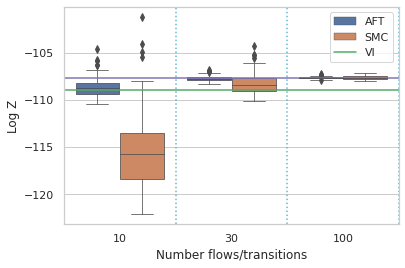}}
\caption{Results for a third challenging digit VAE likelihood, selected from the tail of plot \ref{fig:vae-error-histogram} similar to the result in the main text.}
\label{fig:third-challenging-digit}
\end{center}
\vskip -0.2in
\end{figure}

\subsection{Additional details of Cox process example}

\begin{figure}
\begin{center}
\centerline{\includegraphics[width=0.8\textwidth]{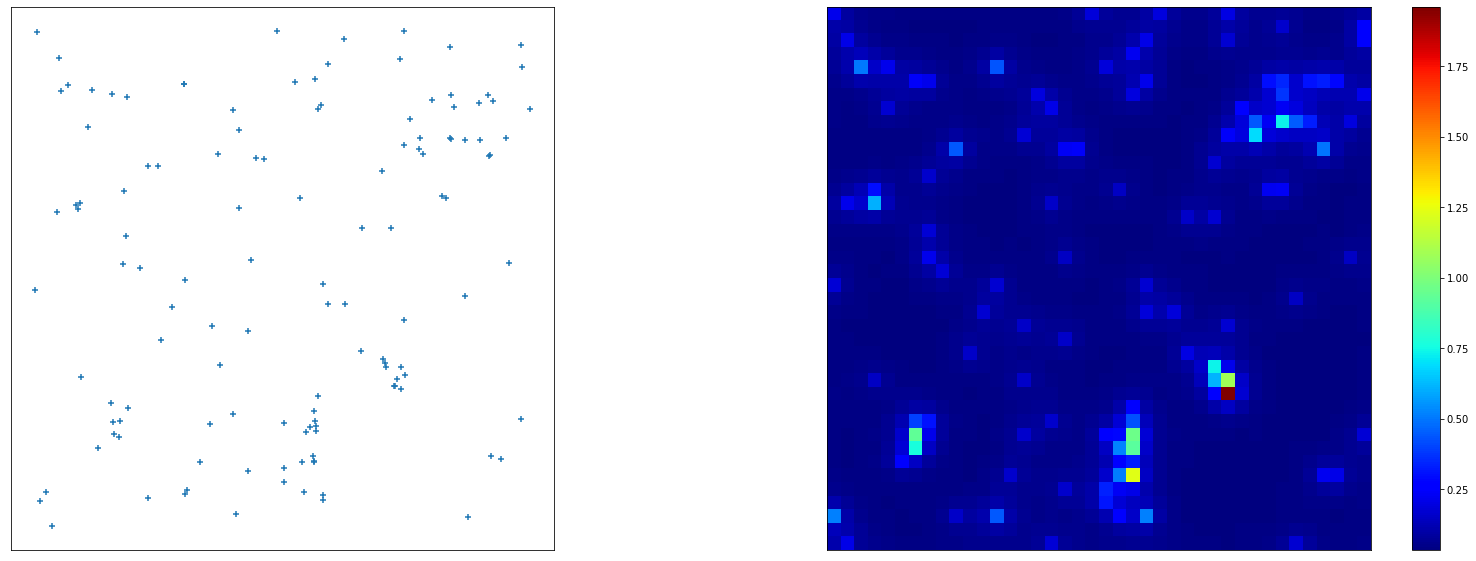}}
\caption{Left: data points for the spatial point process example. Right: Inferred posterior rate for the Cox process with a $40 \times 40$ discretization using AFT. The plot area is defined as having unit area. The samples were taken from a single run of AFT with $30$ temperatures and other parameters as discussed in the text.}
\label{fig:pines-posterior}
\end{center}
\vskip -0.2in
\end{figure}

We experimented with transforming the problem to a whitened representation $\tilde{x}$ where $x = L \tilde{x} + \mu$ and $L$ is the Cholesky decomposition of $K$ \cite{Beskos:2011,Neal2011}. This can also be viewed as choosing a non-identity pre-conditioner or mass matrix. We found that this rendered the problem easier to the extent of no longer being a challenging benchmark, and that in this circumstance there was little that a NF could contribute. To maintain the difficulty of the baseline we therefore focused our comparison on the unwhitened space of the latent function. 

The covariance $K$ is taken to be $K(u,v) = \sigma^2 \exp\left(-\frac{||u-v||_2}{M\beta}\right)$, where $\sigma^2=1.91$ and the mean vector has a constant value of $\log(126) - \sigma^2$, and $a=1/M^2$, matching \cite{Moller:1998}. 

The Adam learning rate was $10^{-2}$ the training period $J$ was $500$. AFT flow estimation particle numbers were $N_{\textup{train}}=N_{\textup{val}}=2000$. 

For each transition, we used ten iterations of Hamiltonian Monte Carlo with 10 leapfrog steps per iteration. With the total time number of time steps scaled to be in the interval $[0, 1]$ with $0$ being the initial distribution and $1$ the final distribution, the slice sampling step sizes were interpolated between the times $[0., 0.25, 0.5, 1.]$ using the step sizes $[0.3, 0.3, 0.2, 0.2]$. The gold standard value for the normalizing constant shown in Figure \ref{fig:combined-results} was found using 1000 repeats of SMC with 1000 temperatures and using the Cholesky whitening. 

We found that to obtain best performance for AFT in this example, it was important to have sufficient HMC updates per transition. 

An example of the samples produced using AFT can be seen in Figure \ref{fig:pines-posterior}.

\end{document}